\documentclass{article}

 \usepackage[preprint]{main}

\usepackage[utf8]{inputenc} 
\usepackage[T1]{fontenc}    
\usepackage{hyperref}       
\usepackage{url}            
\usepackage{booktabs}       
\usepackage{amsfonts}       
\usepackage{nicefrac}       
\usepackage{microtype}      
\usepackage{xcolor}         

\usepackage{graphicx}
\usepackage{subfigure}
\usepackage{wrapfig}

\usepackage{amsmath} 
\usepackage{algorithm}
\usepackage{algorithmic}

\usepackage{amsmath}
\usepackage{amssymb}
\usepackage{mathtools}
\usepackage{amsthm}

\theoremstyle{plain}
\newtheorem{theorem}{Theorem}[section]
\newtheorem{proposition}[theorem]{Proposition}
\newtheorem{lemma}[theorem]{Lemma}
\newtheorem{corollary}[theorem]{Corollary}
\theoremstyle{definition}
\newtheorem{definition}[theorem]{Definition}

\theoremstyle{remark}

\usepackage{multirow}

\title{Cochain Perspectives on Temporal-Difference Signals for Learning Beyond Markov Dynamics}

\author{
    Zuyuan Zhang\\
    The George Washington University\\
    \texttt{zuyuan.zhang@gwu.edu}\\
    \And
    Sizhe Tang\\
    The George Washington University\\
    \texttt{s.tang1@gwu.edu}\\
    \And
    Tian Lan\\
    The George Washington University\\
    \texttt{tlan@gwu.edu}
}

\begin{document}

\maketitle

\begin{abstract}
Non-Markovian dynamics are commonly found in real-world environments due to long-range dependencies, partial observability, and memory effects. The Bellman equation that is the central pillar of Reinforcement learning (RL) becomes only approximately valid under Non-Markovian. Existing work often focus on practical algorithm designs and offer limited theoretical treatment to address key questions, such as what dynamics are indeed capturable by the Bellman framework and how to inspire new algorithm classes with optimal approximations. In this paper, we present a novel topological viewpoint on temporal-difference (TD) based RL. We show that TD errors can be viewed as 1-cochain in the topological space of state transitions, while Markov dynamics are then interpreted as topological integrability. This novel view enables us to obtain a Hodge-type decomposition of TD
errors into an integrable component and a topological residual, through a Bellman–de Rham projection. We further propose HodgeFlow Policy Search (HFPS) by fitting a {potential network} to minimize the non-integrable projection residual in RL, achieving stability/sensitivity guarantees. In numerical evaluations, HFPS is shown to significantly improve RL performance under non-Markovian.
\end{abstract}

\section{Introduction}

Modern reinforcement learning (RL) systems are increasingly deployed in long-horizon, complex-dynamics environments, where long-range dependencies, partial observability, and memory effects are commonly found in these real-world processes~\citep{arulkumaran2017brief,garcia2015comprehensive,possamaï2024policyiterationalgorithmnonmarkovian,zhang2024distributed,qiao2024br,ravari2024adversarial,zhang2025learning,zhang2025endtoendlearningframeworksolving}. The classical Markov assumption is often violated. The Bellman equation that is the central pillar of RL becomes only approximately valid: Temporal-difference (TD) errors demonstrate non-markovian structures that cannot be removed by simply increasing function class representations or tuning optimization hyper-parameters
\citep{sutton1988learning,tsitsiklis1996analysis,baird1995residual,sutton2009fast}.

Existing work often focuses on proposing RL algorithms with high-order Markov approximations by leveraging memory mechanisms to embed (or summarize) state/action histories. Classical treatments of partial observability introduce internal-state or finite-memory policy representations, e.g., finite-state controllers and internal-state policy gradients~\citep{meuleau2013learning,aberdeen2025scaling,zhang2026geometry,zou2024distributed,kaelbling1998planning}. In deep RL, recurrent and sequence-based architectures are widely used to encode dependence, including DRQN-style recurrent value learning~\citep{hausknecht2015deep}, recurrent distributed replay (R2D2)~\citep{kapturowski2018recurrent}, recurrent actor--critic systems such as IMPALA with V-trace~\citep{espeholt2018impala}, and attention/transformer memory architectures such as GTrXL~\citep{parisotto2020stabilizing}. In multi-agent RL, recurrent extensions such as R-MADDPG similarly leverage recurrency to handle partial observability and limited communication~\citep{wang2020r}. Recent efforts have also investigated efficient dependence representations via sliding windows~\citep{tasse2025slidingwindowslearningmanage} and approximations under certain conditional laws~\citep{possamaï2024policyiterationalgorithmnonmarkovian}. However, there has been very limited theoretical treatment to address the key questions regarding non-Markovian RL: What dynamics are mathematically capturable by the Bellman framework? How to obtain optimal Markov approximations? Can we go beyond memory approaches and inspire novel algorithm classes under non-Markovian?

This paper presents a novel topological viewpoint and framework for TD-based RL under non-Markovian. In particular, we show that TD errors can be viewed as 1-cochain in the topological space of state transitions. Markov dynamics are then interpreted as topological integrability: One-step discrepancies encoded by TD errors can be fully explained by a single global potential $u$, moving along any transition simply increases the potential by $u(s')-\gamma u(s)$~\citep{desbrun2006discrete,jiang2011statistical,lim2020hodge}. This novel view enables us to obtain a Hodge-type decomposition of TD errors into an integrable component (which is Markov and capturable by Bellman equation) and a topological residual. We develop a Bellman–de Rham projection in the corresponding Hilbert space to minimize this non-integrability residual, thus achieving an optimal integrable approximation for solving non-Markovian problems. The residual quantifies how far the environment–policy pair departs from an ideal Markov model and thus serves as a principled diagnostic signal measuring by how much standard TD learning is fundamentally mismatched to the data
~\citep{kaelbling1998planning,tsitsiklis1996analysis,sutton2009fast}.

Building on this framework, we propose HodgeFlow Policy Search (HFPS), a simple two-network scheme that explicitly projects TD errors onto their integrable component and trains the value function using only this well-behaved part.
Rather than chasing state-of-the-art benchmark scores in our evaluation, we focus on specific regimes where standard TD learning is fragile: non-Markovian rewards, partially observed and dependent dynamics, and offline RL with dataset shift
~\citep{bacchus2000using,icarte2022reward,zhang2026structuring,hausknecht2015deep,fu2020d4rl,kumar2020conservative,kostrikov2021offline}.
Across these settings, HFPS delivers (i) a rigorous Hilbert-space formulation of TD integrability, (ii) a practical Topological Bellman Decomposition (TBD) algorithm that approximates the Hodge projection from data, and (iii) sensitivity and robustness guarantees that explain how the integrable update behaves under perturbations of rewards, discount factors, and approximation errors
~\citep{sutton1988learning,tsitsiklis1996analysis,sutton2009fast,jiang2011statistical,lim2020hodge}.

Contributions.
(i) We formulate TD error as a Hilbert-space 1-cochain and introduce the notion of topological Bellman integrability, together with a Hodge-type decomposition into integrable and residual components \citep{desbrun2006discrete,jiang2011statistical,lim2020hodge}.
(ii) We derive a Poisson characterization of the optimal potential and show that in the ideal Markov setting the topological residual vanishes, while in non-Markovian regimes it provides a quantitative measure of Bellman non-integrability \citep{sutton1988learning,tsitsiklis1996analysis,fang2026manifold,jiang2011statistical}.
(iii) We propose the Topological Bellman Decomposition (TBD) algorithm, which realizes the Hodge projection using two function approximators trained from replay data \citep{sutton1998reinforcement,mnih2015human,schulman2017proximal,sutton2009fast}.
(iv) We provide theoretical guarantees on consistency, stability, and sensitivity of the decomposition, and empirically study HFPS in path-dependent and partially observed control tasks where conventional TD learning becomes brittle \citep{kaelbling1998planning,hausknecht2015deep,fu2020d4rl,kumar2020conservative,kostrikov2021offline}.

\section{Preliminaries}
\label{sec:prelim}

In this section we put the discounted MDP in a measure-theoretic and Hilbert-space form that will be used throughout the paper. We first define discounted occupancy measures over state--transition triplets, and use them to build two Hilbert cochain spaces: a \emph{0-cochain} space $C^0$ of state functions and a \emph{1-cochain} space $C^1$ of functions on $(s,a,s')$ triplets. We then introduce a discrete de~Rham differential $d:C^0\to C^1$ that plays the role of a discounted temporal gradient, and define the associated zero-th order Hodge Laplacian $\Delta_0 = d^\ast d$ on $C^0$. All subsequent Hodge decompositions of TD errors will be formulated in terms of these objects.

A discounted Markov decision process (MDP) is a tuple $ \mathcal M = (\mathcal S,\mathcal A,P,r,\gamma), $ where $\mathcal S$ is a (measurable) state space; $\mathcal A$ is an action space; $P(\mathrm d s' \mid s,a)$ is a Markov transition kernel on $\mathcal S$; $r : \mathcal S \times \mathcal A \to \mathbb R$ is a bounded measurable reward function; $\gamma \in (0,1)$ is a discount factor. A policy $\pi(a\mid s)$ is a Markov kernel from $\mathcal S$ to $\mathcal A$.
Given an initial distribution $d_0$ over $\mathcal S$ and a policy $\pi$, the induced trajectory
$(S_0,A_0,S_1,A_1,\dots)$ is generated by
$
S_0 \sim d_0,\quad A_t \sim \pi(\cdot\mid S_t),\quad
S_{t+1} \sim P(\cdot\mid S_t,A_t).   
$

\subsection{Occupancy Measures}
We now define discounted occupancy measures that encode how frequently a fixed policy $\pi$ visits state--transition triplets under discounting.

\begin{definition}[Discounted triplet occupancy measure]
\label{def:triplet-occupancy}
Fix a policy $\pi$. The discounted triplet occupancy measure
$\mu_\pi$ on $\mathcal S \times \mathcal A \times \mathcal S$ is defined by
\begin{equation}
\begin{aligned}
&\mu_\pi(B)
=
(1-\gamma)\,\mathbb E_\pi\Bigg[
\sum_{t=0}^{\infty} \gamma^t
\mathbf 1\!\left\{ (S_t,A_t,S_{t+1}) \in B \right\}
\Bigg],
\\ 
&B \subseteq \mathcal S \times \mathcal A \times \mathcal S. 
\end{aligned}
\end{equation}
For any bounded measurable $g:\mathcal S\times\mathcal A\times\mathcal S\to\mathbb R$ we have
$
\int g\,\mathrm d\mu_\pi
=
(1-\gamma)\,\mathbb E_\pi\Bigg[
\sum_{t=0}^{\infty} \gamma^t
g(S_t,A_t,S_{t+1})
\Bigg].
$
In particular,
$
\mu_\pi(\mathcal S\times\mathcal A\times\mathcal S)
=
(1-\gamma)\,\mathbb E_\pi\Bigg[
\sum_{t=0}^{\infty} \gamma^t
\Bigg]
=1,
$
so $\mu_\pi$ is a probability measure. Intuitively, $\mu_\pi$ captures the (discounted) frequency with which the system visits each state--transition triplet $(s,a,s')$ under policy $\pi$.
\end{definition}

The discounted state occupancy measure $\nu_\pi$ is the marginal of $\mu_\pi$ on the first coordinate:
\begin{equation}
\nu_\pi(C) = \mu_\pi\big(C \times \mathcal A \times \mathcal S\big),
\quad C \subseteq \mathcal S.
\end{equation}
Equivalently,$
\int u(s)\,\nu_\pi(\mathrm ds)
=
(1-\gamma)\,\mathbb E_\pi\Bigg[
\sum_{t=0}^{\infty} \gamma^t
u(S_t)
\Bigg]
$
for any bounded measurable $u:\mathcal S\to\mathbb R$. In implementation, we do not need to explicitly construct $\mu_\pi$ or $\nu_\pi$; it suffices to note that sampling $(s,a,s')$ from an off-policy replay buffer driven by $\pi$ amounts to estimating integrals with respect to $\mu_\pi$.

\subsection{Cochain Spaces and Inner Products}

We view state functions and triplet functions as elements of Hilbert cochain spaces built from the occupancy measures above.

\begin{definition}[Cochain spaces]
\label{def:cochain-spaces}
Fix a policy $\pi$. Define the 0-cochain and 1-cochain spaces
\begin{align}
C^0 &:= L^2(\mathcal S,\nu_\pi)
=
\Big\{ u:\mathcal S\to\mathbb R \ \Big|\ 
\int_{\mathcal S} u(s)^2\,\nu_\pi(\mathrm ds) < \infty \Big\},\\[1.5mm]
C^1 &:= L^2(\mathcal S\times\mathcal A\times\mathcal S,\mu_\pi)
=
\Big\{ f:\mathcal S\times\mathcal A\times\mathcal S\to\mathbb R \ \Big|\ \\& 
\int f(s,a,s')^2\,\mu_\pi(\mathrm d s,\mathrm d a,\mathrm d s') < \infty \Big\}.
\end{align}
They are equipped with the inner products
\begin{align}
\langle u_1,u_2\rangle_{C^0}
&=
\int_{\mathcal S} u_1(s)u_2(s)\,\nu_\pi(\mathrm ds),\\
\langle f_1,f_2\rangle_{C^1}
&=
\int f_1(s,a,s') f_2(s,a,s')\,\mu_\pi(\mathrm d s,\mathrm d a,\mathrm d s').
\end{align}
We denote by $\|\cdot\|_{C^0}$ and $\|\cdot\|_{C^1}$ the norms induced by these inner products. Both spaces depend on the fixed policy $\pi$, but we omit this dependence from the notation when no confusion arises.
\end{definition}

where $C^0$ can be thought of as the space of potential or value functions on states, while $C^1$ is the space of real-valued functions defined on transitions $(s,a,s')$ (such as TD errors or temporal gradients). The Hilbert-space structure will allow us to phrase TD-residual objectives as orthogonal projection problems.

\subsection{Discrete de Rham Differential and Hodge Laplacian}
We now introduce a linear operator that plays the role of a discounted temporal gradient.

\begin{definition}[Discrete de Rham differential]
\label{def:discrete-dRham}
Define the linear operator
\begin{equation}
d : C^0 \to C^1,\qquad
(du)(s,a,s') := u(s') - \gamma u(s).
\end{equation}
\end{definition}
This operator can be interpreted as a discounted forward difference along the temporal direction: if $u$ is a potential function on states, then $du$ measures the one-step variation in $u$ along the transition $(s,a,s')$, with the current state discounted by $\gamma$. In the language of algebraic topology, $d$ is the (discrete) coboundary operator mapping 0-cochains to 1-cochains.

The next lemma shows that $d$ is a bounded linear operator between Hilbert spaces, so it admits a well-defined Hilbert adjoint $d^\ast$.

\begin{lemma}[Boundedness and adjoint of $d$]
\label{lem:d-adjoint}
The operator $d:C^0\to C^1$ is linear and bounded. More precisely, there exists a constant $c>0$ (depending only on $\gamma$) such that for all $u\in C^0$,
\begin{equation}
\label{eq:d-adjoint_1}
\|du\|_{C^1}^2
=
\mathbb{E}_{\mu_{\pi}}\big[(u(S') - \gamma u(S))^2\big]
\;\leq\;
c\,\|u\|_{C^{0}}^2.
\end{equation}
Consequently, there exists a unique Hilbert adjoint
such that
$\langle du, f\rangle_{C^1}
=
\langle u, d^* f\rangle_{C^0},
\quad
\forall u\in C^0,\ f\in C^1.
$
\end{lemma}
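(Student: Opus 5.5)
Linearity of $d$ is immediate from the pointwise definition $(du)(s,a,s')=u(s')-\gamma u(s)$. For boundedness I will split the square with the elementary inequality $(x-y)^2\le 2x^2+2y^2$:
\[
\|du\|_{C^1}^2=\mathbb E_{\mu_\pi}\big[(u(S')-\gamma u(S))^2\big]\le 2\,\mathbb E_{\mu_\pi}[u(S')^2]+2\gamma^2\,\mathbb E_{\mu_\pi}[u(S)^2].
\]
The second term is handled directly by the definition of $\nu_\pi$ as the first marginal of $\mu_\pi$: $\mathbb E_{\mu_\pi}[u(S)^2]=\|u\|_{C^0}^2$. I will argue this first for bounded measurable $u$ via the identity stated after Definition~\ref{def:triplet-occupancy}, then pass to general $u\in C^0$ by monotone convergence applied to truncations $\min(u^2,n)$.

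The main step is the next-state term $\mathbb E_{\mu_\pi}[u(S')^2]$, where I must compare the law of $S'$ under $\mu_\pi$ with $\nu_\pi$. I will use the series representation. For nonnegative measurable $g$ (again via truncation and monotone convergence),
\[
\mathbb E_{\mu_\pi}[g(S')]=(1-\gamma)\,\mathbb E_\pi\Big[\sum_{t\ge 0}\gamma^t g(S_{t+1})\Big]=\frac{1}{\gamma}(1-\gamma)\,\mathbb E_\pi\Big[\sum_{t\ge 1}\gamma^{t} g(S_{t})\Big]\le \frac1\gamma\int g\,\mathrm d\nu_\pi .
\]
The inequality holds because we only add the nonnegative $t=0$ term $(1-\gamma)\mathbb E[g(S_0)]$. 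This is the discounted flow (balance) identity $\gamma\,\text{law}_{\mu_\pi}(S')=\nu_\pi-(1-\gamma)d_0$, written as an inequality. Taking $g=u^2$ gives $\mathbb E_{\mu_\pi}[u(S')^2]\le \gamma^{-1}\|u\|_{C^0}^2$.

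Combining the two bounds yields \eqref{eq:d-adjoint_1} with $c=2/\gamma+2\gamma^2$, which depends only on $\gamma$. The same bounds also show that $d$ is well defined on equivalence classes. If $u=0$ $\nu_\pi$-a.e., then both $u(S)$ and $u(S')$ vanish $\mu_\pi$-a.e., so $du=0$ in $C^1$. I expect this absolute-continuity point, namely that the next-state marginal of $\mu_\pi$ is dominated by $\gamma^{-1}\nu_\pi$, to be the only genuinely delicate part. Note that without discounting occupancy, or with a general sampling measure, this step would fail.

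For the adjoint, I will fix $f\in C^1$ and consider the linear functional $u\mapsto\langle du,f\rangle_{C^1}$ on $C^0$. By Cauchy--Schwarz and the bound above, its absolute value is at most $\sqrt c\,\|u\|_{C^0}\|f\|_{C^1}$, so the functional is bounded. The Riesz representation theorem then gives a unique $d^*f\in C^0$ with $\langle du,f\rangle_{C^1}=\langle u,d^*f\rangle_{C^0}$ for all $u$. Linearity of $f\mapsto d^*f$ and the bound $\|d^*\|=\|d\|\le\sqrt c$ follow from this uniqueness in the standard way. Uniqueness of $d^*$ as an operator follows because two adjoints would agree against every $u$.
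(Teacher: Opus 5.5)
Your proof is correct and follows the paper's argument essentially step for step: the same split $(x-y)^2\le 2x^2+2y^2$, the first-marginal identity for the $u(S)$ term, and the index shift $k=t+1$ that drops the nonnegative $t=0$ term to get $\mathbb{E}_{\mu_\pi}[u(S')^2]\le\gamma^{-1}\|u\|_{C^0}^2$. It also reaches the same constant $c(\gamma)=2(1/\gamma+\gamma^2)$ and obtains the adjoint by Riesz representation. Your extra care fills in details the paper leaves implicit: you use truncation and monotone convergence to pass from bounded to square-integrable $u$, and you note that the next-state marginal is dominated by $\gamma^{-1}\nu_\pi$, so $d$ is well defined on $\nu_\pi$-equivalence classes.
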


\begin{definition}[Zero-th order Hodge Laplacian]
\label{def:hodge-laplacian}
The (zero-th order) Hodge Laplacian on $C^0$ is defined by
$
\Delta_0 := d^* d : C^0 \to C^0.
$
\end{definition}

The operator $\Delta_0$ is self-adjoint and positive semidefinite. In our framework it plays the role of a topological stiffness operator on potentials $u\in C^0$; in later sections we will solve Poisson-type equations involving $\Delta_0$ to extract integrable components of TD errors.

\section{Topological Bellman Decomposition}
\label{sec:topo-bellman}

A key message of this section is that \emph{Markov} dynamics correspond to \emph{topological integrability} of TD errors: under an ideal Markov model, the one-step Bellman discrepancy can be fully explained by a single global potential (and in particular vanishes for the exact value function in expectation), whereas non-Markovian effects manifest as an irreducible non-integrable residual.
With this perspective, we reinterpret the Bellman TD error as a 1-cochain in the Hilbert space $C^1$, introduce the notion of \emph{topological integrability} via exact 1-cochains, and then show that any TD error admits an orthogonal decomposition into an \emph{integrable component} and a \emph{topological residual}. The integrable component is the closest element in the (closed) range of the discrete differential $d$, while the residual lies in the orthogonal complement and captures irreducible topological inconsistency that cannot be explained by any global potential.

\subsection{Bellman Error as a 1-Cochain}
\label{sec:bellman-cochain}

\begin{definition}[Value function and TD error]
\label{def:td-error}
Given a policy $\pi$, a (candidate) value function is any measurable $V:\mathcal S\to\mathbb R$ that belongs to $C^0 = L^2(\mathcal S,\nu_\pi)$. Its temporal-difference (TD) error is the function
\begin{equation}
\begin{aligned}
\delta_V(s,a,s')
:=
r(s,a) + \gamma V(s') - V(s),
(s,a,s')\in\mathcal S\times\mathcal A\times\mathcal S.
\end{aligned}
\end{equation}
\end{definition}

Since $r$ is bounded and $V\in C^0$ is square-integrable with respect to $\nu_\pi$, the function $\delta_V$ is square-integrable with respect to the triplet occupancy measure $\mu_\pi$, and hence $\delta_V\in C^1$.
In other words, the TD error is no longer just a collection of scattered sample-wise residuals, but a well-defined vector in the Hilbert space $C^1$. This viewpoint allows us to apply tools such as orthogonal projection and Hilbert-space decompositions directly to TD errors.

\subsection{Topological integrability and exact 1-cochains}
We next identify the subspace of 1-cochains that can be written as discounted temporal gradients of some potential function on states. These are the \emph{exact} 1-cochains.

\begin{definition}[Exact 1-cochains]
\label{def:exact-cochains}
The exact subspace of $C^1$ is defined as
\begin{equation}
\mathcal E := \mathrm{im}(d)
=
\{ du : u\in C^0 \}
\subseteq C^1.
\end{equation}
\end{definition}
If $\delta\in\mathcal E$, this means that there exists a potential function $u\in C^0$ such that the one-step quantity can be written exactly as
$
\delta(s,a,s') = u(s') - \gamma u(s)
\quad\text{for $\mu_\pi$-almost all }(s,a,s').
$
This is the discrete, discounted analogue of an exact 1-form in differential geometry. As in the classical setting, exactness is the cochain-level property that corresponds to path-independent line integrals; we will use this interpretation later when discussing closed loops in the state space.

\begin{definition}[Topologically integrable value function]
\label{def:topo-integrable-V}
A value function $V$ is said to be \emph{topologically integrable} (with respect to policy $\pi$) if its TD error $\delta_V$ lies in the exact subspace:
$
\delta_V \in \mathcal E.
$
Equivalently, there exists $u\in C^0$ such that
$
\delta_V(s,a,s') = u(s') - \gamma u(s)
\quad \text{for $\mu_\pi$-almost all } (s,a,s').
$
\end{definition}

Intuitively, topological integrability means that all one-step discrepancies encoded by $\delta_V$ can be fully explained by a single global potential $u$: moving along any transition simply increases the potential by $u(s')-\gamma u(s)$. In this case, TD errors behave like a discounted gradient field, and line integrals of $\delta_V$ along different paths only depend on the endpoints.

\subsection{Hodge-type decomposition}

We now show that any 1-cochain $f\in C^1$ admits an orthogonal decomposition into a component that lies in (the closure of) the exact subspace and a residual component that is orthogonal to all exact 1-cochains. This is a Hilbert-space analogue of a Hodge-type decomposition tailored to the operator $d$.

\begin{theorem}[Hodge-type decomposition in $C^1$]
\label{thm:hodge}
Let $\mathcal E_0 := \mathrm{im}(d)\subseteq C^1$ be the range of $d$, and let
$
\overline{\mathcal E} := \overline{\mathcal E_0}
$
be its closure in $C^1$. Denote by $\overline{\mathcal E}^\perp$ the orthogonal complement of $\overline{\mathcal E}$ in $C^1$.
Then for any $f\in C^1$ there exists a unique decomposition
$
f = f_{\mathrm{ex}} + f_{\mathrm{res}},
$
such that
$
f_{\mathrm{ex}} \in \overline{\mathcal E},\qquad
f_{\mathrm{res}} \in \overline{\mathcal E}^\perp,\qquad
\langle f_{\mathrm{ex}}, f_{\mathrm{res}} \rangle_{C^1} = 0.
$
The component $f_{\mathrm{ex}}$ is the orthogonal projection of $f$ onto $\overline{\mathcal E}$ and is characterized variationally by
$
f_{\mathrm{ex}}
=
\arg\min_{g\in \overline{\mathcal E}} \| f - g \|_{C^1}^2,
$
or, equivalently, by the orthogonality condition
$
\langle f - f_{\mathrm{ex}}, du\rangle_{C^1} = 0,
\quad\forall\,u\in C^0.
$
Moreover, if the range of $d$ is closed in $C^1$ (so that $\mathcal E_0 = \overline{\mathcal E}$), then there exists $u^* \in C^0$ such that
$
f_{\mathrm{ex}} = du^*,
$
and $u^*$ is a minimizer of the quadratic functional
$
J(u) := \| f - du \|_{C^1}^2,\qquad u\in C^0.
$
All minimizers differ by elements of $\ker(\Delta_0)$, where $\Delta_0 = d^*d$ is the Hodge Laplacian on $C^0$ and $\ker(\Delta_0) = \ker(d)\subseteq C^0$.
\end{theorem}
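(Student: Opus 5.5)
The plan is to reduce everything to the projection theorem in the Hilbert space $C^1$, using Lemma~\ref{lem:d-adjoint} for the facts about $d$. First, $\mathcal E_0=\mathrm{im}(d)$ is a linear subspace because $d$ is linear. Its closure $\overline{\mathcal E}$ is therefore a closed linear subspace of the Hilbert space $C^1$. The projection theorem gives the orthogonal direct sum $C^1=\overline{\mathcal E}\oplus\overline{\mathcal E}^\perp$. Setting $f_{\mathrm{ex}}:=P_{\overline{\mathcal E}}f$ and $f_{\mathrm{res}}:=f-f_{\mathrm{ex}}$ yields the decomposition and the orthogonality $\langle f_{\mathrm{ex}},f_{\mathrm{res}}\rangle_{C^1}=0$.

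Uniqueness follows from $\overline{\mathcal E}\cap\overline{\mathcal E}^\perp=\{0\}$. If $f=g_1+h_1=g_2+h_2$ are two such decompositions, then $g_1-g_2=h_2-h_1$ lies in both subspaces, so it is zero. The variational characterization is the Pythagorean identity: for $g\in\overline{\mathcal E}$,
$\|f-g\|^2=\|f_{\mathrm{res}}\|^2+\|f_{\mathrm{ex}}-g\|^2$. This is minimized exactly at $g=f_{\mathrm{ex}}$.

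For the orthogonality condition, first note that $f-f_{\mathrm{ex}}\perp\overline{\mathcal E}\supseteq\mathcal E_0$, so $\langle f-f_{\mathrm{ex}},du\rangle=0$ for all $u$. Conversely, suppose $g\in\overline{\mathcal E}$ satisfies $\langle f-g,du\rangle=0$ for all $u$. Then $f-g\perp\mathcal E_0$. By continuity of the inner product, $f-g\perp\overline{\mathcal E}$, and uniqueness gives $g=f_{\mathrm{ex}}$. One remark is needed here: $\mathcal E_0^\perp=\overline{\mathcal E}^\perp$, which is the same continuity argument. By the adjoint relation in Lemma~\ref{lem:d-adjoint}, this space also equals $\ker(d^*)$.

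If $\mathrm{im}(d)$ is closed, then $f_{\mathrm{ex}}\in\mathcal E_0$, so $f_{\mathrm{ex}}=du^*$ for some $u^*\in C^0$. For every $u$ we have $J(u)=\|f-du\|^2\ge\|f-f_{\mathrm{ex}}\|^2=J(u^*)$, by the variational characterization applied to $g=du\in\overline{\mathcal E}$. Hence $u^*$ minimizes $J$.

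Conversely, take any minimizer $v$. Then $dv$ attains the minimum distance from $f$ to $\overline{\mathcal E}$. By uniqueness of the projection, $dv=f_{\mathrm{ex}}=du^*$, so $v-u^*\in\ker d$. Conversely, adding any element of $\ker d$ to $u^*$ leaves $J$ unchanged. Thus the set of minimizers is exactly $u^*+\ker(d)$.

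It remains to identify $\ker(\Delta_0)$ with $\ker(d)$. The inclusion $\ker d\subseteq\ker d^*d$ is trivial. For the reverse, if $d^*du=0$ then $\|du\|^2_{C^1}=\langle u,d^*du\rangle_{C^0}=0$, so $du=0$. Here we use boundedness and the existence of $d^*$ from Lemma~\ref{lem:d-adjoint}.

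I expect no step to be a genuine obstacle; every argument is standard Hilbert-space theory. The one point needing care is the closed-range case. There, the existence of $u^*$ comes from $f_{\mathrm{ex}}\in\mathrm{im}(d)$, not from a coercivity argument. Consequently, I will not claim uniqueness of $u^*$ beyond $\ker d$, since $d$ need not be injective on $C^0$.
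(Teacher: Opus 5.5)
Your proposal is correct and follows essentially the same route as the paper's proof. Both apply the projection theorem on the closed subspace $\overline{\mathcal E}$ and get uniqueness from $\overline{\mathcal E}\cap\overline{\mathcal E}^\perp=\{0\}$. Both prove the converse of the orthogonality condition by continuity of the inner product, write $f_{\mathrm{ex}}=du^*$ in the closed-range case with the minimizer set $u^*+\ker d$ coming from uniqueness of the best approximation, and identify $\ker\Delta_0=\ker d$ via $\langle \Delta_0 u,u\rangle_{C^0}=\|du\|_{C^1}^2$.
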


The theorem above provides a purely functional-analytic decomposition for any 1-cochain. Specializing it to TD errors gives the canonical decomposition we will use throughout the paper.

\begin{corollary}[Topological decomposition of TD error]
\label{cor:td-decomposition}
Assume the range of $d$ is closed in $C^1$. Let $V:\mathcal S\to\mathbb R$ be a bounded value function with $\delta_V\in C^1$ its TD error. Then there exist $u_V^*\in C^0$ and a unique residual $\delta_V^{\mathrm{res}}\in \overline{\mathcal E}^\perp$ such that
$
\delta_V = d u_V^* + \delta_V^{\mathrm{res}},
$
and
$
u_V^* \in \arg\min_{u\in C^0} \|\delta_V - du\|_{C^1}^2.
$
We call $d u_V^*$ the \emph{integrable component} of the TD error---the closest exact TD structure to $\delta_V$ in the $C^1$ norm---and $\delta_V^{\mathrm{res}}$ the \emph{topological residual}, the portion of $\delta_V$ that is orthogonal to all discounted gradients $du$ and hence cannot be attributed to any potential function.
\end{corollary}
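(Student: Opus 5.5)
The plan is to read the corollary off Theorem~\ref{thm:hodge} by taking $f:=\delta_V$. First, $\delta_V$ must be shown to lie in $C^1$. Since $r$ is bounded and $V$ is bounded, $\delta_V$ is a bounded measurable function on $\mathcal S\times\mathcal A\times\mathcal S$. Because $\mu_\pi$ is a probability measure (Definition~\ref{def:triplet-occupancy}), $\|\delta_V\|_{C^1}^2\le \sup|\delta_V|^2<\infty$. Even without boundedness of $V$, membership in $C^1$ would follow from Lemma~\ref{lem:d-adjoint}: $\delta_V = r - dV' $ form rearrangements aside, the term $\gamma V(s')-V(s)$ is controlled by the same estimate that bounds $\|du\|_{C^1}$. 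In the bounded case, however, this is immediate.

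Applying Theorem~\ref{thm:hodge} to $f=\delta_V$ gives a unique orthogonal splitting $\delta_V = f_{\mathrm{ex}} + f_{\mathrm{res}}$ with $f_{\mathrm{ex}}\in\overline{\mathcal E}$ and $f_{\mathrm{res}}\in\overline{\mathcal E}^\perp$. Set $\delta_V^{\mathrm{res}}:=f_{\mathrm{res}}$. Uniqueness of the residual is part of the theorem's conclusion. It also follows directly: $C^1=\overline{\mathcal E}\oplus\overline{\mathcal E}^\perp$, and $f_{\mathrm{res}}=(I-P_{\overline{\mathcal E}})\delta_V$ is determined by the projection.

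Next, the closed-range hypothesis gives $\overline{\mathcal E}=\mathcal E_0=\mathrm{im}(d)$. The "moreover" clause of Theorem~\ref{thm:hodge} then provides $u_V^*\in C^0$ with $f_{\mathrm{ex}}=du_V^*$, and this $u_V^*$ minimizes $J(u)=\|\delta_V-du\|_{C^1}^2$. For completeness, the minimization claim can be checked directly. For any $u\in C^0$, $du\in\overline{\mathcal E}$ and $\delta_V-du_V^*=\delta_V^{\mathrm{res}}\perp du - du_V^*$, so by Pythagoras
\[
\|\delta_V-du\|_{C^1}^2=\|\delta_V^{\mathrm{res}}\|_{C^1}^2+\|du_V^*-du\|_{C^1}^2\ge J(u_V^*).
\]
The interpretive statements in the corollary (integrable component, topological residual) are definitions and need no proof.

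There is no real obstacle here. The only point needing care is that $u_V^*$ is not unique: it is determined only modulo $\ker(d)=\ker(\Delta_0)$. This is why the corollary asserts uniqueness only for $\delta_V^{\mathrm{res}}$ (and implicitly for $du_V^*$), and why it writes $u_V^*\in\arg\min$ rather than using equality.
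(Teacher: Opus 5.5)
Your proposal is correct and follows the paper's proof essentially verbatim. You apply Theorem~\ref{thm:hodge} to $f=\delta_V$, take the residual from the unique orthogonal splitting, use closed range to write $f_{\mathrm{ex}}=du_V^*$, and observe that minimizing over $\overline{\mathcal E}=\mathrm{im}(d)$ is the same as minimizing $J$ over $C^0$; your Pythagoras check is just a direct version of that last step. The aside about unbounded $V$ is garbled, since the relevant term is $\gamma V(s')-V(s)$ rather than $dV$ (it is still controlled by the same occupancy-measure estimate). It is also unnecessary, because the corollary already assumes $V$ bounded and $\delta_V\in C^1$.
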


The norm $\|\delta_V^{\mathrm{res}}\|_{C^1}$ quantifies how far $\delta_V$ is from being topologically integrable, i.e., from being exactly representable as the discounted difference of a global potential. In particular, $\delta_V^{\mathrm{res}} = 0$ if and only if $V$ is topologically integrable in the sense of Definition~\ref{def:topo-integrable-V}.

\subsection{Poisson characterization of the optimal potential}
\label{sec:poisson}

The variational problem in Corollary~\ref{cor:td-decomposition} admits a natural Poisson-type optimality condition on the potential $u_V^\star$. This connects the integrable component of the TD error to a linear elliptic equation involving the Hodge Laplacian $\Delta_0 = d^\ast d$.

\begin{theorem}[Poisson equation for the optimal potential]
\label{thm:poisson}
Let $V$ and $\delta_V$ be as in Definition~\ref{def:td-error}. Consider the quadratic functional
\begin{equation}
J(u) := \|\delta_V - du\|_{C^1}^2
= \langle \delta_V - du,\ \delta_V - du\rangle_{C^1},
\quad u\in C^0.
\end{equation}
Assume that: 1. the range of $d$ is closed in $C^1$ (so that $\mathcal E_0=\mathrm{im}(d)$ is closed, and $du$ parametrizes the exact subspace), and 2. the Hodge Laplacian $\Delta_0 = d^* d$ is invertible on the orthogonal complement $(\ker\Delta_0)^\perp \subset C^0$.
Then: (1) There exists a minimizer of $J(u)$ over $u\in C^0$, and it is unique within the subspace $(\ker\Delta_0)^\perp$. We denote this canonical minimizer by $u_V^*$. (2) The first-order optimality condition for $u_V^*$ is the Poisson-type equation
$
        d^* \delta_V = d^* d\,u_V^*
        \quad\Longleftrightarrow\quad
        d^* \delta_V = \Delta_0 u_V^*,
$
in the weak sense, i.e., as an identity of inner products against all test directions in $C^0$.

\end{theorem}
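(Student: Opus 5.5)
The proof rests on Theorem~\ref{thm:hodge} and the standard calculus of quadratic functionals on Hilbert spaces. First, Lemma~\ref{lem:d-adjoint} makes $d$ bounded with adjoint $d^*$. Second, $\delta_V\in C^1$ by the remark after Definition~\ref{def:td-error}. So $J$ is a well-defined, continuous, convex quadratic on $C^0$. Expanding it gives
\[
J(u)=\|\delta_V\|_{C^1}^2-2\langle d^*\delta_V,u\rangle_{C^0}+\langle \Delta_0 u,u\rangle_{C^0}.
\]

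\emph{Existence.} I would apply Theorem~\ref{thm:hodge} with $f=\delta_V$. Under assumption 1 the range of $d$ is closed, so there is $u^*\in C^0$ with $du^*=(\delta_V)_{\mathrm{ex}}$, and $u^*$ minimizes $J$.

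\emph{Canonical representative.} Next, I decompose $C^0=\ker\Delta_0\oplus(\ker\Delta_0)^\perp$, which is an orthogonal decomposition because $\ker\Delta_0$ is closed. Theorem~\ref{thm:hodge} gives $\ker\Delta_0=\ker d$. Hence replacing $u^*$ by its projection $u_V^*$ onto $(\ker\Delta_0)^\perp$ leaves $du$ unchanged, and therefore leaves $J$ unchanged. So $u_V^*$ is also a minimizer.

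\emph{Uniqueness.} Theorem~\ref{thm:hodge} says any two minimizers differ by an element of $\ker\Delta_0$. Two minimizers that both lie in $(\ker\Delta_0)^\perp$ therefore differ by an element of $\ker\Delta_0\cap(\ker\Delta_0)^\perp=\{0\}$. This proves part (1).

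\emph{Optimality condition.} For part (2), I take the Gateaux derivative of $J$ at $u_V^*$ in an arbitrary direction $h\in C^0$:
\[
\tfrac{d}{dt}J(u_V^*+th)\big|_{t=0}=-2\langle \delta_V-du_V^*,dh\rangle_{C^1}.
\]
This must vanish at a minimizer. Equivalently, it is exactly the orthogonality condition in Theorem~\ref{thm:hodge}. Moving $d$ to the other side with the adjoint identity of Lemma~\ref{lem:d-adjoint} gives
\[
\langle d^*\delta_V-d^*du_V^*,h\rangle_{C^0}=0\quad\text{for all } h\in C^0.
\]
This is precisely the weak form of $d^*\delta_V=\Delta_0u_V^*$. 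Since $d^*$ is bounded, the identity also holds strongly in $C^0$.

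\emph{Converse.} Conversely, suppose $u$ satisfies the Poisson equation. Convexity of $J$, or directly the identity
\[
J(u+h)=J(u)+\|dh\|_{C^1}^2
\]
for such $u$, shows that $u$ is a minimizer. This justifies the ``$\Longleftrightarrow$'' in the statement.

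\emph{Role of assumption 2.} The argument above does not strictly need assumption 2. Its role is to make $u_V^*$ expressible as $u_V^*=(\Delta_0|_{(\ker\Delta_0)^\perp})^{-1}d^*\delta_V$. For this I would check that $d^*\delta_V\in(\ker\Delta_0)^\perp$: for $k\in\ker d$, $\langle d^*\delta_V,k\rangle_{C^0}=\langle\delta_V,dk\rangle_{C^1}=0$. I would also note that $\Delta_0$ maps $(\ker\Delta_0)^\perp$ into itself, which holds by self-adjointness.

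\emph{Main obstacle.} The step most likely to need care is the bookkeeping linking $\ker\Delta_0$ and $\ker d$, together with the closedness facts, which are what make the projection onto $(\ker\Delta_0)^\perp$ legitimate. The inclusion $\ker\Delta_0\supseteq\ker d$ is immediate. For the reverse, $\langle\Delta_0u,u\rangle_{C^0}=\|du\|_{C^1}^2$ gives $\ker\Delta_0\subseteq\ker d$. This is already recorded in Theorem~\ref{thm:hodge}, so I would cite it and spell out this one-line argument.
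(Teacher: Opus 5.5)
Your proof is correct, but it proves uniqueness by a different route than the paper. For existence, for passing to the representative in $(\ker\Delta_0)^\perp$, and for deriving the weak Poisson equation via the Gateaux derivative, you follow the paper's outline (closed range plus Theorem~\ref{thm:hodge}). For uniqueness, the paper uses assumption~2 to get a coercivity bound $\langle\Delta_0 u,u\rangle_{C^0}\ge\alpha\|u\|_{C^0}^2$ on $(\ker\Delta_0)^\perp$ and concludes by strict convexity. You instead reuse the fact from Theorem~\ref{thm:hodge} that minimizers differ by elements of $\ker d=\ker\Delta_0$, which is uniqueness of the orthogonal projection onto the closed subspace $\mathrm{im}(d)$, and combine it with $\ker\Delta_0\cap(\ker\Delta_0)^\perp=\{0\}$.

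Your route is more economical and shows that assumption~2 plays no role in (1)--(2). In fact assumption~2 follows from assumption~1: by the bounded inverse theorem, a closed-range $d$ is bounded below on $(\ker d)^\perp$, so $\langle\Delta_0u,u\rangle_{C^0}=\|du\|_{C^1}^2\ge c^2\|u\|_{C^0}^2$ there. What the paper's route buys is an explicit coercivity constant $\alpha$. The paper reuses it in the proof of Theorem~\ref{thm:empirical-consistency} to turn convergence of loss values into $L^2(\nu_\pi)$ convergence of $U_{\phi_N^*}$.

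Two smaller points. First, you take variations in all directions $h\in C^0$ at once, which is valid because $u_V^*$ is a global minimizer; this avoids the paper's detour through $h\in(\ker\Delta_0)^\perp$. Second, the ``$\Longleftrightarrow$'' in the statement only unpacks $\Delta_0=d^*d$. Your converse, that the Poisson equation implies minimality via $J(u+h)=J(u)+\|dh\|_{C^1}^2$, is a correct and useful addition, but it is not what that symbol asserts.
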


In finite-state settings, the operator $\Delta_0$ reduces to a positive semidefinite matrix that is closely related to a graph Laplacian on the state space. The Poisson equation in Theorem~\ref{thm:poisson} then becomes a linear system whose solution $u_V^*$ yields the integrable component $d u_V^*$ of the TD error, while the residual $\delta_V^{\mathrm{res}}$ captures cycle-level path-dependence of TD errors.

\subsection{Measuring Bellman non-integrability}
\label{sec:bellman-non-integrability}

Having identified the integrable component $d u_V^*$ and the topological residual $\delta_V^{\mathrm{res}}$ of a TD error, we now interpret the corresponding variational problem as a \emph{Bellman--de~Rham projection} and show how it can be approximated from finite data.

\textbf{Theoretical target problem.}
Given a policy $\pi$, reward function $r$, and a candidate value function $V$, the ideal problem we aim to solve is the Bellman--de~Rham projection in the Hilbert space $C^{1}$:
\begin{equation}
\label{eq:bellman-derham-projection}
u_{V}^{*} = \arg\min_{u\in C^{0}} \|\delta_V - du\|^2_{C^1},
\qquad
\delta_V^{\mathrm{res}} = \delta_V - du_V^{*}.
\end{equation}

By Theorem~\ref{thm:hodge}, Corollary~\ref{cor:td-decomposition}, and Theorem~\ref{thm:poisson}, under the closed-range and invertibility assumptions on $d$ and $\Delta_0$, the minimizer $u_V^{*}$ exists and is unique up to elements in $\ker(\Delta_0)$; the cochain $du_V^{*}$ is the orthogonal projection of $\delta_V$ onto the exact subspace $\mathcal{E}$; the residual $\delta_V^{\mathrm{res}}$ is orthogonal to all vectors of the form $du$; and the quantity
$
\|\delta_V^{\mathrm{res}}\|_{C^1}
=
\min_{u\in C^0} \|\delta_V - du\|_{C^1}
$
is exactly the irreducible TD error that cannot be removed by any potential function. We interpret this norm as a measure of \emph{Bellman non-integrability}.

\textbf{Finite-sample approximation.}
In practice, we only observe a finite dataset of experience:
$
    \mathcal{D} = \left\{(s_i,a_i,r_i,s_i^{\prime})\right\}_{i=1}^{N},
$
typically obtained from a replay buffer driven by $\pi$ (or an off-policy variant). We approximate the $L^2$ norms and inner products in \eqref{eq:bellman-derham-projection} using empirical averages.

Let $V_{\theta}$ be the current value-function approximator and $U_{\phi}$ a potential-function approximator (another neural network). The empirical TD error on a sample $(s_i,a_i,r_i,s_i')$ is
$
    \delta_{\theta}(s_i,a_i,s_i^{\prime})
    :=
    r_i + \gamma V_{\theta}(s_i^{\prime}) - V_{\theta}(s_i).
$
We then define the empirical projection loss as
\begin{equation}
\label{eq:empirical-topo-loss}
    \hat{\mathcal{L}}_{\mathrm{topo}}(\theta,\phi)
    :=
    \frac{1}{N} \sum_{i=1}^{N}
    \Big(\delta_{\theta}(s_i,a_i,s_i^{\prime})
    -\big[U_{\phi}(s_i^{\prime})-\gamma U_{\phi}(s_i)\big]\Big)^{2}.
\end{equation}

This is precisely the empirical version of $\|\delta_{V_{\theta}} - dU_{\phi}\|^2_{C^{1}}$.
Our Hilbert-space viewpoint shows that minimizing $\hat{\mathcal{L}}_{\mathrm{topo}}$ over $\phi$ corresponds to computing a least-squares projection of the TD error onto the exact subspace spanned by discounted temporal gradients $du$.

\textbf{Consistency of the empirical projection.}
The next result formalizes the sense in which the empirical procedure approximates the true Hilbert-space projection as the dataset grows.

\begin{theorem}[Empirical consistency and approximate projection]
\label{thm:empirical-consistency}
Assume: 1. The function class $\{U_{\phi}:\phi\in\Phi\}\subset C^{0}$ is dense (or sufficiently rich) in $L^{2}(\nu_{\pi})$. 2. We have a sequence of datasets $\{\mathcal{D}_{N}\}_{N\ge 1}$ whose empirical distributions satisfy the law of large numbers with respect to $\mu_{\pi}$, so that empirical averages of square-integrable functions converge almost surely to their $L^2(\mu_\pi)$ expectations. 3.For each $N$, there exists a (possibly idealized) global minimizer
$
\phi_N^{*}
\in
\arg\min_{\phi\in \Phi}\hat{\mathcal{L}}_{\mathrm{topo}}(\theta,\phi)
$
for the empirical loss in \eqref{eq:empirical-topo-loss}, with $\theta$ fixed.
Then, as $N\rightarrow \infty$ with $\theta$ fixed,
$
    \hat{\mathcal{L}}_{\mathrm{topo}}(\theta,\phi_{N}^{*})
    \longrightarrow
    \min_{u\in C^{0}} \|\delta_{V_{\theta}} - du\|_{C^1}^2
    =
    \|\delta_{V_{\theta}}^{\mathrm{res}}\|^2_{C^1},
$
and $U_{\phi_{N}^{*}}$ converges in $L^2(\nu_\pi)$ to a potential function $u_{V_{\theta}}^*$ that solves the variational problem \eqref{eq:bellman-derham-projection}.
\end{theorem}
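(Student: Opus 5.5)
The plan is to split the claim into a value-convergence statement, proved by an upper/lower bound argument on the empirical risk, and a convergence statement for the minimizers, deduced afterwards from the strong convexity of $J$ on $(\ker\Delta_0)^\perp$. Throughout, write $J(u):=\|\delta_{V_\theta}-du\|_{C^1}^2$, $J^*:=\inf_{u\in C^0}J(u)=\|\delta_{V_\theta}^{\mathrm{res}}\|_{C^1}^2$, and $\hat J_N(u):=\hat{\mathcal L}_{\mathrm{topo}}(\theta,\cdot)$ evaluated at $U_\phi=u$. The equality $J^*=\|\delta_{V_\theta}^{\mathrm{res}}\|^2_{C^1}$ follows directly from Theorem~\ref{thm:hodge} and Corollary~\ref{cor:td-decomposition}, since $du_{V_\theta}^*$ is the orthogonal projection of $\delta_{V_\theta}$ onto $\mathcal E$.

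\textbf{Upper bound.} Fix $\varepsilon>0$. By density (assumption 1) and boundedness of $d$ (Lemma~\ref{lem:d-adjoint}), $J$ is continuous on $C^0$, because
\[
|J(u)-J(v)|\le \|d(u-v)\|_{C^1}\bigl(\|\delta_{V_\theta}-du\|_{C^1}+\|\delta_{V_\theta}-dv\|_{C^1}\bigr).
\]
Hence there is a parameter $\phi_\varepsilon$ with $J(U_{\phi_\varepsilon})\le J^*+\varepsilon$. The integrand $(\delta_{V_\theta}-dU_{\phi_\varepsilon})^2$ is a fixed $\mu_\pi$-integrable function, so assumption 2 gives $\hat J_N(U_{\phi_\varepsilon})\to J(U_{\phi_\varepsilon})$ almost surely. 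Since $\phi_N^*$ minimizes the empirical loss, $\hat J_N(U_{\phi_N^*})\le \hat J_N(U_{\phi_\varepsilon})$, and therefore
\[
\limsup_N \hat J_N(U_{\phi_N^*})\le J^*+\varepsilon \quad\text{for every } \varepsilon>0.
\]

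\textbf{Lower bound (main obstacle).} Here the minimizer $\phi_N^*$ depends on the data, so a pointwise law of large numbers is insufficient. Indeed, for a rich class the empirical minimizer can interpolate the samples and drive $\hat J_N$ below $J^*$. I would therefore strengthen assumption 2 to a uniform law of large numbers over the loss class $\{(\delta_{V_\theta}-dU_\phi)^2:\phi\in\Phi\}$, for instance a Glivenko--Cantelli property. This is implied by a uniformly bounded, finite-capacity parametrization, or by a sieve $\Phi_N$ that grows slowly enough relative to $N$, which also reconciles it with assumption 1. Under this property,
\[
\sup_\phi|\hat J_N(U_\phi)-J(U_\phi)|\to 0 \quad\text{a.s.},
\]
which yields both $\liminf_N \hat J_N(U_{\phi_N^*})\ge J^*$ and $J(U_{\phi_N^*})\to J^*$. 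With this reading of ``sufficiently rich'' made explicit, the first conclusion follows.

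\textbf{Convergence of potentials.} This part runs in Hilbert space. The derivation below gives, for any $u$,
\[
J(u)-J^*=\|du-du_{V_\theta}^*\|_{C^1}^2,
\]
where $u_{V_\theta}^*$ is the canonical minimizer from Theorem~\ref{thm:poisson}.
\begin{itemize}
\item By the orthogonality $\delta^{\mathrm{res}}\perp \mathrm{im}(d)$, Pythagoras gives $J(u)=\|\delta^{\mathrm{res}}\|^2+\|du-du^*\|^2$, which is the identity above.
\item The closed-range assumption of Theorem~\ref{thm:poisson} implies that $d$ is bounded below on $(\ker d)^\perp=(\ker\Delta_0)^\perp$, equivalently that $\Delta_0$ has a positive spectral gap $\lambda$ there. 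Hence $\|dw\|^2=\langle\Delta_0 w,w\rangle\ge\lambda\|w\|^2$ for $w\perp\ker\Delta_0$.
\item Applying this to $w=P^\perp(U_{\phi_N^*}-u^*)$, with $P^\perp$ the projection onto $(\ker\Delta_0)^\perp$, gives $\|P^\perp U_{\phi_N^*}-u^*\|_{C^0}^2\le\lambda^{-1}\bigl(J(U_{\phi_N^*})-J^*\bigr)\to0$.
\end{itemize}
Thus convergence holds modulo $\ker\Delta_0$. Equivalently, $U_{\phi_N^*}$ converges in $L^2(\nu_\pi)$ to the set of minimizers, and it converges to the canonical $u^*$ after normalizing the $\ker(d)$ component, for example by projecting it out. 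When $\ker d=\{0\}$, which is typical since $du=0$ forces $u(s')=\gamma u(s)$ almost surely, the convergence $U_{\phi_N^*}\to u_{V_\theta}^*$ holds outright.
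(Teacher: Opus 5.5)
Your proposal is correct and follows essentially the same route as the paper's proof. The common steps are: density plus continuity of $J$ to get $\inf_\phi J(U_\phi)=\|\delta_{V_\theta}^{\mathrm{res}}\|_{C^1}^2$; an explicit strengthening of assumption 2 to a uniform law of large numbers over the loss class (the paper makes exactly this move and folds it into ``Assumption (2)''); and the identity $J(u)-J^*=\|d(u-u_{V_\theta}^*)\|_{C^1}^2$ combined with a spectral gap on $(\ker\Delta_0)^\perp$ after projecting out the kernel component of $U_{\phi_N^*}$. The differences are minor refinements: your upper bound needs only the pointwise LLN at a fixed near-optimal $\phi_\varepsilon$, you derive the spectral gap from closed range rather than the assumed invertibility of $\Delta_0$, and you state openly that convergence of $U_{\phi_N^*}$ holds only modulo $\ker\Delta_0$, whereas the paper silently replaces $U_{\phi_N^*}$ by its projection.
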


This theorem shows that, with finite samples and a sufficiently expressive network class $U_{\phi}$, the empirical procedure asymptotically computes the orthogonal projection in the Hilbert space $C^1$. The limiting error
$\|\delta_{V_{\theta}}^{\mathrm{res}}\|_{C^1}$ is thus a principled measure of Bellman non-integrability.

\begin{theorem}[Mean-field degeneracy under a perfect MDP model]
\label{thm:perfect-mdp}
Assume the environment is a standard MDP, the policy $\pi$ is fixed, and $V^{\pi}$ is the exact value function. Define the \emph{mean TD error} (or Bellman defect) of a value function $V$ at the state level by
$
\bar\delta_V(s)
:=
\mathbb E\big[r(S,A,S') + \gamma V(S') - V(S)\,\big|\,S=s\big],$
where the expectation is taken over $A\sim\pi(\cdot\mid s)$ and $S'\sim P(\cdot\mid s,A)$.
Then, if we set $V=V^{\pi}$, we have
$
\bar\delta_{V^\pi}(s) \equiv 0,
\quad\forall s\in\mathcal S,$
so the mean-field Bellman defect is perfectly integrable (indeed, it vanishes identically), and the minimal population projection error at the state level is zero:
$ \min_{u\in C^0}
\big\|
\bar\delta_{V^\pi} - \tilde d u
\big\|_{L^2(\mathcal S)}^2
= 0,$ where $(\tilde d u)(s) := \mathbb E[u(S')-\gamma u(s)\mid S=s]$ is the corresponding mean-field differential.
\end{theorem}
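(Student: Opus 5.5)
The plan is to verify the two assertions in order: first the pointwise vanishing of $\bar\delta_{V^\pi}$, then the vanishing of the minimal projection error.

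\textbf{Step 1: the mean-field defect vanishes.}
I start from the definition of $V^\pi$ as the expected discounted return,
$V^\pi(s)=\mathbb E_\pi[\sum_{t\ge 0}\gamma^t r(S_t,A_t,S_{t+1})\mid S_0=s]$, and derive the Bellman evaluation equation. The derivation splits off the $t=0$ term and applies the Markov property (tower rule) to the tail:
\begin{equation*}
V^\pi(s)=\mathbb E\big[r(S,A,S')+\gamma V^\pi(S')\,\big|\,S=s\big],
\end{equation*}
where $A\sim\pi(\cdot\mid s)$ and $S'\sim P(\cdot\mid s,A)$. Rearranging gives $\bar\delta_{V^\pi}(s)=0$ for every $s$. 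Boundedness of $r$ and $\gamma<1$ make $V^\pi$ bounded, so all conditional expectations are finite and $V^\pi\in C^0$.

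One point needs care here. This identity is pointwise only if $V^\pi$ is defined pointwise as the conditional expectation from $s$, rather than as an element of $L^2(\nu_\pi)$. If $V^\pi$ is regarded only as an element of $L^2(\nu_\pi)$, the identity holds $\nu_\pi$-almost everywhere. That still gives the zero element of $L^2(\mathcal S)$, which is all Step 2 needs. I would state this explicitly.

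I expect this measure-theoretic bookkeeping to be the only real obstacle. It is the one place where the Markov assumption enters, through the tower-rule step. The rest of the argument is trivial.

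\textbf{Step 2: the minimal projection error is zero.}
The map $\tilde d$ is linear, and $u=0$ lies in $C^0$ with $\tilde d 0=0$. Therefore
\begin{equation*}
0\le\min_{u\in C^0}\big\|\bar\delta_{V^\pi}-\tilde d u\big\|^2_{L^2(\mathcal S)}\le\big\|\bar\delta_{V^\pi}-0\big\|^2_{L^2(\mathcal S)}=0 .
\end{equation*}
So the minimum is attained at $u=0$ and equals zero. Taking the norm on $L^2(\mathcal S)$ with respect to $\nu_\pi$ is natural, but any measure works because the integrand is zero.

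Finally, I would note that every $u\in\ker\tilde d$ is also a minimizer. This parallels the non-uniqueness modulo $\ker(\Delta_0)$ in Theorem~\ref{thm:poisson}, but it is not needed for the claim.
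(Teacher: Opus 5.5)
Your proposal is correct and follows essentially the same route as the paper. The paper also obtains $\bar\delta_{V^\pi}\equiv 0$ from the Bellman evaluation equation (by conditioning on the first step), then notes that the projection objective is nonnegative and equals zero at $u\equiv 0$. Your remark that the identity holds $\nu_\pi$-a.e.\ when $V^\pi$ is viewed only as an element of $L^2(\nu_\pi)$ is a small refinement the paper does not make explicit.
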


Theorem~\ref{thm:perfect-mdp} makes the Markov--integrability connection explicit at the population (mean-field) level: in a standard MDP, the Bellman equation holds exactly in expectation for $V^\pi$, so the structural Bellman discrepancy is perfectly integrable (indeed, it vanishes). In contrast, under genuinely non-Markovian environment--policy interactions, a nonzero topological residual $\|\delta_V^{\mathrm{res}}\|_{C^1}$ reflects an \emph{irreducible} path-dependence that cannot be eliminated by any state potential. At the finite-sample level, $\delta_V^{\mathrm{res}}$ may additionally contain stochastic transition noise and approximation error, but it remains a principled diagnostic of how strongly TD learning is structurally mismatched to the data-generating process.

In other words, under a perfect Markov model and an exact value function, the \emph{structural} (mean) Bellman inconsistency vanishes: the Bellman equation holds exactly in expectation. At the sample level, however, the realized TD error $\delta_{V^\pi}(s,a,s')$ still contains martingale noise due to stochastic transitions and reward variability, and our topological residual $\delta_{V^\pi}^{\mathrm{res}}$ then reflects a combination of sampling noise and any remaining function-approximation error rather than a fundamental topological obstruction.

\begin{corollary}[Pythagorean identity for TD decomposition]
\label{cor:pythagorean}
Under the assumptions of Corollary~\ref{cor:td-decomposition}, let
\(
\delta_V = d u_V^\star + \delta_V^{\mathrm{res}}
\)
be the Hodge-type decomposition of the TD error, with
\(
d u_V^\star \in \mathcal E
\)
and
\(
\delta_V^{\mathrm{res}} \in \mathcal E^\perp
\)
(where $\mathcal E = \mathrm{im}(d)$ is the exact subspace in $C^1$).
Then the following Pythagorean identity holds:
\begin{equation}
  \label{eq:pythagorean}
  \|\delta_V\|_{C^1}^2
  = \|d u_V^\star\|_{C^1}^2
  + \|\delta_V^{\mathrm{res}}\|_{C^1}^2.
\end{equation}
In particular, the norm of the topological residual
\(\|\delta_V^{\mathrm{res}}\|_{C^1}\) is exactly the irreducible
excess TD error that cannot be removed by any potential function.
\end{corollary}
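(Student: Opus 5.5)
The plan is to derive the identity from the orthogonal decomposition supplied by Corollary~\ref{cor:td-decomposition} (equivalently Theorem~\ref{thm:hodge}) by expanding a squared norm in the Hilbert space $C^1$. Under the closed-range assumption we have $\mathcal E=\mathrm{im}(d)=\overline{\mathcal E}$. The corollary then gives $\delta_V = d u_V^\star + \delta_V^{\mathrm{res}}$ with $d u_V^\star\in\mathcal E$ and $\delta_V^{\mathrm{res}}\in\mathcal E^\perp$. Since $du_V^\star$ is itself an element of $\mathcal E$, orthogonality yields $\langle d u_V^\star, \delta_V^{\mathrm{res}}\rangle_{C^1}=0$. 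This is exactly the orthogonality condition $\langle \delta_V - d u_V^\star, du\rangle_{C^1}=0$ of Theorem~\ref{thm:hodge}, specialised to $u=u_V^\star$.

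Next I would expand the norm by bilinearity and symmetry of the inner product:
\begin{equation*}
\|\delta_V\|_{C^1}^2 = \|d u_V^\star\|_{C^1}^2 + 2\langle d u_V^\star, \delta_V^{\mathrm{res}}\rangle_{C^1} + \|\delta_V^{\mathrm{res}}\|_{C^1}^2 .
\end{equation*}
The cross term vanishes by the previous step, which gives \eqref{eq:pythagorean}. All three norms are finite because $\delta_V\in C^1$ (as noted after Definition~\ref{def:td-error}), $du_V^\star\in C^1$ by Lemma~\ref{lem:d-adjoint}, and $\delta_V^{\mathrm{res}}$ is their difference.

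For the interpretive claim, I would show $\|\delta_V^{\mathrm{res}}\|_{C^1}=\min_{u\in C^0}\|\delta_V-du\|_{C^1}$. Take any $u\in C^0$ and write $\delta_V - du = (du_V^\star - du) + \delta_V^{\mathrm{res}}$. The first summand equals $d(u_V^\star-u)$, so it lies in $\mathcal E$. Applying the same Pythagorean expansion to this sum gives
\begin{equation*}
\|\delta_V-du\|_{C^1}^2=\|d(u_V^\star-u)\|_{C^1}^2+\|\delta_V^{\mathrm{res}}\|_{C^1}^2\ \ge\ \|\delta_V^{\mathrm{res}}\|_{C^1}^2 ,
\end{equation*}
with equality at $u=u_V^\star$. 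Hence the residual norm is the irreducible excess TD error that no potential function can remove. It also follows that equality holds exactly when $du=du_V^\star$, i.e.\ when $u-u_V^\star\in\ker(d)=\ker(\Delta_0)$, which matches Theorem~\ref{thm:hodge}.

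I do not expect a real obstacle here. The only point needing care is that the orthogonality used is with respect to $\mathcal E$, not just its closure; the closed-range hypothesis inherited from Corollary~\ref{cor:td-decomposition} is what makes the two coincide and guarantees that $u_V^\star$ exists.
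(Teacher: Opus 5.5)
Your proposal is correct and follows the paper's proof essentially step for step: expand $\|du_V^\star+\delta_V^{\mathrm{res}}\|_{C^1}^2$ by bilinearity, kill the cross term by orthogonality, then apply the same expansion to $\delta_V-du=\delta_V^{\mathrm{res}}+d(u_V^\star-u)$ to get the irreducibility bound, with equality exactly on $u_V^\star+\ker(d)$. One small correction to your closing remark: $\mathcal E^\perp=\overline{\mathcal E}^\perp$ always holds, so the closed-range hypothesis is needed only to guarantee that the projection has the form $du_V^\star$, not for the orthogonality itself.
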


\section{HodgeFlow Policy Search}
\label{sec:hfps}

The preceding sections introduced a topological decomposition of the TD error
into an \emph{integrable component} $d u_V^\star$ and a \emph{topological
residual} $\delta_V^{\mathrm{res}}$ and characterized the optimal potential
$u_V^\star$ via a Poisson equation involving the Hodge Laplacian
$\Delta_0 = d^\ast d$. We now turn this structure into a practical algorithmic
principle. The core idea of \emph{HodgeFlow Policy Search} (HFPS) is to
introduce a potential network $U_\phi$ that learns the integrable component
of the TD error, and then to update the value network $V_\theta$ using only
this integrable TD signal. In this way, HFPS preserves the desirable TD
direction whenever Bellman integrability holds, while filtering out the
non-integrable residual when present.

Beyond serving as a decomposition artifact, the residual $\delta_V^{\mathrm{res}}$ can also be used as a lightweight online diagnostic signal in RL. Its magnitude $\|\delta_V^{\mathrm{res}}\|_{C^1}$ (or its empirical batch estimate) directly quantifies how strongly the data violate Bellman integrability, and can therefore act as an error/mismatch indicator under partial observability, history dependence, or offline dataset shift. In robust or conservative variants, one may use large residuals to gate or down-weight TD updates, reduce critic step sizes, or increase pessimistic regularization when the Markov approximation is unreliable. In this paper, we primarily use $\delta_V^{\mathrm{res}}$ for monitoring and analysis, while the projected TD component is the core optimization signal of HFPS.

The following procedure implements the topological Bellman decomposition
step that forms the inner loop of HFPS and can be combined with any
off-policy TD-style actor–critic or value-based algorithm.

\begin{algorithm}[t]
\caption{HodgeFlow Policy Search (HFPS) with Topological Bellman Decomposition (TBD)}
\label{alg:tbd}
\begin{algorithmic}[1]
\STATE \textbf{Input:}
    value network $V_\theta : \mathcal S \to \mathbb R$;
    potential network $U_\phi : \mathcal S \to \mathbb R$; \\
    (optional) policy $\pi_\psi$;
    replay buffer $\mathcal D$ with tuples $(s,a,r,s')$;
    discount factor $\gamma$; batch size $B$; \\
    learning rates $\alpha_V,\alpha_U,\alpha_\pi$.
\FOR{iteration $t=1,2,\dots$}
    \STATE  Collect experience with $\pi_\psi$ (or $\epsilon$-greedy) and append $(s,a,r,s')$ to $\mathcal D$
    \STATE Sample a minibatch $\{(s_i,a_i,r_i,s_i')\}_{i=1}^B$ from $\mathcal D$
    \COMMENT{Approximate sampling from $\mu_\pi$}
    \FOR{$i = 1,\dots,B$}
        \STATE Compute value-based TD error:
        $
        \delta_i
        \gets
        r_i + \gamma V_\theta(s_i') - V_\theta(s_i)
        $
        \STATE Compute potential-based differential:
        $
        \Delta_i
        \gets
        U_\phi(s_i') - \gamma U_\phi(s_i)
        $
    \ENDFOR
    \STATE Define the empirical topological loss:
    $
    \hat{\mathcal L}_{\mathrm{topo}}(\theta,\phi)
    \gets
    \frac{1}{B} \sum_{i=1}^B
    (\delta_i - \Delta_i)^2
    $
    \COMMENT{Empirical $\|\delta_{V_\theta} - dU_\phi\|_{C^1}^2$}
    \STATE \textbf{(TBD)} Update the potential network:
    $
    \phi \gets \phi
    - \alpha_U \nabla_\phi \hat{\mathcal L}_{\mathrm{topo}}(\theta,\phi)
    $
    \STATE (Optional) Recompute $\Delta_i$ with updated $\phi$:
    $
    \Delta_i \gets U_\phi(s_i') - \gamma U_\phi(s_i),\quad i=1,\dots,B
    $
    \STATE Define the \emph{integrable TD component}:
    $
    \tilde{\delta}_i \gets \Delta_i, \quad i=1,\dots,B
    $
    \STATE Optionally compute the \emph{topological residual}:
    $
    \delta_i^{\mathrm{res}} \gets \delta_i - \tilde{\delta}_i, \quad i=1,\dots,B
    $
    \STATE \textbf{(TD learning / critic update)} Update the value network using only $\tilde{\delta}_i$:
    $
    \theta \gets \theta
    - \alpha_V \frac{1}{B} \sum_{i=1}^B
    \tilde{\delta}_i \,\nabla_\theta V_\theta(s_i)
    $
    \STATE \textbf{(Optional actor update)} If an actor $\pi_\psi$ is used, update it with projected advantages:
    $
    \psi \gets \psi
    + \alpha_\pi \frac{1}{B}\sum_{i=1}^B
    \tilde{\delta}_i \,\nabla_\psi \log \pi_\psi(a_i\mid s_i)
    $
    \STATE  log $\frac{1}{B}\sum_{i=1}^B (\delta_i^{\mathrm{res}})^2$ as a non-integrability indicator.
\ENDFOR
\end{algorithmic}
\end{algorithm}

In Algorithm~\ref{alg:tbd}, the loss $\hat{\mathcal L}_{\mathrm{topo}}$ is the empirical counterpart of the Hilbert-space objective $\|\delta_{V_\theta} - du\|_{C^1}^2$, with $U_\phi$ parametrizing the potential $u$. Minimizing this loss over $\phi$ therefore approximates the orthogonal projection of $\delta_{V_\theta}$ onto the exact subspace $\mathcal E$ (the TBD step). The value-network update then uses only the projected, integrable TD component $\tilde{\delta}_i$, implementing the HFPS direction analyzed below. Finally, the optional residual statistic $(\delta_i^{\mathrm{res}})^2$ provides an online estimate of Bellman non-integrability and can be used as a diagnostic or robustness indicator during training.

\section{Sensitivity and Robustness Analysis}
\label{sec:sensitivity}

We study the stability of the Hodge-type decomposition and the resulting HFPS update under perturbations. We provide three bounds: (i) Lipschitz stability with respect to TD perturbations, (ii) sensitivity with respect to the discount factor, and (iii) a deviation bound showing that the gap between HFPS and standard TD updates is controlled by the topological residual.

\begin{theorem}[Stability of the integrable component under TD perturbations]
\label{thm:stability-td}
Assume there exists a bounded linear operator
\(
T:C^1\to C^0
\)
that returns a (canonical) minimizer
\(
u_\delta^\star \in \arg\min_{u\in C^0}\|\delta - du\|_{C^1}^2
\)
for any $\delta\in C^1$. Denote
\(
C_{\text{topo}}:=\|T\|_{\mathrm{op}}<\infty.
\)

Let $\delta_1, \delta_2 \in C^1$ and let $u_k^\star := T(\delta_k)$ for $k=1,2$.
Then
\begin{equation}
  \label{eq:stability-u}
  \|u_1^\star - u_2^\star\|_{C^0}
  \le C_{\text{topo}} \, \|\delta_1 - \delta_2\|_{C^1}.
\end{equation}
Moreover, letting
\(
\delta_k^{\mathrm{res}} := \delta_k - d u_k^\star
\),
we have
\begin{equation}
  \label{eq:stability-res}
  \|\delta_1^{\mathrm{res}} - \delta_2^{\mathrm{res}}\|_{C^1}
  \le \bigl(1 + \|d\|_{\mathrm{op}} C_{\text{topo}}\bigr)\, \|\delta_1 - \delta_2\|_{C^1}.
\end{equation}
\end{theorem}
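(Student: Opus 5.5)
The argument uses only linearity and boundedness of $T$ together with boundedness of $d$ from Lemma~\ref{lem:d-adjoint}.

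For \eqref{eq:stability-u}, linearity gives $u_1^\star - u_2^\star = T\delta_1 - T\delta_2 = T(\delta_1-\delta_2)$. The definition of the operator norm then yields $\|T(\delta_1-\delta_2)\|_{C^0}\le \|T\|_{\mathrm{op}}\|\delta_1-\delta_2\|_{C^1} = C_{\text{topo}}\|\delta_1-\delta_2\|_{C^1}$.

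For \eqref{eq:stability-res}, write
\[
\delta_1^{\mathrm{res}}-\delta_2^{\mathrm{res}} = (\delta_1-\delta_2) - d(u_1^\star-u_2^\star),
\]
using linearity of $d$. The triangle inequality gives
\[
\|\delta_1^{\mathrm{res}}-\delta_2^{\mathrm{res}}\|_{C^1}\le \|\delta_1-\delta_2\|_{C^1} + \|d\|_{\mathrm{op}}\|u_1^\star-u_2^\star\|_{C^0}.
\]
Here $\|d\|_{\mathrm{op}}\le\sqrt{c}<\infty$ by Lemma~\ref{lem:d-adjoint}. Substituting \eqref{eq:stability-u} gives the factor $1+\|d\|_{\mathrm{op}}C_{\text{topo}}$.

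A sharper bound is also available, though the statement does not require it. Since $T$ returns a minimizer, $dT\delta$ is the orthogonal projection of $\delta$ onto $\overline{\mathcal E}$ (Theorem~\ref{thm:hodge}), so $\delta\mapsto\delta^{\mathrm{res}}$ is the complementary orthogonal projection. The residual map is therefore linear with norm at most $1$, and the constant could be replaced by $1$.

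There is no substantive obstacle. The only point to check is that the map $\delta\mapsto\delta-dT\delta$ is linear, which holds because $T$ is assumed linear rather than being an arbitrary selection of minimizers.
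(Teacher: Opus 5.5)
Your proof is correct and matches the paper's argument step for step: linearity of $T$ plus its operator-norm bound gives the first estimate, and writing $\delta_1^{\mathrm{res}}-\delta_2^{\mathrm{res}}=(\delta_1-\delta_2)-d(u_1^\star-u_2^\star)$ and applying the triangle inequality with $\|d\|_{\mathrm{op}}$ gives the second. Your extra remark is also correct and sharper than anything the paper records: $dT\delta$ is the orthogonal projection of $\delta$ onto $\overline{\mathcal E}$, so the residual map is the complementary projection onto $\overline{\mathcal E}^\perp$, and the constant can be taken to be $1$.
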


Theorem~\ref{thm:stability-td} shows that the decomposition is Lipschitz-stable to TD noise and sampling variability in the $L^2(\mu_\pi)$ sense.

\begin{theorem}[Sensitivity of the Hodge decomposition to the discount factor]
\label{thm:sensitivity-gamma}
Let $\delta_V^{(\gamma)}$ denote the TD error associated with $(\pi,V,\gamma)$.
Assume there exists $L_\gamma<\infty$ such that for all $\gamma_1,\gamma_2\in(0,1)$,
\begin{equation}
  \label{eq:lipschitz-delta-gamma}
  \|\delta_V^{(\gamma_1)} - \delta_V^{(\gamma_2)}\|_{C^1}
  \le L_\gamma \, |\gamma_1 - \gamma_2|.
\end{equation}
Under the assumptions of Theorem~\ref{thm:stability-td}, the corresponding optimal potentials satisfy
\begin{equation}
  \label{eq:gamma-stability-u}
  \|u_V^{\star,(\gamma_1)} - u_V^{\star,(\gamma_2)}\|_{C^0}
  \le C_{\text{topo}} \, L_\gamma \, |\gamma_1 - \gamma_2|.
\end{equation}
\end{theorem}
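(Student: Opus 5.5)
The plan is to reduce the statement to Theorem~\ref{thm:stability-td}, applied with $\delta_1 := \delta_V^{(\gamma_1)}$ and $\delta_2 := \delta_V^{(\gamma_2)}$, and then chain the result with the Lipschitz hypothesis \eqref{eq:lipschitz-delta-gamma}. Concretely, I will take $u_V^{\star,(\gamma_k)} := T(\delta_V^{(\gamma_k)})$ for $k=1,2$, i.e.\ the canonical minimizers returned by the bounded linear operator $T$ assumed in Theorem~\ref{thm:stability-td}. Inequality \eqref{eq:stability-u} then gives
\[
\|u_V^{\star,(\gamma_1)} - u_V^{\star,(\gamma_2)}\|_{C^0}
\le C_{\text{topo}}\,\|\delta_V^{(\gamma_1)} - \delta_V^{(\gamma_2)}\|_{C^1},
\]
and substituting \eqref{eq:lipschitz-delta-gamma} on the right yields \eqref{eq:gamma-stability-u} immediately. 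Only the linearity and boundedness of $T$ enter, via $\|T\delta_1 - T\delta_2\|_{C^0} = \|T(\delta_1-\delta_2)\|_{C^0} \le \|T\|_{\mathrm{op}}\|\delta_1-\delta_2\|_{C^1}$.

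The main subtlety, and the step I expect to need care, is that the cochain spaces $C^0, C^1$ and the differential $d$ themselves depend on $\gamma$. The occupancy measures $\mu_\pi,\nu_\pi$ carry a $\gamma$-weighting, and $(du)(s,a,s') = u(s')-\gamma u(s)$ changes with $\gamma$, so strictly speaking there are two operators $T^{(\gamma_1)}, T^{(\gamma_2)}$ on possibly different Hilbert spaces. To make the one-line argument legitimate, I will interpret the assumptions of Theorem~\ref{thm:stability-td} as follows:
\begin{itemize}
\item the Hilbert geometry (the measures defining $C^0$ and $C^1$) is held fixed across $\gamma$, for instance by a fixed reference/replay distribution, so both TD errors live in one common $C^1$;
\item a single bounded linear $T$ with constant $C_{\text{topo}}$ serves both discounts.
\end{itemize}
This is consistent with how \eqref{eq:lipschitz-delta-gamma} is already phrased: it compares $\delta_V^{(\gamma_1)}$ and $\delta_V^{(\gamma_2)}$ in a single norm $\|\cdot\|_{C^1}$.

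If instead one insists that $d$ vary with $\gamma$, the fallback is a triangle-inequality split:
\[
T^{(\gamma_1)}\delta_1 - T^{(\gamma_2)}\delta_2
= T^{(\gamma_1)}(\delta_1-\delta_2) + \bigl(T^{(\gamma_1)}-T^{(\gamma_2)}\bigr)\delta_2 .
\]
The first term is bounded exactly as above. The second term requires an operator-Lipschitz estimate in $\gamma$ for $T$; this would follow from the explicit dependence of $d$ on $\gamma$ together with a resolvent identity for $(\Delta_0)^{-1}$ on $(\ker\Delta_0)^\perp$, as in Theorem~\ref{thm:poisson}. That estimate would add an extra term beyond the stated bound. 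I will therefore present the proof under the fixed-operator reading, noting in a remark that this reading is what the statement implicitly assumes.
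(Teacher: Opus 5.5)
Your proposal is correct and matches the paper's proof. The paper likewise sets $u_V^{\star,(\gamma)} := T(\delta_V^{(\gamma)})$ and uses linearity and boundedness of a single operator $T$ to get $\|u_V^{\star,(\gamma_1)} - u_V^{\star,(\gamma_2)}\|_{C^0} \le C_{\text{topo}}\|\delta_V^{(\gamma_1)} - \delta_V^{(\gamma_2)}\|_{C^1}$, then substitutes the Lipschitz hypothesis; it also silently adopts your fixed-operator reading (one reference measure for $C^1$, one $d$ and one $T$ for both discounts), so your remark on the $\gamma$-dependence of $d$ flags an unstated assumption rather than a gap in your argument.
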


Condition~\eqref{eq:lipschitz-delta-gamma} is mild and holds, for example, when rewards and value functions are uniformly bounded.

\begin{theorem}[Bias bound for integrable-only semi-gradient updates]
\label{thm:bias-bound}
Let
\(
\delta_{V_\theta}=du_{V_\theta}^\star+\delta_{V_\theta}^{\mathrm{res}}
\)
be the Hodge-type decomposition.
Define
\begin{align}
g_{\mathrm{TD}}(\theta)
&:= \mathbb{E}_{\mu_\pi}
     \big[ \delta_{V_\theta}(S,A,S')\,
           \nabla_\theta V_\theta(S) \big],\\
g_{\mathrm{HFPS}}(\theta)
&:= \mathbb{E}_{\mu_\pi}
     \big[ du_{V_\theta}^\star(S,A,S')\,
           \nabla_\theta V_\theta(S) \big].
\end{align}
If $\|\nabla_\theta V_\theta(S)\| \le B$ almost surely for some $B < \infty$, then
\begin{equation}
  \label{eq:bias-bound}
  \|g_{\mathrm{HFPS}}(\theta) - g_{\mathrm{TD}}(\theta)\|
  \le B \, \|\delta_{V_\theta}^{\mathrm{res}}\|_{C^1}.
\end{equation}
\end{theorem}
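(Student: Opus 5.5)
The argument is a direct linearity computation followed by Jensen's inequality (or Cauchy--Schwarz). We use no orthogonality beyond the decomposition itself.

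\textbf{Step 1 (difference of gradients).} By Corollary~\ref{cor:td-decomposition}, $\delta_{V_\theta} - du_{V_\theta}^\star = \delta_{V_\theta}^{\mathrm{res}}$ holds $\mu_\pi$-a.e. Linearity of the expectation then gives
\[
g_{\mathrm{TD}}(\theta)-g_{\mathrm{HFPS}}(\theta)=\mathbb{E}_{\mu_\pi}\big[\delta_{V_\theta}^{\mathrm{res}}(S,A,S')\,\nabla_\theta V_\theta(S)\big].
\]
Both expectations are well defined and finite. Indeed, $\nabla_\theta V_\theta$ is bounded by $B$, and $\delta_{V_\theta}$ and $du^\star_{V_\theta}$ lie in $C^1\subset L^1(\mu_\pi)$, since $\mu_\pi$ is a probability measure (Definition~\ref{def:triplet-occupancy}). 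This justifies splitting the expectation.

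\textbf{Step 2 (norm bound).} The expectation in Step 1 is a vector in parameter space. We bound its Euclidean norm by pulling the norm inside the expectation (convexity of the norm, i.e.\ Jensen):
\[
\big\|\mathbb{E}_{\mu_\pi}[\delta^{\mathrm{res}}\nabla_\theta V_\theta]\big\|\le \mathbb{E}_{\mu_\pi}\big[|\delta^{\mathrm{res}}|\,\|\nabla_\theta V_\theta(S)\|\big]\le B\,\mathbb{E}_{\mu_\pi}|\delta^{\mathrm{res}}|.
\]
We then apply Cauchy--Schwarz against the constant function $1$ in $L^2(\mu_\pi)$, using $\mu_\pi(\mathcal S\times\mathcal A\times\mathcal S)=1$:
\[
\mathbb{E}_{\mu_\pi}|\delta^{\mathrm{res}}|\le \|\delta^{\mathrm{res}}\|_{C^1}.
\]
Combining the two displays yields \eqref{eq:bias-bound}.

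\textbf{Remarks on the obstacle.} The only delicate point is that the bound must use the probability normalization of $\mu_\pi$; without it the constant would be $B\,\mu_\pi(\cdot)^{1/2}$. A cruder alternative is to apply Cauchy--Schwarz componentwise, but that would introduce a dimension factor, so the vector-valued Jensen step is the one to use. One might hope to exploit $\delta^{\mathrm{res}}\perp du$ to sharpen the estimate. This does not help, because $\nabla_\theta V_\theta(S)$ is a function of $S$ alone and is generally not of the form $du$. The bound is therefore stated in the crude form above, and no further structure is needed.
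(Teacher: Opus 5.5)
Your proposal is correct and follows the paper's proof essentially step for step. Both write $g_{\mathrm{HFPS}}(\theta)-g_{\mathrm{TD}}(\theta)$ as $\mp\mathbb{E}_{\mu_\pi}[\delta^{\mathrm{res}}_{V_\theta}\nabla_\theta V_\theta(S)]$, move the norm inside the expectation via Jensen, pull out $B$, and finish with Cauchy--Schwarz against the constant $1$ using that $\mu_\pi$ is a probability measure.
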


Theorem~\ref{thm:bias-bound} shows that HFPS is \emph{asymptotically complete} with respect to TD learning: when $\|\delta_{V_\theta}^{\mathrm{res}}\|_{C^1}$ is small, the HFPS update closely matches the standard TD semi-gradient; otherwise, HFPS filters out non-integrable components with a deviation controlled by the residual norm.

Corollary~\ref{cor:reduction-to-td} identifies an exact regime in which HFPS reduces to standard TD.
More generally, when the environment is only \emph{approximately} integrable, the residual
$\|\delta^{\mathrm{res}}_{V_\theta}\|_{C^1}$ quantifies the mismatch and, by Theorem~\ref{thm:bias-bound},
acts as a uniform bound on the update perturbation.
The next proposition formalizes a corresponding \emph{robustness} guarantee in a canonical setting:
for linear policy evaluation with a globally stable TD ODE, HFPS converges to a neighborhood of the TD fixed point
whose radius scales linearly with the residual level.

\begin{corollary}[Reduction to TD under integrability]
\label{cor:reduction-to-td}
If $\delta^{\mathrm{res}}_{V_{\theta}} = 0$ (i.e., $\delta V_{\theta}\in \mathcal E$) then $g_{\mathrm{HFPS}}(\theta)=g_{\mathrm{TD}}(\theta)$. Therefore, in any setting where TD-style semi-gradient updates are known to converge (e.g., tabular on-policy evaluation, or standard linear TD under usual conditions), HFPS inherits the same convergence behavior.

\end{corollary}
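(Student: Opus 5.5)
The plan is to read the corollary off the Hodge-type decomposition and then invoke Theorem~\ref{thm:bias-bound}. The Hodge-type decomposition gives $\delta_{V_\theta}=du^\star_{V_\theta}+\delta^{\mathrm{res}}_{V_\theta}$. The hypothesis $\delta^{\mathrm{res}}_{V_\theta}=0$ means $\delta_{V_\theta}=du^\star_{V_\theta}$ as elements of $C^1$, i.e.\ $\mu_\pi$-almost everywhere. Conversely, if $\delta_{V_\theta}\in\mathcal E$, then uniqueness of the orthogonal decomposition in Theorem~\ref{thm:hodge} forces the projection to be $\delta_{V_\theta}$ itself, so the residual vanishes. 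This justifies the parenthetical equivalence in the statement.

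For the gradient identity, I would subtract the two definitions:
\[
g_{\mathrm{HFPS}}(\theta)-g_{\mathrm{TD}}(\theta)=-\mathbb E_{\mu_\pi}\big[\delta^{\mathrm{res}}_{V_\theta}(S,A,S')\,\nabla_\theta V_\theta(S)\big].
\]
The integrand vanishes $\mu_\pi$-a.e., so the difference is zero. Alternatively, when $\nabla_\theta V_\theta$ is bounded, one can apply Theorem~\ref{thm:bias-bound} directly, since its right-hand side is $B\cdot 0$. The direct argument is preferable because it needs no boundedness, only that both expectations are well defined (e.g.\ $\nabla_\theta V_\theta(S)\in L^2$ componentwise, via Cauchy--Schwarz).

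For the convergence-inheritance claim, the key point is that the identity must hold along the whole trajectory, not just at one $\theta$. So I would interpret the hypothesis as integrability holding for all $\theta$ visited, or over the relevant parameter region, e.g.\ when the environment--policy pair is such that every $\delta_{V_\theta}$ is exact. Under that reading, the mean-field vector field driving HFPS coincides with the TD semi-gradient field, so the associated ODEs are identical. Any convergence result for TD proved via the ODE or stochastic-approximation method (tabular on-policy, linear TD with the usual step-size and ergodicity conditions) then transfers verbatim, provided the stochastic noise of the HFPS update also meets the same martingale-difference and bounded-variance conditions. That requirement holds if $U_\phi$ is the exact minimizer, so that $\tilde\delta_i=\delta_i$ a.e.

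The main obstacle is this last transfer. Equality of mean updates at a fixed $\theta$ does not by itself give equal dynamics. I would state explicitly the standing assumption that integrability holds along the iterates and that the potential step returns the exact projection. I would then note that, under this assumption, the sample updates themselves coincide almost surely, which makes the inheritance immediate.
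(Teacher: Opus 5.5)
Your proposal is correct and follows essentially the same route as the paper: the vanishing residual makes the integrands defining $g_{\mathrm{HFPS}}(\theta)$ and $g_{\mathrm{TD}}(\theta)$ agree $\mu_\pi$-a.e., and Theorem~\ref{thm:bias-bound} with right-hand side $B\cdot 0$ is offered as an alternative. Your explicit standing assumptions, that integrability holds along all iterates and that the potential step returns the exact projection, are what the paper uses tacitly when it says the update directions ``coincide pointwise for all $\theta$''; your treatment of the convergence transfer, including the stochastic noise, is the more careful of the two.
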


\begin{proposition}[Neighborhood convergence under bounded residual]
\label{prop:neighborhood-convergence-linear}
Consider fixed-policy evaluation with linear function approximation and assume the TD ODE is globally stable with modulus $\lambda >0$. If along HFPS updates $||\delta^{\mathrm{res}}_{V_{\theta}}||_{C^1}\leq \varepsilon$ uniformly and $||\nabla_{\theta}V_{\theta}(S)||\leq B$, then the limiting point (or invariant set) of HFPS lies within $O((B/\lambda)\varepsilon)$ of the TD fixed point.
\end{proposition}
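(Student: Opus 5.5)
We treat the HFPS update as a bounded perturbation of the linear TD dynamics and apply a standard ODE stability argument. In the linear setting $V_\theta(s)=\phi(s)^\top\theta$ we have $\nabla_\theta V_\theta(S)=\phi(S)$ with $\|\phi(S)\|\le B$. The expected TD direction is then affine, $g_{\mathrm{TD}}(\theta)=b-A\theta$ with
\[
A=\mathbb E_{\mu_\pi}\bigl[\phi(S)(\phi(S)-\gamma\phi(S'))^\top\bigr],\qquad b=\mathbb E_{\mu_\pi}\bigl[r\,\phi(S)\bigr],
\]
and the TD fixed point is $\theta^\star=A^{-1}b$. We interpret ``globally stable with modulus $\lambda$'' as strong monotonicity: $(\theta-\theta^\star)^\top A(\theta-\theta^\star)\ge\lambda\|\theta-\theta^\star\|^2$ for all $\theta$. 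This is the standard condition under which the on-policy linear TD ODE $\dot\theta=g_{\mathrm{TD}}(\theta)$ contracts exponentially to $\theta^\star$.

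Under the usual stochastic-approximation conditions (step sizes $\alpha_V$ with $\sum\alpha=\infty$, $\sum\alpha^2<\infty$, bounded martingale noise, and a potential network tracked on a faster timescale so that $dU_\phi\approx du^\star_{V_\theta}$), the ODE method identifies the limit set of HFPS with the invariant sets of
\[
\dot\theta=g_{\mathrm{HFPS}}(\theta)=g_{\mathrm{TD}}(\theta)+e(\theta),\qquad e(\theta):=g_{\mathrm{HFPS}}(\theta)-g_{\mathrm{TD}}(\theta).
\]
Theorem~\ref{thm:bias-bound} together with the uniform residual assumption gives $\|e(\theta)\|\le B\varepsilon$ along the trajectory. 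I would state this timescale and tracking assumption explicitly, since the proposition is phrased loosely.

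Next comes a Lyapunov argument. Take $W(\theta)=\tfrac12\|\theta-\theta^\star\|^2$. Along the perturbed ODE,
\[
\dot W=-(\theta-\theta^\star)^\top A(\theta-\theta^\star)+(\theta-\theta^\star)^\top e(\theta)\le-\lambda\|\theta-\theta^\star\|^2+B\varepsilon\|\theta-\theta^\star\|.
\]
The right-hand side is negative whenever $\|\theta-\theta^\star\|>B\varepsilon/\lambda$. Hence the ball of radius $B\varepsilon/\lambda$ around $\theta^\star$ is forward invariant and globally attracting; more precisely, every enlargement of it is reached in finite time. Every invariant set of the ODE therefore lies in this ball, and the stochastic-approximation limit of HFPS inherits the $O((B/\lambda)\varepsilon)$ bound. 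If stability is instead given only via a Lyapunov matrix $P$ with $A^\top P+PA\succeq 2\lambda P$, I would use $W=(\theta-\theta^\star)^\top P(\theta-\theta^\star)$ in the same computation. The constant then picks up the condition number of $P$, which is absorbed into the $O(\cdot)$.

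The main obstacle is not the Lyapunov computation but justifying the ODE reduction. Two points need care. First, the inner potential update must track $u^\star_{V_\theta}$ so that the drift is genuinely $g_{\mathrm{HFPS}}$; I would handle this with a two-timescale assumption, or by absorbing the tracking error into $\varepsilon$. Second, the uniform residual bound must hold on the trajectory, which is assumed. Iterate boundedness, needed for the ODE method, follows from the same Lyapunov inequality because the perturbation is bounded.
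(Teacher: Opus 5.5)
Your argument is correct in substance but takes a different route from the paper. The paper never passes to an ODE. It works directly with the mean-field recursion $x_{t+1}=x_t-\alpha_t(Ax_t+e_t)$, where $x_t=\theta_t-\theta_{\mathrm{TD}}$ and $\|e_t\|\le B\varepsilon$ by Theorem~\ref{thm:bias-bound}. It expands $\|x_{t+1}\|^2$ using the same strong-monotonicity reading of $\lambda$, plus $\|Ax\|\le L\|x\|$, Young's inequality and a step-size cap $\alpha_t\le\lambda/(4L^2)$. An elementary sequence lemma then gives $\limsup_t\|\theta_t-\theta_{\mathrm{TD}}\|\le\sqrt{2}\,B\varepsilon/\lambda$. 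That route is self-contained and needs the bias bound only at the iterates, which is exactly what ``along HFPS updates'' provides. It also yields boundedness of the iterates for free.

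Your continuous-time Lyapunov computation gives the sharper radius $B\varepsilon/\lambda$. Through the two-timescale ODE method it also covers sampling noise and imperfect potential tracking, which the paper's deterministic recursion ignores. The price is reliance on external stochastic-approximation theorems, and two steps are stated too quickly.

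First, $\dot W\le-\lambda\|x\|^2+B\varepsilon\|x\|$ does not by itself bound the discrete iterates. You need either a Borkar--Meyn-type theorem, which requires control of $e$ away from the trajectory, or precisely the paper's discrete expansion with its $\alpha_t^2$ terms.

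Second, forward invariance and global attraction of the ball use $\|e(\theta)\|\le B\varepsilon$ at every $\theta$, not just along the trajectory. To repair this, note that $e$ is affine in $\theta$, because $\delta_{V_\theta}$ is affine and the projection onto $\overline{\mathcal E}$ is linear. By continuity the bound then passes to the closure of the iterates, hence to the compact invariant limit set. At a point of that set maximising $\|\theta-\theta^\star\|$ with radius above $B\varepsilon/\lambda$ one would have $\dot W<0$, contradicting invariance under the backward flow.
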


\section{Experiments}
\label{sec:experiments}

We design our experiments in two stages. First, we study small synthetic MDPs with tabular or linear function approximation, where the optimal potential and topological residual can be computed (or tightly approximated) in closed form. This stage validates the Hodge-type Bellman decomposition and the interpretation of $\|\delta_V^{\mathrm{res}}\|_{C^1}$ as a Bellman non-integrability measure. Second, we evaluate HodgeFlow Policy Search (HFPS) on deep control benchmarks under a newly introduced \textbf{Nonmarkov} regime, where the environment contains a hidden control-memory state that is \emph{not} exposed in the observation. Results for standard Markov or partially corrupted observation regimes (\textbf{Clean}, \textbf{Noisy}, \textbf{Sticky}) and the full aggregate table are deferred to Appendix~\ref{app:extra-regimes}. 
These regimes are lightweight wrappers designed to isolate distinct sources of mismatch: \textbf{Clean} keeps the original Markov observations, \textbf{Noisy} injects observation corruption/aliasing, and \textbf{Sticky} introduces temporal persistence in observations to emulate delayed or repeated sensing. All variants share the same underlying task and reward structure so that performance differences primarily reflect robustness to observation-side non-Markovian effects.

\begin{figure}[h]
  \centering
  \includegraphics[width=0.48\linewidth]{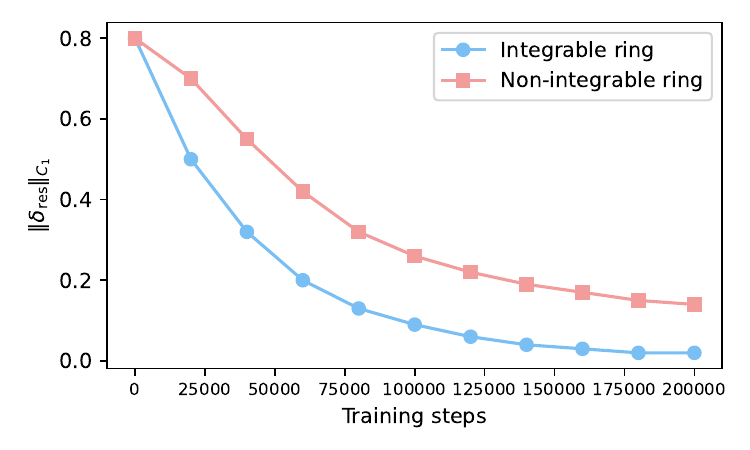}\hfill
  \includegraphics[width=0.48\linewidth]{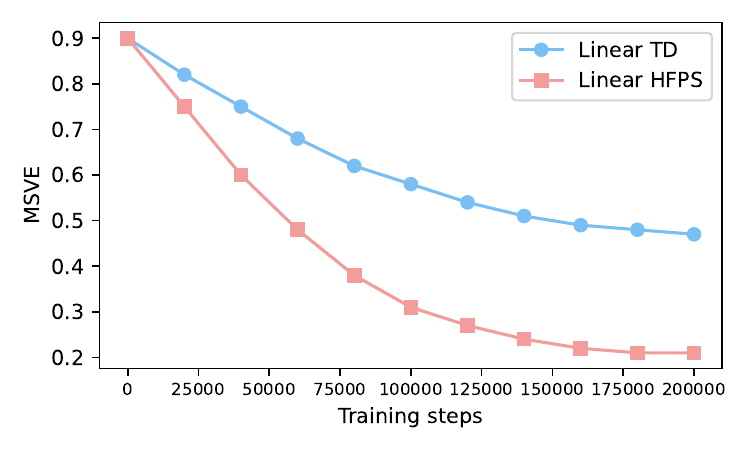}
  \caption{\textbf{Synthetic validation of the Bellman decomposition.}
  Left: tabular ring MDP (integrable vs.\ non-integrable). Right: random-feature MDP (MSVE and residual behavior).}
  \label{fig:synth-decomp}
\end{figure}

\begin{table*}[h]
  \centering
  \setlength{\tabcolsep}{3pt}
  \renewcommand{\arraystretch}{0.95}
  \scriptsize
  \resizebox{0.9\textwidth}{!}{%
  \begin{tabular}{lccccc}
    \toprule\toprule
    \textbf{Method} &
    \texttt{LunarLander} &
    \texttt{Acrobot} &
    \texttt{Pendulum} &
    \texttt{PointMass} &
    \texttt{BipedalWalker} \\
    \midrule
    \multicolumn{6}{l}{\textbf{Nonmarkov summary} (AUC@T $\times 10^{6}$ / Final@T; mean $\pm$ std over seeds)}\\
    \midrule

    HFPS  &
    \shortstack{107.6$\pm$2.1/145.6$\pm$12.9} &
    \shortstack{-82.7$\pm$7.5/-82.3$\pm$9.7} &
    \shortstack{-556.9$\pm$74.0/-106.2$\pm$121.9} &
    \shortstack{-839.7$\pm$64.3/-207.1$\pm$49.8} &
    \shortstack{184.7$\pm$96.1/146.0$\pm$113.9} \\

    DQN   &
    \shortstack{-46.0$\pm$4.6/10.1$\pm$34.3} &
    \shortstack{-96.7$\pm$5.2/-77.2$\pm$4.4} &
    \shortstack{-965.2$\pm$107.0/-593.5$\pm$134.3} &
    \shortstack{-2672.2$\pm$347.1/-673.6$\pm$366.5} &
    \shortstack{-22.6$\pm$21.4/52.6$\pm$45.3} \\

    Double DQN &
    \shortstack{-26.9$\pm$22.6/20.7$\pm$57.7} &
    \shortstack{-112.5$\pm$7.2/-84.2$\pm$14.0} &
    \shortstack{-1070.5$\pm$99.6/-866.6$\pm$106.6} &
    \shortstack{-2913.5$\pm$226.8/-594.8$\pm$47.4} &
    \shortstack{-85.9$\pm$11.2/-38.1$\pm$5.3} \\

    Dueling DQN &
    \shortstack{-26.9$\pm$22.6/20.7$\pm$57.7} &
    \shortstack{-109.4$\pm$8.4/-73.6$\pm$3.7} &
    \shortstack{-946.0$\pm$114.5/-717.4$\pm$203.7} &
    \shortstack{-2609.5$\pm$192.2/-433.0$\pm$127.7} &
    \shortstack{-74.0$\pm$12.5/-30.3$\pm$32.5} \\

    A2C   &
    \shortstack{-395.4$\pm$394.8/-389.0$\pm$448.8} &
    \shortstack{-247.5$\pm$0.0/-500.0$\pm$0.0} &
    \shortstack{-1444.5$\pm$3.8/-1445.7$\pm$25.9} &
    \shortstack{-10130.7$\pm$136.1/-9536.1$\pm$811.3} &
    \shortstack{-137.6$\pm$39.8/-83.3$\pm$14.9} \\

    PPO   &
    \shortstack{-173.6$\pm$29.6/-172.4$\pm$31.4} &
    \shortstack{-92.0$\pm$4.0/-78.5$\pm$4.0} &
    \shortstack{-1342.8$\pm$84.3/-1353.5$\pm$255.1} &
    \shortstack{-9174.6$\pm$1804.6/-8230.2$\pm$6152.8} &
    \shortstack{78.5$\pm$155.1/56.4$\pm$142.9} \\

    DRQN  &
    \shortstack{93.1$\pm$11.8/169.1$\pm$3.1} &
    \shortstack{-246.9$\pm$0.7/-500.0$\pm$0.0} &
    \shortstack{-611.7$\pm$10.5/-235.8$\pm$47.8} &
    \shortstack{-1063.0$\pm$524.1/-473.3$\pm$301.6} &
    \shortstack{23.6$\pm$100.4/80.6$\pm$100.7} \\

    ADRQN &
    \shortstack{75.0$\pm$7.4/165.2$\pm$9.5} &
    \shortstack{-247.5$\pm$0.0/-500.0$\pm$0.0} &
    \shortstack{-827.0$\pm$27.5/-403.9$\pm$98.0} &
    \shortstack{-1096.0$\pm$580.0/-256.4$\pm$39.2} &
    \shortstack{69.9$\pm$73.8/135.1$\pm$72.9} \\

    R2D2  &
    \shortstack{81.7$\pm$5.8/154.0$\pm$29.3} &
    \shortstack{-246.9$\pm$0.7/-500.0$\pm$0.0} &
    \shortstack{-611.7$\pm$10.5/-235.8$\pm$47.8} &
    \shortstack{-1063.0$\pm$524.1/-473.3$\pm$301.6} &
    \shortstack{23.6$\pm$100.4/80.6$\pm$100.7} \\

    \bottomrule\bottomrule
  \end{tabular}
  }
  \caption{Scalar summaries computed from the same training runs: \textbf{AUC@T} measures sample efficiency (area under the return curve up to budget $T$) and \textbf{Final@T} measures final performance. Full learning curves and additional regimes are reported in Appendix~\ref{app:extra-regimes}.}
  \label{tab:nonmarkov-summary}
\end{table*}

\label{sec:exp-setup}

\paragraph{Synthetic settings.}
We consider two synthetic setups.
(i) A tabular ring MDP with states arranged on a cycle. We construct an \emph{integrable} instance where rewards are exact discounted potential differences $r(s,a)=u^\star(s')-\gamma u^\star(s)$ and a \emph{non-integrable} instance by perturbing a single edge reward so that cycle sums are non-zero.
(ii) A random-feature MDP with $50$ states and two actions, where $V$ and $u$ are represented as linear functions over fixed random features. In this setting we compute $V^\pi$ by matrix inversion and report mean squared value error (MSVE) along with $\|\delta_V^{\mathrm{res}}\|_{C^1}$.

\paragraph{Control benchmarks and the Nonmarkov regime.}
For deep control we evaluate on five continuous-state tasks:
\texttt{LunarLander-v2}, \texttt{Acrobot-v1}, \texttt{Pendulum-v1}, \texttt{PointMass}, and \texttt{BipedalWalker-v3}.
We derive a \textbf{Nonmarkov} variant from each base task by injecting a hidden \emph{control-command memory} into the environment dynamics.
Concretely, we augment the discrete action interface with a special \emph{hold} command: when the agent selects the hold action, the environment executes the \emph{last non-hold} action stored in an internal variable $u_{\mathrm{mem}}$; otherwise, the chosen action is executed and also overwrites $u_{\mathrm{mem}}$.
Importantly, $u_{\mathrm{mem}}$ is \emph{not} included in the observation, so from the agent's perspective the process is non-Markov (a POMDP induced by a hidden actuator state).
This construction corresponds to the implementation in our wrapper \texttt{\_HoldLastDiscreteAction} (Appendix~\ref{app:nonmarkov-wrapper}).

\paragraph{Baselines and reporting.}
We compare HFPS against DQN~\citep{mnih2013playing}, Double DQN~\citep{van2016deep}, Dueling DQN~\citep{wang2016dueling}, Advantage Actor--Critic (AC)~\citep{mnih2016asynchronous}, PPO~\citep{schulman2017proximal}, and recurrent baselines that explicitly model partial observability: DRQN~\citep{hausknecht2015deep}, ADRQN~\citep{zhu2017improving}, and R2D2~\citep{kapturowski2018recurrent}.
Unless otherwise specified, curves report mean $\pm$ one standard deviation across seeds.
We additionally compute scalar summaries from the same learning-curve data:
(A) \textbf{AUC@T} = area under the return curve up to the final training step $T$ (trapezoidal rule),
(B) \textbf{Final@T} = average return at the final evaluation point.
Full AUC@T / Final@T tables for all regimes are reported in Appendix~\ref{app:extra-regimes} (Table~\ref{tab:control-summary}).

\paragraph{Synthetic results: validating the topological decomposition}
\label{sec:synth-results}

Figure~\ref{fig:synth-decomp} summarizes the synthetic validation.
On the integrable ring, HFPS drives $\|\delta^{\mathrm{res}}\|_{C^1}$ to numerical zero and recovers the ground-truth integrable component; on the non-integrable ring, the residual converges to a strictly positive value consistent with the optimal projection error.
In the random-feature MDP, HFPS achieves lower MSVE and smoother convergence than a linear TD baseline, while the residual remains bounded away from zero due to representation error.

\paragraph{Control benchmarks: Nonmarkov learning curves}
\label{sec:control-curves}

Table~\ref{tab:nonmarkov-summary} reports scalar summaries (AUC@T and Final@T) for the \textbf{Nonmarkov} variants of the five control tasks.
HFPS remains competitive with strong Markov baselines even though the observation is no longer Markov, and it is particularly stable relative to non-recurrent baselines in tasks where hidden actuation memory induces long-range credit assignment.
Full learning curves and additional regimes (\textbf{Clean}, \textbf{Noisy}, \textbf{Sticky}) are provided in Appendix~\ref{app:extra-regimes}.
Figure~\ref{fig:control-returns} further visualizes the Nonmarkov learning curves across tasks.

\begin{figure}[t!]
  \centering
  \setlength{\tabcolsep}{2pt}
  \renewcommand{\arraystretch}{0.8}
  \begin{tabular}{@{}ccccc@{}}
    \includegraphics[width=0.49\linewidth]{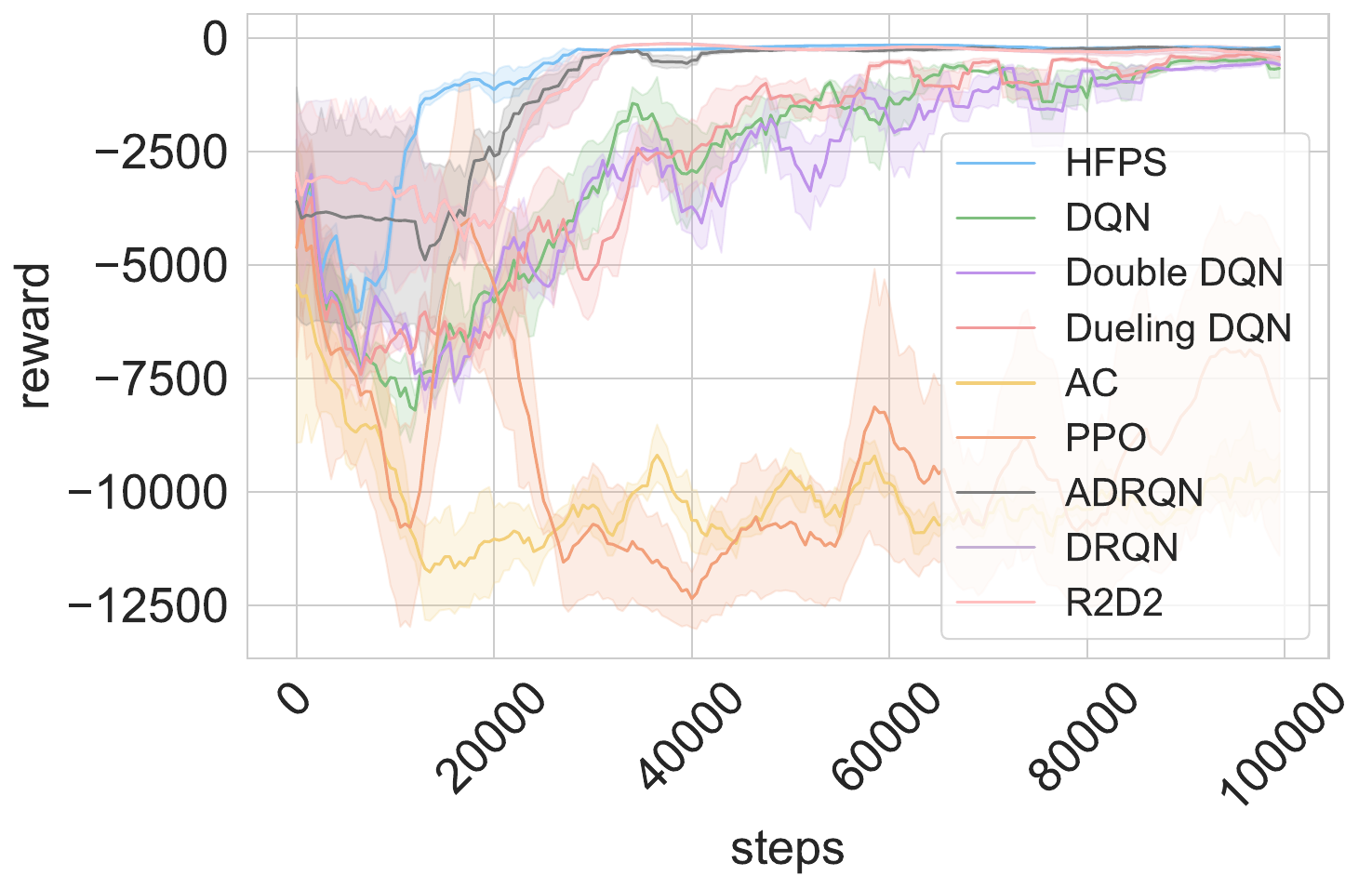} &
    \includegraphics[width=0.49\linewidth]{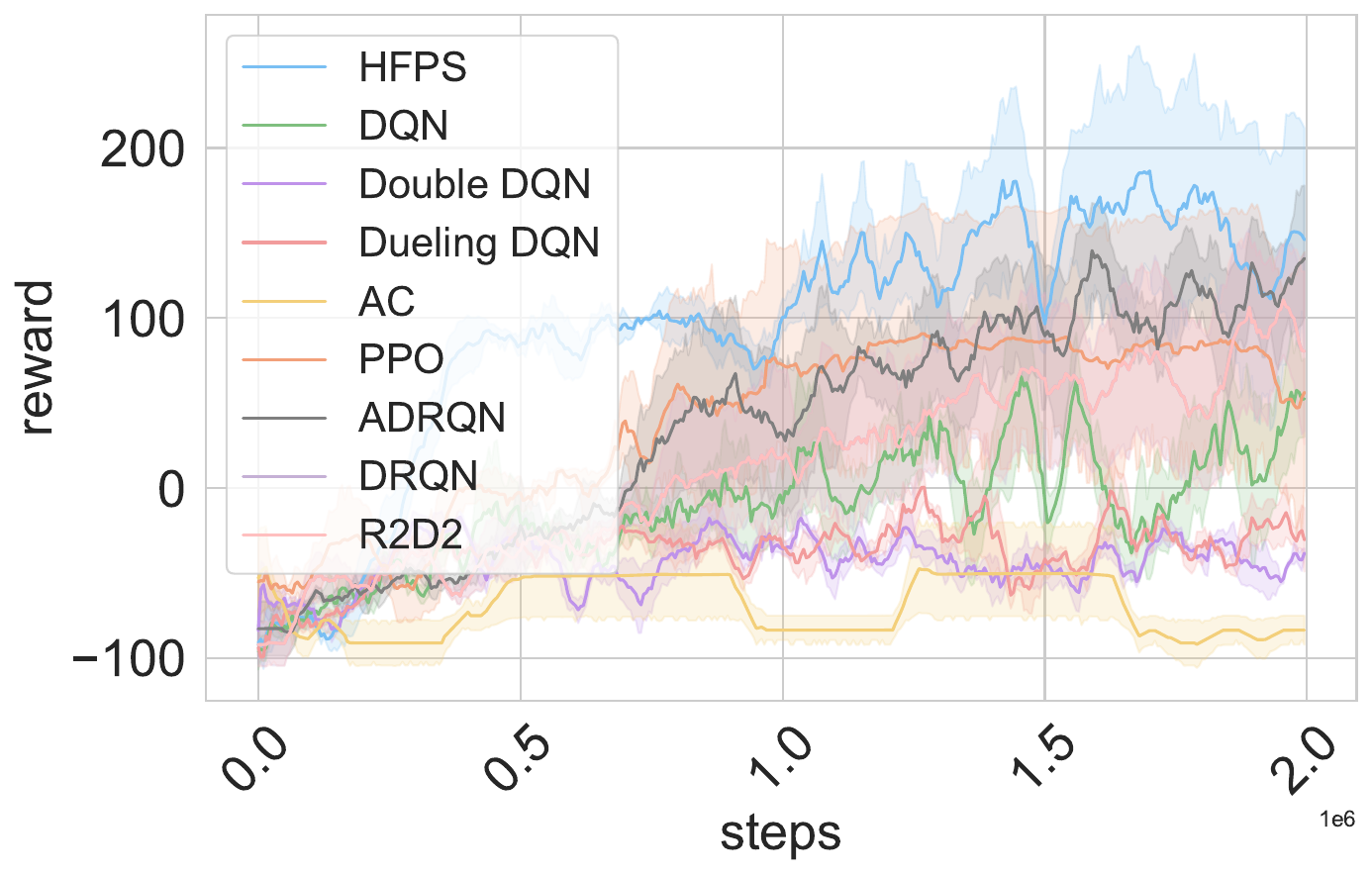} \\[2pt]
  \end{tabular}
  \caption{\textbf{Episode returns (Nonmarkov).}
  Return versus environment steps (mean $\pm$ one standard deviation over seeds) for HFPS and baselines on two representative control tasks.} 
  \label{fig:control-returns}
\end{figure}

\paragraph{Derived diagnostics from learning-curve data}
\label{sec:derived}

To better separate \emph{sample efficiency} from \emph{final performance} using the same Nonmarkov learning-curve data, we provide two derived diagnostics computed directly from the return curves.
\textbf{Cumulative AUC trajectories.}
Let $R_k$ denote the evaluation return at step $t_k$. We define the cumulative area
$\mathrm{cAUC}(t_m)=\sum_{k=1}^{m-1}\frac{(R_{k+1}+R_k)}{2}(t_{k+1}-t_k)$.
Figure~\ref{fig:control-cauc} plots $\mathrm{cAUC}(t)$; steeper growth indicates higher performance earlier (better sample efficiency).
\textbf{Across-seed variability over time.}
Using the same evaluations, we compute the standard deviation across seeds at each step and plot it versus steps in Figure~\ref{fig:control-std}. Lower values indicate more stable learning under the same training protocol.

\section{Conclusion}
\label{sec:conclusion}

We introduced a topological view of temporal-difference learning by casting the TD error as a Hilbert-space 1-cochain and formalizing \emph{Bellman integrability} as exactness. This yields a Hodge-type decomposition that separates TD signals into an integrable component explainable by a global potential and a topological residual that quantifies irreducible non-Markovian inconsistency. Building on this structure, we proposed HodgeFlow Policy Search (HFPS), which learns and updates using only the integrable TD component while tracking the residual as a principled diagnostic of Bellman mismatch. Empirically, HFPS improves stability and robustness in synthetic non-integrable settings and control tasks with hidden actuation memory, highlighting the practical value of topological decomposition for TD-based RL under non-Markovian effects.

\medskip

{
\small
\nocite{*}
\bibliographystyle{unsrtnat}
\bibliography{ref}
}


\newpage
\appendix
\onecolumn

\section{Implementation of HodgeFlow Policy Search}
\label{app:hfps-impl}

This section describes the practical implementation of HodgeFlow Policy Search (HFPS) used in our experiments. We first summarize the common RL framework and network components shared by all methods, and then detail the specific variants of HFPS that we found to be stable and effective in practice.

\subsection{Common RL framework}

All algorithms in our experiments are implemented within a unified RL framework. Environments expose a minimal interface
\[
  s_0 \sim \mathrm{reset}(), \qquad
  (s', r, \mathrm{done}, \mathrm{info}) = \mathrm{step}(a),
\]
together with attributes describing the observation shape and the number of discrete actions. Agents implement a common interface with three methods:
\begin{itemize}
  \item \(\mathrm{act}(s, \mathrm{evaluate})\): select an action in state \(s\);
  \item \(\mathrm{store\_transition}(s,a,r,s',\mathrm{done})\): add a transition to the replay buffer or trajectory;
  \item \(\mathrm{update}()\): perform zero or more gradient steps.
\end{itemize}
The training loop only interacts with environments and agents through this interface, so DQN, AC, and HFPS differ only in how they implement \(\mathrm{update}()\) and, in the case of HFPS, an additional potential network.

\subsection{Shared neural architectures}
\label{app:models-training}

To ensure architectural comparability, we factor each algorithm's network into a shared \emph{backbone} and a small task-specific head. For low-dimensional observations (e.g., CartPole and TinyGrid) we use a multi-layer perceptron (MLP) that flattens the input and applies two fully connected layers with ReLU activations. For image-like inputs with shape \((C,H,W)\) we use a small convolutional network followed by an adaptive global average pooling layer and a linear projection to a feature vector.

The backbone type (MLP vs.\ CNN) is selected automatically based on the observation shape and is then reused across all algorithms:
\begin{itemize}
  \item DQN uses the backbone followed by a linear ``Q-head'' producing one scalar per action.
  \item AC uses the same backbone followed by separate linear actor and critic heads.
  \item HFPS uses the backbone for both the Q-network and the potential network, with a linear head producing \(Q_\theta(s,a)\) or a scalar potential \(u_\phi(s)\), respectively.
\end{itemize}

\subsection{Stable HFPS with clipping and inner-loop updates}

In the idealized formulation of HFPS, the potential \(u_\phi(s)\) represents the exact solution of a Poisson equation that projects the temporal-difference (TD) error onto the integrable subspace. In practice, however, both the Q-network and the potential are implemented by neural networks and updated via stochastic gradient descent (SGD). A naive implementation that directly regresses
\[
  u_\phi(s') - u_\phi(s) \approx r + \gamma \max_{a'} Q_\theta(s', a') - Q_\theta(s,a)
\]
tends to be numerically unstable and can easily diverge.

To stabilize training we introduce two standard modifications:
\begin{enumerate}
  \item \textbf{TD error clipping.} We compute the DQN-style TD target
  \[
    T_\theta(s,a,s') 
      := r + \gamma \max_{a'} Q_{\bar{\theta}}(s', a')
  \]
  using a slowly updated target network \(Q_{\bar{\theta}}\), and define the TD error
  \(
    \delta = T_\theta(s,a,s') - Q_\theta(s,a)
  \).
  We then clip this error elementwise to a fixed range,
  \[
    \delta_{\mathrm{clip}} = \mathrm{clip}(\delta, -\Delta_{\max}, \Delta_{\max}),
  \]
  which prevents extremely large TD values from destabilizing the potential network.
  \item \textbf{Inner-loop updates for the potential.} Instead of taking a single gradient step on \(u_\phi\) per environment step, we perform a short inner loop of \(K\) updates on the same mini-batch. In each inner step we minimize the squared difference between the potential difference and the clipped TD error:
  \[
    \mathcal{L}_u(\phi) 
       = \mathbb{E}\big[ \big(u_\phi(s') - u_\phi(s) - \delta_{\mathrm{clip}}\big)^2 \big].
  \]
  This makes the Hodge projection approximation substantially more accurate and leads to much more stable Q-updates.
\end{enumerate}
Both the Q-network and the potential network use gradient clipping (we clip the global \(\ell_2\) norm of each parameter vector) to further improve stability.

\subsection{Residual-adaptive and norm-preserving TD updates}

Even with clipping and inner-loop updates, simply replacing the TD error \(\delta\) by the integrable component \(u_\phi(s') - u_\phi(s)\) can degrade learning performance when the Hodge projection is inaccurate. In particular, if the residual component is large, the potential network may discard a significant portion of the true TD signal, leading to very small effective updates.

To address this, we use the Hodge decomposition not as a hard replacement but as a \emph{residual-adaptive preconditioner} for the TD direction. After the inner-loop has approximately fitted the potential, we compute
\[
  d u(s,a,s') := u_\phi(s') - u_\phi(s), \qquad
  \delta_{\mathrm{int}} := \mathrm{clip}\big(d u(s,a,s'), -\Delta_{\max}, \Delta_{\max}\big),
\]
and define the clipped residual
\[
  \delta_{\mathrm{res}} := \delta_{\mathrm{clip}} - \delta_{\mathrm{int}}.
\]
We then measure the average norms
\(
  \|\delta_{\mathrm{clip}}\|, \|\delta_{\mathrm{int}}\|, \|\delta_{\mathrm{res}}\|
\)
over the mini-batch and construct a scalar \emph{projection quality} score
\[
  q := \max\left\{0,\; 1 - \frac{\|\delta_{\mathrm{res}}\|}{\|\delta_{\mathrm{clip}}\| + \varepsilon} \right\} \in [0,1].
\]
When the residual norm is small compared to the TD norm, \(q\) is close to \(1\); when the residual is as large as the TD error itself, \(q\) approaches \(0\).

Using this quality score, we define an effective topological weight
\[
  \lambda_{\mathrm{eff}} = \lambda_{\max} \, q^{p},
\]
where \(\lambda_{\max} \in [0,1]\) is a user-chosen maximum weight (denoted as \texttt{topo\_weight} in our code) and \(p \ge 1\) (\texttt{gate\_power}) controls how sharply the weight decays as the residual grows. The raw mixed TD direction is then
\[
  \delta_{\mathrm{raw}} 
    := \lambda_{\mathrm{eff}} \, \delta_{\mathrm{int}}
       + (1 - \lambda_{\mathrm{eff}}) \, \delta_{\mathrm{clip}}.
\]
In regions where the environment-policy pair is close to Bellman integrable, the residual is small, \(q \approx 1\), and \(\lambda_{\mathrm{eff}} \approx \lambda_{\max}\), so HFPS updates follow the integrable component. Conversely, in strongly non-Markovian or partially observed regimes, the residual grows, \(q\) decreases, and the update smoothly reverts toward the original TD direction.

Finally, to avoid accidentally shrinking the effective learning rate when \(\|\delta_{\mathrm{int}}\| < \|\delta_{\mathrm{clip}}\|\), we rescale the mixed direction so that its norm approximately matches that of the original TD error:
\[
  \delta_{\mathrm{eff}} 
    := \frac{\|\delta_{\mathrm{clip}}\|}{\|\delta_{\mathrm{raw}}\| + \varepsilon} \, \delta_{\mathrm{raw}},
\]
with the rescaling factor clipped from above to avoid extreme amplification. The Q-network is then updated using a one-step regression towards \(Q_\theta(s,a) + \delta_{\mathrm{eff}}\),
\[
  \mathcal{L}_Q(\theta)
    = \mathbb{E}\big[ \big(Q_\theta(s,a) - (Q_\theta(s,a) + \delta_{\mathrm{eff}})\big)^2 \big],
\]
which is equivalent to a semi-gradient update in the direction \(\delta_{\mathrm{eff}} \nabla_\theta Q_\theta(s,a)\).

In summary, the practical HFPS variant used in our experiments performs:
\begin{enumerate}
  \item TD-error computation with a target network and value clipping;
  \item an inner loop of SGD steps on \(u_\phi\) to fit the integrable component;
  \item residual-adaptive weighting between integrable and raw TD components via \(\lambda_{\mathrm{eff}}\); and
  \item norm-preserving rescaling of the mixed TD direction before updating \(Q_\theta\).
\end{enumerate}
This implementation retains the interpretability benefits of the topological decomposition—through the norms of the integrable and residual components—while maintaining the robustness of standard TD learning in regimes where Bellman integrability is strongly violated.

\paragraph{Instantiation in synthetic experiments.}
For the tabular ring MDPs in Section~\ref{sec:experiments}, we use a ring of $n=10$ states with two deterministic actions moving clockwise and counterclockwise. Episodes are truncated at a horizon of $H=50$ steps with a discount factor $\gamma=0.99$. In the integrable version, we sample a ground-truth potential $u^\star(s)$ uniformly from $[-1,1]$ and define rewards by
\(
  r(s,a) = u^\star(s') - \gamma u^\star(s)
\),
so that the Bellman field is exactly integrable. In the non-integrable version, we add a small perturbation $\varepsilon=0.1$ to the reward on a fixed directed edge of the ring, breaking the cycle-sum condition. We train a tabular Q-learning agent and a tabular HFPS agent with identical $\varepsilon$-greedy exploration schedules (linearly decayed from $1.0$ to $0.05$) and constant learning rates for both Q and the potential table. Results are averaged over $5$ random seeds.

For the random-feature MDP, we consider $|\mathcal S|=50$ states and $|\mathcal A|=2$ actions with a fixed transition kernel and reward function sampled once at the beginning of each run. We construct random feature vectors $\phi(s) \in \mathbb{R}^{d}$ (with $d=32$) and represent both the value function and the potential as linear models in this feature space. The exact value function $V^\pi$ is computed by solving the Bellman equation with a closed-form matrix inversion, and we monitor the mean squared value error (MSVE) between the learned $V_\theta$ and $V^\pi$ as training proceeds. Linear TD and linear HFPS share the same step sizes and feature maps; we again average results over multiple seeds.

\paragraph{Instantiation in control benchmarks.}
For the control tasks (LunarLander-v2, Noisy-LunarLander, and Acrobot-v1) we use the vector-based observations provided by the environments, so the backbone is instantiated as a two-layer MLP with $128$ hidden units per layer and ReLU activations. DQN and Double DQN use this backbone followed by a linear head that outputs one Q-value per action. The AC baseline uses the same backbone with separate linear actor and critic heads. HFPS uses two networks with the same backbone structure: one for $Q_\theta(s,a)$ and one for the scalar potential $u_\phi(s)$.

All deep RL experiments use the Adam optimizer with learning rate $3\times10^{-4}$ for both value and potential networks, a replay buffer of size $10^5$, mini-batches of size $64$, and a target-network update interval of $1{,}000$ environment steps for value-based methods. The discount factor is set to $\gamma=0.99$. Each training run lasts for $5\times10^5$ to $1\times10^6$ environment steps depending on the task, with evaluation episodes (no exploration noise) every $5{,}000$ steps. We run $10$ random seeds for each algorithm--environment pair and report mean and standard deviation of episode returns.

In the Noisy-LunarLander variant, we introduce observation aliasing by replacing the freshly observed state $s_t$ with the previous observation $s_{t-1}$ with probability $p_{\mathrm{alias}}=0.1$ at each time step. All other environment parameters remain identical to the standard LunarLander-v2. The Acrobot-v1 task is used with its default reward and termination conditions; no additional shaping is introduced.

\section{Additional regimes, full tables, and Nonmarkov construction}
\label{app:extra-regimes}

This appendix reports (i) results under \textbf{Clean}, \textbf{Noisy}, and \textbf{Sticky} regimes, and (ii) the full aggregate summary table used in the main text (Table~\ref{tab:control-summary}). We also document the exact Nonmarkov construction used in our experiments.

\subsection{Nonmarkov regime via hidden control-command memory}
\label{app:nonmarkov-wrapper}

We implement the Nonmarkov regime by injecting a hidden actuator memory state $u_{\mathrm{mem}}$ into the environment.
At each time step, the agent selects a discrete action $\tilde a_t$ from an augmented action set that includes a special \emph{hold} index.
The executed action $a_t$ is determined by
\[
a_t =
\begin{cases}
u_{\mathrm{mem}}, & \tilde a_t = a_{\mathrm{hold}} \ \text{and hold budget not exceeded},\\
\tilde a_t, & \tilde a_t \neq a_{\mathrm{hold}},
\end{cases}
\qquad
u_{\mathrm{mem}} \leftarrow
\begin{cases}
u_{\mathrm{mem}}, & \tilde a_t = a_{\mathrm{hold}},\\
\tilde a_t, & \tilde a_t \neq a_{\mathrm{hold}}.
\end{cases}
\]
Optionally, we cap consecutive holds by a maximum \texttt{max\_hold\_steps}; if exceeded, the memory is reset to a default action to avoid degenerate perpetual actuation.
Crucially, $u_{\mathrm{mem}}$ is not included in the observation, so the induced process is non-Markov from the agent's perspective.
This corresponds to our wrapper \texttt{\_HoldLastDiscreteAction}.

\subsection{Control benchmarks under Clean/Noisy/Sticky regimes}
\label{app:clean-noisy-sticky}

We evaluate the same five benchmarks used in the main text under three additional regimes derived from the same base task.
\textbf{Clean} is the standard Markov observation.
\textbf{Noisy} introduces observation corruption (e.g., additive noise / aliasing) while keeping the underlying dynamics unchanged.
\textbf{Sticky} introduces temporal persistence in the observation/state interface (a portion of the previous observation is reused), yielding partial observability with a different structure than the Nonmarkov actuator memory.

\paragraph{Learning curves (Clean/Noisy/Sticky).}
Figure~\ref{fig:control-returns-15} reports episode return versus environment steps for all three regimes.

\begin{figure*}[t!]
  \centering
  \setlength{\tabcolsep}{2pt}
  \renewcommand{\arraystretch}{0.8}
  \begin{tabular}{@{}ccccc@{}}
    \multicolumn{5}{c}{\textbf{Nonmarkov}}\\[-2pt]
    \includegraphics[width=0.19\linewidth]{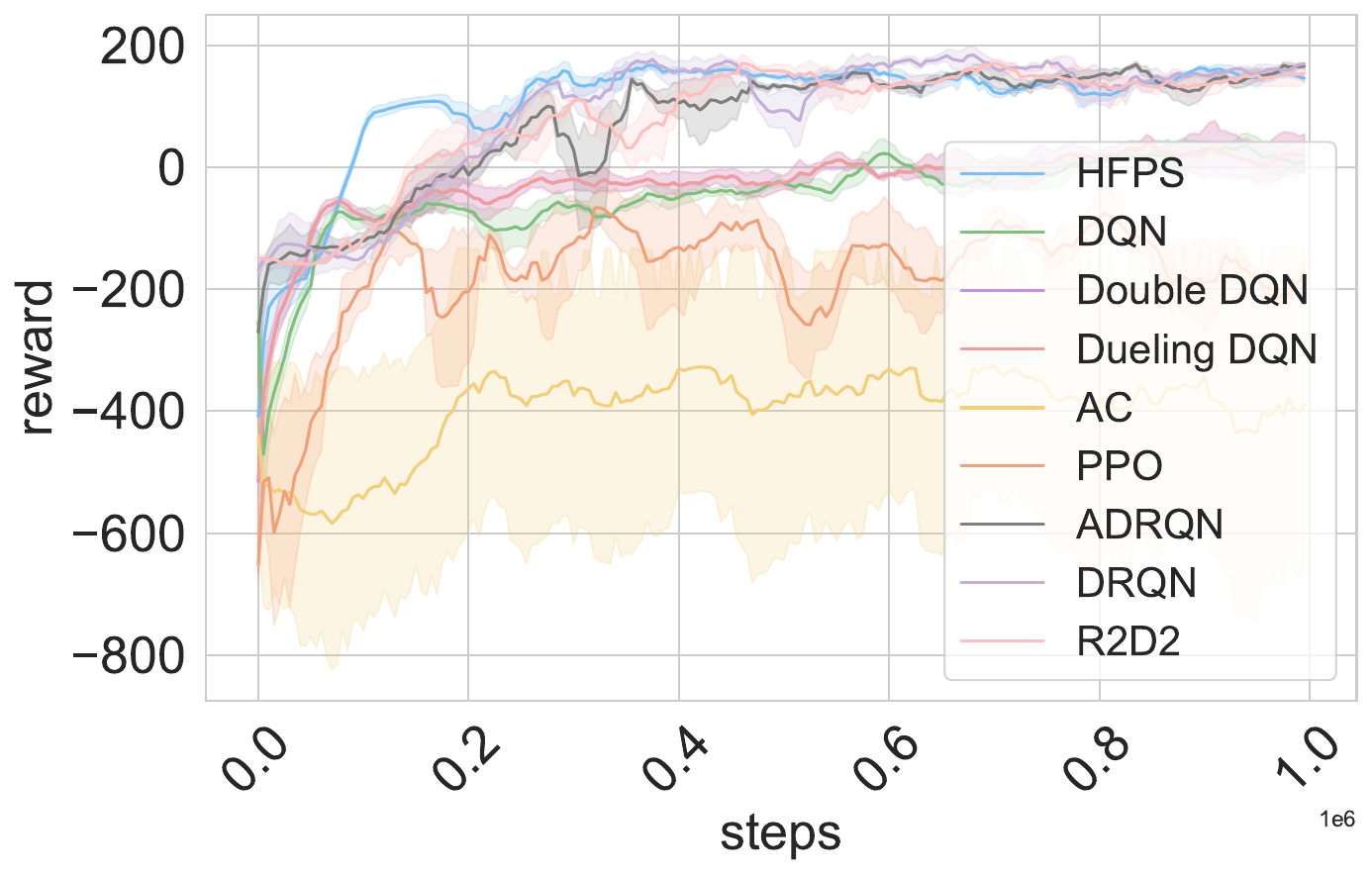} &
    \includegraphics[width=0.19\linewidth]{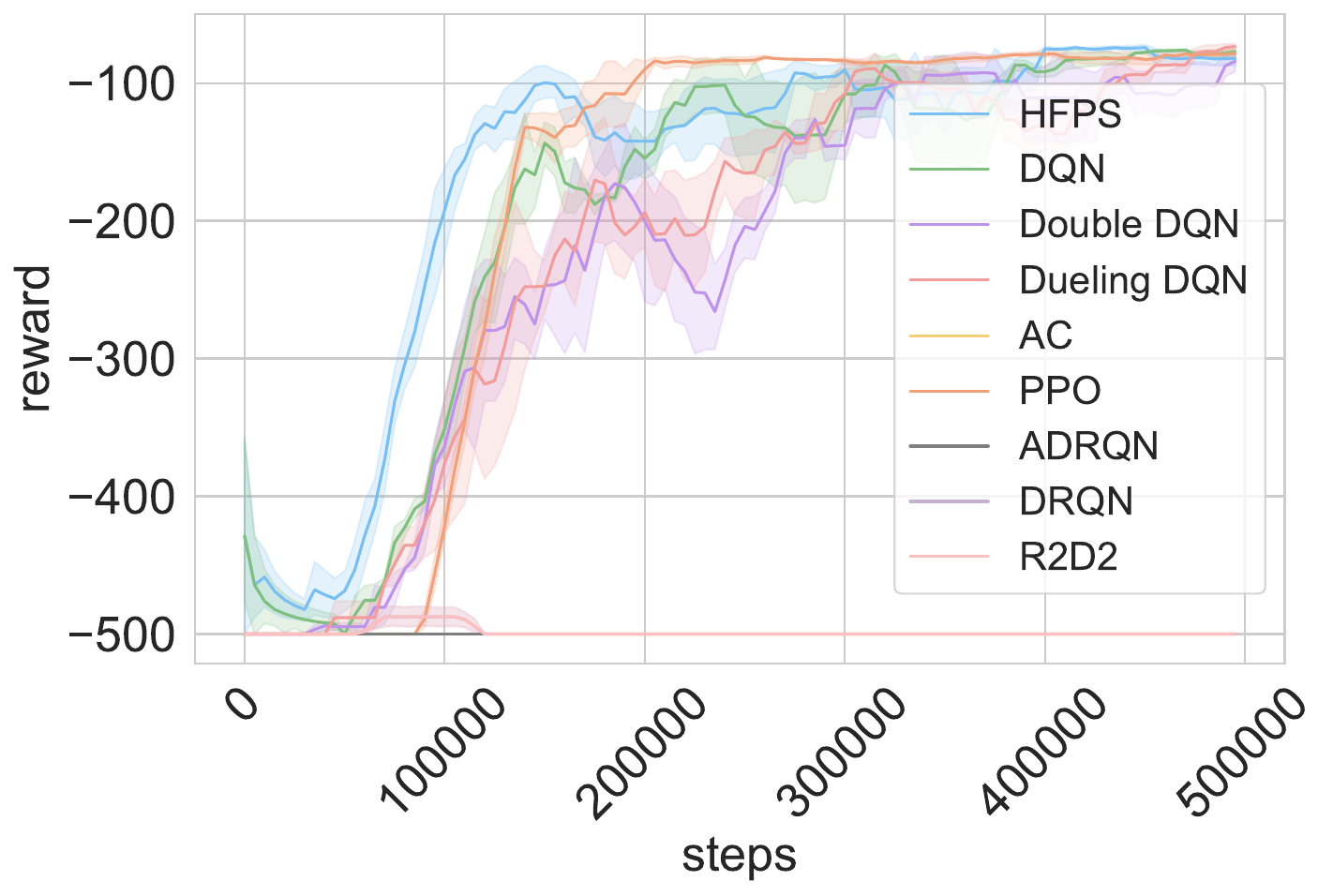} &
    \includegraphics[width=0.19\linewidth]{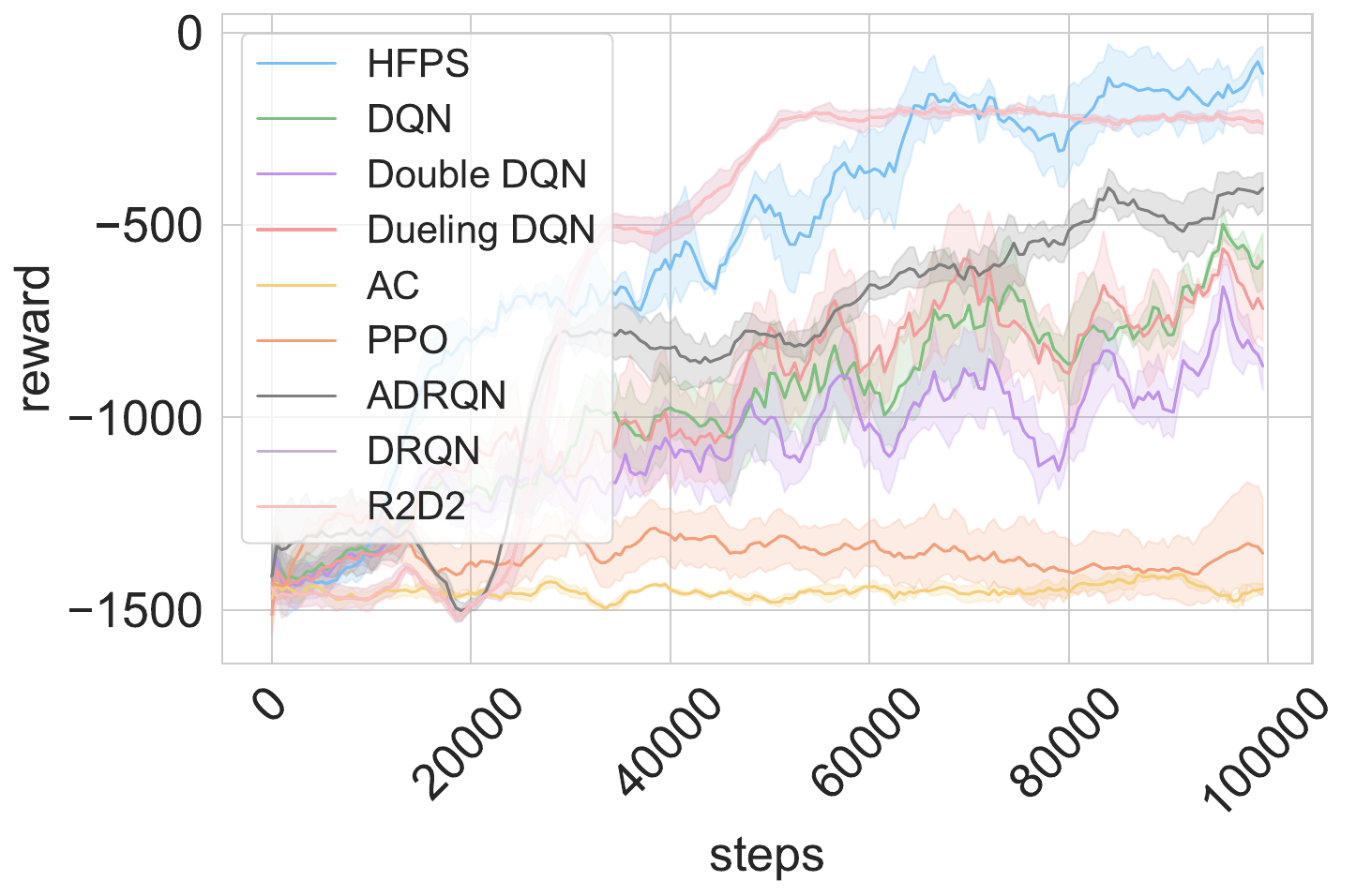} &
    \includegraphics[width=0.19\linewidth]{figs/pointmass_nonmarkov.pdf} &
    \includegraphics[width=0.19\linewidth]{figs/bipedalwalker_nonmarkov.pdf} \\[2pt]
  \end{tabular}
  \caption{\textbf{Episode returns.}
  Return versus environment steps (mean $\pm$ one standard deviation over seeds) for HFPS and baselines on five control tasks (columns) under Nonmarkov} 
  \label{fig:control-returns}
\end{figure*}

\begin{figure*}[t!]
  \centering
  \setlength{\tabcolsep}{2pt}
  \renewcommand{\arraystretch}{0.8}
  \begin{tabular}{@{}ccccc@{}}
    \multicolumn{5}{c}{\textbf{Clean}}\\[-2pt]
    \includegraphics[width=0.19\linewidth]{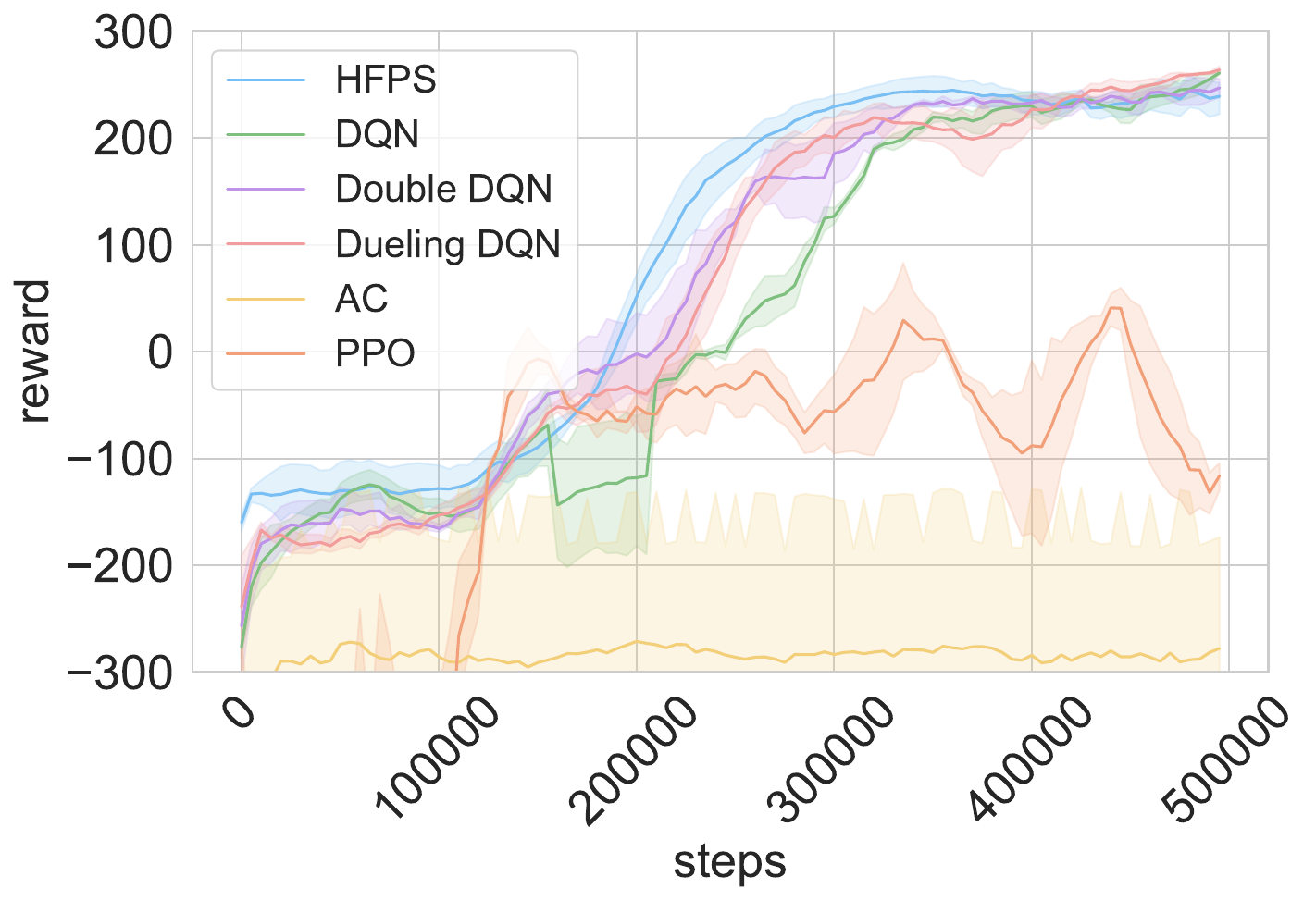} &
    \includegraphics[width=0.19\linewidth]{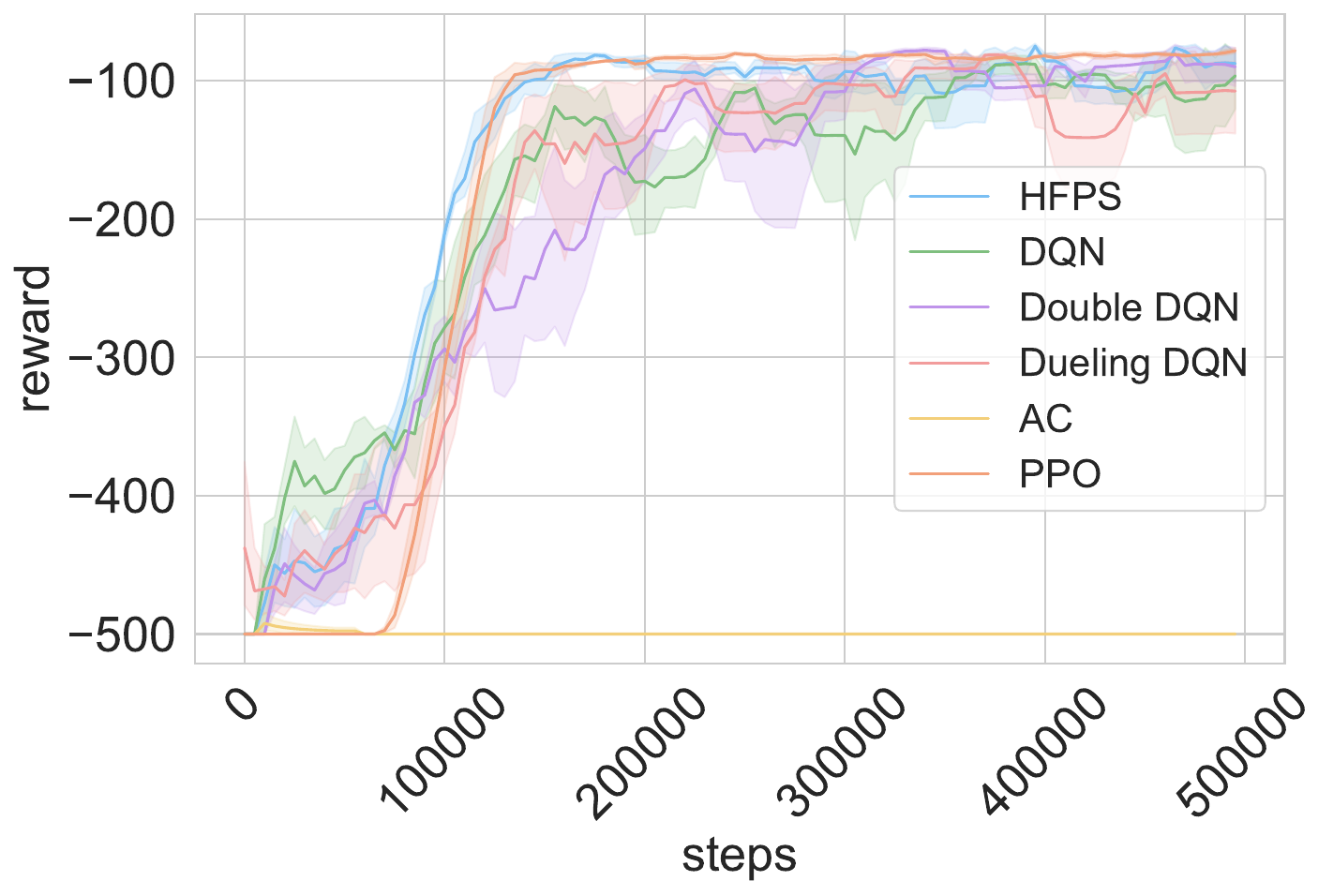} &
    \includegraphics[width=0.19\linewidth]{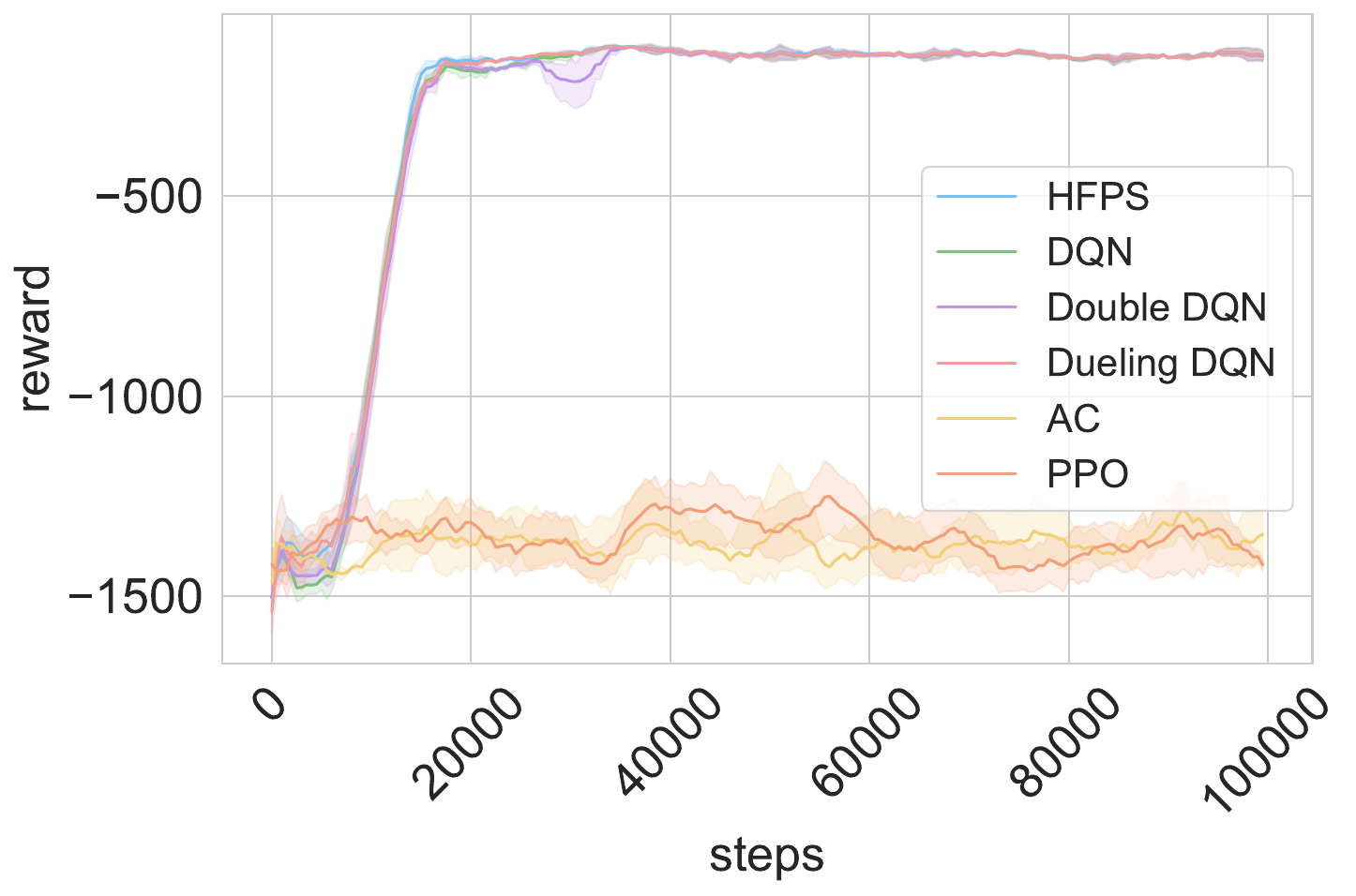} &
    \includegraphics[width=0.19\linewidth]{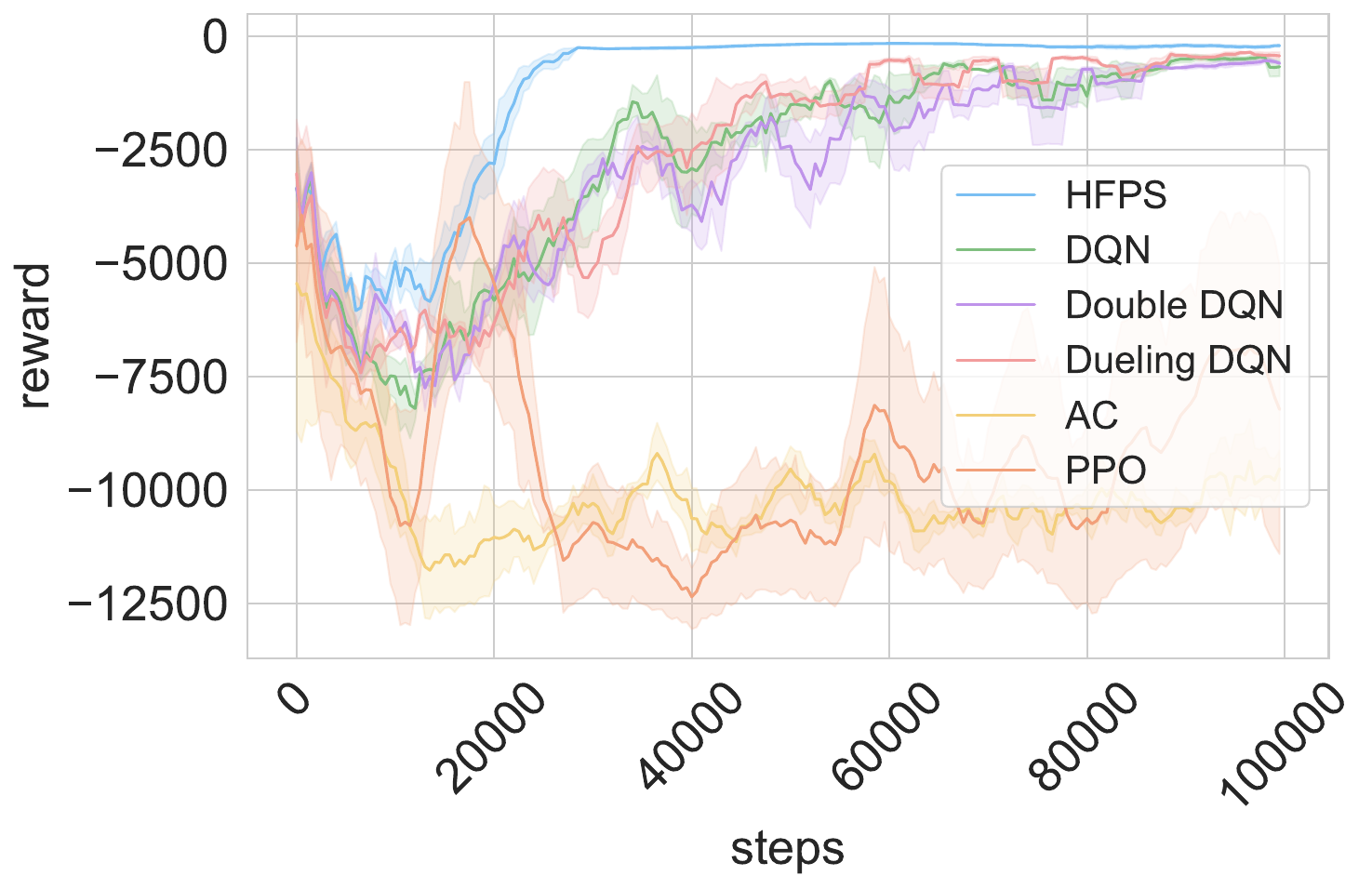} &
    \includegraphics[width=0.19\linewidth]{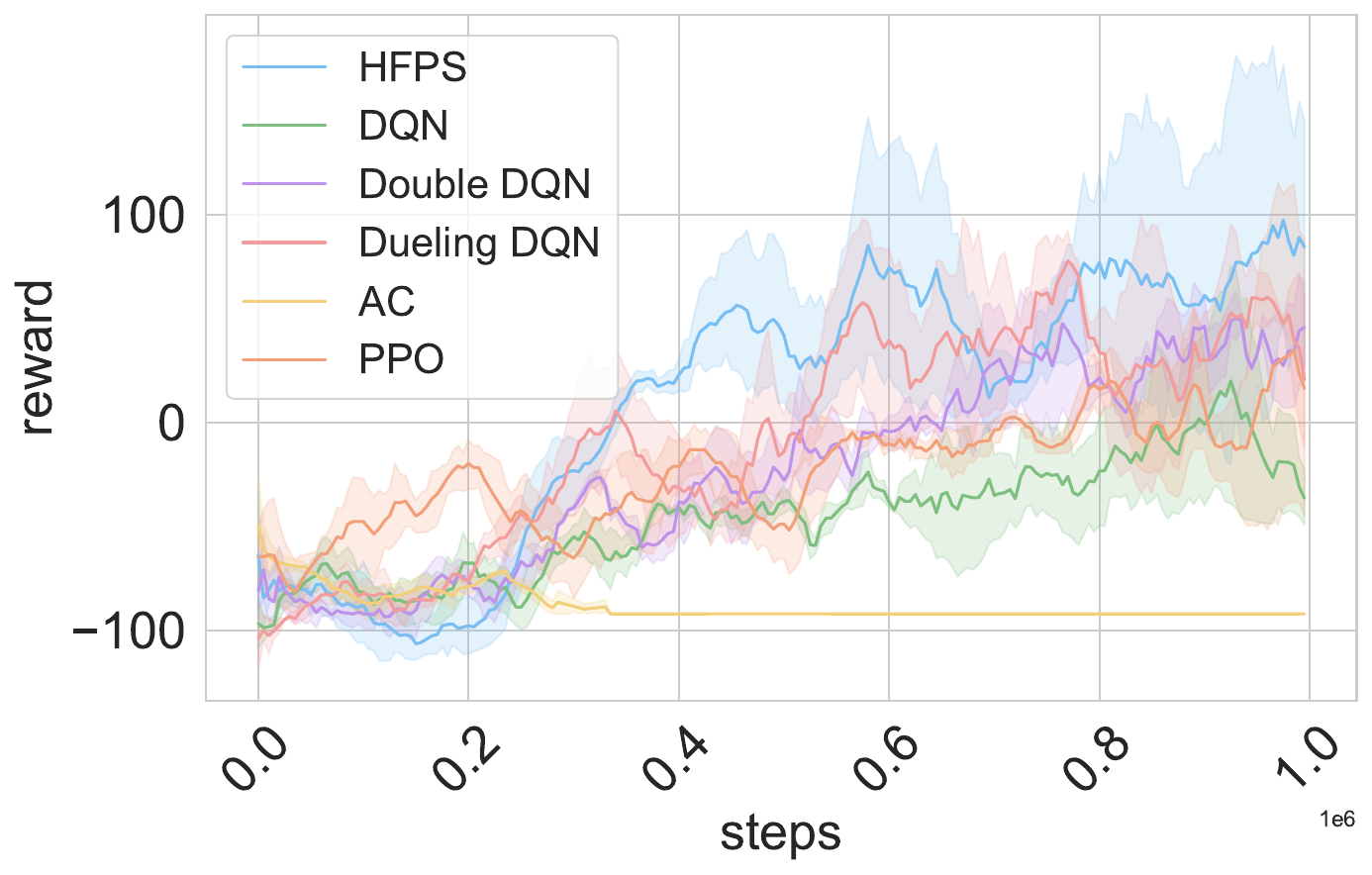} \\[2pt]
    \multicolumn{5}{c}{\textbf{Noisy}}\\[-2pt]
    \includegraphics[width=0.19\linewidth]{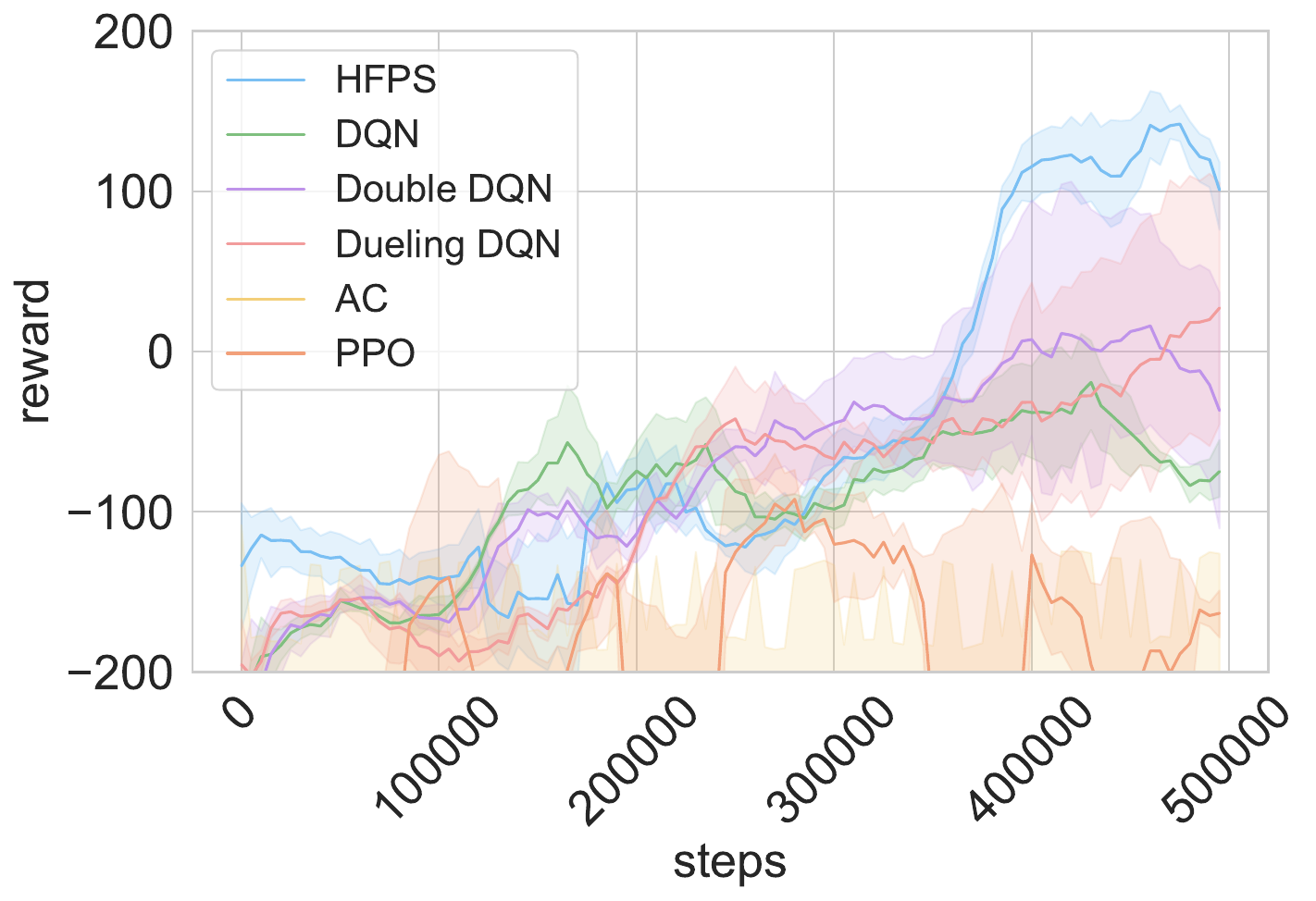} &
    \includegraphics[width=0.19\linewidth]{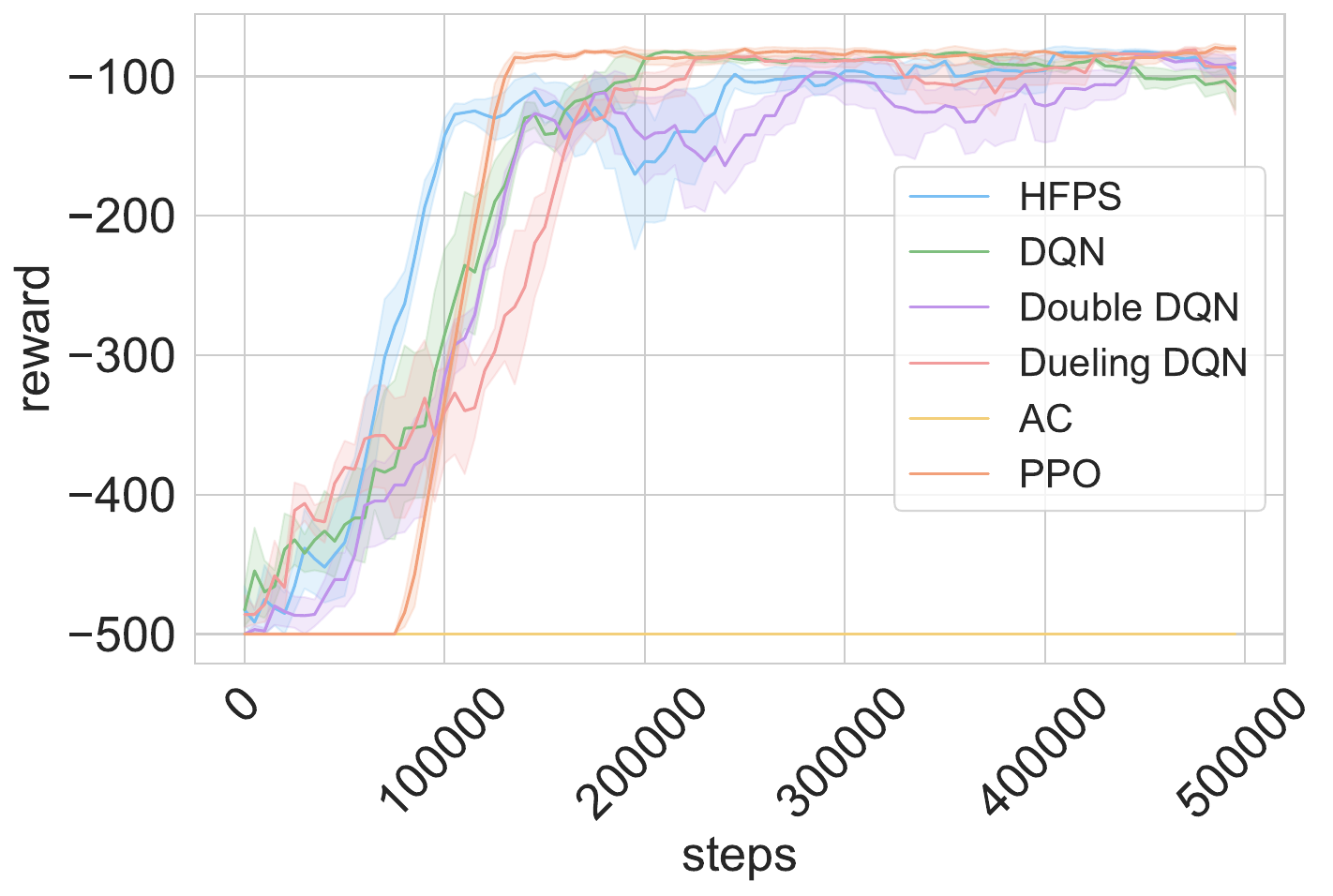} &
    \includegraphics[width=0.19\linewidth]{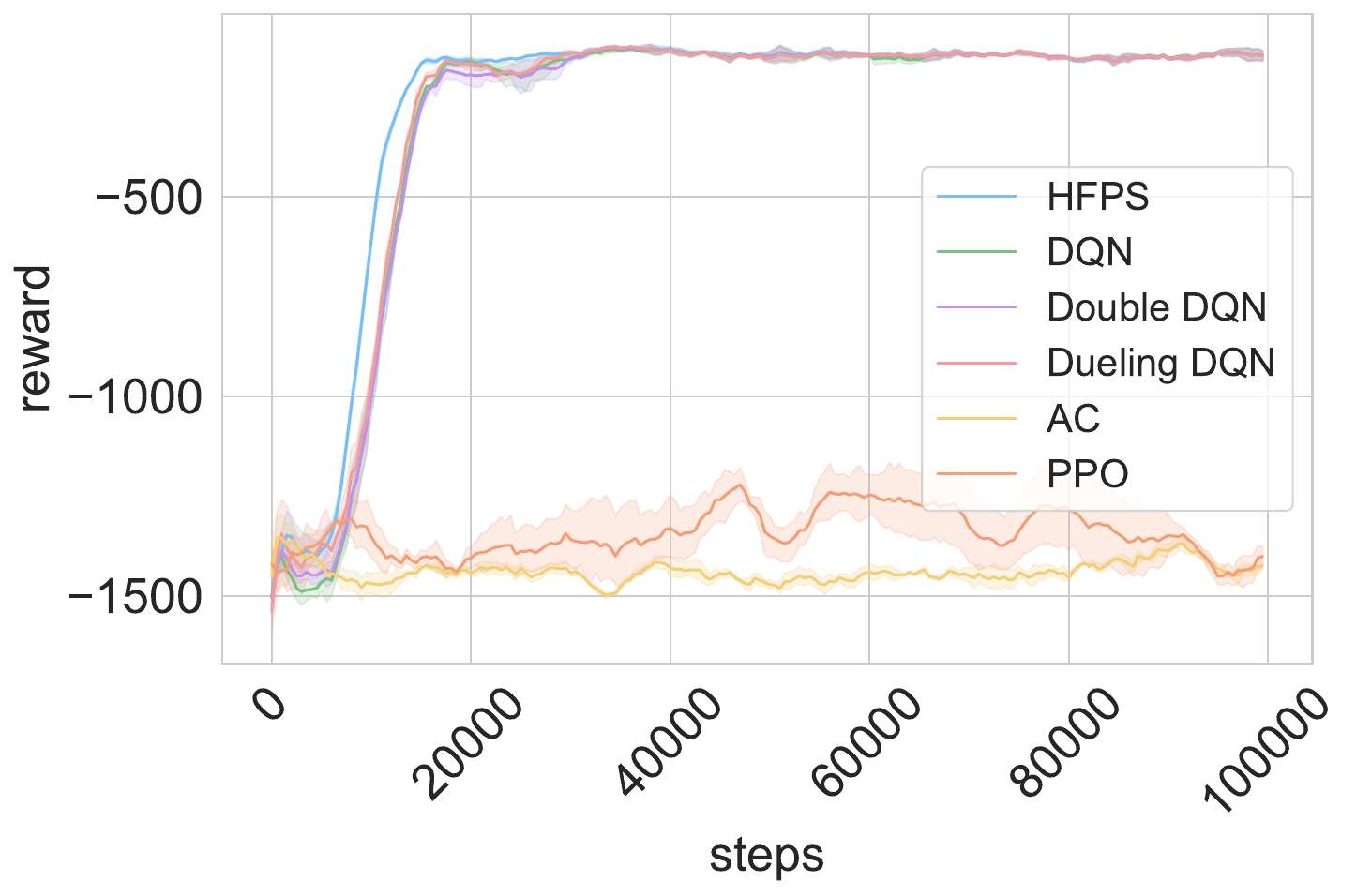} &
    \includegraphics[width=0.19\linewidth]{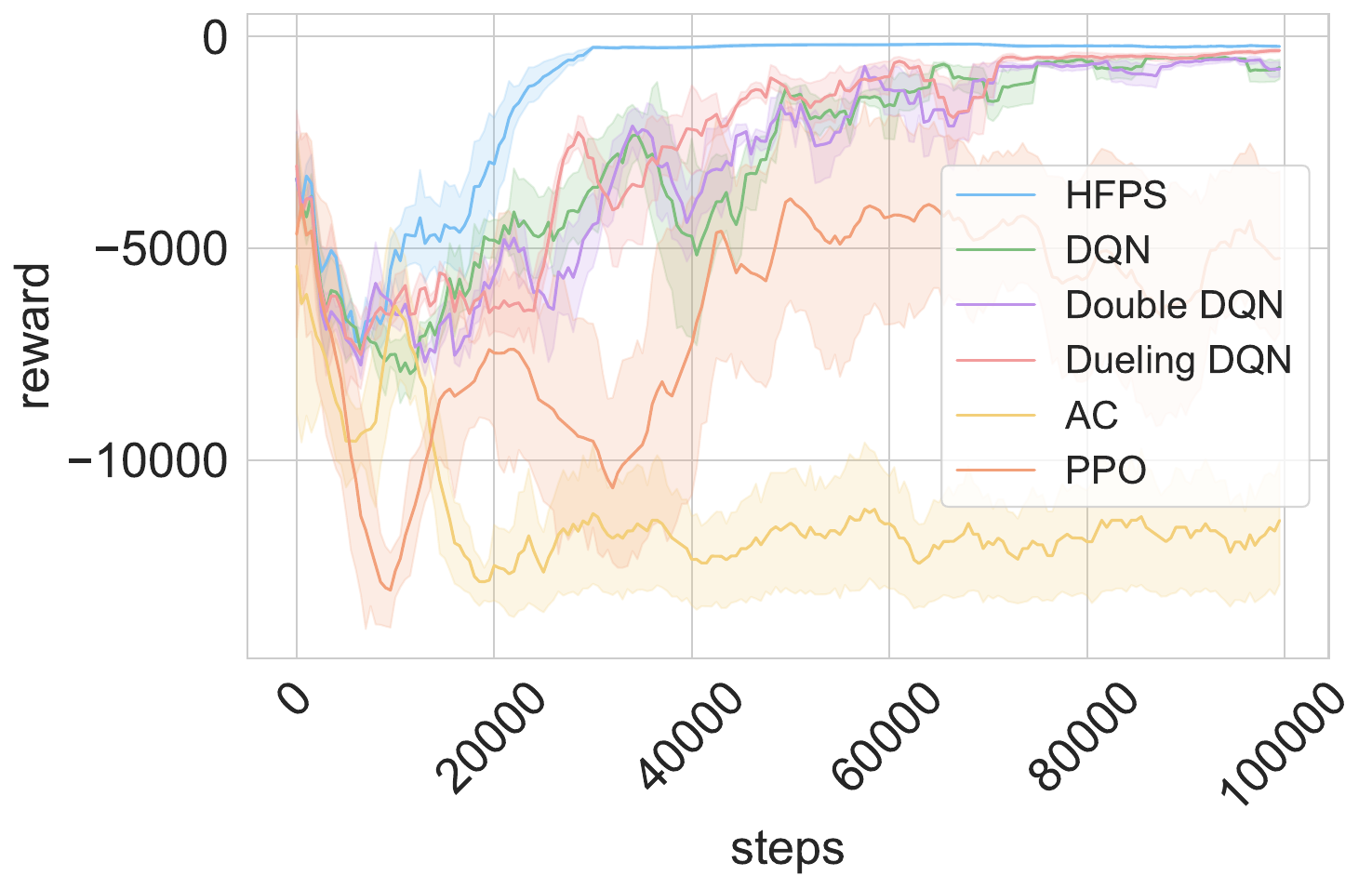} &
    \includegraphics[width=0.19\linewidth]{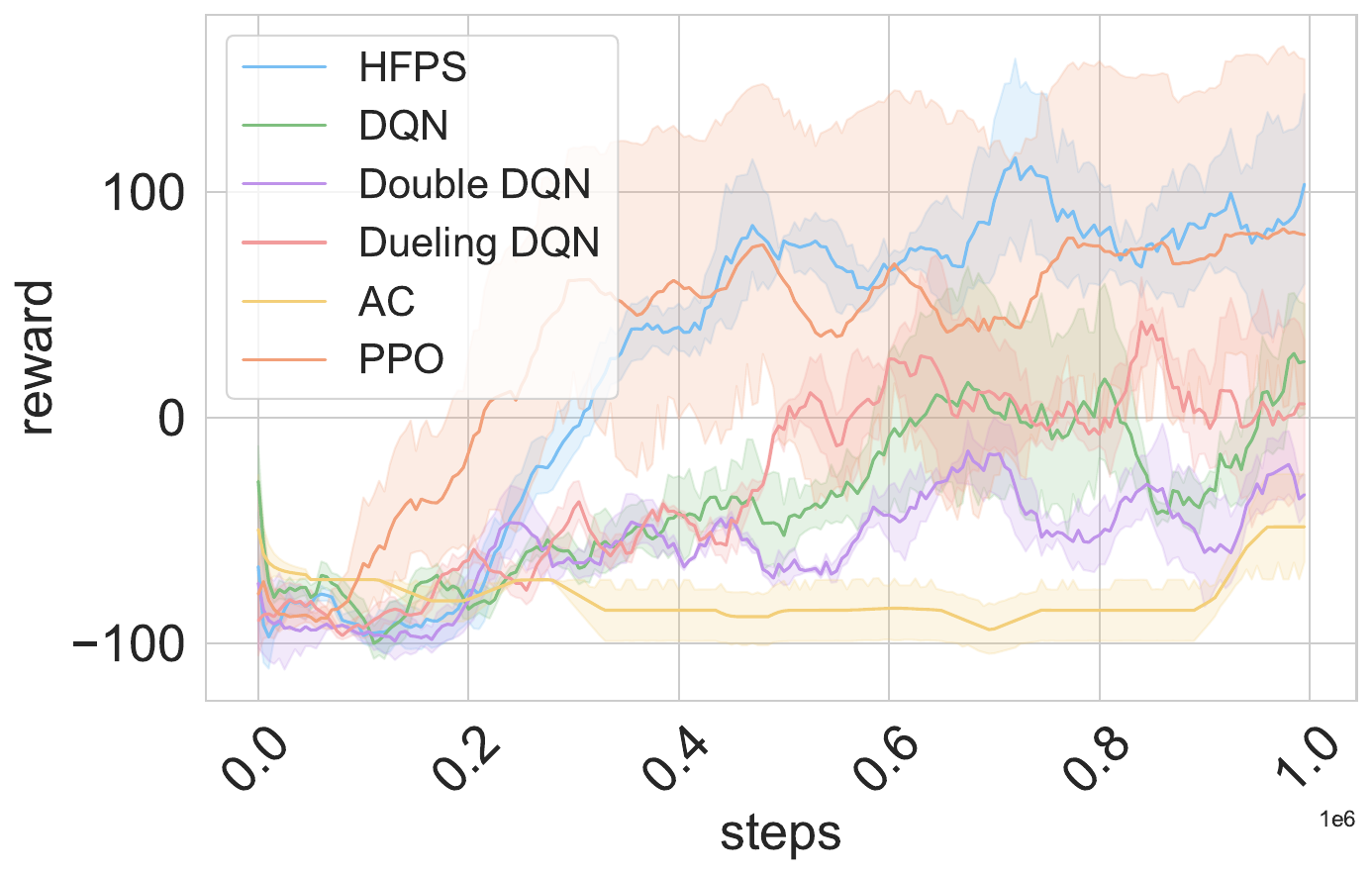} \\[2pt]
    \multicolumn{5}{c}{\textbf{Sticky}}\\[-2pt]
    \includegraphics[width=0.19\linewidth]{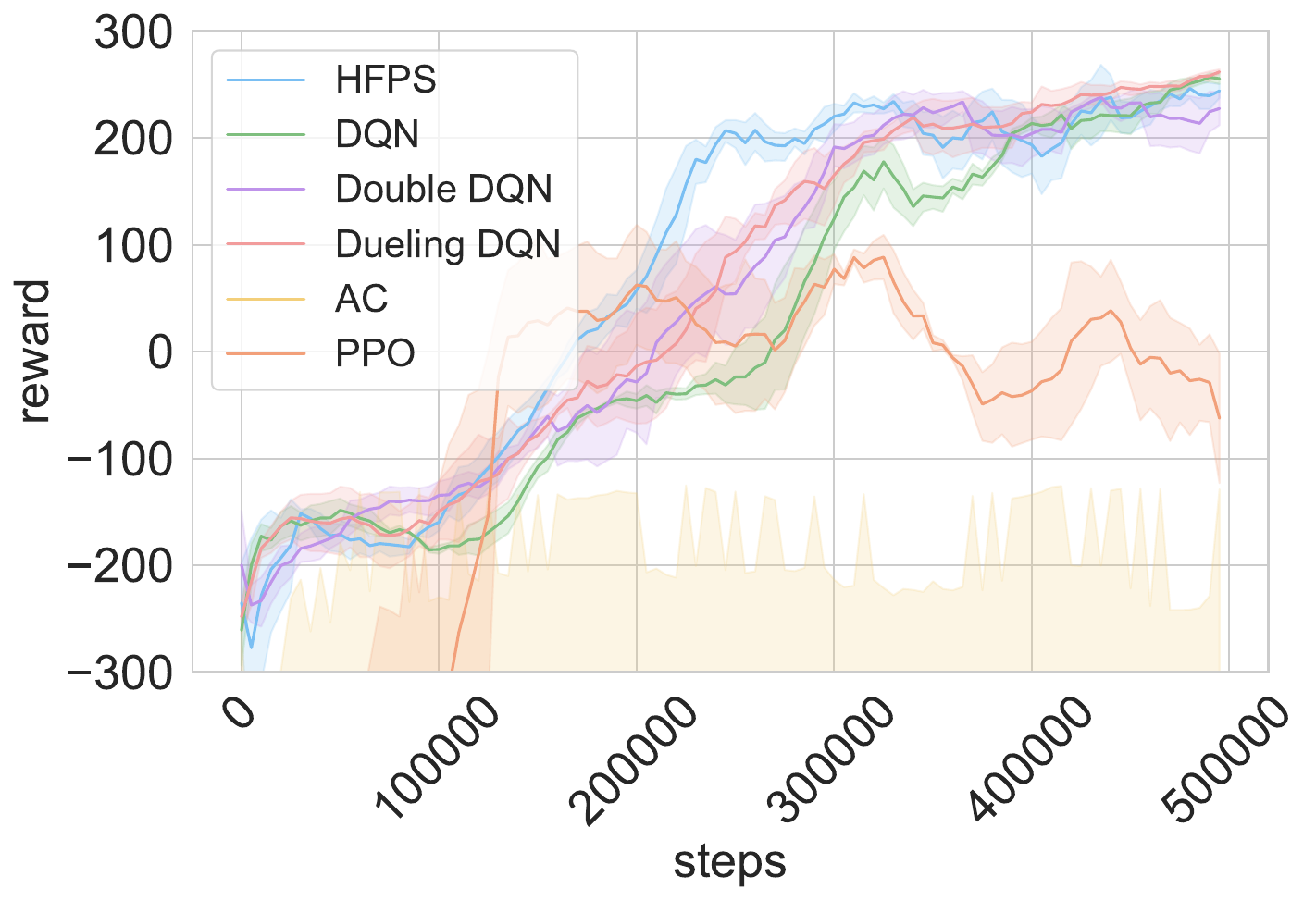} &
    \includegraphics[width=0.19\linewidth]{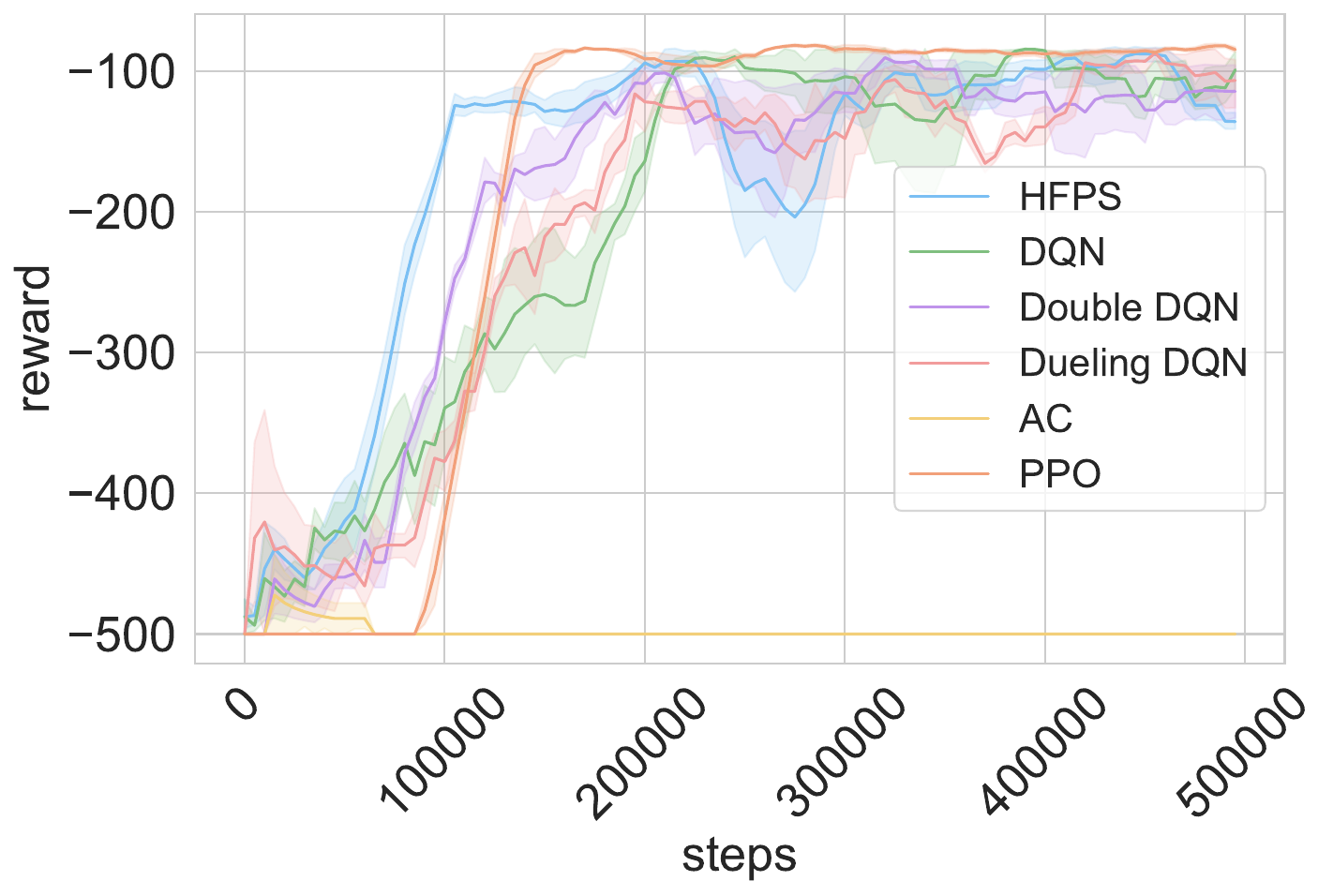} &
    \includegraphics[width=0.19\linewidth]{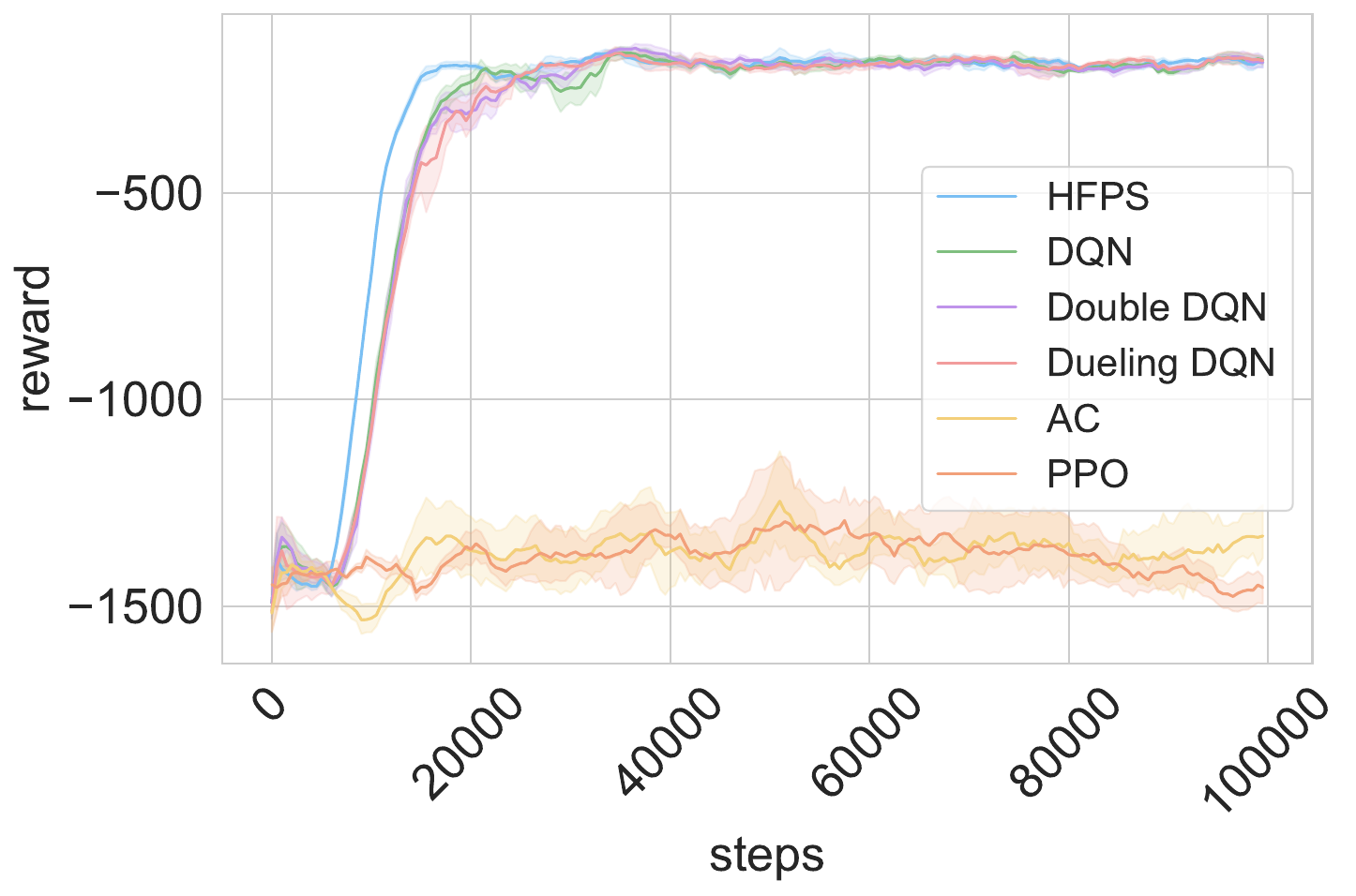} &
    \includegraphics[width=0.19\linewidth]{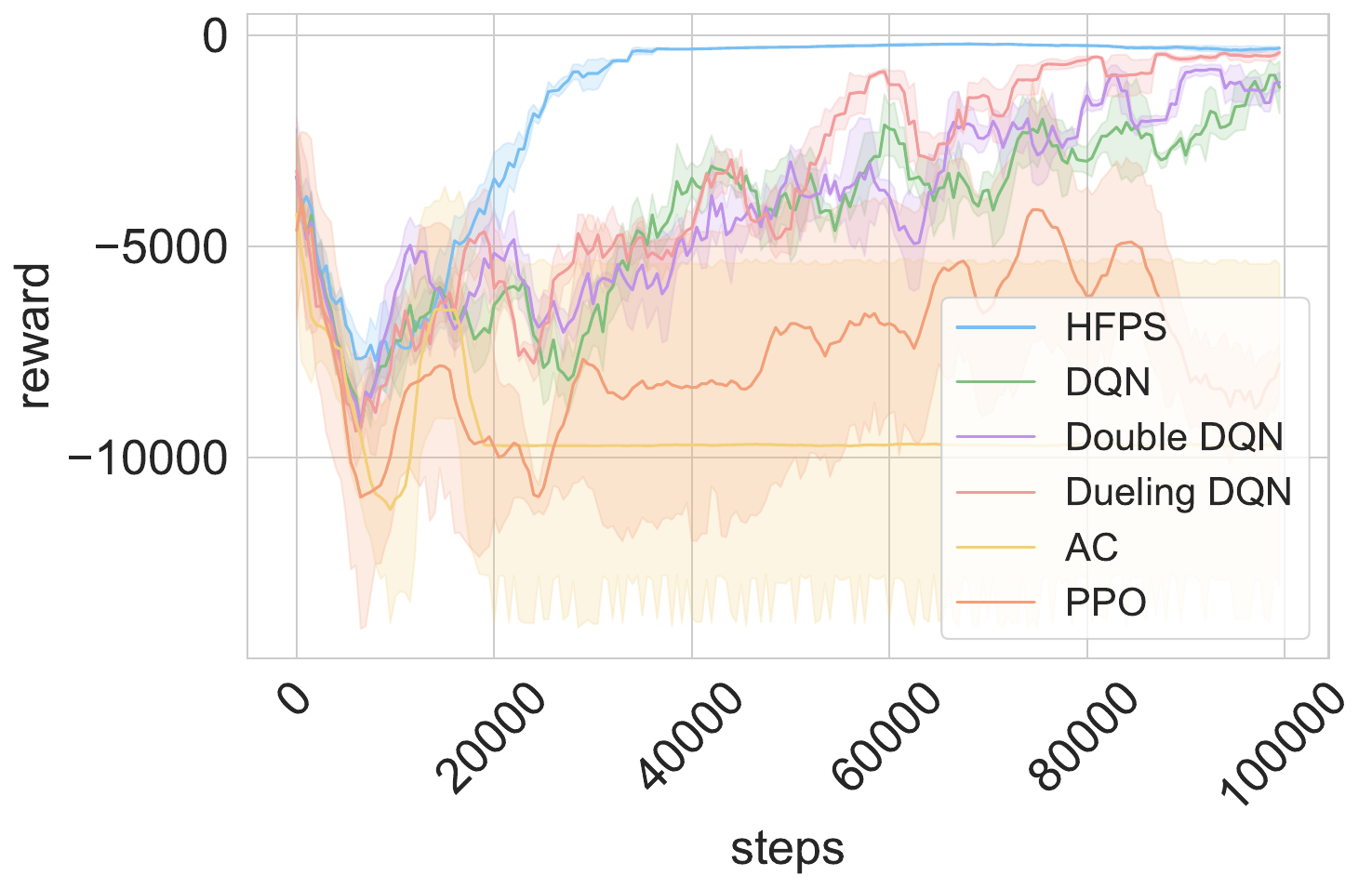} &
    \includegraphics[width=0.19\linewidth]{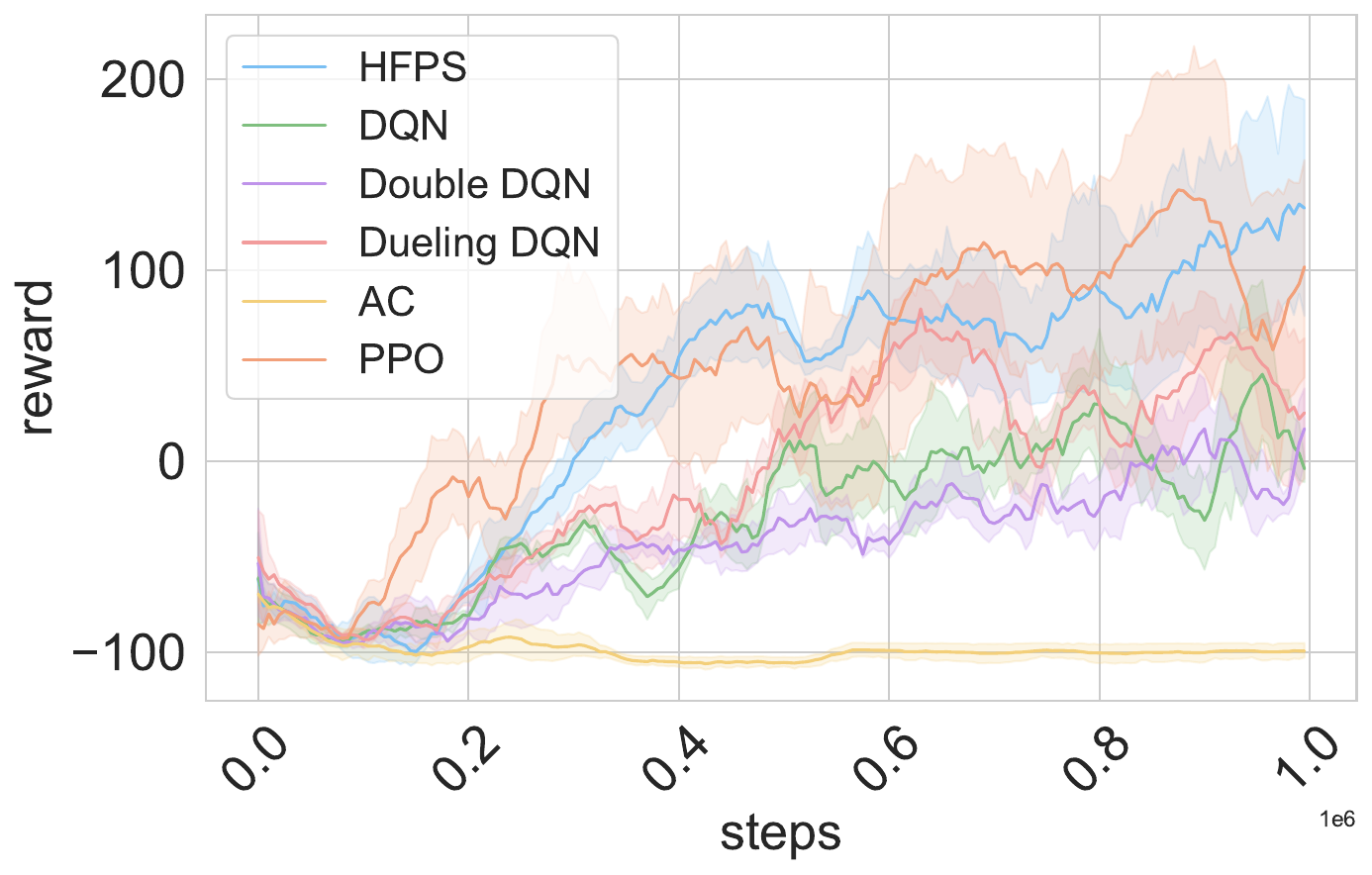} \\
  \end{tabular}
  \caption{\textbf{Episode returns.}
  Return versus environment steps (mean $\pm$ one standard deviation over seeds) for HFPS and baselines on five control tasks (columns) under three observation regimes (rows): Clean, Noisy, and Sticky.}
  \label{fig:control-returns-15}
\end{figure*}

\paragraph{Derived diagnostics (cAUC and Std).}
Figures~\ref{fig:control-cauc-15} and~\ref{fig:control-std-15} report cumulative AUC and across-seed variability for the same runs.

\begin{figure*}[t!]
  \centering
  \setlength{\tabcolsep}{2pt}
  \renewcommand{\arraystretch}{0.8}
  \begin{tabular}{@{}ccccc@{}}
    \multicolumn{5}{c}{\textbf{Nonmarkov (cAUC)}}\\[-2pt]
    \includegraphics[width=0.19\linewidth]{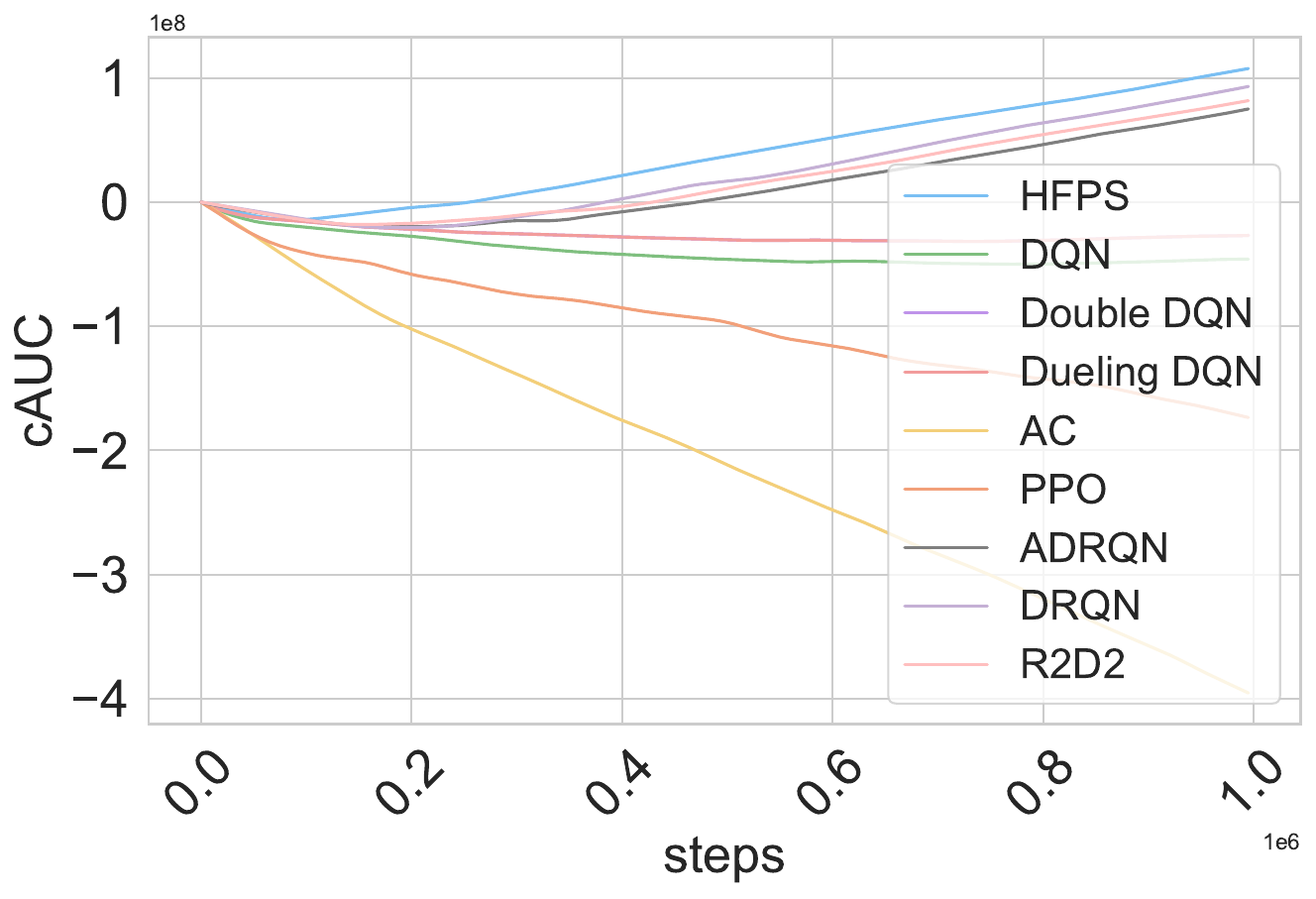} &
    \includegraphics[width=0.19\linewidth]{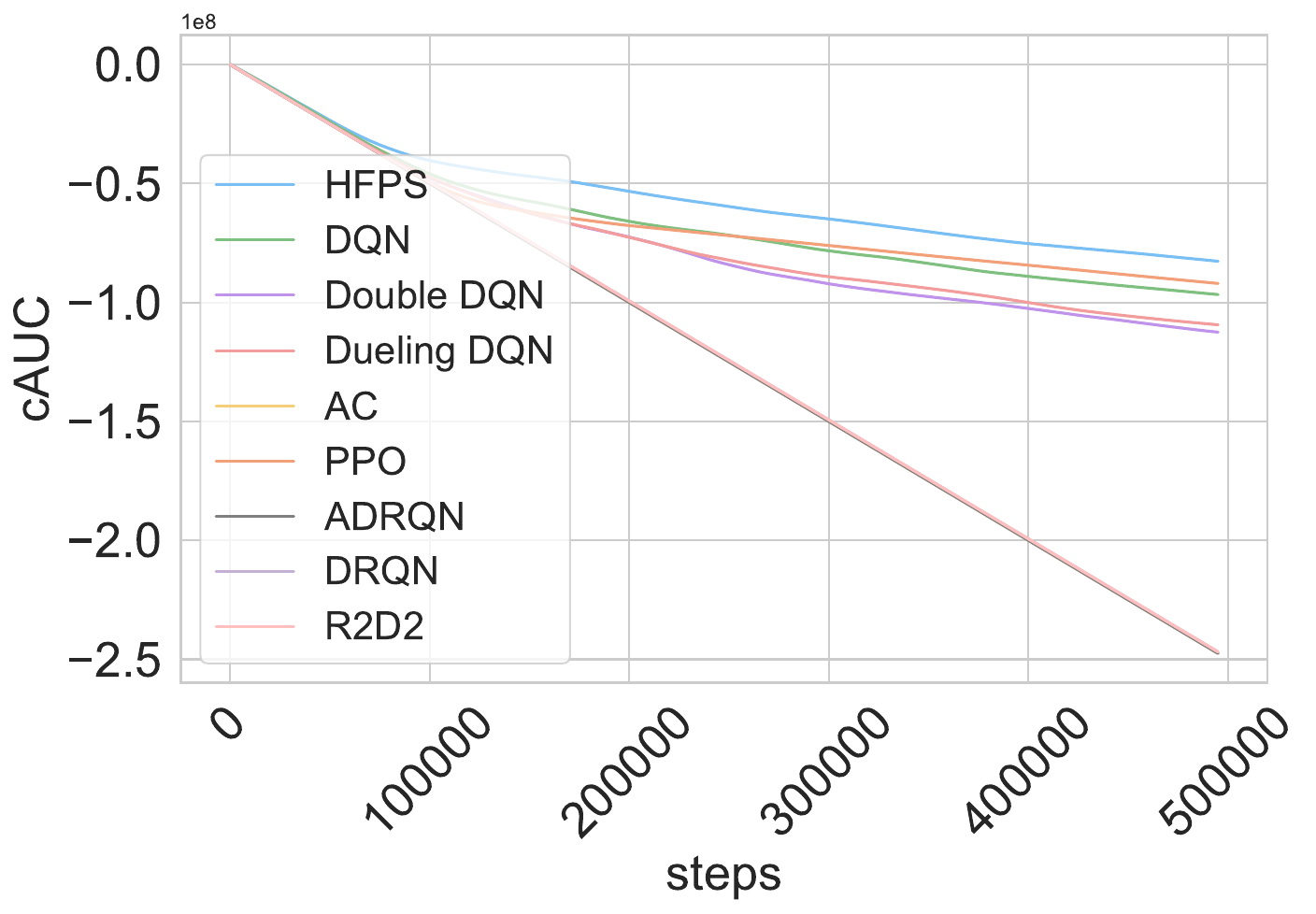} &
    \includegraphics[width=0.19\linewidth]{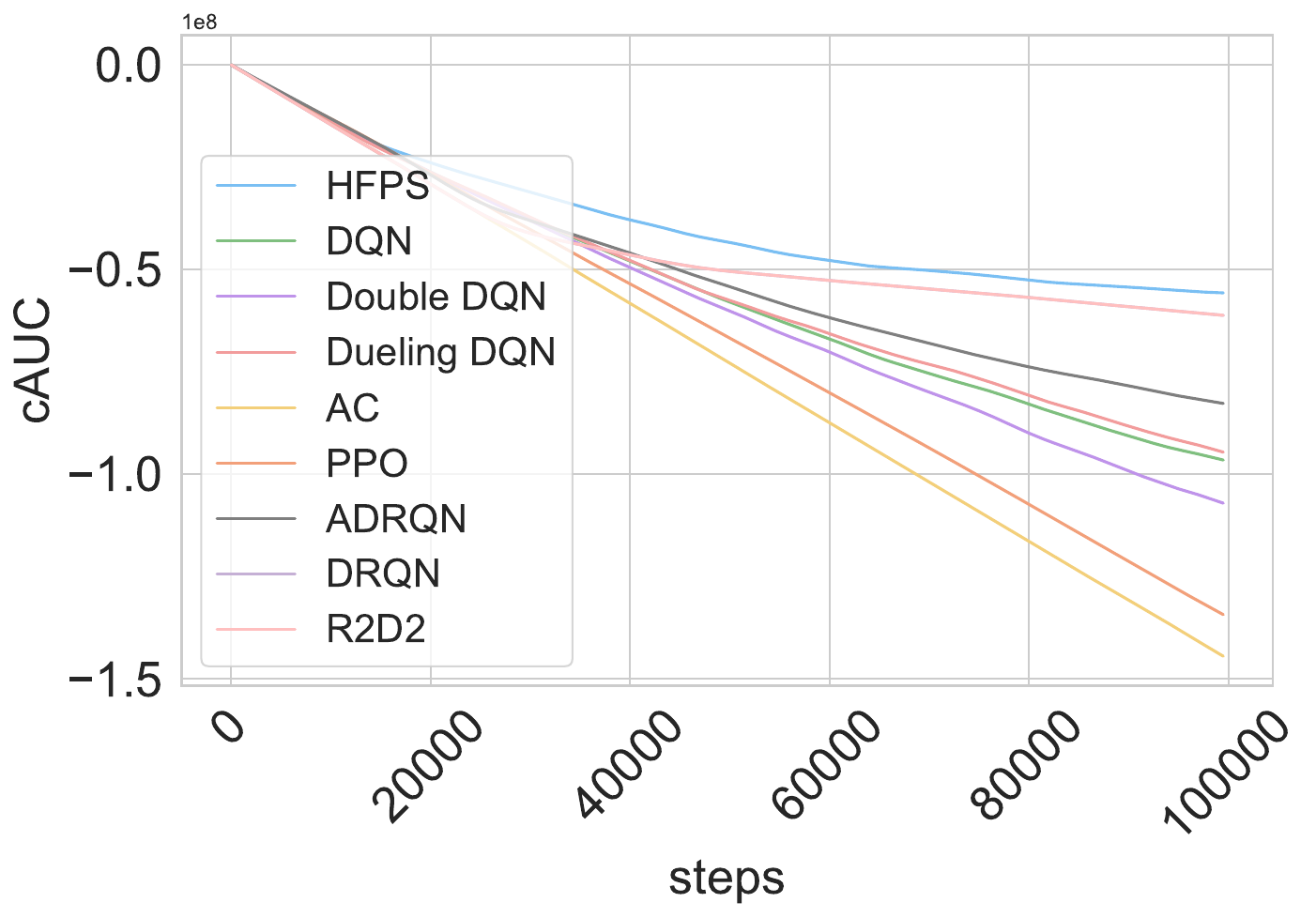} &
    \includegraphics[width=0.19\linewidth]{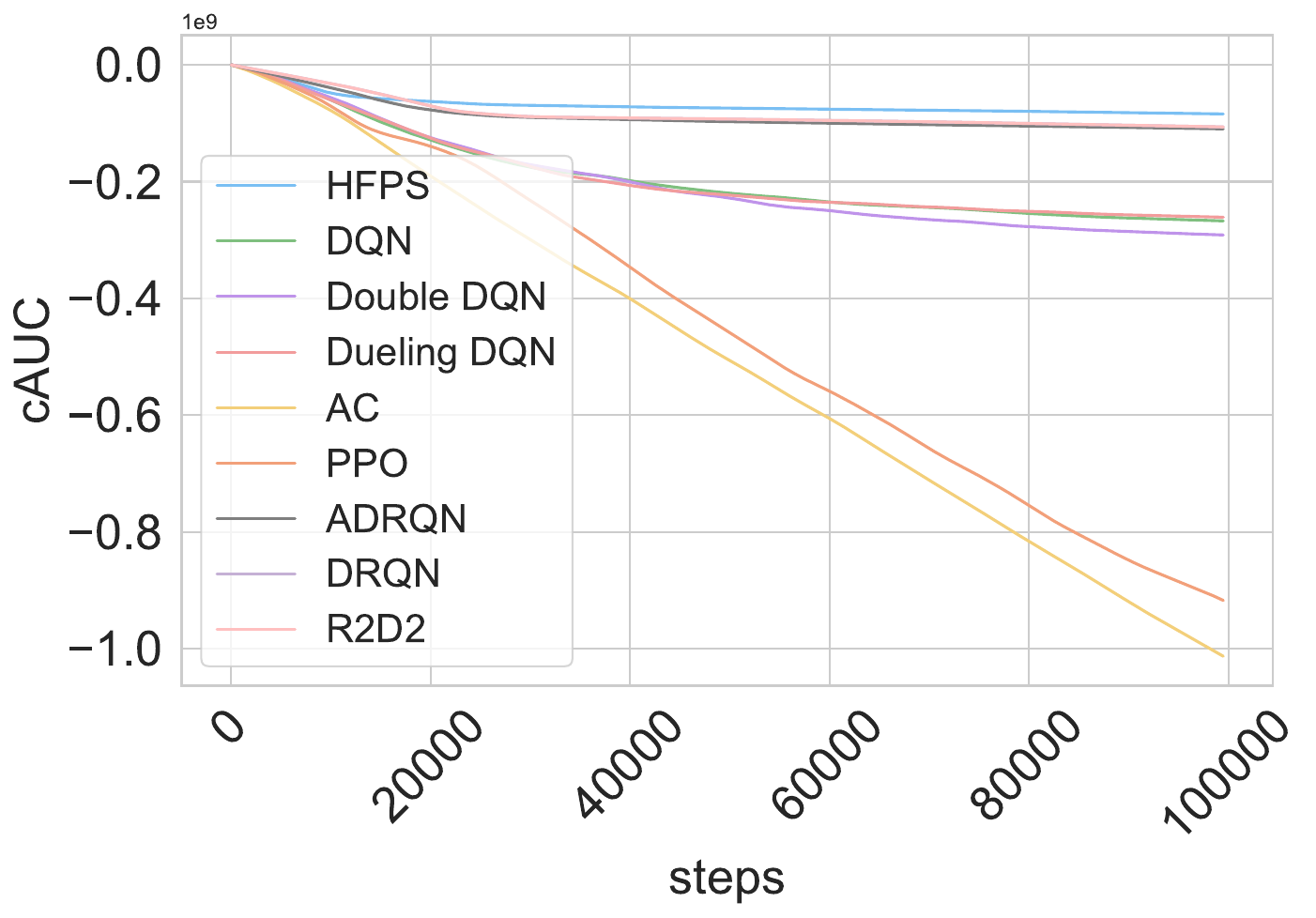} &
    \includegraphics[width=0.19\linewidth]{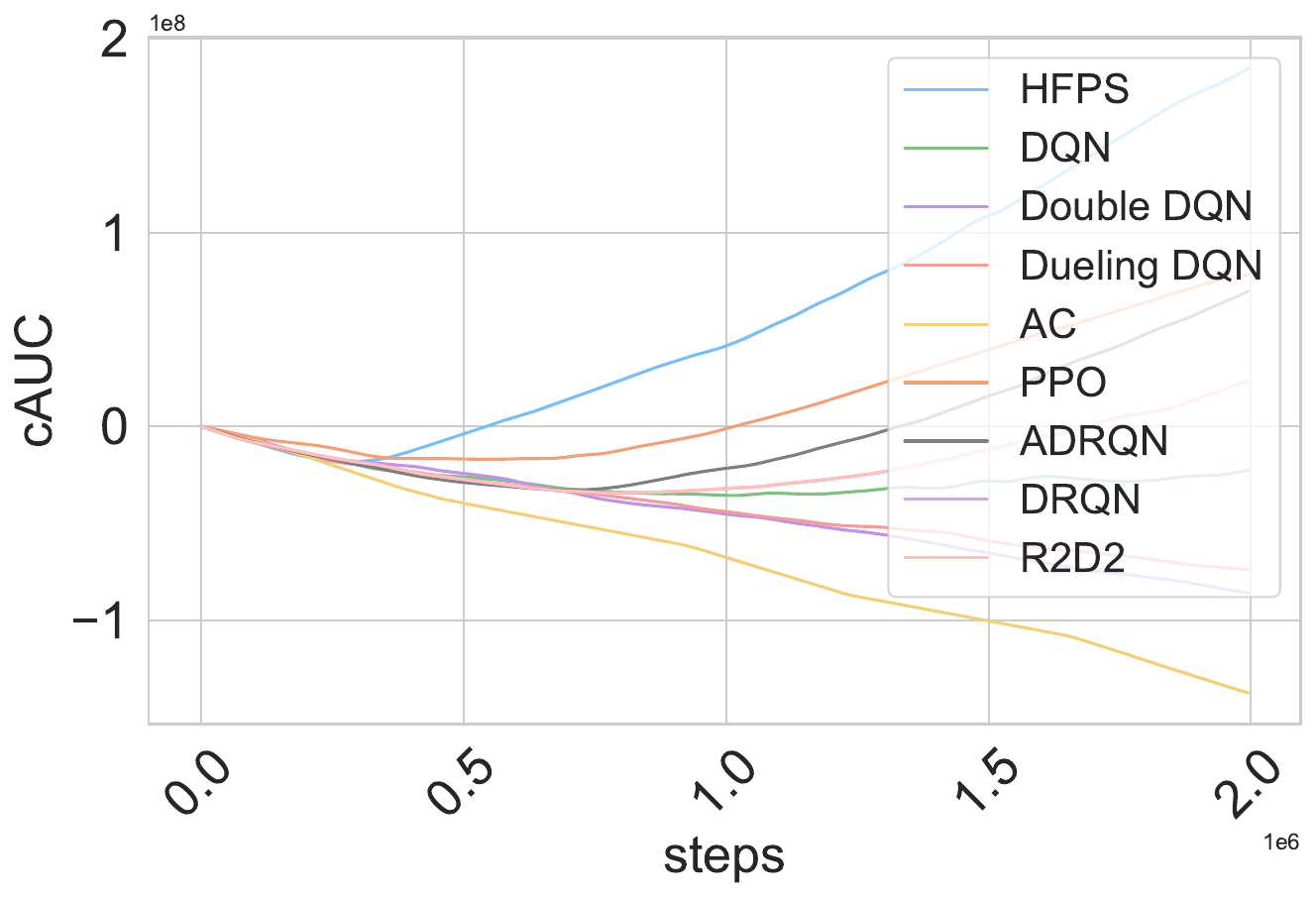} \\[2pt]
  \end{tabular}
  \caption{\textbf{Cumulative AUC trajectories}
  Each panel plots the cumulative integral of the corresponding return curve in Figure~\ref{fig:control-returns}.
  Higher curves indicate better overall sample efficiency under the same training budget.}
  \label{fig:control-cauc}
\end{figure*}

\begin{figure*}[t!]
  \centering
  \setlength{\tabcolsep}{2pt}
  \renewcommand{\arraystretch}{0.8}
  \begin{tabular}{@{}ccccc@{}}
    \multicolumn{5}{c}{\textbf{Clean (cAUC)}}\\[-2pt]
    \includegraphics[width=0.19\linewidth]{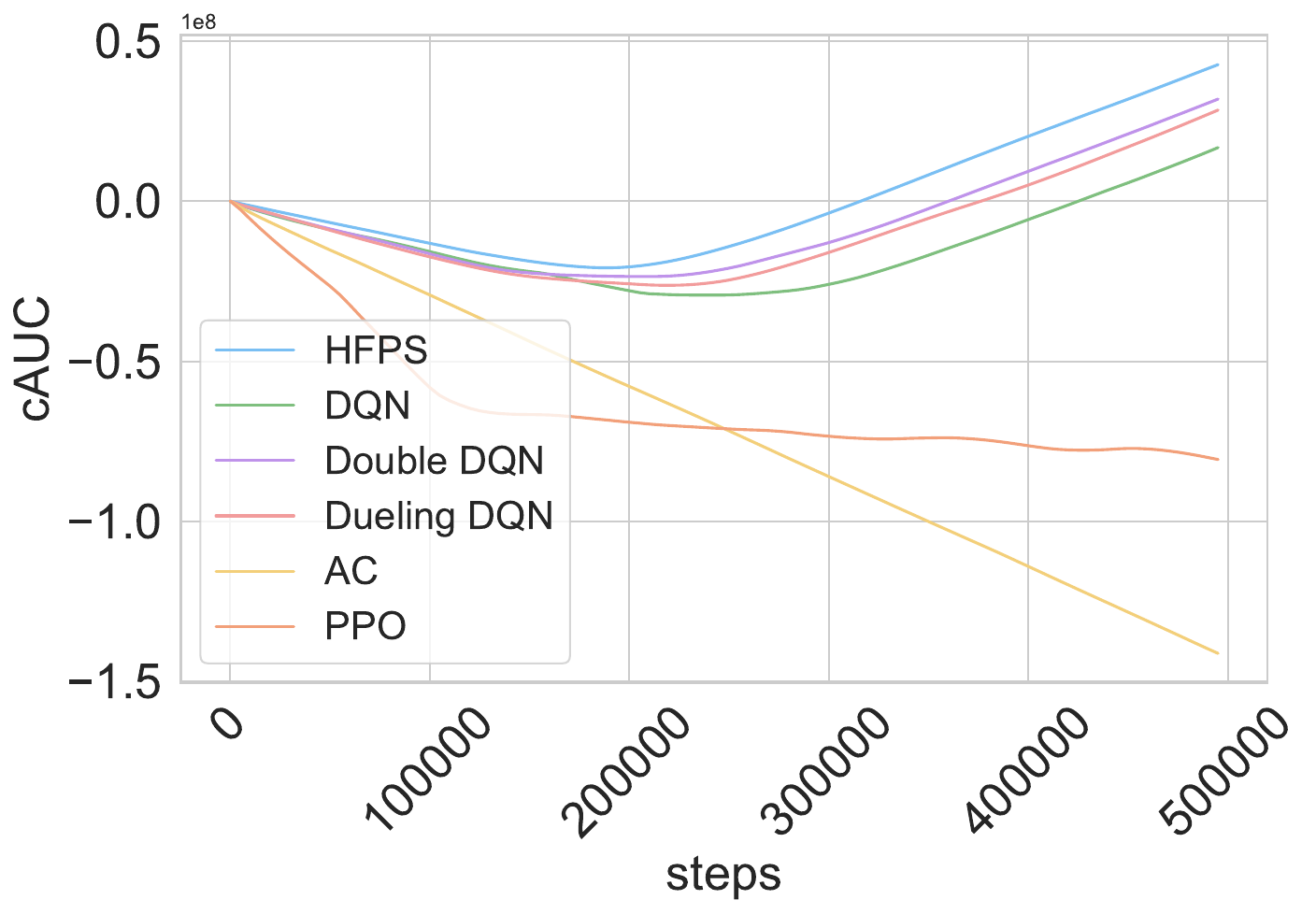} &
    \includegraphics[width=0.19\linewidth]{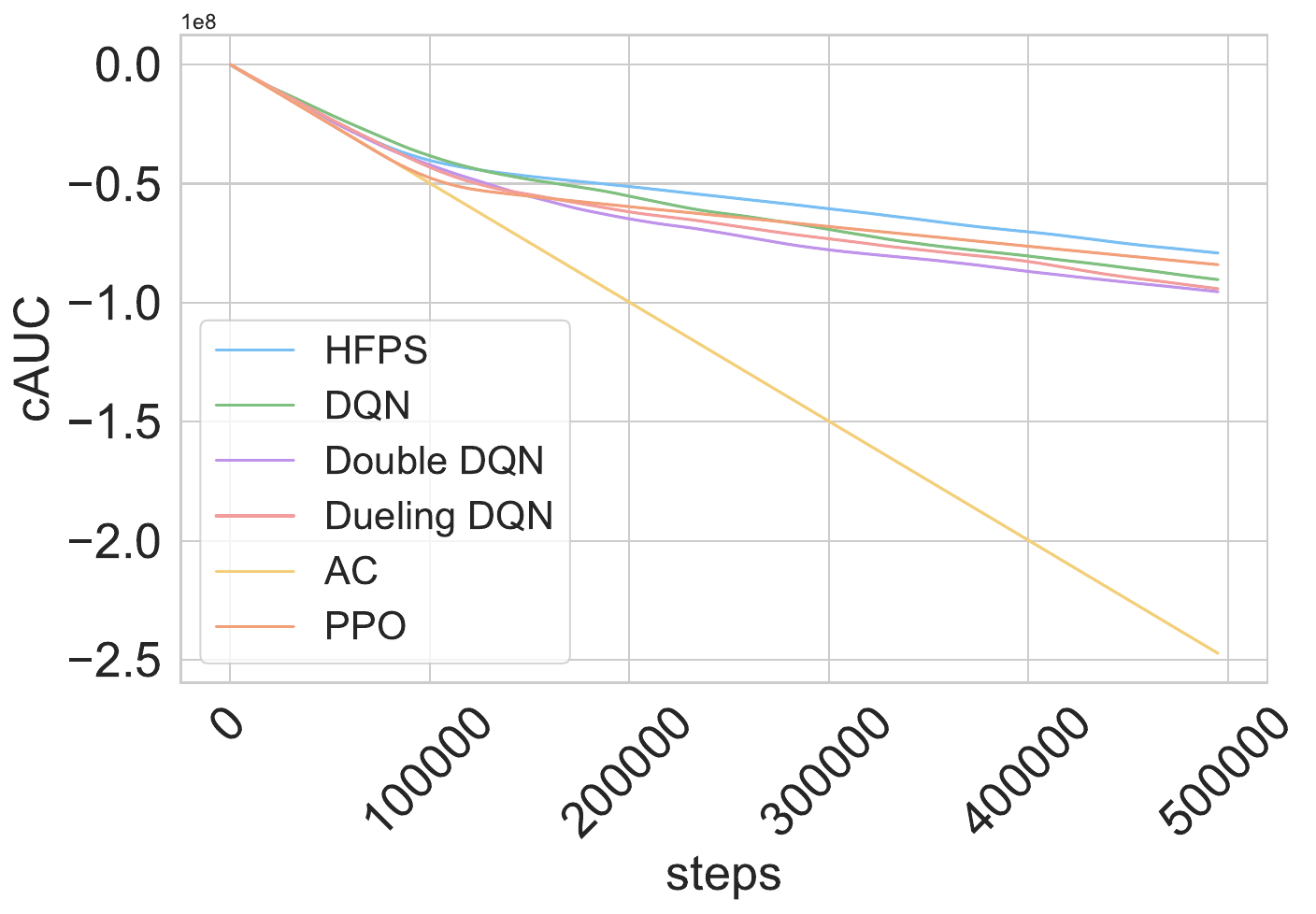} &
    \includegraphics[width=0.19\linewidth]{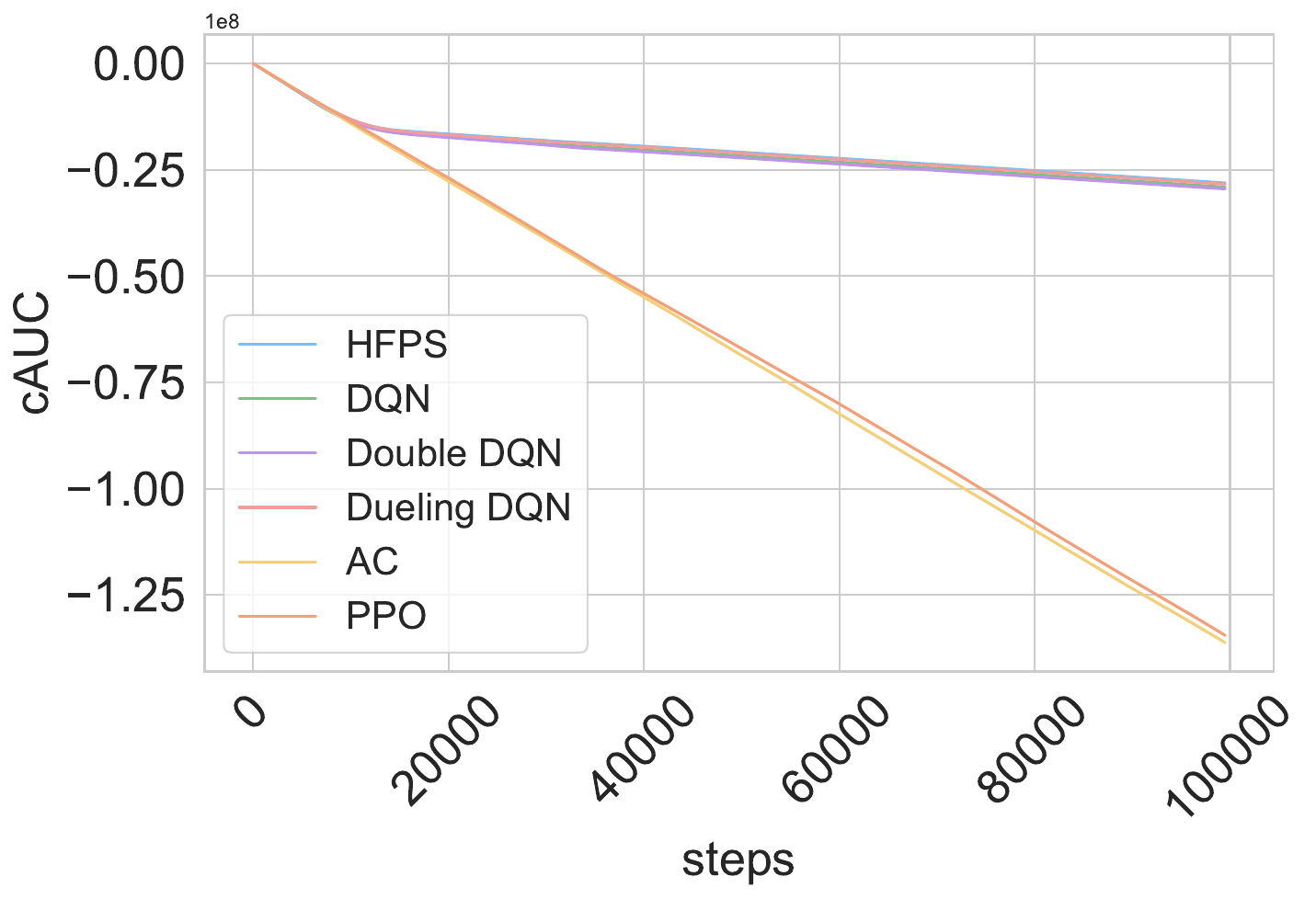} &
    \includegraphics[width=0.19\linewidth]{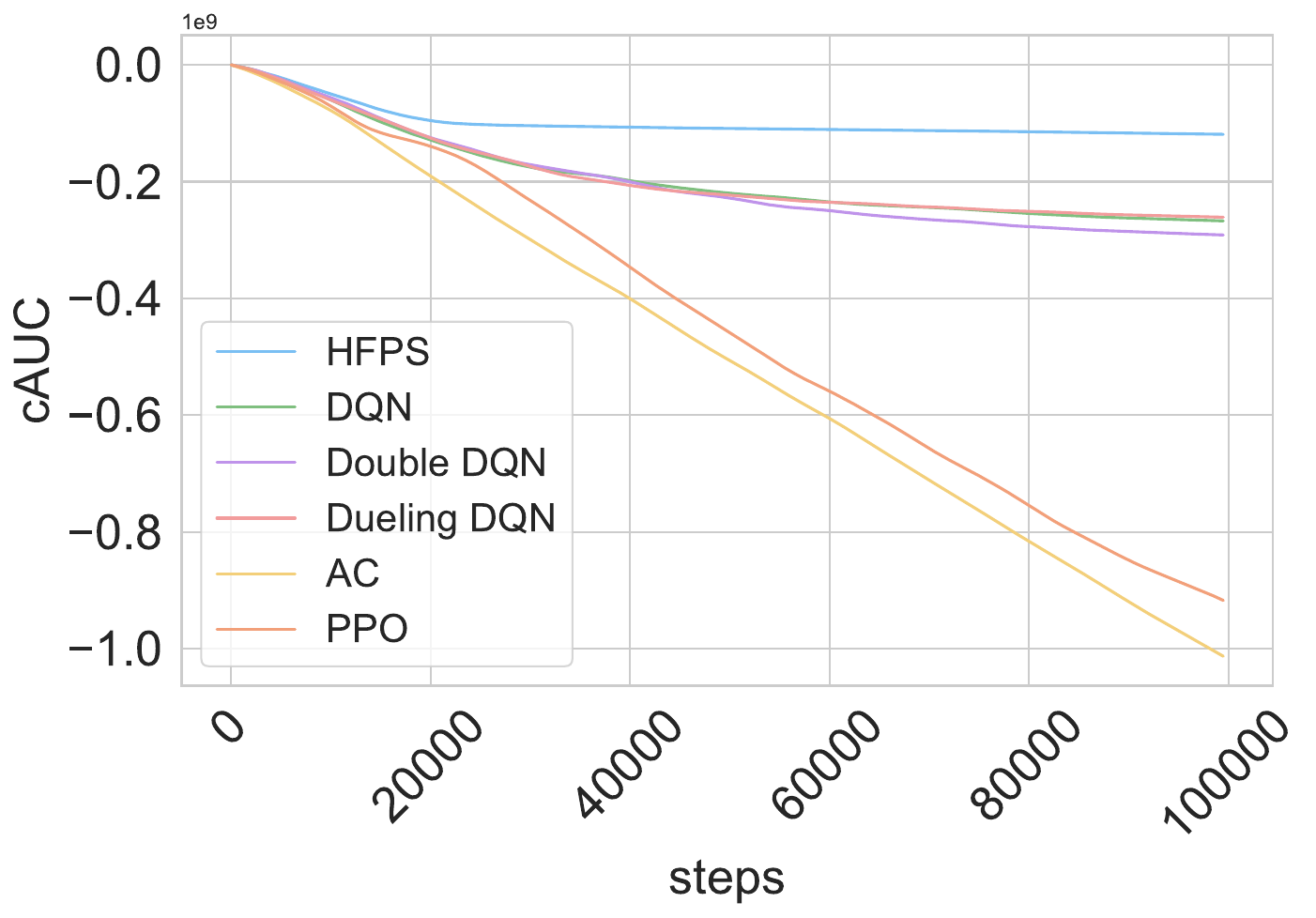} &
    \includegraphics[width=0.19\linewidth]{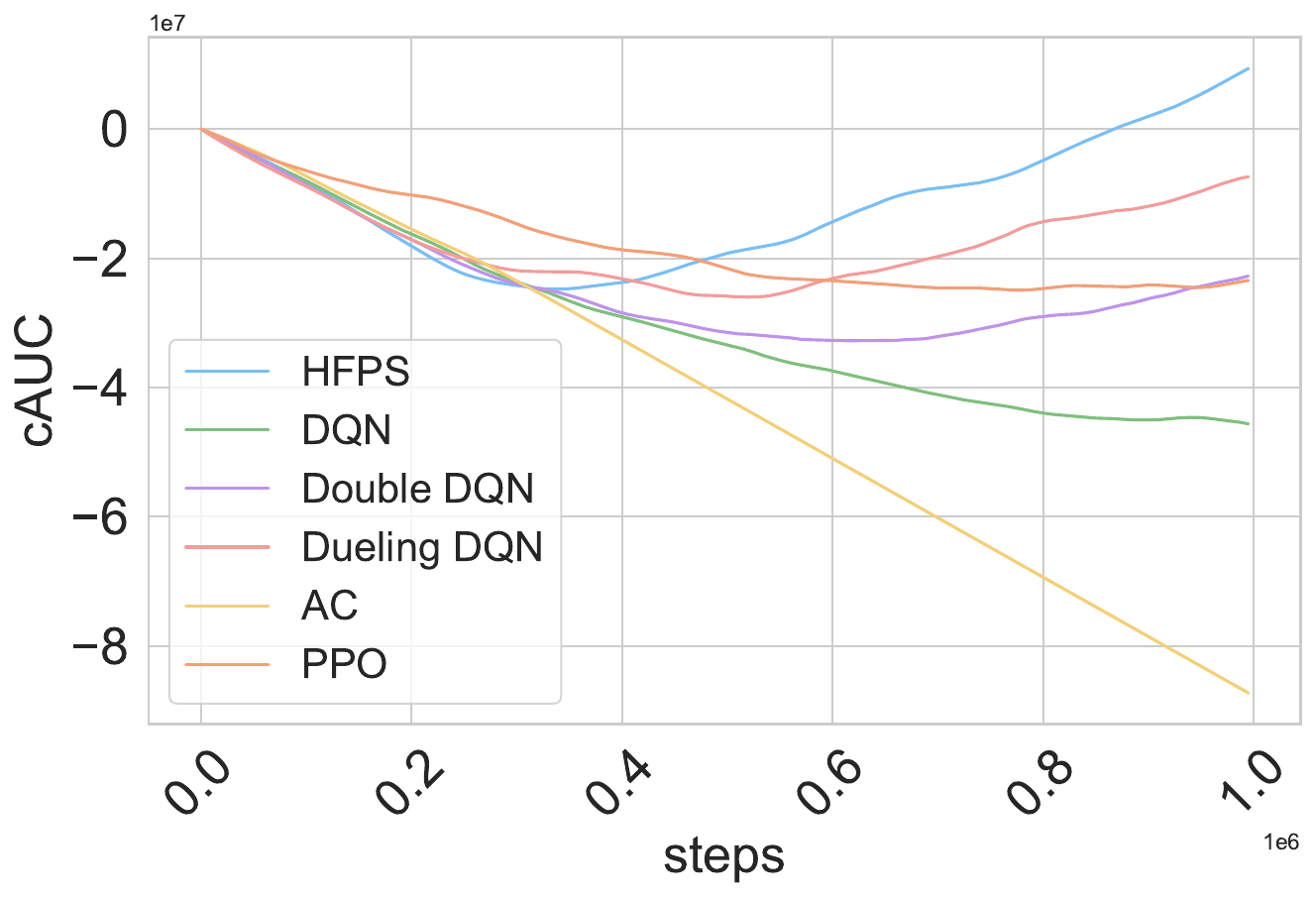} \\[2pt]
    \multicolumn{5}{c}{\textbf{Noisy (cAUC)}}\\[-2pt]
    \includegraphics[width=0.19\linewidth]{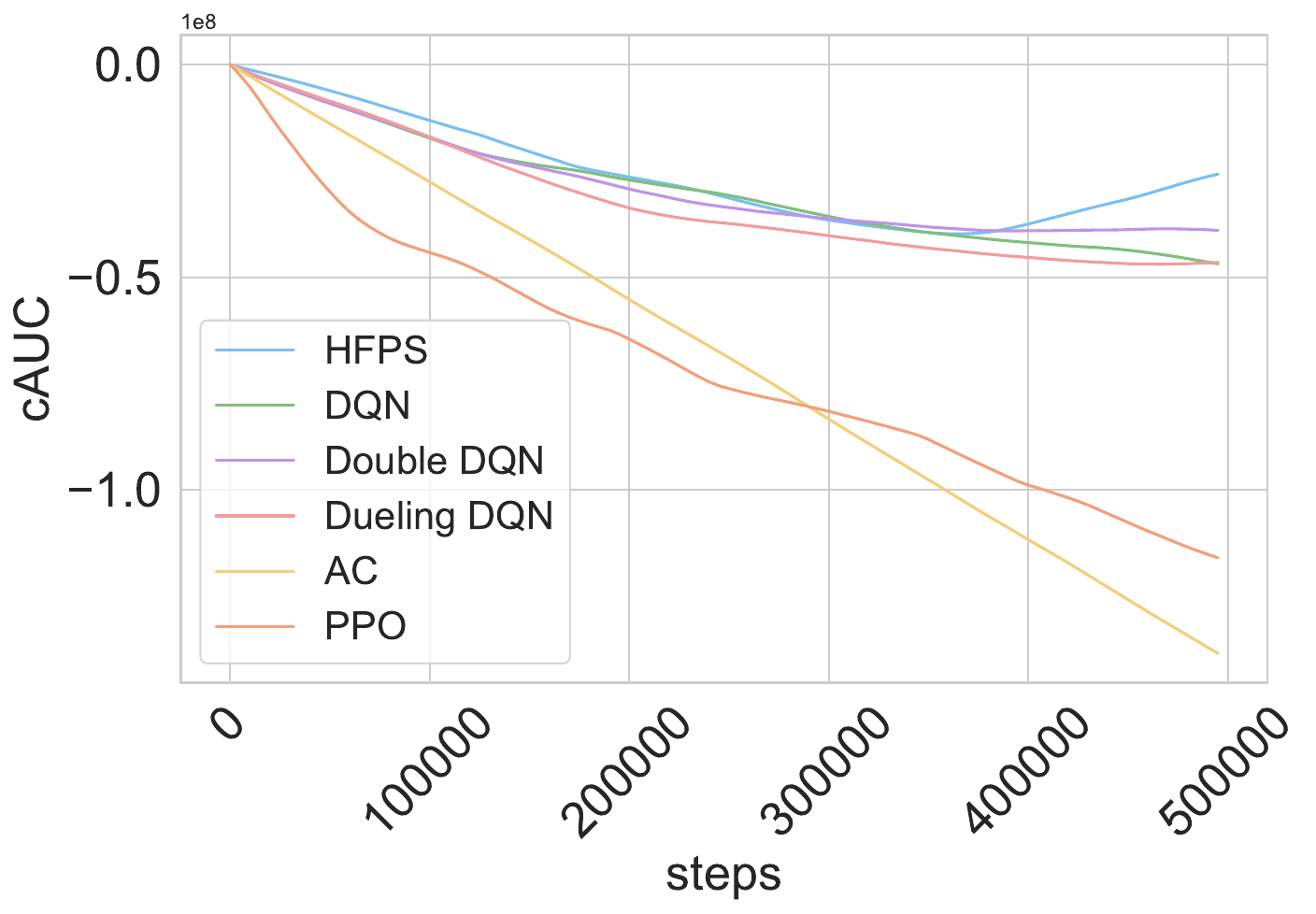} &
    \includegraphics[width=0.19\linewidth]{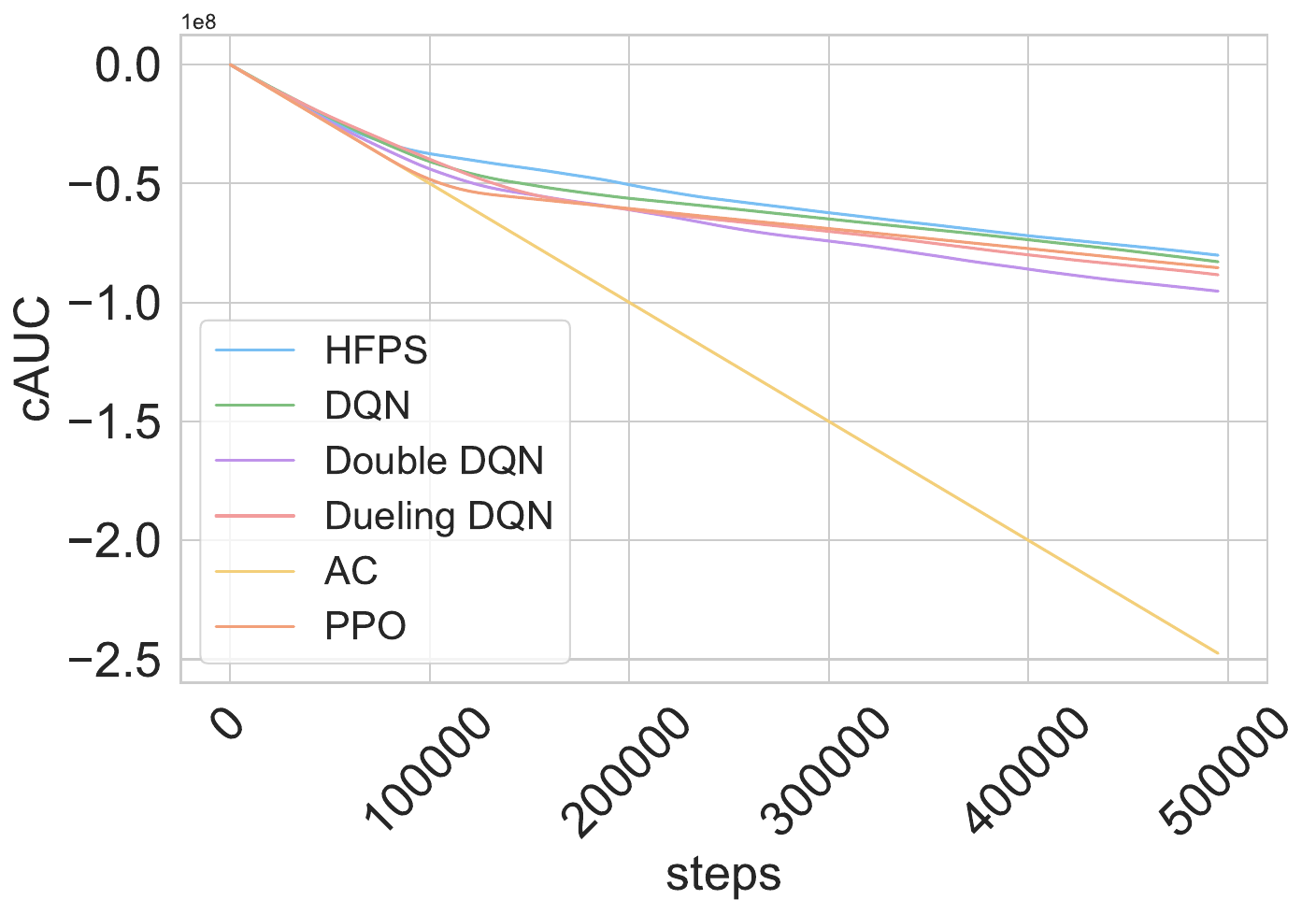} &
    \includegraphics[width=0.19\linewidth]{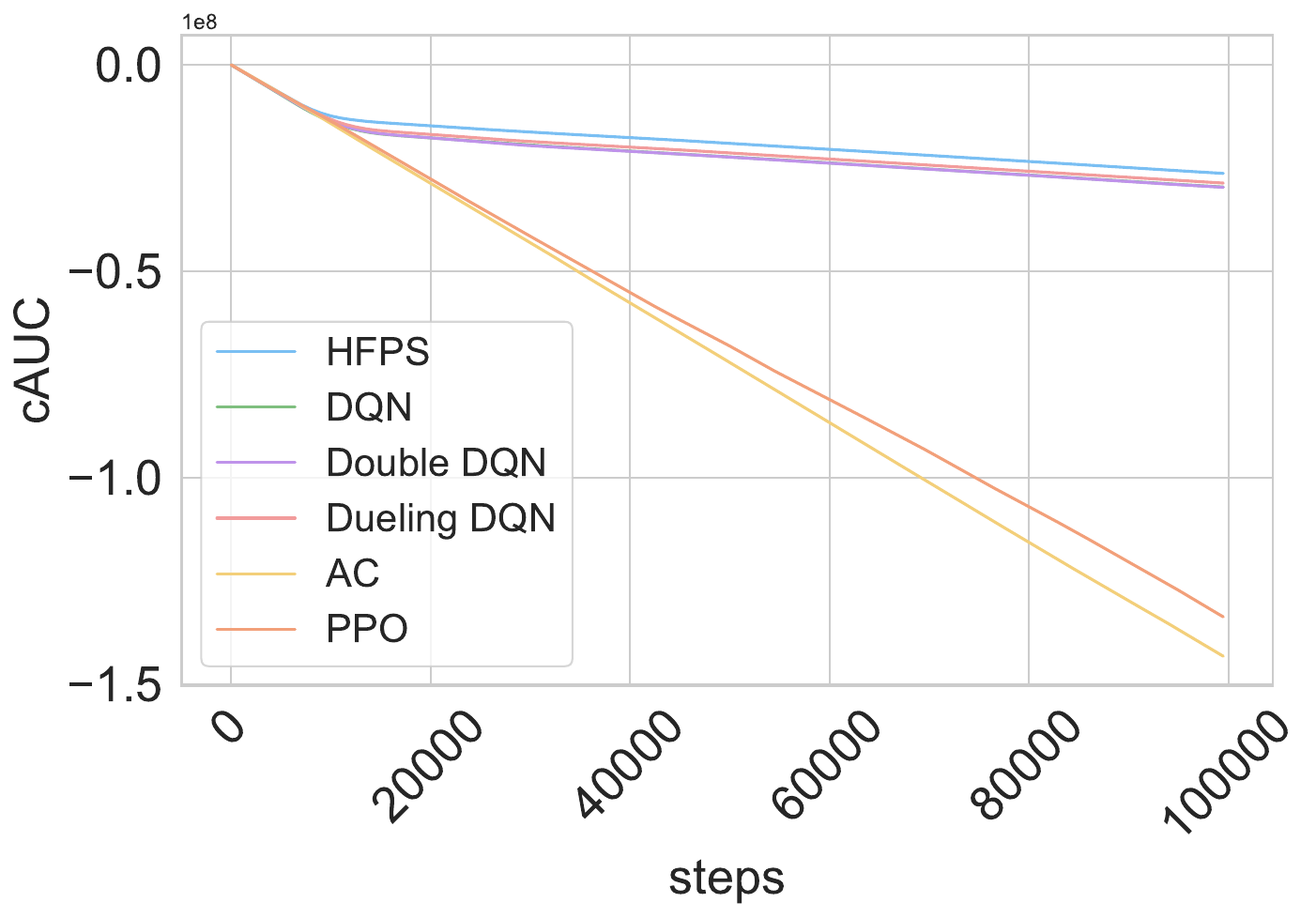} &
    \includegraphics[width=0.19\linewidth]{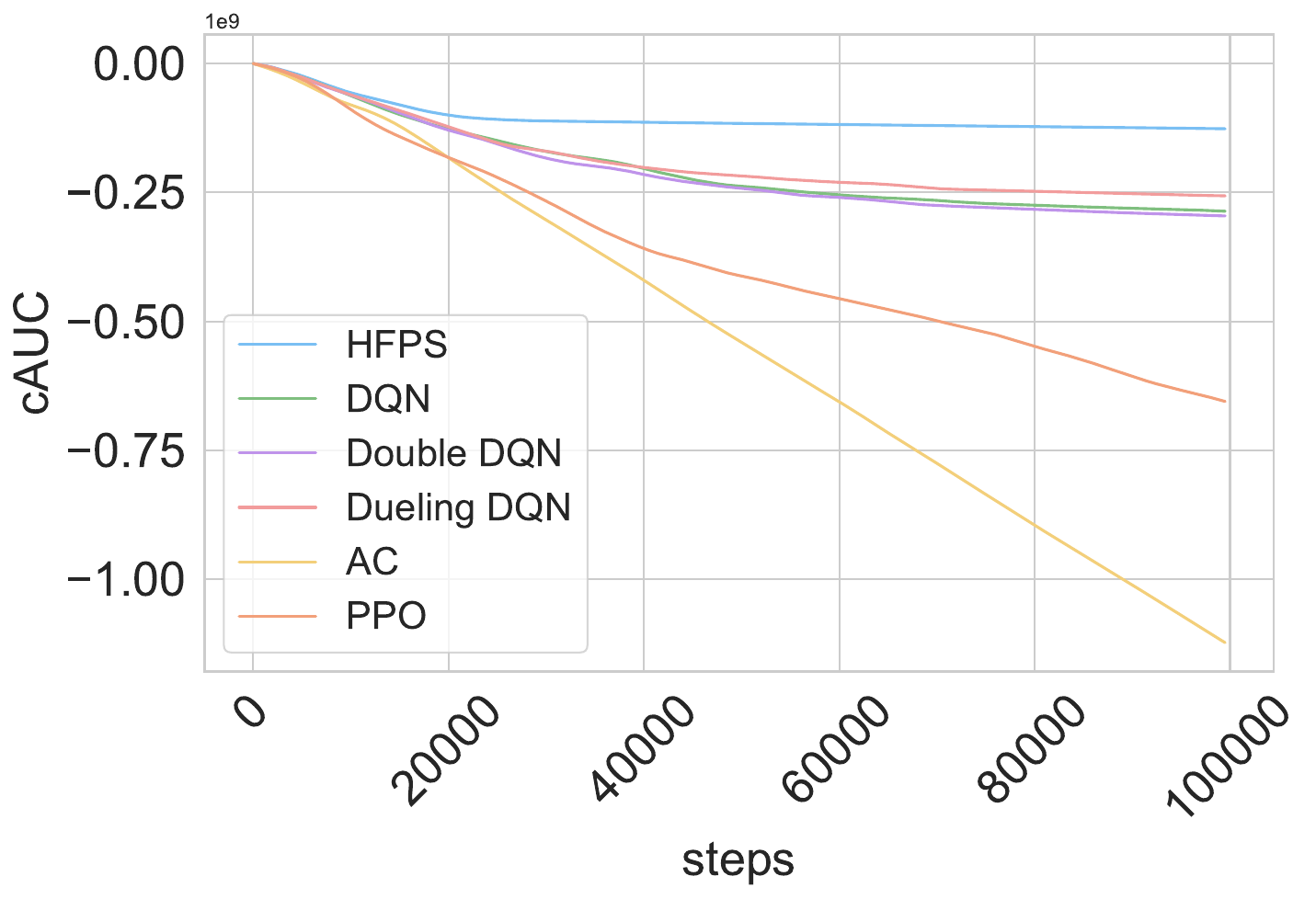} &
    \includegraphics[width=0.19\linewidth]{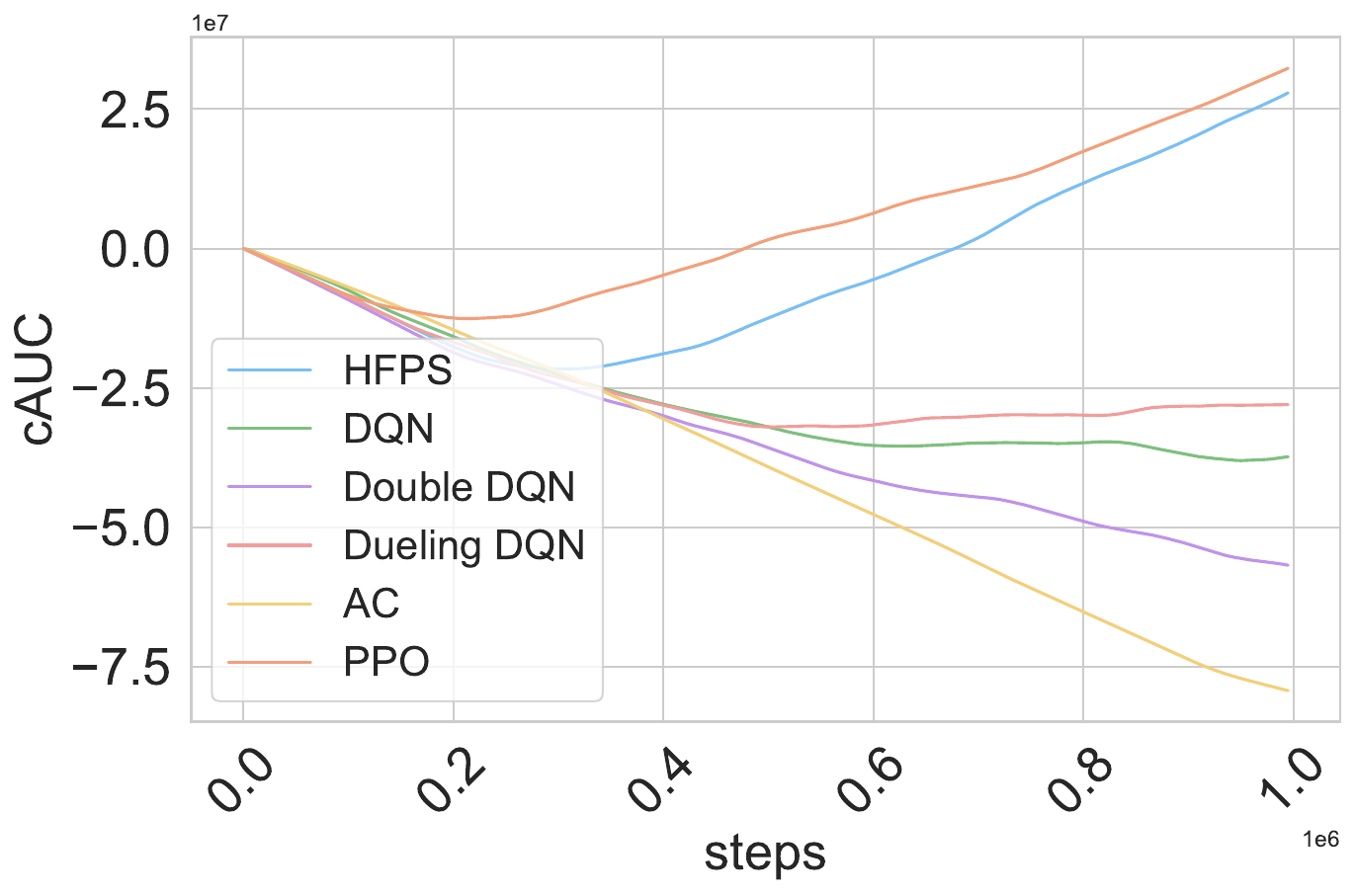} \\[2pt]
    \multicolumn{5}{c}{\textbf{Sticky (cAUC)}}\\[-2pt]
    \includegraphics[width=0.19\linewidth]{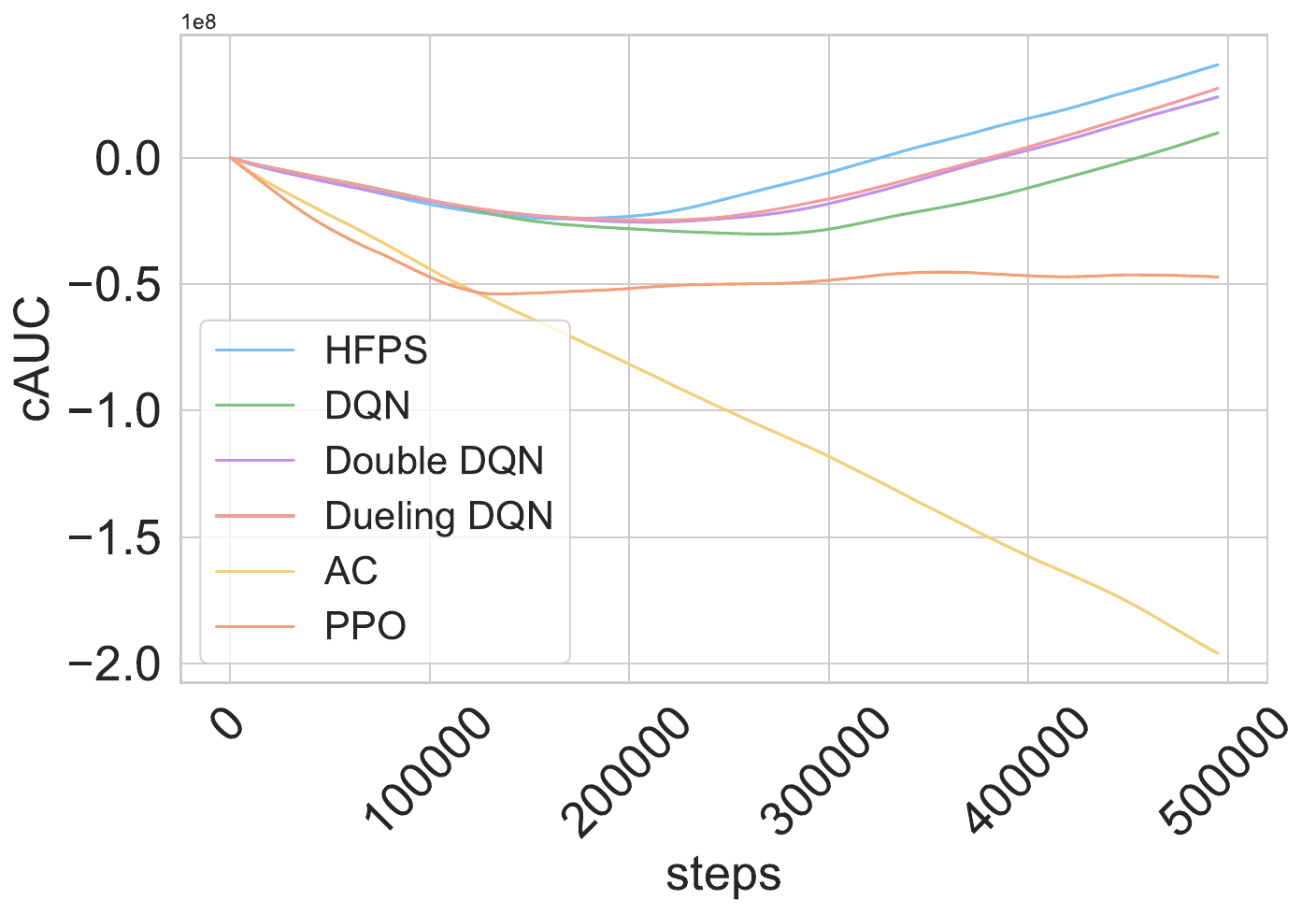} &
    \includegraphics[width=0.19\linewidth]{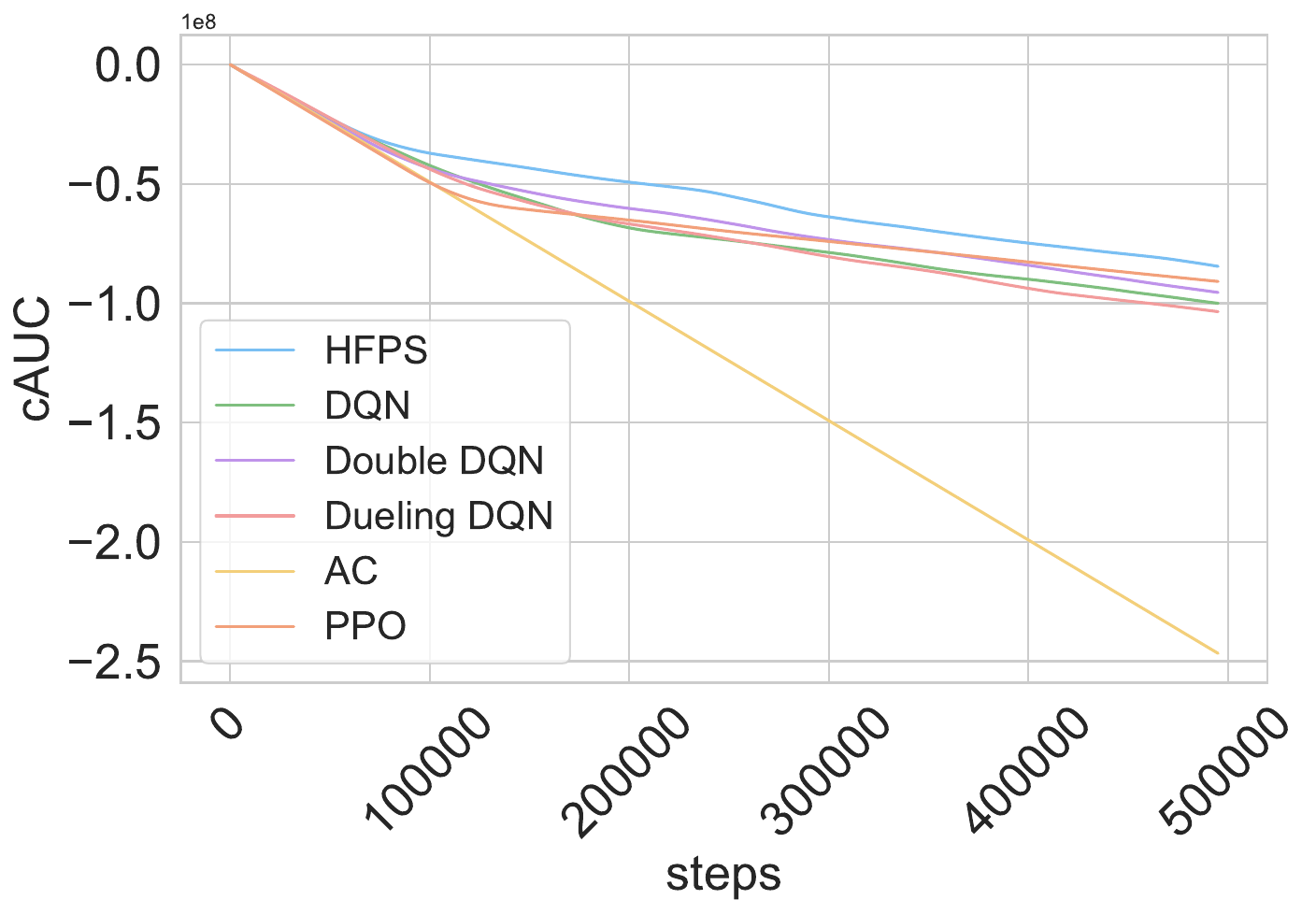} &
    \includegraphics[width=0.19\linewidth]{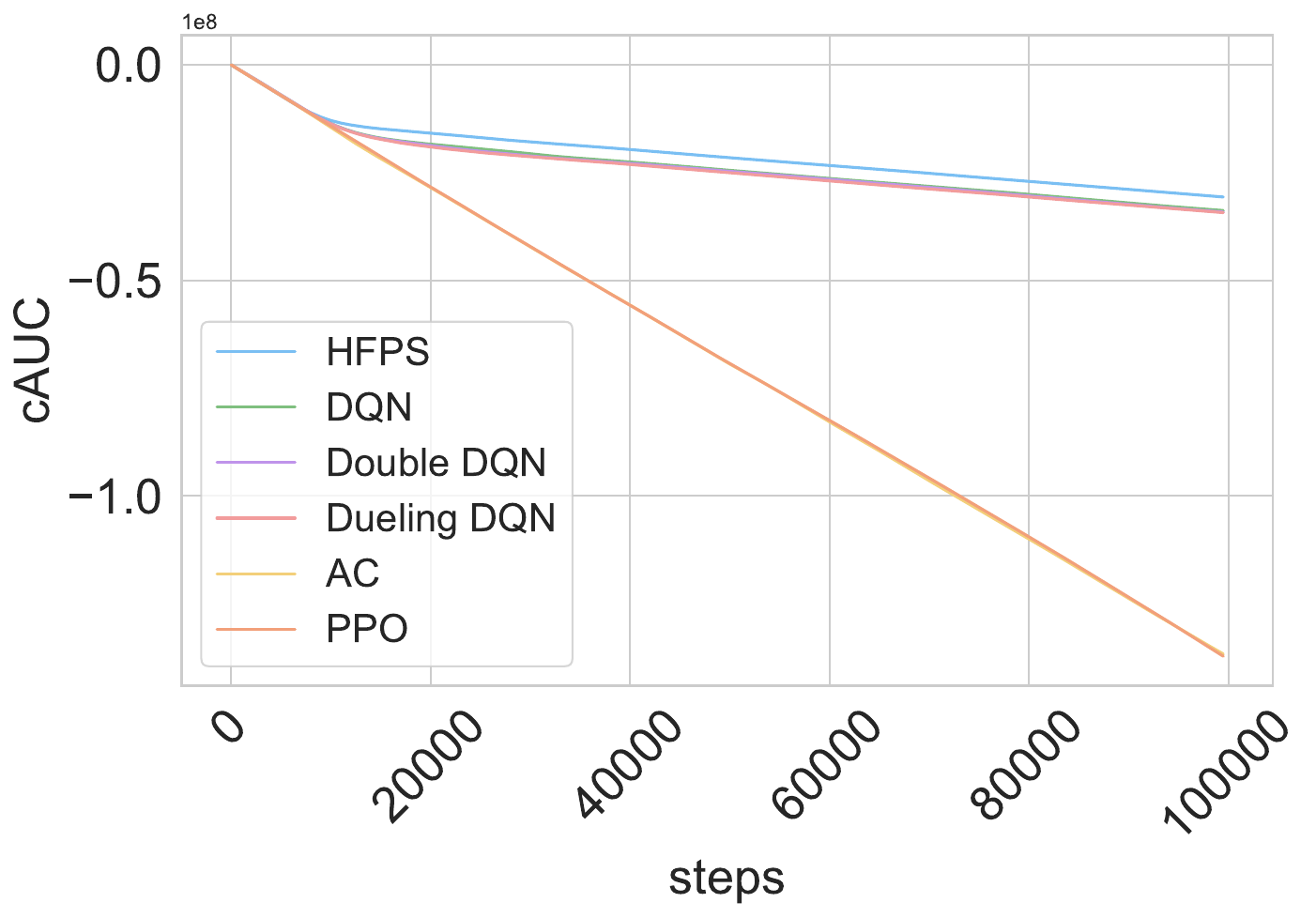} &
    \includegraphics[width=0.19\linewidth]{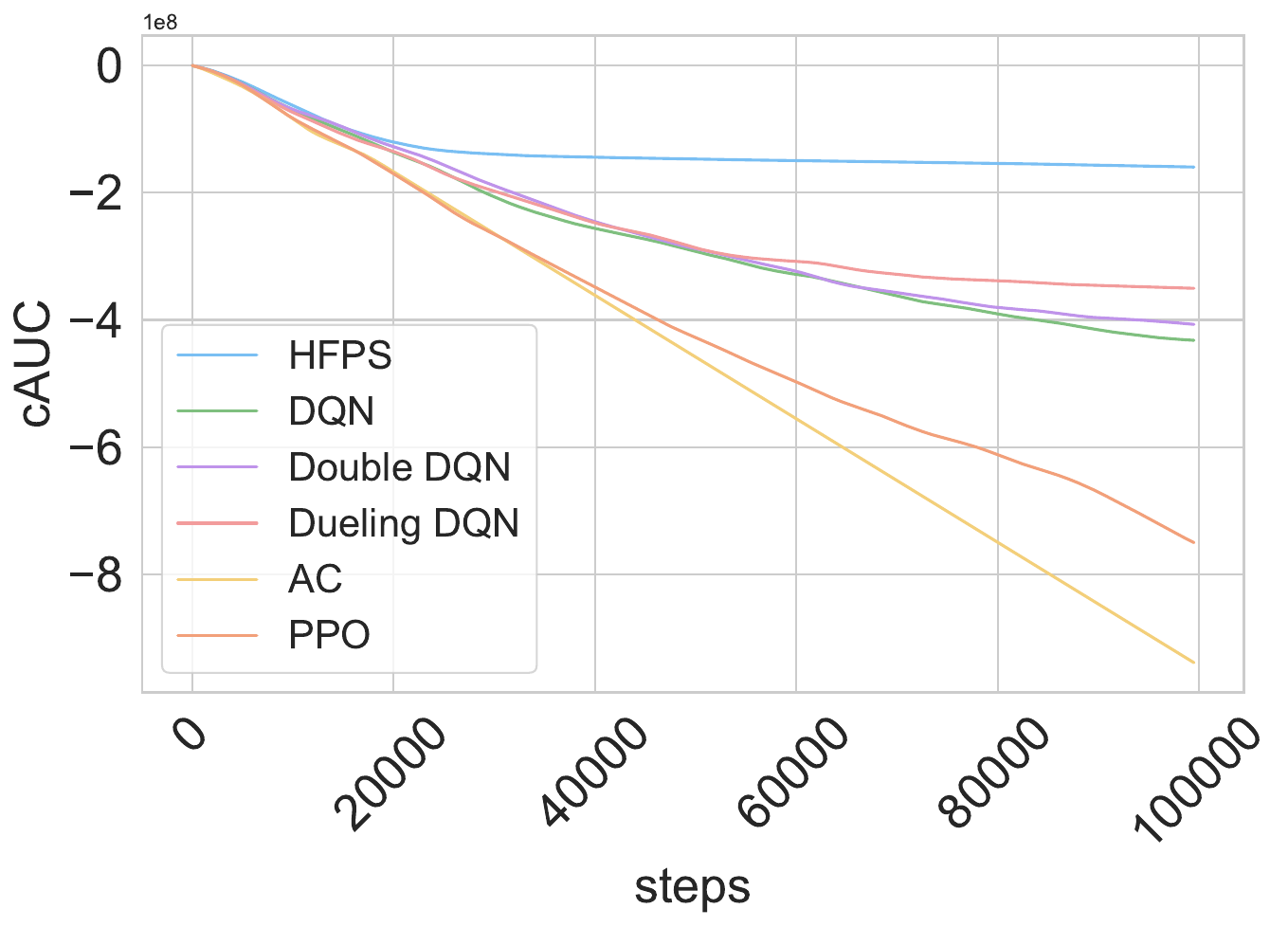} &
    \includegraphics[width=0.19\linewidth]{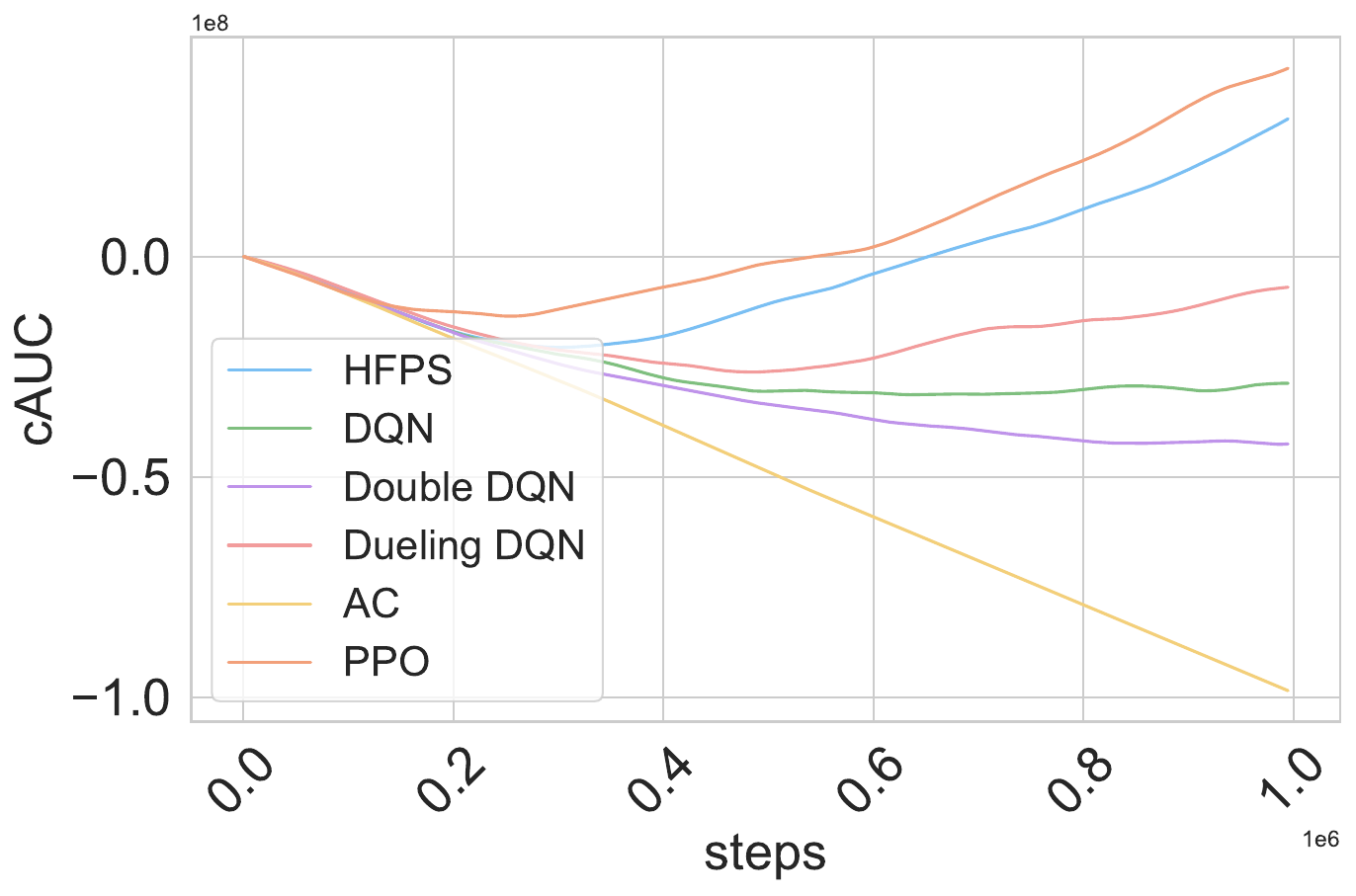} \\
  \end{tabular}
  \caption{\textbf{Cumulative AUC trajectories}
  Each panel plots the cumulative integral of the corresponding return curve in Figure~\ref{fig:control-returns-15}.
  Higher curves indicate better overall sample efficiency under the same training budget.}
  \label{fig:control-cauc-15}
\end{figure*}

\begin{figure*}[t!]
  \centering
  \setlength{\tabcolsep}{2pt}
  \renewcommand{\arraystretch}{0.8}
  \begin{tabular}{@{}ccccc@{}}
    \multicolumn{5}{c}{\textbf{Nonmarkov (Std)}}\\[-2pt]
    \includegraphics[width=0.19\linewidth]{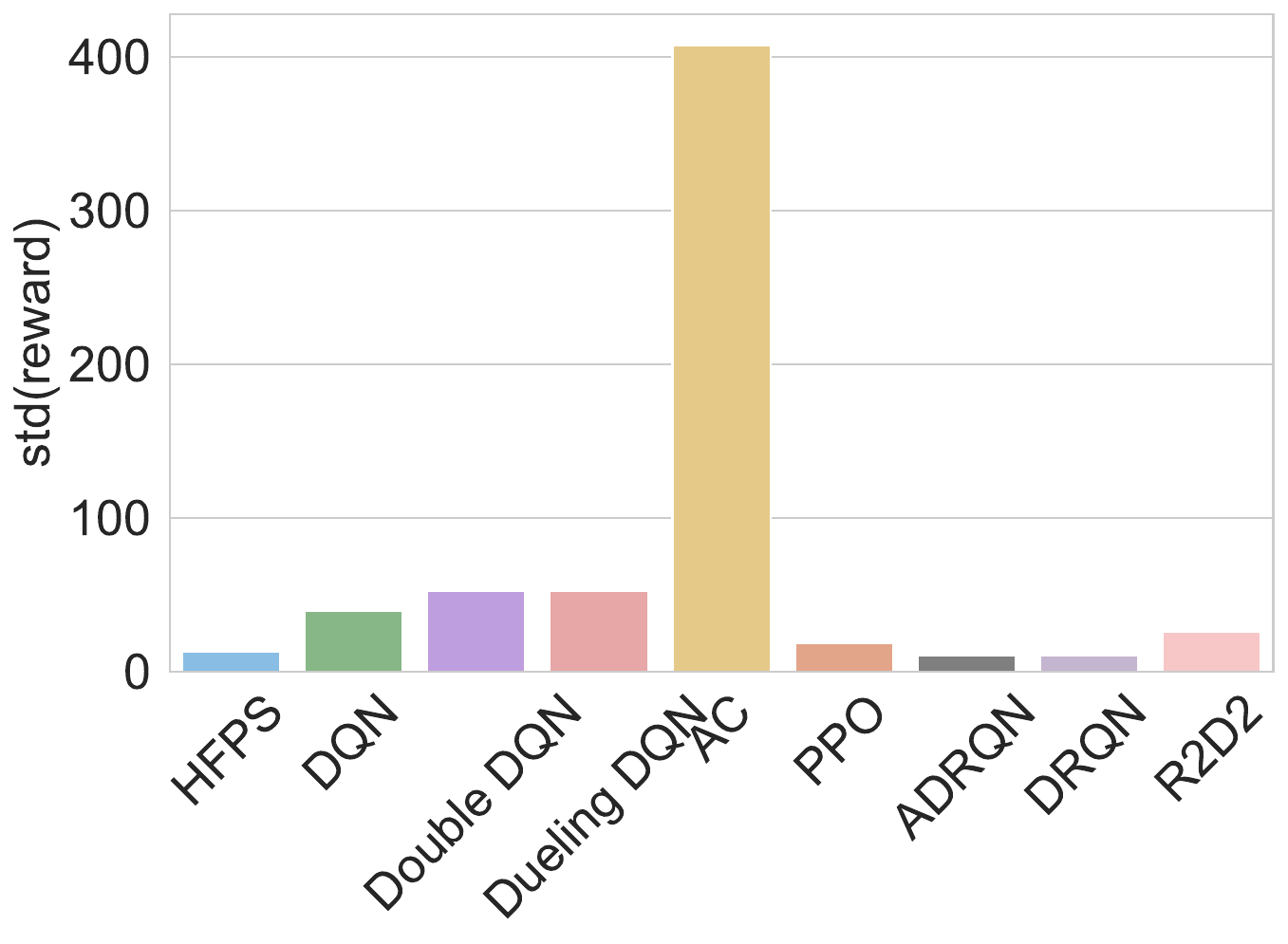} &
    \includegraphics[width=0.19\linewidth]{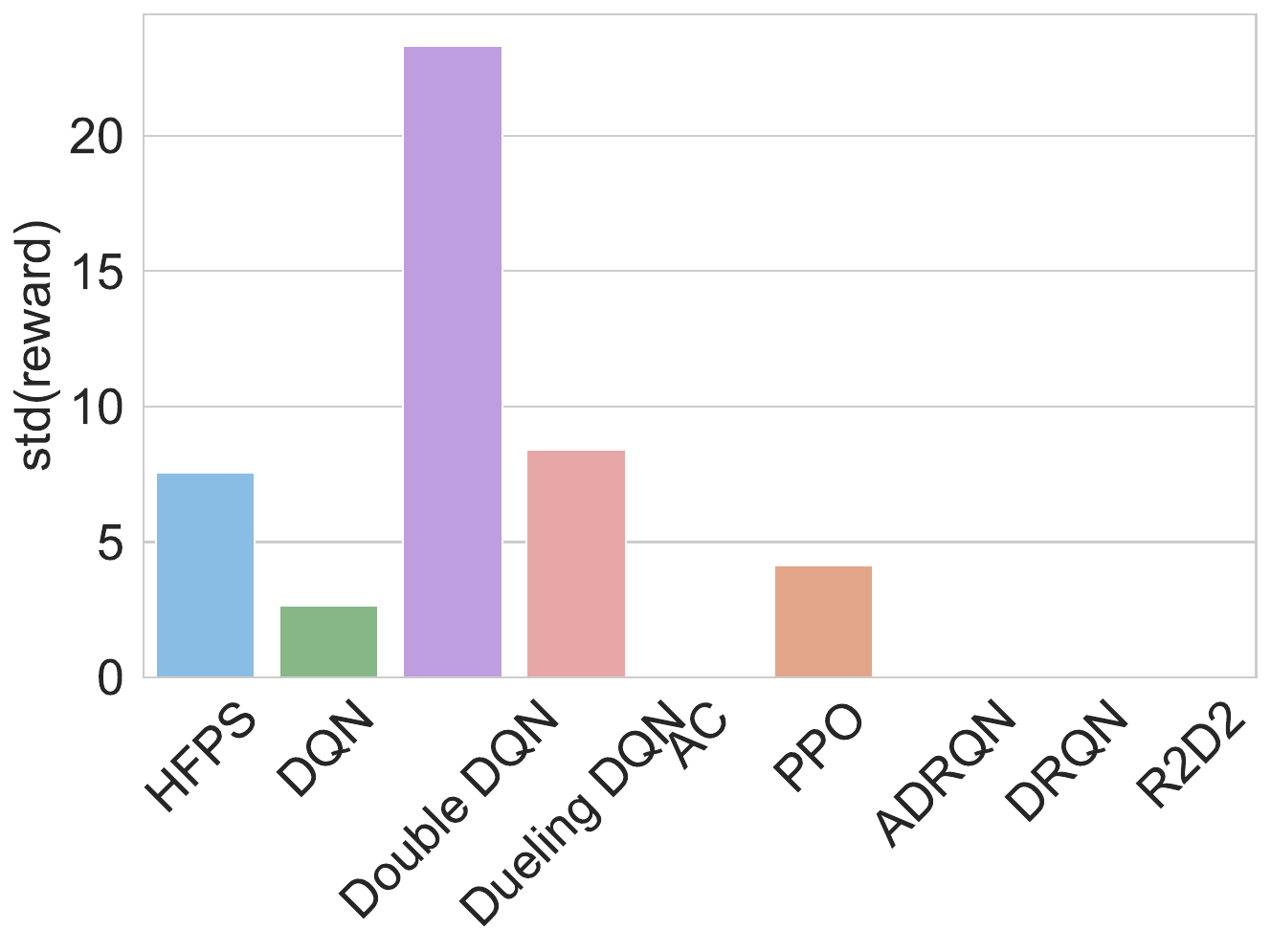} &
    \includegraphics[width=0.19\linewidth]{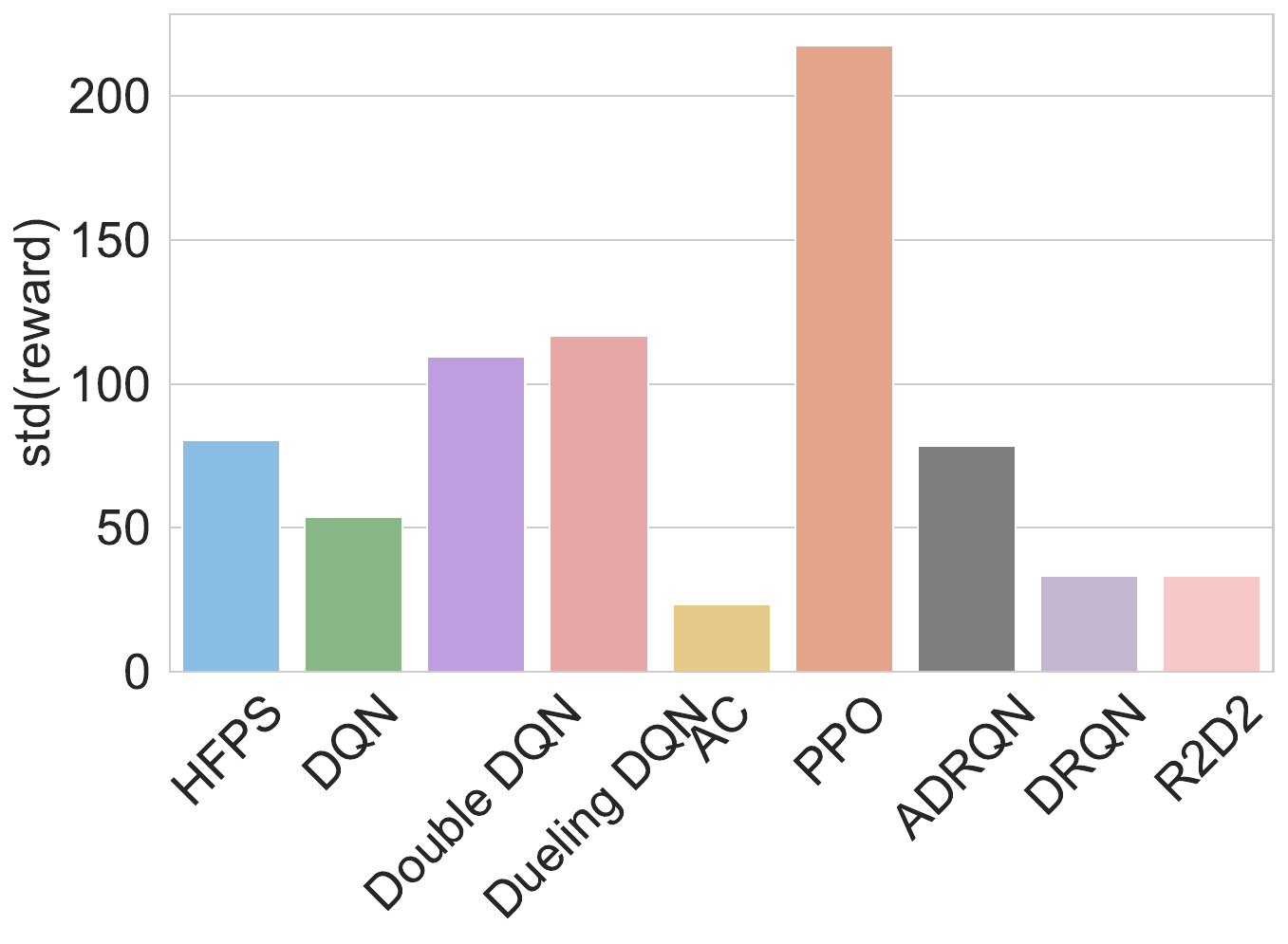} &
    \includegraphics[width=0.19\linewidth]{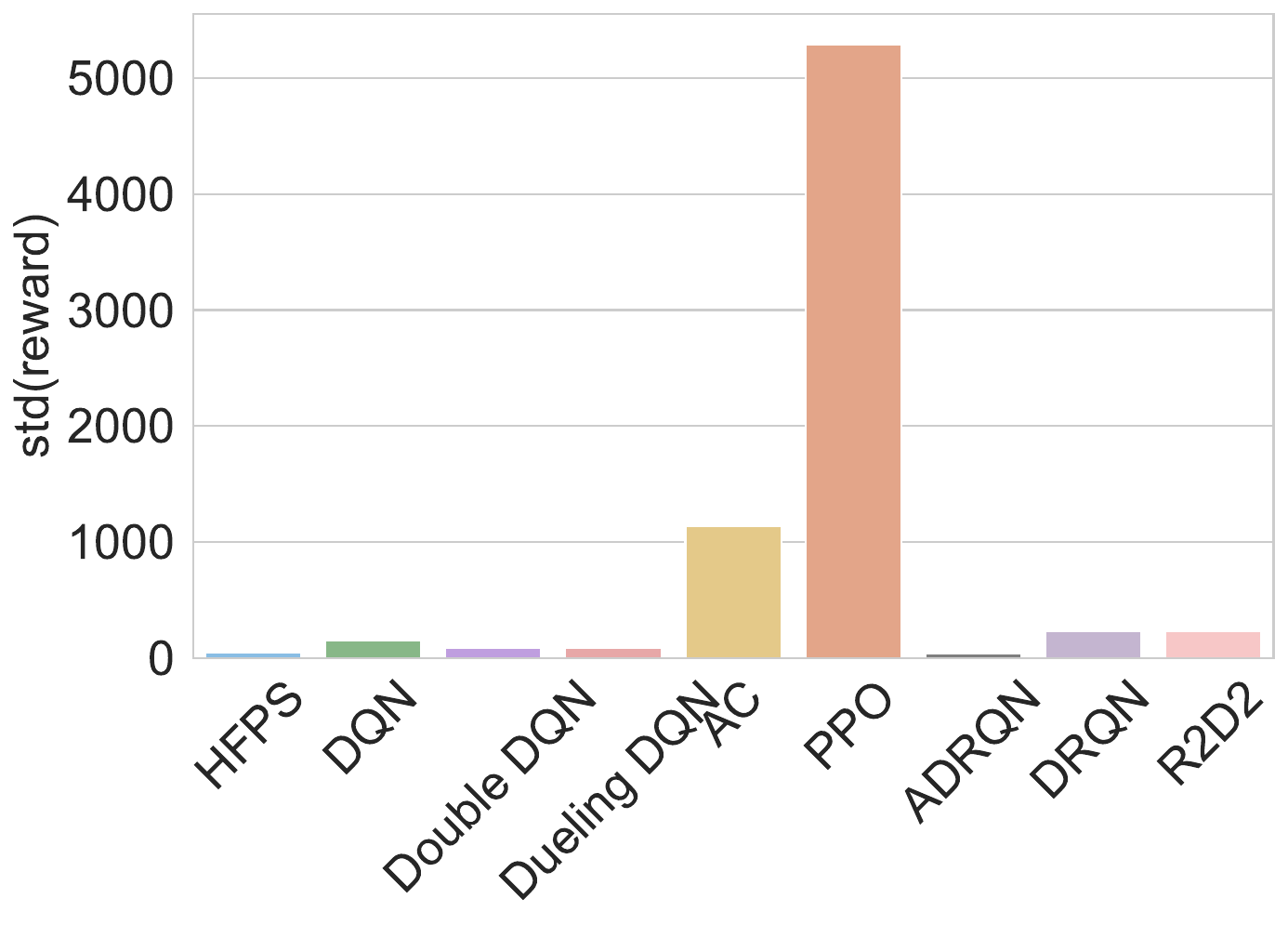} &
    \includegraphics[width=0.19\linewidth]{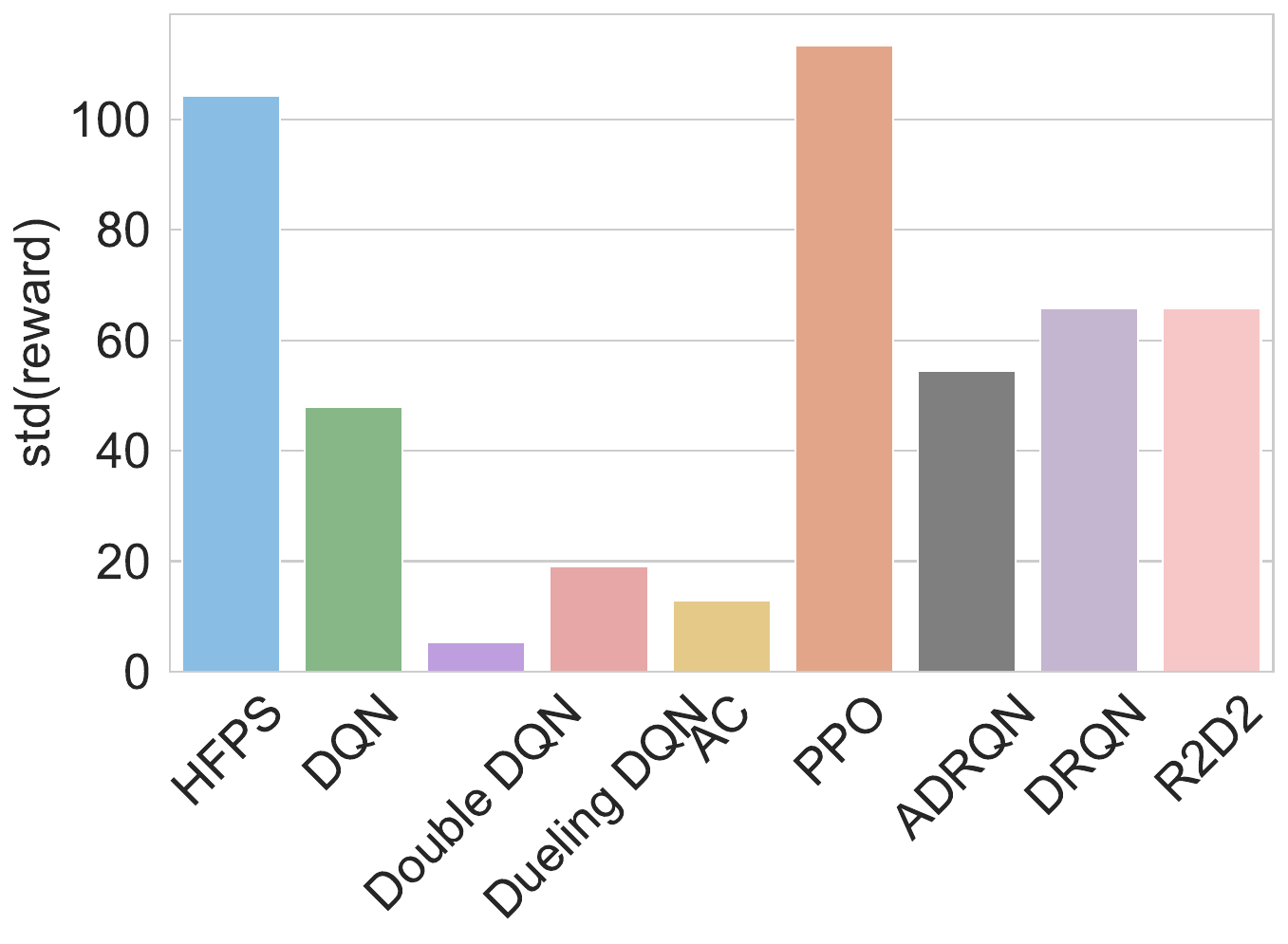} \\[2pt]
  \end{tabular}
  \caption{\textbf{Across-seed variability }
  Standard deviation of episode returns across seeds at each evaluation step, computed from the same runs as Figure~\ref{fig:control-returns}.}
  \label{fig:control-std}
\end{figure*}

\begin{figure*}[t!]
  \centering
  \setlength{\tabcolsep}{2pt}
  \renewcommand{\arraystretch}{0.8}
  \begin{tabular}{@{}ccccc@{}}
    \multicolumn{5}{c}{\textbf{Clean (Std)}}\\[-2pt]
    \includegraphics[width=0.19\linewidth]{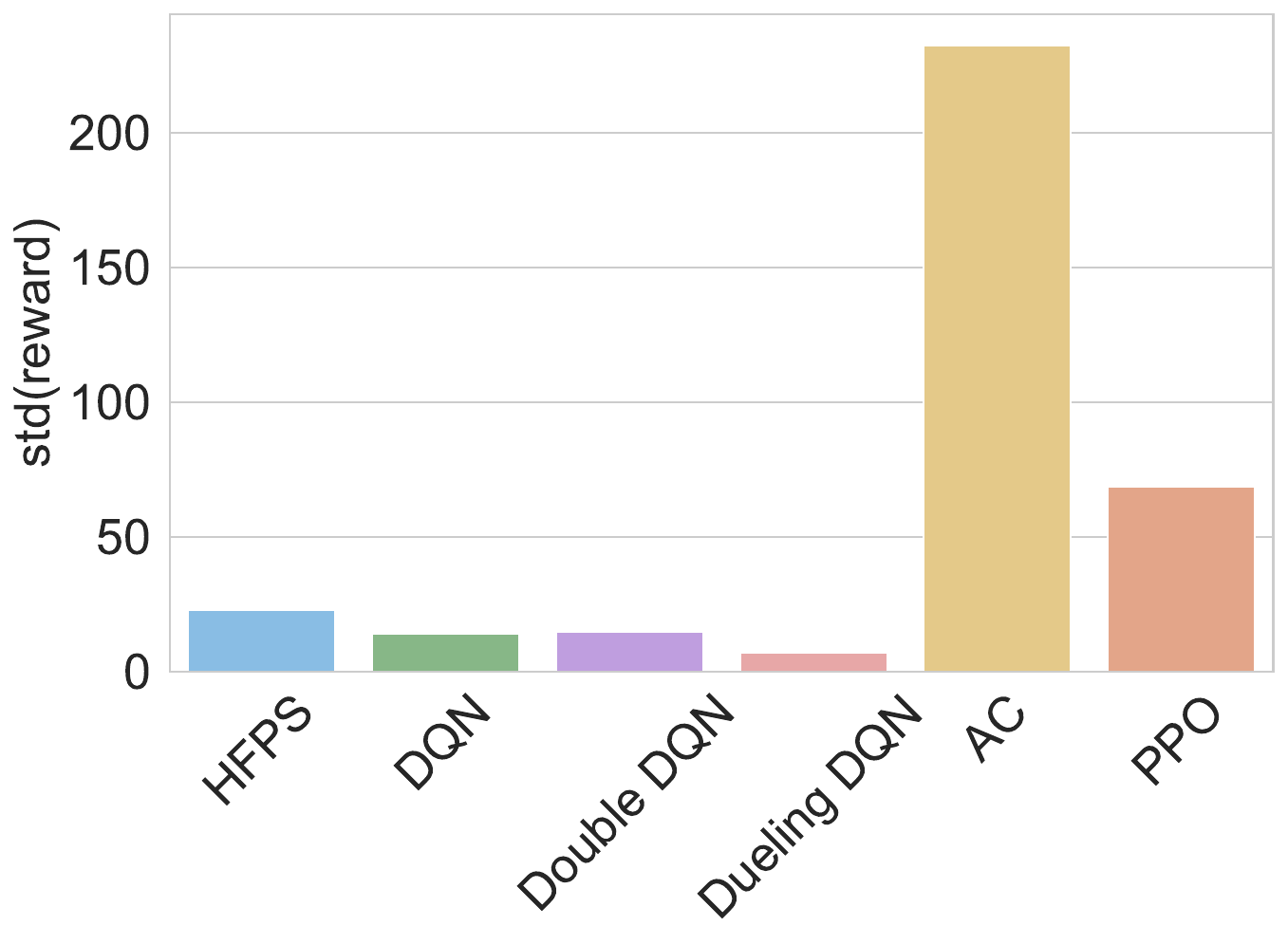} &
    \includegraphics[width=0.19\linewidth]{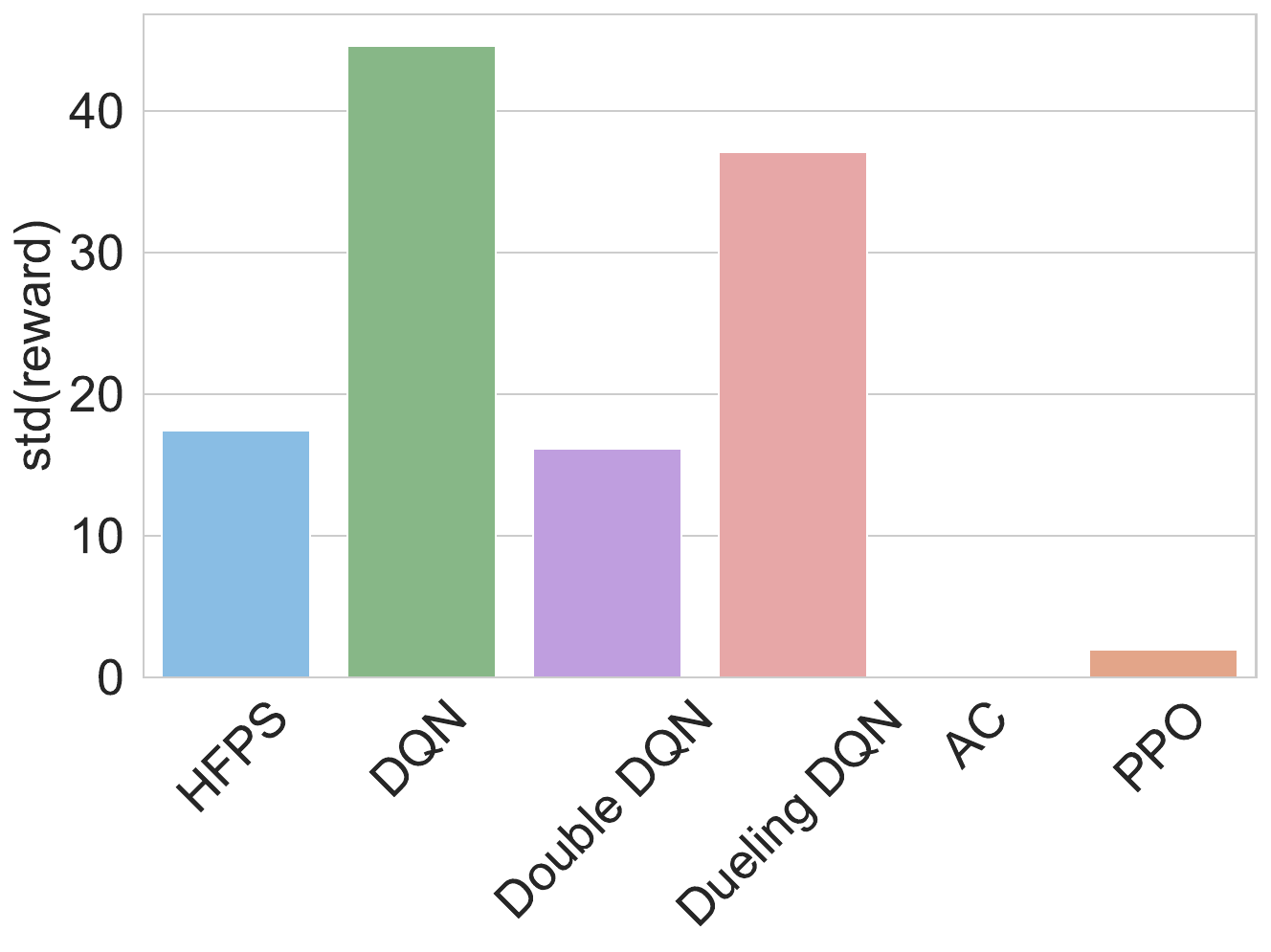} &
    \includegraphics[width=0.19\linewidth]{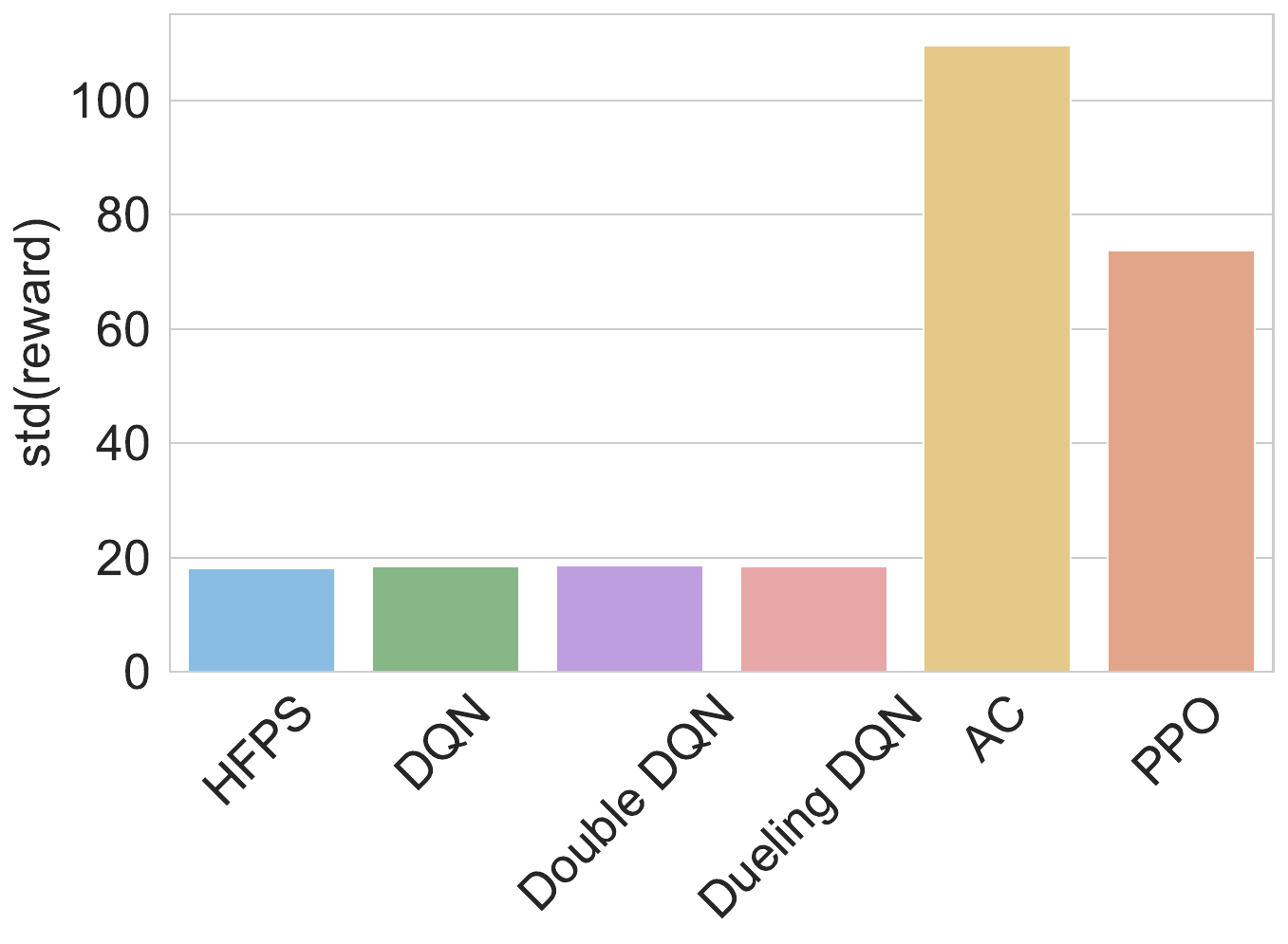} &
    \includegraphics[width=0.19\linewidth]{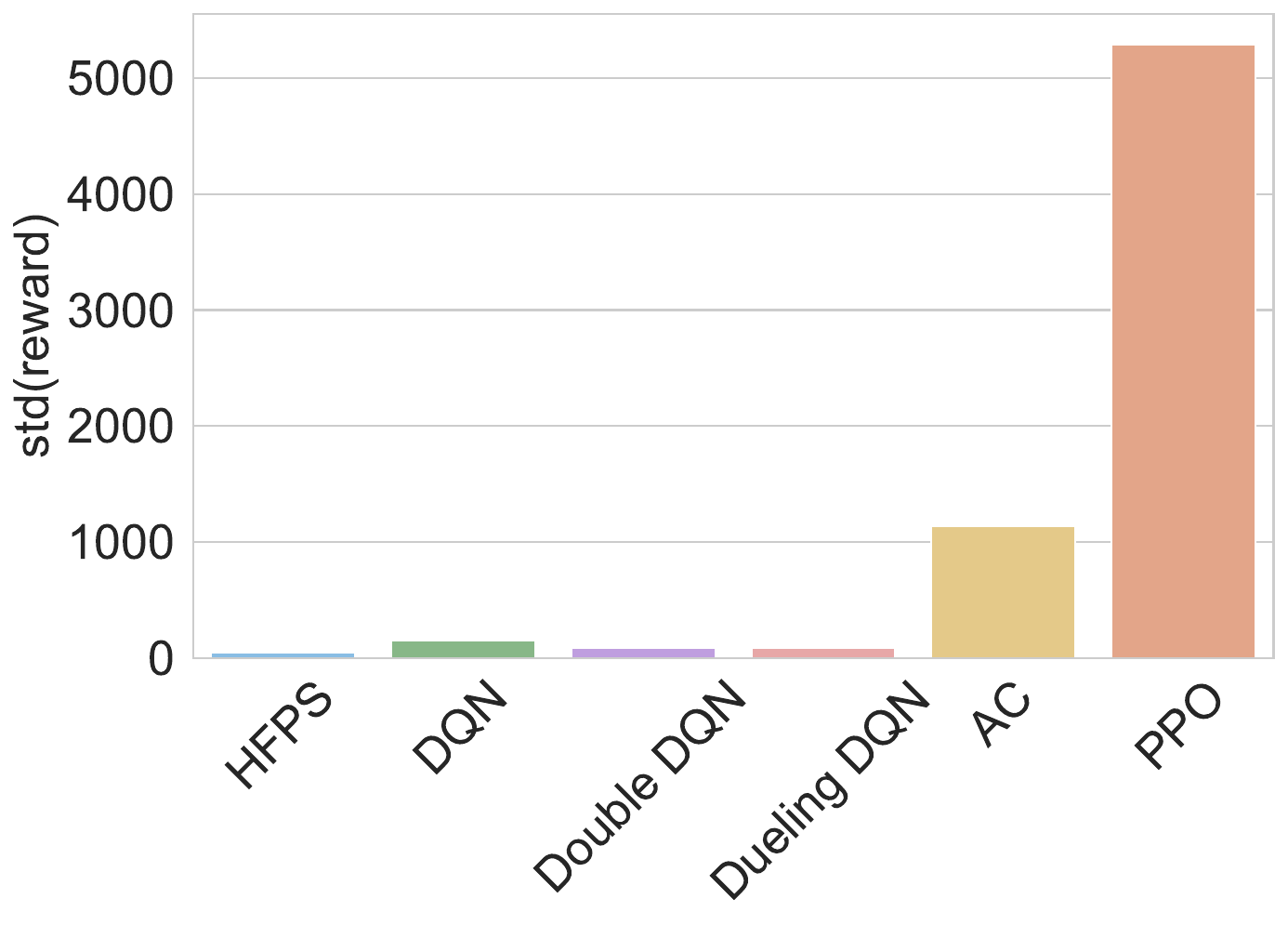} &
    \includegraphics[width=0.19\linewidth]{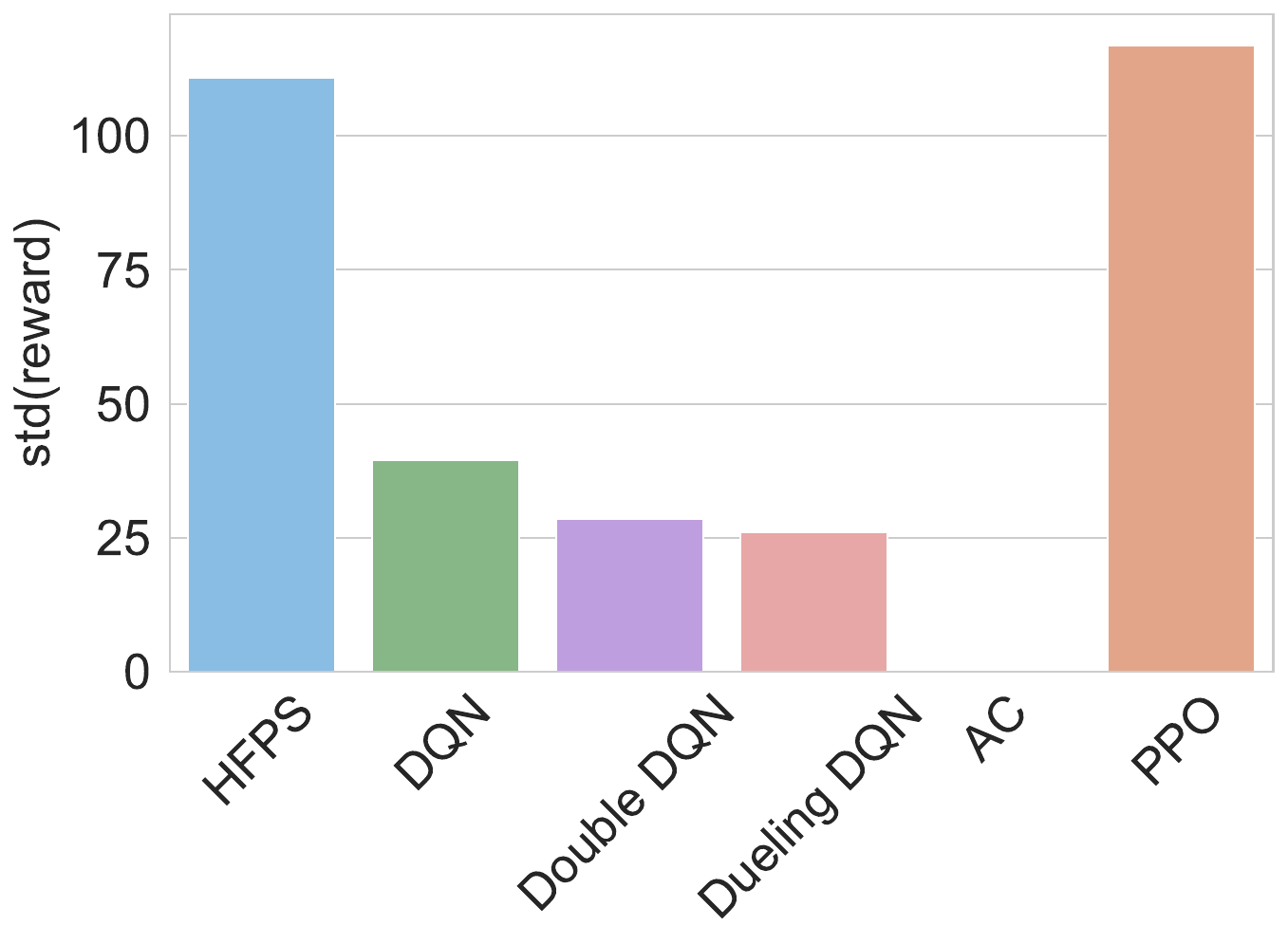} \\[2pt]
    \multicolumn{5}{c}{\textbf{Noisy (Std)}}\\[-2pt]
    \includegraphics[width=0.19\linewidth]{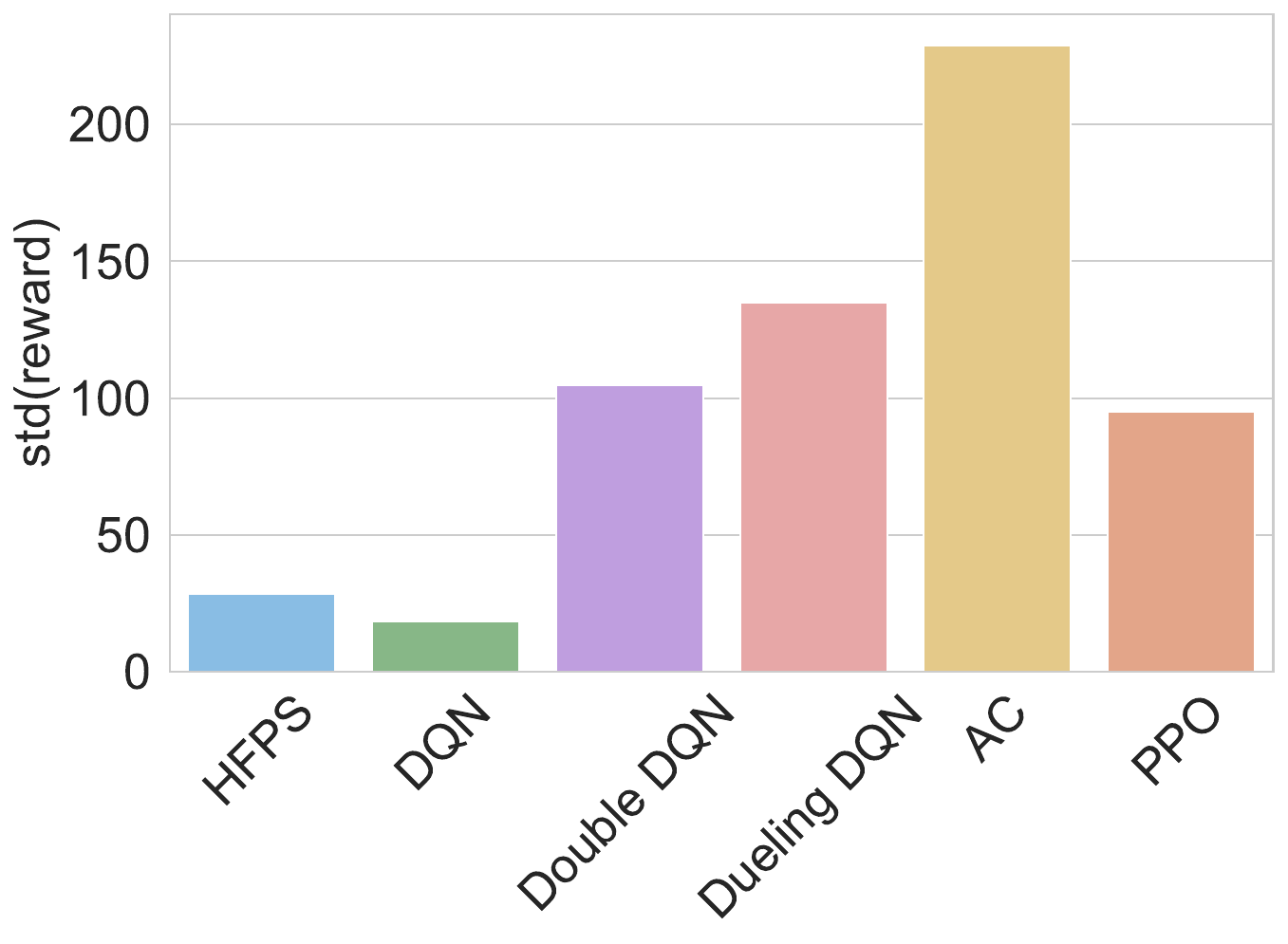} &
    \includegraphics[width=0.19\linewidth]{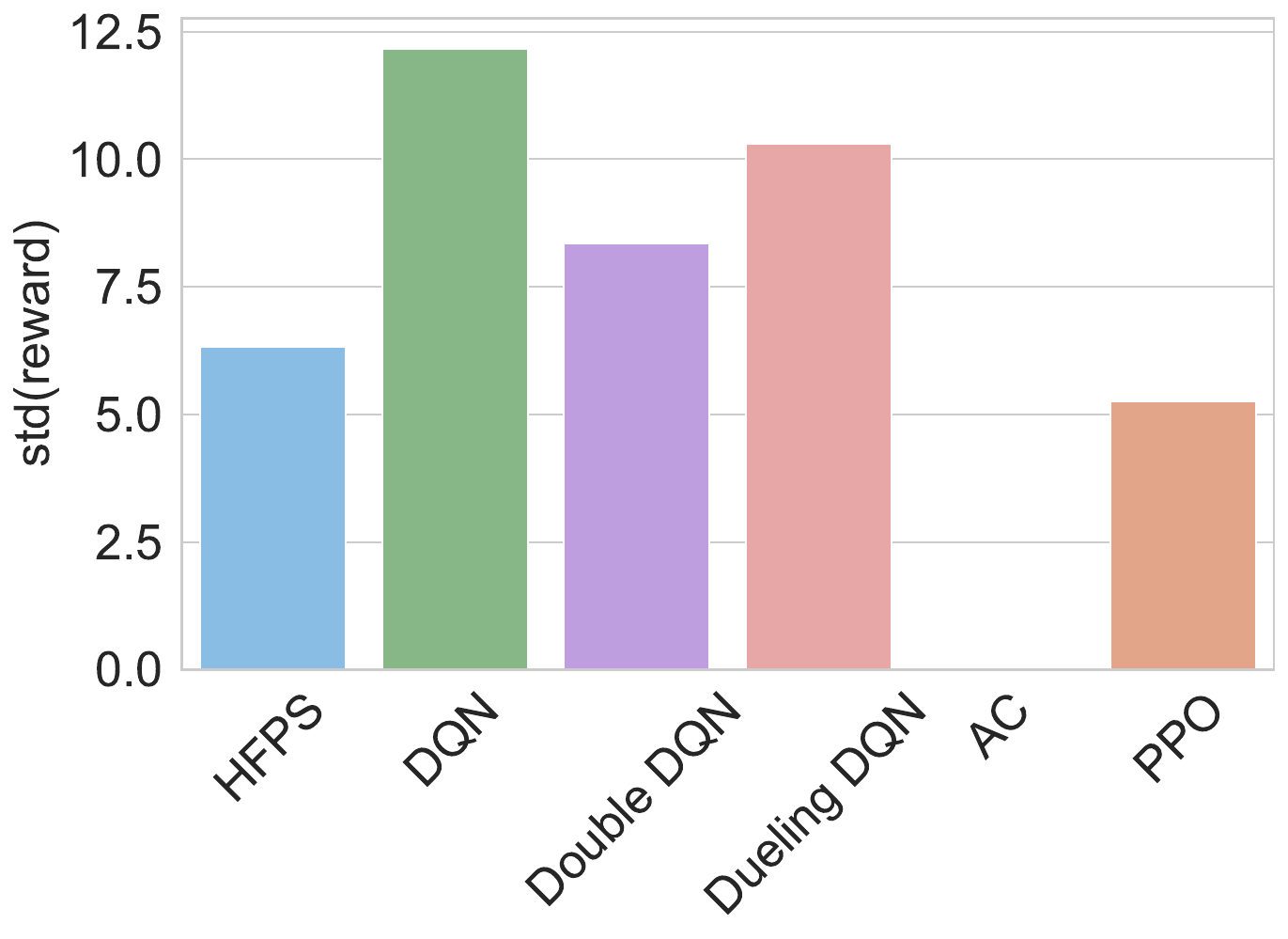} &
    \includegraphics[width=0.19\linewidth]{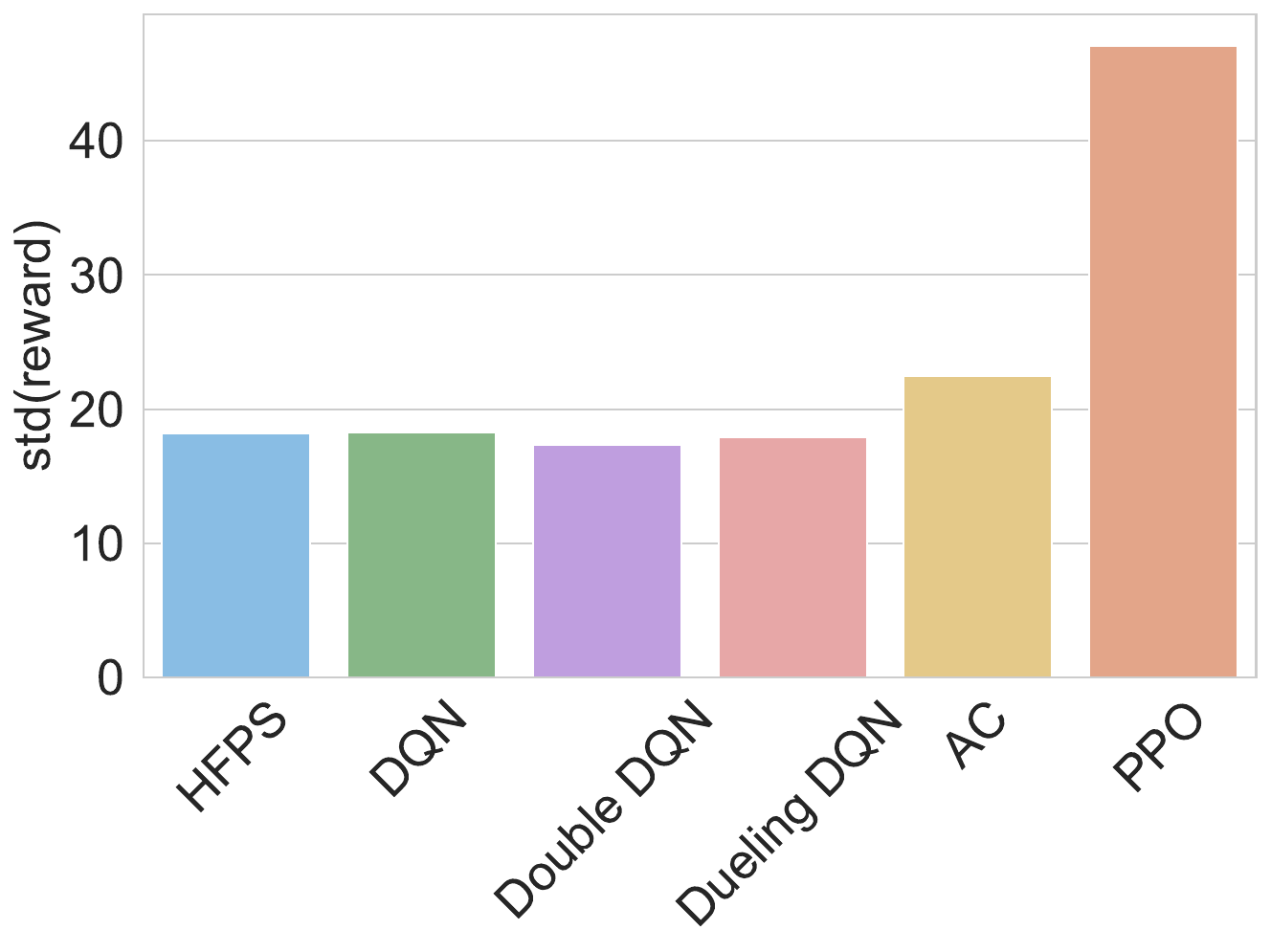} &
    \includegraphics[width=0.19\linewidth]{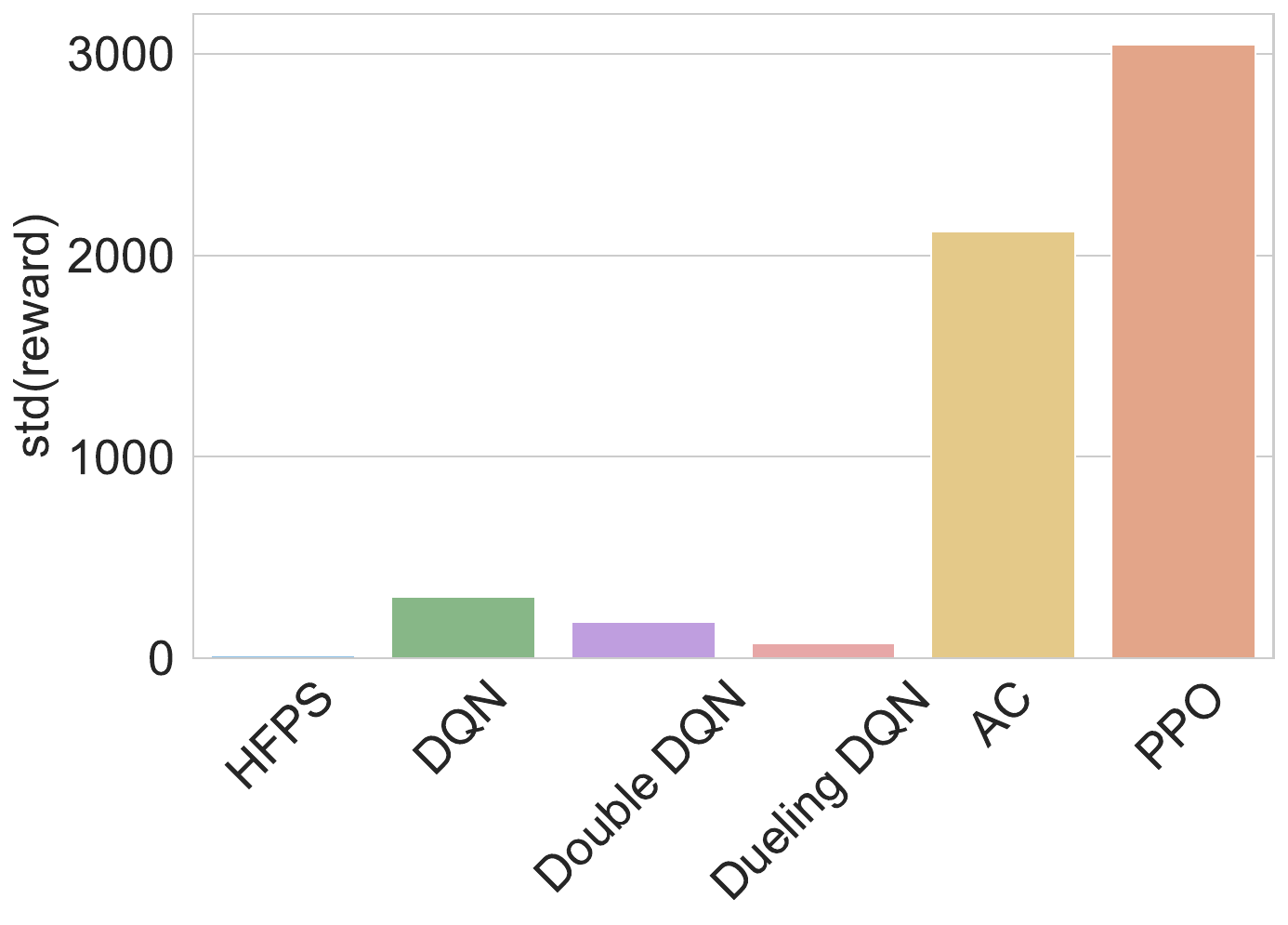} &
    \includegraphics[width=0.19\linewidth]{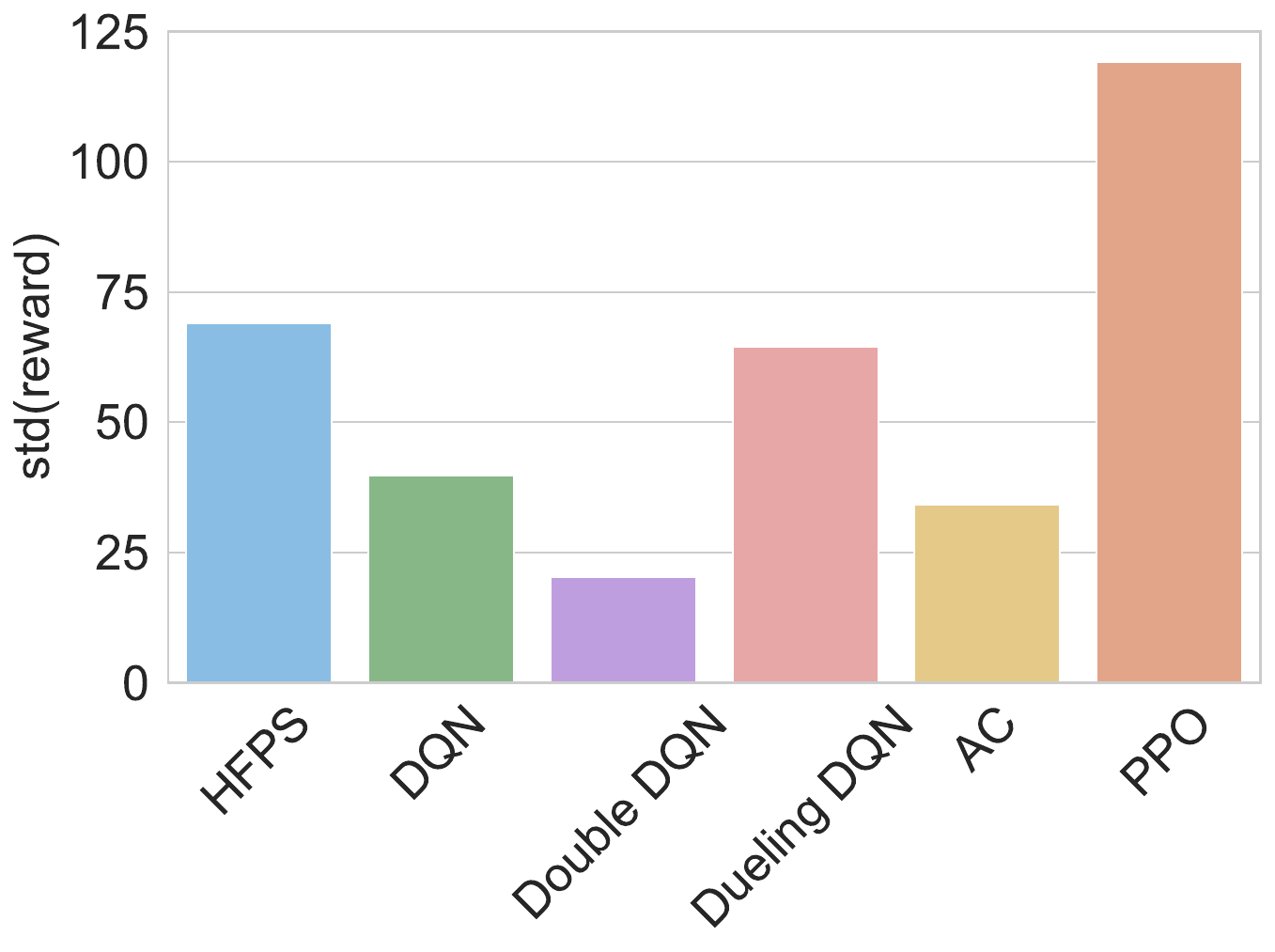} \\[2pt]
    \multicolumn{5}{c}{\textbf{Sticky (Std)}}\\[-2pt]
    \includegraphics[width=0.19\linewidth]{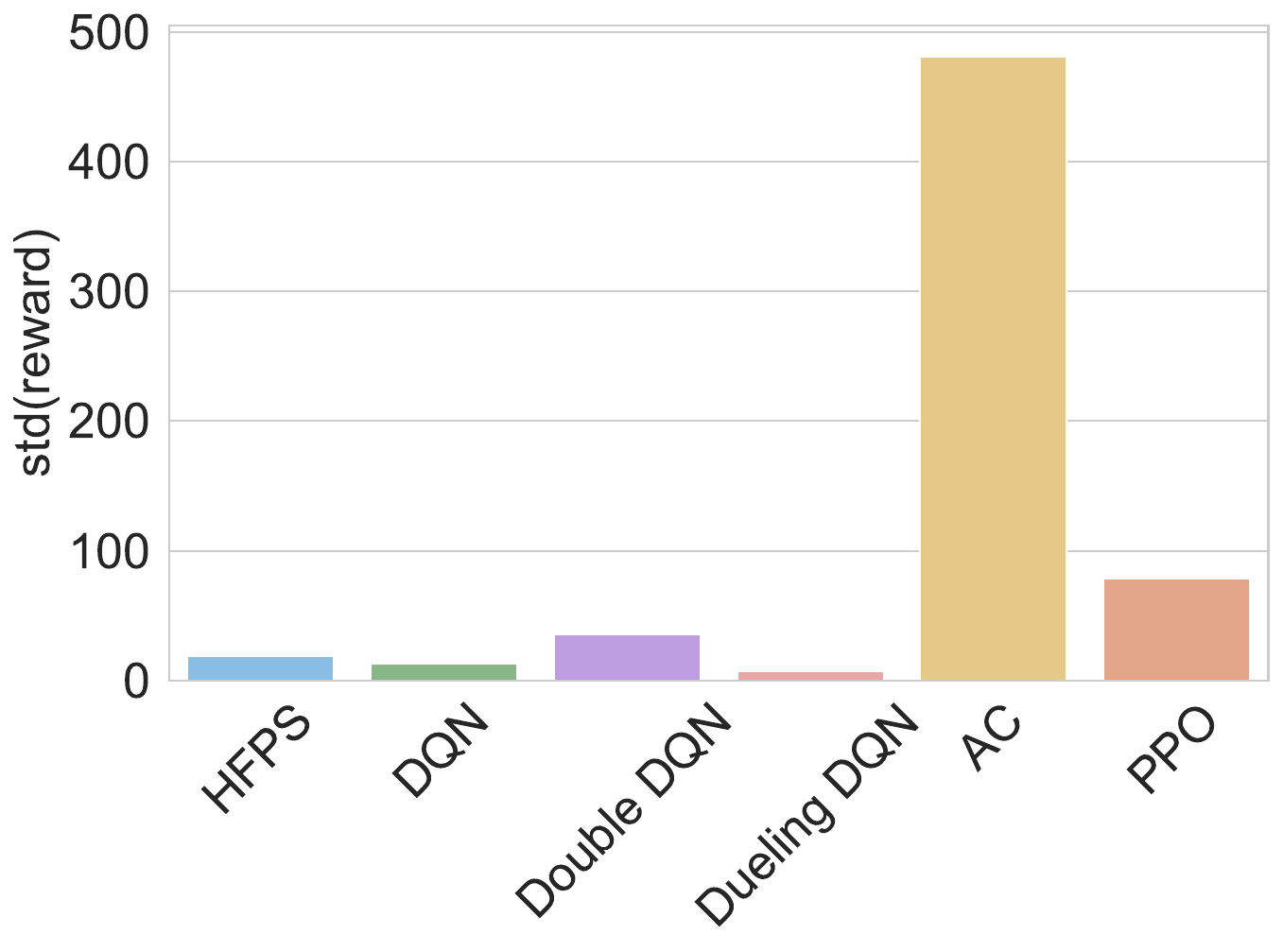} &
    \includegraphics[width=0.19\linewidth]{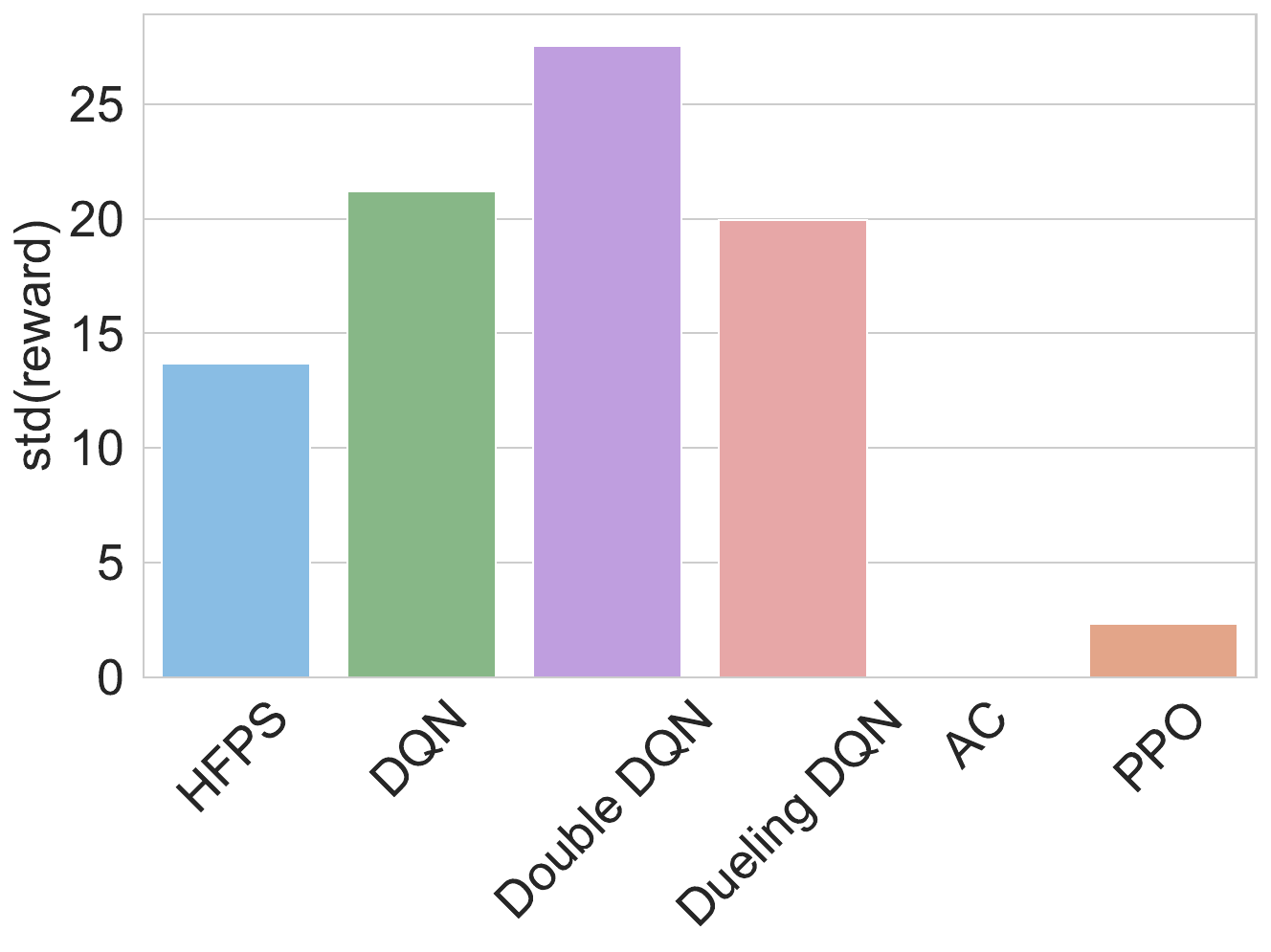} &
    \includegraphics[width=0.19\linewidth]{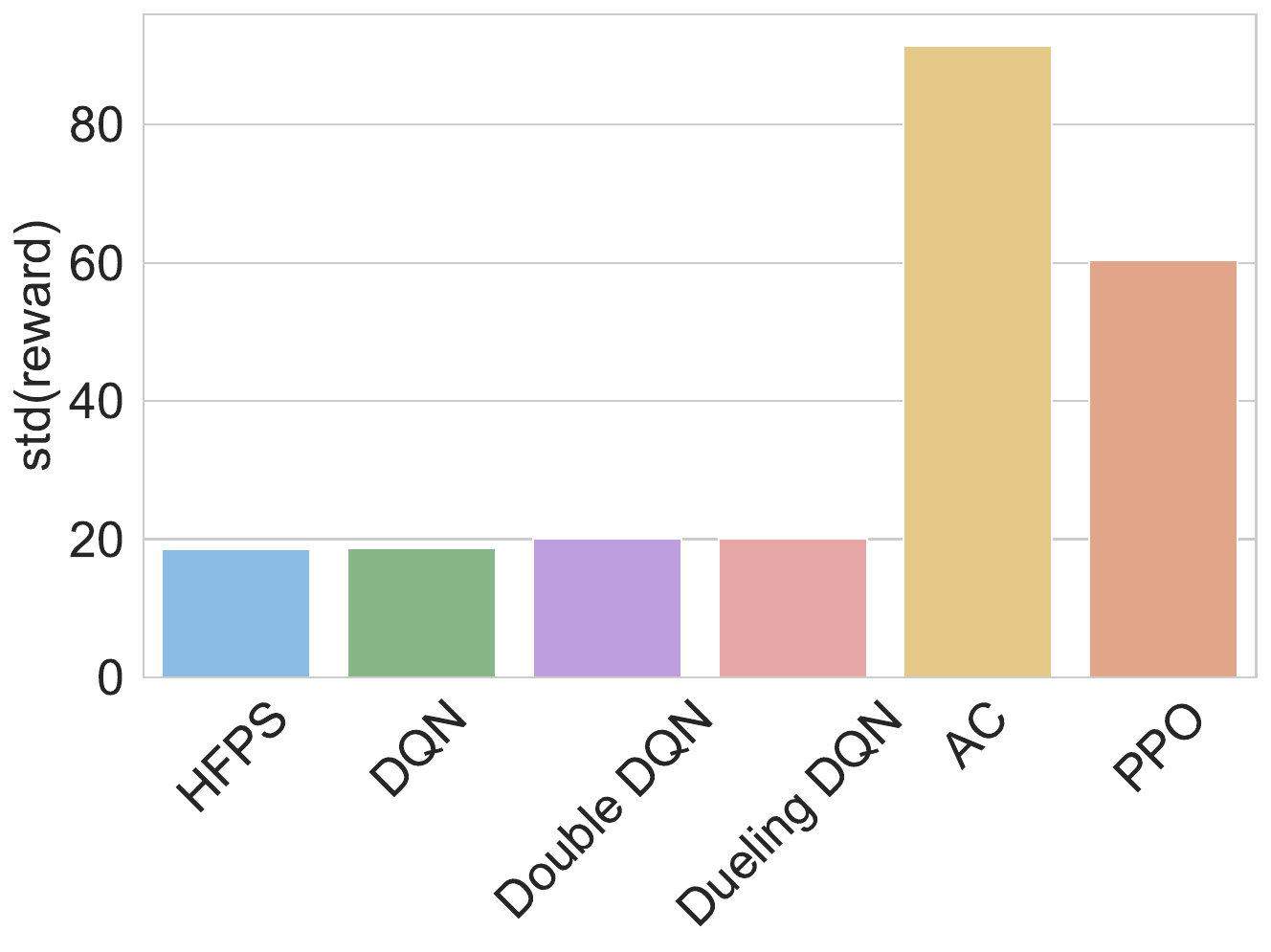} &
    \includegraphics[width=0.19\linewidth]{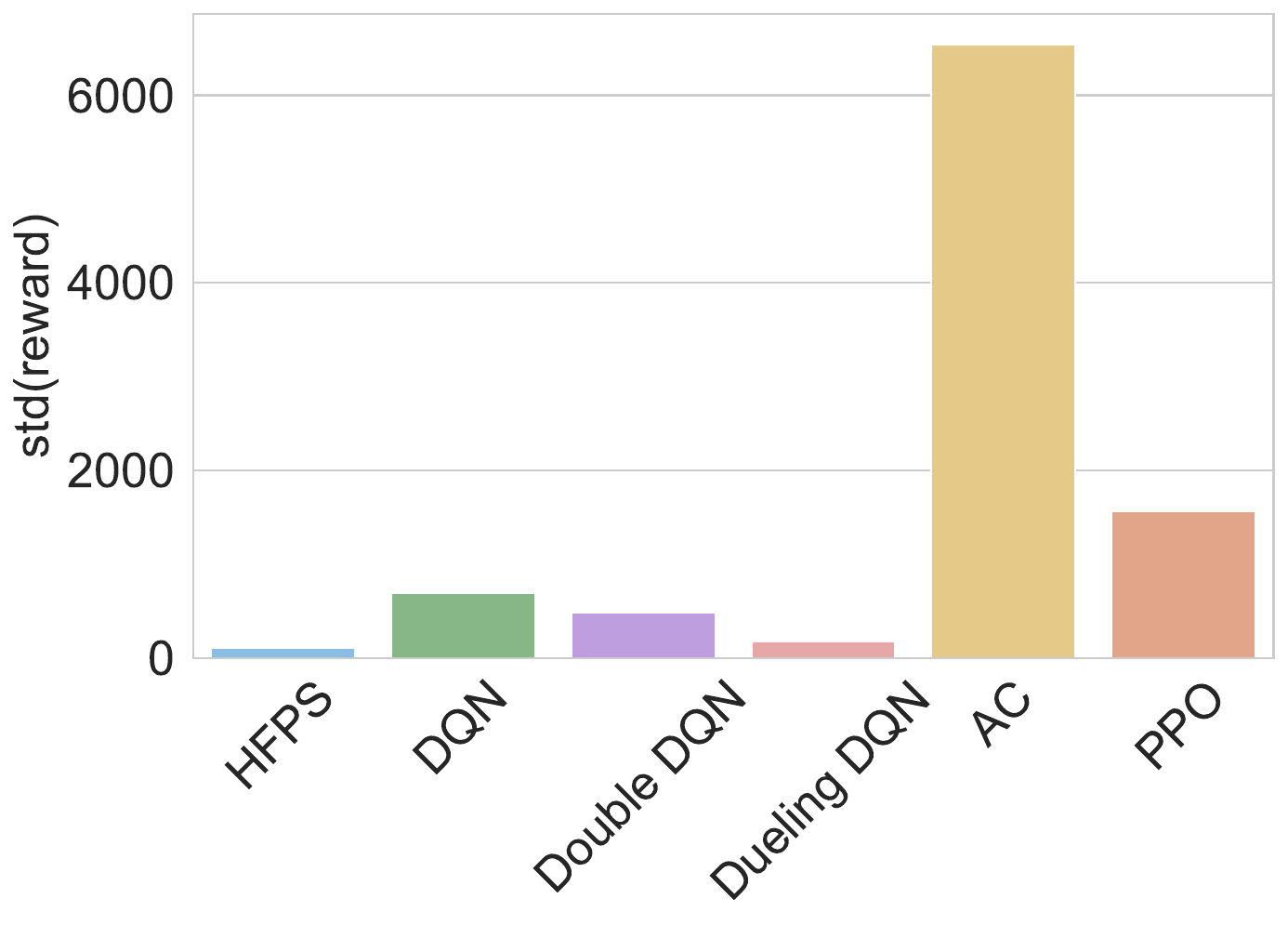} &
    \includegraphics[width=0.19\linewidth]{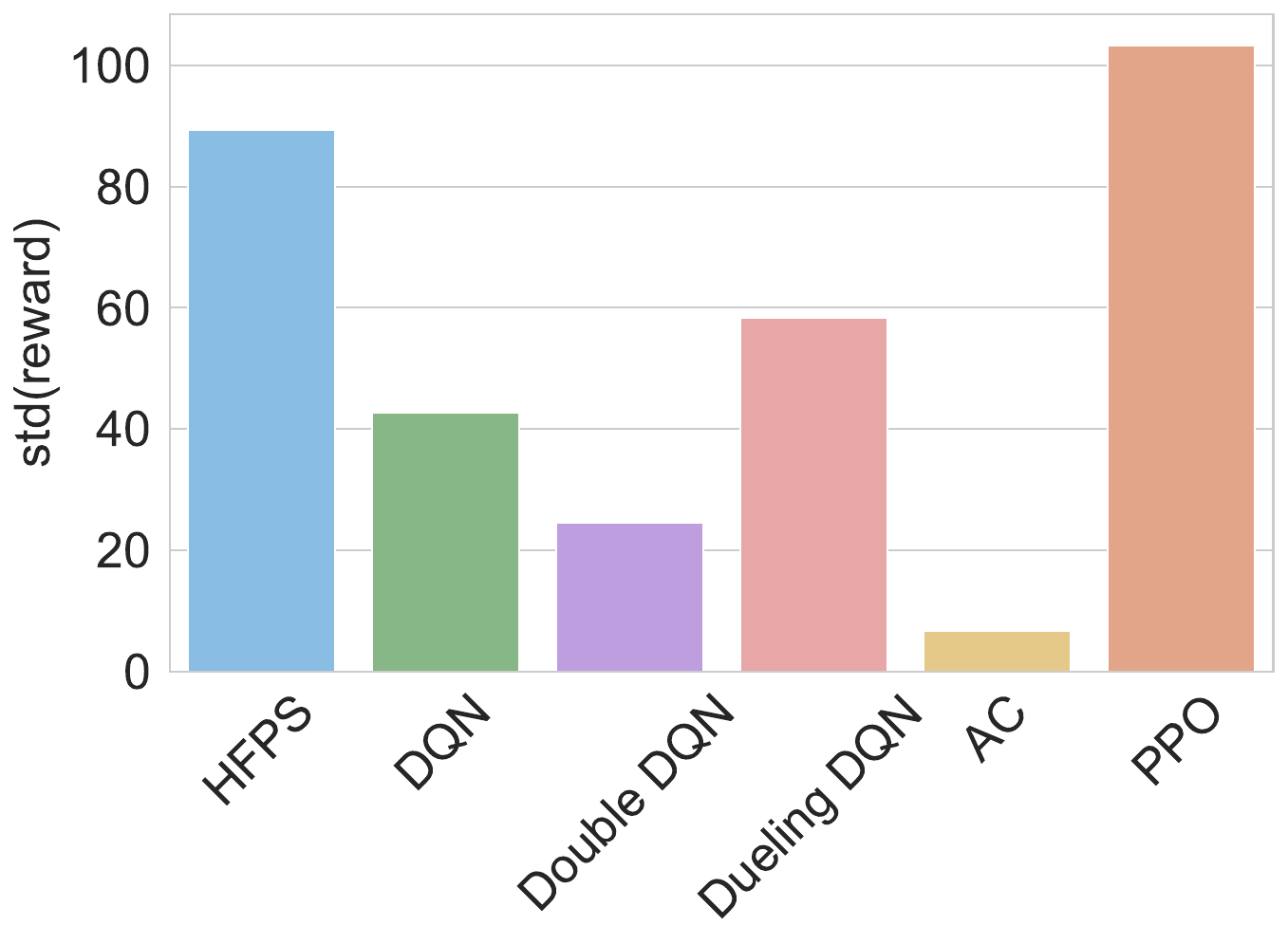} \\
  \end{tabular}
  \caption{\textbf{Across-seed variability }
  Standard deviation of episode returns across seeds at each evaluation step, computed from the same runs as Figure~\ref{fig:control-returns-15}.}
  \label{fig:control-std-15}
\end{figure*}

\subsection{Aggregate summary table (all benchmark--regime pairs)}
\label{app:table}

\begin{table*}[t]
\centering
\scriptsize
\setlength{\tabcolsep}{4pt}
\caption{\textbf{Aggregate performance from learning curves.} AUC@T is trapezoidal area under return vs. steps; Final@T is the last return. Values are mean $\pm$ std over seeds. Best baseline is chosen per-row by AUC@T.}
\label{tab:control-summary}
\resizebox{0.9\textwidth}{!}{%
\begin{tabular}{llcccccc}
\toprule\toprule
Env & Regime & HFPS AUC@T & Best baseline AUC@T & $\Delta\%$ & HFPS Final@T & Best baseline Final@T & $\Delta\%$ \\
\midrule
acrobot & Clean & -79144263.52 ± 5715493.23 & PPO: -84031333.33 ± 3465563.82 & 5.8 & -87.47 ± 23.53 & PPO: -78.20 ± 3.79 & -11.9  \\
acrobot & Noisy & -80111477.55 ± 3107189.24 & DQN: -82873863.89 ± 7286738.55 & 3.3 & -93.94 ± 8.99 & DQN: -110.55 ± 23.58 & 15.0  \\
acrobot & Sticky & -84528355.10 ± 2401536.91 & PPO: -90858950.00 ± 2046464.24 & 7.0 & -136.06 ± 10.85 & PPO: -84.90 ± 3.29 & -60.3  \\
bipedalwalker & Clean & 9392950.11 ± 56198564.79 & Dueling DQN: -7376062.16 ± 14291366.48 & 227.3 & 84.49 ± 103.95 & Dueling DQN: 20.61 ± 65.80 & 310.0 \\
bipedalwalker & Noisy & 27904259.81 ± 25304952.63 & PPO: 32301685.83 ± 109002499.36 & -13.6 & 103.64 ± 75.44 & PPO: 81.24 ± 135.36 & 27.6 \\
bipedalwalker & Sticky & 31227454.58 ± 44656706.21 & PPO: 42687915.83 ± 67215454.11 & -26.8 & 132.50 ± 106.13 & PPO: 101.74 ± 106.39 & 30.2 \\
lunarlander & Clean & 42642395.61 ± 3523795.18 & Double DQN: 31889842.37 ± 5286287.68 & 33.7 & 238.91 ± 29.50 & Double DQN: 246.91 ± 16.69 & -3.2 \\  %
lunarlander & Noisy & -25766227.01 ± 5217416.45 & Double DQN: -39000346.31 ± 18276224.25 & 33.9 & 100.88 ± 43.37 & Double DQN: -36.76 ± 127.95 & 374.4 \\ 
lunarlander & Sticky & 36783880.67 ± 9580504.79 & Dueling DQN: 27439517.04 ± 9577045.07 & 34.1 & 244.08 ± 11.48 & Dueling DQN: 261.87 ± 3.92 & -6.8 \\ 
pendulum\_discrete & Clean & -28087189.02 ± 621451.05 & Dueling DQN: -28351509.26 ± 1566650.62 & 0.9 & -147.96 ± 23.70 & Dueling DQN: -147.34 ± 24.18 & -0.4 \\
pendulum\_discrete & Noisy & -26282013.77 ± 614956.47 & Dueling DQN: -28625597.53 ± 1233343.10 & 8.2 & -147.22 ± 21.88 & Dueling DQN: -147.56 ± 24.05 & 0.2 \\ 
pendulum\_discrete & Sticky & -30641411.50 ± 353162.29 & DQN: -33778858.79 ± 616061.45 & 9.3 & -184.96 ± 19.62 & DQN: -178.67 ± 15.81 & -3.5 \\ 
pointmass & Clean & -118875940.90 ± 14470746.43 & Dueling DQN: -260948236.73 ± 19220437.10 & 54.4 & -207.14 ± 49.83 & Dueling DQN: -433.00 ± 127.73 & 52.2 \\
pointmass & Noisy & -126442915.13 ± 26121291.16 & Dueling DQN: -256558177.38 ± 24937755.63 & 50.7 & -230.07 ± 16.41 & Dueling DQN: -329.73 ± 72.07 & 30.2 \\
pointmass & Sticky & -159622041.44 ± 7500560.61 & Dueling DQN: -350365504.29 ± 9380735.89 & 54.4 & -296.23 ± 69.91 & Dueling DQN: -399.89 ± 86.85 & 25.9 \\
\bottomrule\bottomrule
\end{tabular}
}
\end{table*}

\section{Proof}

\subsection{Proof of Lemma~\ref{lem:d-adjoint}}

\begin{proof}
We need to establish three facts:
(1) $d$ is linear;
(2) there exists a constant $c>0$, depending only on $\gamma$, such that Eqn.~\ref{eq:d-adjoint_1} holds;
(3) consequently, $d$ is a bounded linear operator, and therefore admits a unique Hilbert adjoint $d^{*}: C^{1} \to C^{0}$.

First, we establish the linearity of $d$. Take any $u,v\in C^0$ and any scalars $\alpha,\beta\in mathbb{R}$. By Definition~\ref{def:discrete-dRham},
\begin{equation}
\begin{aligned}
d(\alpha u +\beta v)(s,a,s^{\prime}) &= (\alpha u + \beta v)(s^{\prime}) - \gamma(\alpha u + \beta v)(s).\\
&= \alpha u(s^{\prime}) + \beta v(s^{\prime}) - \gamma \alpha u(s) -\gamma \beta v(s).\\
&= \alpha(u(s^{\prime})-\gamma u(s)) + \beta(v(s^{\prime})-\gamma v(s)).\\
&= \alpha(du)(s,a,s^{\prime}) + \beta(dv)(s,a,s^{\prime}).
\end{aligned}
\end{equation}
Hence, for any $(s,a,s')$,
\begin{equation}
d(\alpha u + \beta v) = \alpha d u +\beta d v
\end{equation}
Therefore, $d$ is linear.

Next, we establish the boundedness of $d$.
We aim to show that there exists a constant $c(\gamma)>0$ such that for all $u \in C^{0}$,
\begin{equation}
||du||_{C^1}^2 \leq c(\gamma) ||u||^2_{C^0}
\end{equation}

By the definition of the norm on $C^{1}$ and Definition~\ref{def:discrete-dRham},
\begin{equation}
\begin{aligned}
||du||_{C^1}^2 &= \int \left((du)(s,a,s^{\prime})\right)^2\mu_{\pi}(ds,da,ds^{\prime})\\
&= \int \left(u(s^{\prime}) - \gamma u(s)\right)^2\mu_{\pi}(ds,da,ds^{\prime})\\
&= \mathbb{E}_{\mu_{\pi}}\left[(u(S^{\prime}) - \gamma u(S))^2\right]\\
&\leq \mathbb{E}_{\mu_{\pi}}\left[ 2u(S^{\prime})^2 + 2 \gamma^2u(S)^2\right]\\
&= 2 \mathbb{E}_{\mu_{\pi}}\left[u(S^{\prime})^2\right] + 2\gamma^2  \mathbb{E}_{\mu_{\pi}}\left[u(S)^2\right]
\end{aligned}
\end{equation}
We now relate both $\mathbb{E}*{\mu*{\pi}}[u(S')^{2}]$ and $\mathbb{E}*{\mu*{\pi}}[u(S)^{2}]$ to $|u|_{C^{0}}^{2}$.

We first handle the term $\mathbb{E}_{\mu_{\pi}}\left[u(S)^2\right]$.
Since $\nu_{\pi}$ is the marginal of $\mu_{\pi}$ on the first coordinate, we have
$\nu_\pi(C) = \mu_\pi\big(C \times \mathcal A \times \mathcal S\big)$.
Thus, for any integrable function $g:S\rightarrow \mathbb{R}$,
\begin{equation}
\begin{aligned}
\int_S g(s)\nu_\pi(ds) = \int_{S\times A\times S} g(s)\mu_{\pi}(ds,da,ds^{\prime}).
\end{aligned}
\end{equation}
Substituting $g(s) = u(s)^2$, we obtain
\begin{equation}
\begin{aligned}
\int_{S}u(s)^2\nu_{\pi}(ds) &= \int u(s)^2\mu_{\pi}(ds,da,ds^{\prime})\\
&=\mathbb{E}_{\mu_{\pi}}\left[u(S)^2\right] = ||u||_{C^{0}}^2.
\end{aligned}
\end{equation}

We now handle the term $\mathbb{E}_{\mu_{\pi}}\left[u(S^{\prime})^2\right]$.Here we make use of the trajectory-based definition of $\mu_{\pi}$. By the definition of the discounted triplet occupancy measure (Definition~\ref{def:triplet-occupancy}), for any bounded measurable function $\phi:S\times A\times S \rightarrow \mathbb{R}$, 
\begin{equation}
\begin{aligned}
\int \phi(s,a,s^{\prime})\mu_{\pi}(ds,da,ds^{\prime}) = (1-\gamma)\mathbb{E}_{\pi}\left[\sum_{t=0}^{\infty}\gamma^t\phi(S_t,A_t,S_{t+1})\right]
\end{aligned}
\end{equation}
Taking $\phi(s,a,s^{\prime}) = u(s^{\prime})^2$, we obtain
\begin{equation}
\begin{aligned}
\mathbb{E}_{\mu_{\pi}}\left[u(S^{\prime})^2\right] = (1-\gamma)\mathbb{E}_{\pi}\left[\sum_{t=0}^{\infty}\gamma^t u(S_{t+1})^2\right]
\end{aligned}
\end{equation}
Letting $k= t+1$,
\begin{equation}
\begin{aligned}
\mathbb{E}_{\mu_{\pi}}\left[u(S^{\prime})^2\right] &= (1-\gamma)\mathbb{E}_{\pi}\left[\sum_{t=0}^{\infty}\gamma^t u(S_{t+1})^2\right]\\
&= (1-\gamma)\mathbb{E}_{\pi}\left[\sum_{k=1}^{\infty}\gamma^{k-1} u(S_{k})^2\right]\\
&= \frac{(1-\gamma)}{\gamma}\mathbb{E}_{\pi}\left[\sum_{k=1}^{\infty}\gamma^{k} u(S_{k})^2\right]
\end{aligned}
\end{equation}
On the other hand, the squared norm $||u||_{C^0}^2$ can also be written in trajectory form:
\begin{equation}
\begin{aligned}
||u||_{C^0}^2 &= \int_S u(s)^2\nu_{\pi}(ds)\\
&= (1-\gamma)\mathbb{E}_{\pi}\left[\sum_{t=0}^{\infty}\gamma^t u(S_t)^2 \right]\\
&= (1-\gamma)\mathbb{E}_{\pi}\left[\gamma^{0}u(S_0)^2+\sum_{t=1}^{\infty}\gamma^t u(S_t)^2 \right]\\
&= (1-\gamma)\mathbb{E}_{\pi}\left[u(S_0)^2\right] + (1-\gamma) \mathbb{E}_{\pi}\left[\sum_{t=1}^{\infty}\gamma^t u(S_t)^2 \right]
\end{aligned}
\end{equation}
So, we can get:
\begin{equation}
\begin{aligned}
    (1-\gamma) \mathbb{E}_{\pi}\left[\sum_{t=1}^{\infty}\gamma^t u(S_t)^2 \right] = ||u||_{C^0}^2 - (1-\gamma)\mathbb{E}_{\pi}\left[u(S_0)^2\right] .
\end{aligned}
\end{equation}
Substituting this back into the expression for $\mathbb{E}_{\mu_{\pi}}\left[u(S^{\prime})^2\right] $
\begin{equation}
\begin{aligned}
    \mathbb{E}_{\mu_{\pi}}\left[u(S^{\prime})^2\right]  = \frac{1}{\gamma}\left(||u||_{C^0}^2 - (1-\gamma)\mathbb{E}_{\pi}\left[u(S_0)^2\right]\right)
\end{aligned}
\end{equation}
Since the second term on the right-hand side is nonnegative,
\begin{equation}
(1-\gamma)\mathbb{E}_{\pi}\left[u(S_0)^2\right]\geq 0
\end{equation}
we obtain the following upper bound, which depends only on $\gamma$
\begin{equation}
\begin{aligned}
\mathbb{E}_{\mu_{\pi}}\left[u(S^{\prime})^2\right]\leq \frac{1}{\gamma}||u||_{C^0}^2  
\end{aligned}
\end{equation}

Combining the two bounds above, we obtain the final constant. Substituting the relations derived earlier,
\begin{equation}
\begin{aligned}
||du||_{C^1}^2 &\leq 2\mathbb{E}_{\mu_{\pi}}\left[u(S^{\prime})^2\right] + 2\gamma^2\mathbb{E}_{\mu_{\pi}}\left[u(S)^2\right]\\
&\leq 2\cdot\frac{1}{\gamma}||u||^2_{C^0} + 2\gamma^2\cdot||u||^2_{C^0}\\
&= 2(\frac{1}{\gamma} + \gamma^2) ||u||^2_{C^0}
\end{aligned}
\end{equation}
So $c(\gamma) = 2(\frac{1}{\gamma} + \gamma^2)$

In particular, this shows that if $||u||_{C^0}<\infty$, then the right-hand side is finite, hence  $||du||_{C^1}<\infty$, meaning $du\in C^1$ Moreover,$||du||_{C^1}\leq \sqrt{c(\gamma)}||u||_{C^0}$, so $d$ is indeed a bounded linear operator.

Finally, we establish the existence and uniqueness of the Hilbert adjoint $d^*$.
Since $d:C^0\rightarrow C^1$ is a bounded linear operator and both  $C^0, C^1$ are Hilbert spaces,
the standard theorem for Hilbert spaces (a consequence of the Riesz Representation Theorem) states:

If $T:K\rightarrow K$ is a bounded linear operator between Hilbert spaces, then there exists a unique bounded linear operator $T*:K\rightarrow K$ such that
\begin{equation}
\begin{aligned}
\langle Tx,y\rangle_K = \langle x,T^{*}y\rangle_H, \forall x\in H,y\in K
\end{aligned}
\end{equation}
Applying this with $H=C^0,K=C^1,T=d$, we obtain the existence of a unique operator
\begin{equation}
d^*:C^1\rightarrow C^0
\end{equation}
such that
\begin{equation}
\begin{aligned}
\langle du,f\rangle_{C^1} = \langle u, d^*f\rangle_{C^0}, \forall u \in C^0,f\in C^1
\end{aligned}
\end{equation}

\end{proof}

\subsection{Proof of Theorem~\ref{thm:hodge}}

\begin{proof}
We divide the proof into several steps.

\textbf{Basic structure of \(\overline{\mathcal E}\) and \(\overline{\mathcal E}^\perp\)}

Recall that $C^1$ is a Hilbert space with inner product $\langle\cdot,\cdot\rangle_{C^1}$. By definition,
\begin{equation}
\mathcal E_0 := \mathrm{im}(d) = \{ du : u\in C^0\} \subseteq C^1,   
\end{equation}
and $\overline{\mathcal E} := \overline{\mathcal E_0}$ is the closure of $\mathcal E_0$ in $C^1$.

First, note that $\mathcal E_0$ is a linear subspace of $C^1$: for any $u,v\in C^0$ and any scalars $\alpha,\beta\in\mathbb R$,
\begin{equation}
\alpha (du) + \beta (dv) = d(\alpha u + \beta v),
\end{equation}
by linearity of $d$. Hence $\mathcal E_0$ is closed under linear combinations, and thus it is a linear subspace. Because the closure of a linear subspace is again a linear subspace, $\overline{\mathcal E}$ is a closed linear subspace of $C^1$.

By general Hilbert space theory, the orthogonal complement
\begin{equation}
\overline{\mathcal E}^\perp := \{ g\in C^1 : \langle g, h\rangle_{C^1} = 0,\ \forall\,h\in \overline{\mathcal E} \}
\end{equation}
is itself a closed linear subspace of $C^1$, and we have the orthogonal direct sum decomposition
\begin{equation}
C^1 = \overline{\mathcal E} \oplus \overline{\mathcal E}^\perp,
\end{equation}
meaning that every element of $C^1$ can be written as a sum of an element in $\overline{\mathcal E}$ and an element in $\overline{\mathcal E}^\perp$, with these two components orthogonal and uniquely determined. For completeness, we briefly recall this decomposition below in Step as a consequence of the Hilbert projection theorem.

\textbf{ Orthogonal projection and the basic decomposition.}

Let $H$ be a Hilbert space and let $M\subseteq H$ be a nonempty closed linear subspace. The Hilbert projection theorem states that for every $x\in H$, there exists a unique $m^*\in M$ such that
\begin{equation}
\|x - m^*\|_H = \inf_{m\in M} \|x-m\|_H.
\end{equation}
Moreover, if we set $r := x - m^*$, then $r\in M^\perp$ and the decomposition
\begin{equation}
x = m^* + r,\quad m^* \in M,\ r\in M^\perp  
\end{equation}
is unique.

We apply this theorem to the present case with
\begin{equation}
H = C^1,\quad M = \overline{\mathcal E}.
\end{equation}
Because $\overline{\mathcal E}$ is a nonempty closed linear subspace of $C^1$, the projection theorem applies. Thus, for any $f\in C^1$, there exists a unique
\begin{equation}
f_{\mathrm{ex}} \in \overline{\mathcal E}
\end{equation}
such that
\begin{equation}
\|f - f_{\mathrm{ex}}\|_{C^1}
= \inf_{g\in \overline{\mathcal E}} \|f-g\|_{C^1}.
\end{equation}
If we define
\begin{equation}
f_{\mathrm{res}} := f - f_{\mathrm{ex}},
\end{equation}
then \(f_{\mathrm{res}} \in \overline{\mathcal E}^\perp\) and
\begin{equation}
f = f_{\mathrm{ex}} + f_{\mathrm{res}},
\quad
f_{\mathrm{ex}} \in \overline{\mathcal E},
\quad
f_{\mathrm{res}} \in \overline{\mathcal E}^\perp.
\end{equation}
By construction,
\begin{equation}
\langle f_{\mathrm{ex}}, f_{\mathrm{res}} \rangle_{C^1} = 0.
\end{equation}

\emph{Uniqueness} of this decomposition also follows from the projection theorem: if we had another decomposition
\begin{equation}
f = g_1 + g_2,\quad g_1\in \overline{\mathcal E},\ g_2\in \overline{\mathcal E}^\perp,
\end{equation}
then
\begin{equation}
f_{\mathrm{ex}} - g_1 = g_2 - f_{\mathrm{res}},
\end{equation}
where the left-hand side belongs to $\overline{\mathcal E}$ and the right-hand side belongs to $\overline{\mathcal E}^\perp$. Hence
\begin{equation}
f_{\mathrm{ex}} - g_1 \in \overline{\mathcal E} \cap \overline{\mathcal E}^\perp.
\end{equation}
But the only vector in both a subspace and its orthogonal complement is the zero vector, so
\begin{equation}
f_{\mathrm{ex}} - g_1 = 0\quad\Rightarrow\quad f_{\mathrm{ex}} = g_1
\end{equation}
and similarly $f_{\mathrm{res}} = g_2$. This proves the uniqueness of the decomposition.

\textbf{Variational characterization of \(f_{\mathrm{ex}}\).}

By definition of the orthogonal projection of $f$ onto $\overline{\mathcal E}$, we already have
\begin{equation}
f_{\mathrm{ex}} = \arg\min_{g\in \overline{\mathcal E}} \| f - g \|_{C^1}^2.
\end{equation}
This is precisely the variational characterization stated in the theorem: $f_{\mathrm{ex}} =\arg\min_{g\in \overline{\mathcal E}} \| f - g \|_{C^1}^2.$

It remains to show that this $f_{\mathrm{ex}}$ is characterized equivalently by the orthogonality condition
\begin{equation}\label{eq:orth-du}
\langle f - f_{\mathrm{ex}}, du\rangle_{C^1} = 0,
\quad\forall\,u\in C^0.
\end{equation}

Since $f_{\mathrm{res}} := f-f_{\mathrm{ex}} \in \overline{\mathcal E}^\perp$, we have
\begin{equation}
\langle f_{\mathrm{res}}, h \rangle_{C^1} = 0,\quad\forall\,h\in \overline{\mathcal E}.
\end{equation}
In particular, $\mathcal E_0 = \mathrm{im}(d)\subseteq \overline{\mathcal E}$, so for any $u\in C^0$,
\begin{equation}
du \in \mathcal E_0 \subseteq \overline{\mathcal E},
\end{equation}
and thus
\begin{equation}
\langle f_{\mathrm{res}}, du\rangle_{C^1} = 0
\quad\Longleftrightarrow\quad
\langle f - f_{\mathrm{ex}}, du\rangle_{C^1} = 0,
\quad\forall\,u\in C^0.
\end{equation}
This establishes the orthogonality condition \eqref{eq:orth-du} for $f_{\mathrm{ex}}$.

Conversely, suppose that for some $g\in\overline{\mathcal E}$ we have
\begin{equation}
\langle f - g, du\rangle_{C^1} = 0,\quad\forall\,u\in C^0.
\end{equation}
Then $f-g$ is orthogonal to all elements in $\mathcal E_0 = \mathrm{im}(d)$. Since $\overline{\mathcal E}$ is the closure of $\mathcal E_0$, and the inner product is continuous, this implies that $f-g$ is orthogonal to all of $\overline{\mathcal E}$. Therefore
\begin{equation}
f - g \in \overline{\mathcal E}^\perp.
\end{equation}
So we have a decomposition
\begin{equation}
f = g + (f-g),
\quad g\in\overline{\mathcal E},\ f-g\in\overline{\mathcal E}^\perp,
\end{equation}
which must coincide with the unique decomposition from above. Hence $g = f_{\mathrm{ex}}$. Therefore \eqref{eq:orth-du} is indeed an equivalent characterization of the projection $f_{\mathrm{ex}}$.

\textbf{The case where \(\mathrm{im}(d)\) is closed and representation $f_{\mathrm{ex}}=du^*$.}

Now assume in addition that $\mathcal E_0 = \mathrm{im}(d)$ is closed in $C^1$. Then
\begin{equation}
\overline{\mathcal E} = \overline{\mathcal E_0} = \mathcal E_0.
\end{equation}
Thus $\overline{\mathcal E}$ coincides with the actual range of $d$.

In this case, for any $f\in C^1$, the component $f_{\mathrm{ex}}$ of the decomposition lies in $\overline{\mathcal E} = \mathcal E_0$, so by definition of $\mathcal E_0$ there exists at least one $u^*\in C^0$ such that
\begin{equation}
f_{\mathrm{ex}} = du^*.
\end{equation}
Define the functional
\begin{equation}
J(u) := \| f - du \|_{C^1}^2,\quad u\in C^0.
\end{equation}
Because $f_{\mathrm{ex}}$ is the orthogonal projection of $f$ onto $\mathcal E_0 = \mathrm{im}(d)$, we have
\begin{equation}
\|f - f_{\mathrm{ex}}\|_{C^1}
= \inf_{g\in \mathcal E_0} \|f-g\|_{C^1}.
\end{equation}
But any $g\in\mathcal E_0$ can be written as $g = du$ for some $u\in C^0$, so
\begin{equation}
\inf_{g\in \mathcal E_0} \|f-g\|_{C^1}^2
= \inf_{u\in C^0} \|f - du\|_{C^1}^2
= \inf_{u\in C^0} J(u).
\end{equation}
Since $f_{\mathrm{ex}} = du^*$, we obtain
\begin{equation}
J(u^*)
= \|f - du^*\|_{C^1}^2
= \|f - f_{\mathrm{ex}}\|_{C^1}^2
= \min_{u\in C^0} J(u).
\end{equation}
So $u^*$ is a minimizer of the quadratic functional $J$.

\textbf{Characterization of all minimizers and the relation \(\ker(\Delta_0) = \ker(d)\).}

Recall that $\Delta_0 := d^* d : C^0 \to C^0$ is the Hodge Laplacian on $C^0$. Since $d$ is a bounded linear operator between Hilbert spaces, its adjoint $d^*$ exists and is bounded; hence $\Delta_0 = d^* d$ is also a bounded linear operator on $C^0$.

We first show that
\begin{equation}
\ker(\Delta_0) = \ker(d).
\end{equation}

\emph{(i) $\ker(d)\subseteq \ker(\Delta_0)$.}  
If $u\in\ker(d)$, i.e., $du = 0,$ then
\begin{equation}
\Delta_0 u = d^* d u = d^* 0 = 0,
\end{equation}
so $u\in\ker(\Delta_0)$. Thus $\ker(d)\subseteq \ker(\Delta_0)$.

\emph{(ii) $\ker(\Delta_0)\subseteq \ker(d)$.}  
Conversely, let $u\in\ker(\Delta_0)$, i.e.,
\begin{equation}
\Delta_0 u = d^*d u = 0.
\end{equation}
Take the inner product with $u$ in $C^0$:
\begin{equation}
\langle \Delta_0 u, u\rangle_{C^0}
= \langle d^* d u, u\rangle_{C^0}.
\end{equation}
By the definition of the adjoint operator,
\begin{equation}
\langle d^* d u, u\rangle_{C^0}
= \langle d u, d u\rangle_{C^1}
= \|du\|_{C^1}^2.
\end{equation}
Since $\Delta_0 u = 0$, we have
\begin{equation}
0 = \langle \Delta_0 u, u\rangle_{C^0}
= \|du\|_{C^1}^2.
\end{equation}
The norm $\|\cdot\|_{C^1}$ is nonnegative and vanishes only for the zero element. Hence
\begin{equation}
\|du\|_{C^1}^2 = 0\quad\Rightarrow\quad du = 0.
\end{equation}
This shows $u \in \ker(d)$. Therefore $\ker(\Delta_0)\subseteq \ker(d)$.

Combining (i) and (ii), we obtain
\begin{equation}
\ker(\Delta_0) = \ker(d).
\end{equation}

Now we describe all minimizers of $J$. Let $u^*\in C^0$ be one minimizer, so that $f_{\mathrm{ex}} = du^*$ and $J(u^*) = \min_{u} J(u)$.

Take any $v\in\ker(d)$. Then
\begin{equation}
d(u^* + v) = du^* + dv = du^* + 0 = du^*.
\end{equation}
Thus
\begin{equation}
J(u^+)
:= J(u^* + v)
= \|f - d(u^* + v)\|_{C^1}^2
= \|f - du^*\|_{C^1}^2
= J(u^*).
\end{equation}
So every $u^* + v$ with $v\in\ker(d) = \ker(\Delta_0)$ is also a minimizer of $J$.

Conversely, suppose $\tilde u\in C^0$ is another minimizer of $J$. Then
\begin{equation}
\|f - d\tilde u\|_{C^1} = \|f - du^*\|_{C^1}
= \inf_{u\in C^0} \|f - du\|_{C^1}.
\end{equation}
So both $du^*$ and $d\tilde u$ are best approximations to $f$ in the subspace $\mathcal E_0 = \mathrm{im}(d)$. But the best approximation in a closed convex subset (in particular, a closed linear subspace) of a Hilbert space is unique. Hence
\begin{equation}
du^* = d\tilde u.
\end{equation}
This shows that
\begin{equation}
d(\tilde u - u^*) = 0,
\end{equation}
so $\tilde u - u^* \in \ker(d) = \ker(\Delta_0)$.

Therefore, all minimizers of $J$ are exactly those elements of the form
\begin{equation}
u^* + v,\quad v\in\ker(d) = \ker(\Delta_0),
\end{equation}
and these are the only degrees of freedom in the choice of minimizer.


We have shown:
\begin{itemize}
\item Every $f\in C^1$ admits a unique decomposition
      \begin{equation}
          f = f_{\mathrm{ex}} + f_{\mathrm{res}},
      \quad
      f_{\mathrm{ex}}\in\overline{\mathcal E},\ 
      f_{\mathrm{res}}\in\overline{\mathcal E}^\perp,\ 
      \langle f_{\mathrm{ex}}, f_{\mathrm{res}}\rangle_{C^1} = 0.
      \end{equation}
\item The component $f_{\mathrm{ex}}$ is the orthogonal projection of $f$ onto $\overline{\mathcal E}$, equivalently characterized by
      \begin{equation}
      f_{\mathrm{ex}} = \arg\min_{g\in\overline{\mathcal E}} \|f-g\|_{C^1}^2,
      \quad\text{and}\quad
      \langle f - f_{\mathrm{ex}}, du\rangle_{C^1} = 0,\ \forall\,u\in C^0.
      \end{equation}
\item If the range of $d$ is closed, then $\overline{\mathcal E} = \mathcal E_0 = \mathrm{im}(d)$, so there exists $u^*\in C^0$ with $f_{\mathrm{ex}} = du^*$. This $u^*$ minimizes
      \begin{equation}
      J(u) = \|f - du\|_{C^1}^2,
      \end{equation}
      and all minimizers differ by an element of $\ker(d) = \ker(\Delta_0)$.
\end{itemize}
This completes the proof.

\end{proof}

\subsection{Proof of Corollary~\ref{cor:td-decomposition}}

\begin{proof}
We assume throughout that the range $\mathcal E_0 = \mathrm{im}(d)$ is closed in $C^1$, so that
\begin{equation}
\overline{\mathcal E} = \overline{\mathcal E_0} = \mathcal E_0.   
\end{equation}
Let $V:\mathcal S\to\mathbb R$ be a bounded value function such that its TD error
\begin{equation}
\delta_V \in C^1 = L^2(S\times A\times S,\mu_\pi).
\end{equation}

We apply Theorem~\ref{thm:hodge} with $f = \delta_V$. Since $\delta_V\in C^1$, the theorem guarantees that there exist
\begin{equation}
\delta_V^{\mathrm{ex}} \in \overline{\mathcal E},
\qquad
\delta_V^{\mathrm{res}} \in \overline{\mathcal E}^\perp
\end{equation}
such that
\begin{equation}\label{eq:deltaV-Hodge}
\delta_V = \delta_V^{\mathrm{ex}} + \delta_V^{\mathrm{res}},
\end{equation}
and
\begin{equation}
\langle \delta_V^{\mathrm{ex}}, \delta_V^{\mathrm{res}} \rangle_{C^1} = 0.
\end{equation}
Moreover, by the variational characterization in Theorem~\ref{thm:hodge}, $\delta_V^{\mathrm{ex}}$ is the unique orthogonal projection of $\delta_V$ onto $\overline{\mathcal E}$, i.e.
\begin{equation}
\delta_V^{\mathrm{ex}} = \arg\min_{g\in \overline{\mathcal E}} \|\delta_V - g\|_{C^1}^2,
\end{equation}
and it satisfies the orthogonality condition
\begin{equation}\label{eq:orth-proj}
\langle \delta_V - \delta_V^{\mathrm{ex}}, du \rangle_{C^1} = 0,
\quad \forall\,u\in C^0.
\end{equation}

Because we assume that the range of $d$ is closed, we have
\begin{equation}
\overline{\mathcal E} = \mathcal E_0 = \mathrm{im}(d).
\end{equation}
By definition of the image, every element of $\mathcal E_0$ is of the form $du$ for some $u\in C^0$. In particular, $\delta_V^{\mathrm{ex}}\in \mathcal E_0$ implies that there exists at least one $u_V^*\in C^0$ such that
\begin{equation}\label{eq:def-uV-star}
\delta_V^{\mathrm{ex}} = du_V^*.
\end{equation}

Using the notation in the corollary, we now \emph{define}
\begin{equation}
\delta_V^{\mathrm{res}} := \delta_V^{\mathrm{res}}
\end{equation}
from \eqref{eq:deltaV-Hodge}, and we rename $\delta_V^{\mathrm{ex}}$ as $d u_V^*$ via \eqref{eq:def-uV-star}. Then the decomposition \eqref{eq:deltaV-Hodge} can be rewritten as
\begin{equation}
\delta_V = d u_V^* + \delta_V^{\mathrm{res}},
\end{equation}
with
\begin{equation}
d u_V^* \in \mathcal E_0 = \overline{\mathcal E},
\qquad
\delta_V^{\mathrm{res}} \in \overline{\mathcal E}^\perp.
\end{equation}
Since the decomposition $(\delta_V^{\mathrm{ex}},\delta_V^{\mathrm{res}})$ in Theorem~\ref{thm:hodge} is unique, the residual
$\delta_V^{\mathrm{res}}$ is uniquely determined for the given $\delta_V$, which proves the uniqueness statement for the residual component in the corollary.

Next we show the variational characterization of $u_V^*$. Define the quadratic functional
\begin{equation}
J(u) := \|\delta_V - du\|_{C^1}^2, \qquad u\in C^0.
\end{equation}
Because $\overline{\mathcal E} = \mathcal E_0 = \mathrm{im}(d)$, any $g\in\overline{\mathcal E}$ can be written as $g = du$ for some $u\in C^0$, and conversely any $u\in C^0$ yields an element $du\in\overline{\mathcal E}$. Therefore
\begin{equation}
\inf_{g\in \overline{\mathcal E}} \|\delta_V - g\|_{C^1}^2
= \inf_{u\in C^0} \|\delta_V - du\|_{C^1}^2
= \inf_{u\in C^0} J(u).
\end{equation}
By Theorem~\ref{thm:hodge}, the element $\delta_V^{\mathrm{ex}}$ is the unique minimizer of the left-hand side:
\begin{equation}
\delta_V^{\mathrm{ex}}
= \arg\min_{g\in\overline{\mathcal E}} \|\delta_V - g\|_{C^1}^2.
\end{equation}
Since $\delta_V^{\mathrm{ex}} = du_V^*$, we obtain
\begin{equation}
J(u_V^*) = \|\delta_V - du_V^*\|_{C^1}^2
= \|\delta_V - \delta_V^{\mathrm{ex}}\|_{C^1}^2
= \min_{u\in C^0} \|\delta_V - du\|_{C^1}^2.
\end{equation}
Equivalently,
\begin{equation}
u_V^* \in \arg\min_{u\in C^0} \|\delta_V - du\|_{C^1}^2.
\end{equation}
This proves the optimization characterization stated in the corollary.

Finally, we explain the orthogonality interpretation of the residual.
From \eqref{eq:orth-proj}, with $f=\delta_V$ and $\delta_V^{\mathrm{ex}} = du_V^*$, we have
\begin{equation}
\langle \delta_V - du_V^*, du\rangle_{C^1} = 0,
\quad \forall\,u\in C^0.
\end{equation}
But $\delta_V - du_V^* = \delta_V^{\mathrm{res}}$, so this can be rewritten as
\begin{equation}
\langle \delta_V^{\mathrm{res}}, du\rangle_{C^1} = 0,
\quad \forall\,u\in C^0.
\end{equation}
In other words, $\delta_V^{\mathrm{res}}$ is orthogonal to all exact 1-cochains of the form $du$, i.e. to all discounted gradients generated by potential functions $u\in C^0$. This shows that $\delta_V^{\mathrm{res}}$ is precisely the component of the TD error $\delta_V$ that cannot be represented as a discounted gradient $du$ and thus cannot be attributed to any potential function, which matches the “topological residual’’ interpretation in the statement.

Collecting all parts, we have shown:
\begin{itemize}
  \item there exists $u_V^*\in C^0$ and a unique $\delta_V^{\mathrm{res}}\in\overline{\mathcal E}^\perp$ such that $\delta_V = d u_V^* + \delta_V^{\mathrm{res}}$;
  \item $u_V^*$ is a minimizer of $J(u) = \|\delta_V - du\|_{C^1}^2$ over $u\in C^0$;
  \item the residual $\delta_V^{\mathrm{res}}$ is orthogonal to all $du$ and therefore captures the non-integrable (topological) part of $\delta_V$.
\end{itemize}
This completes the proof.
\end{proof}

\subsection{Proof of Theorem~\ref{thm:poisson}}

\begin{proof}

Recall that $C^{0}$ and $C^{1}$ are Hilbert spaces, and that $d : C^{0} \to C^{1}$ is a bounded linear operator whose Hilbert adjoint is $d^{*} : C^{1} \to C^{0}$. The operator
\begin{equation}
\Delta_0 := d^* d : C^0 \to C^0
\end{equation}
is the Hodge Laplacian on $C^{0}$. We first prove the existence and uniqueness statement, and then derive the first-order optimality condition.

{\bf Existence of minimizers and the role of closed range.}
Consider the functional
\begin{equation}
J(u) := \|\delta_V - du\|_{C^1}^2
= \langle \delta_V - du,\ \delta_V - du\rangle_{C^1},
\qquad u\in C^0.
\end{equation}
Assume that the image space $\mathcal{E}*{0} = \mathrm{im}(d)$ is closed in $C^{1}$. By Corollary~\ref{cor:td-decomposition}, for any given $\delta*{V} \in C^{1}$ there exist $u_{V}^{*} \in C^{0}$ and a unique $\delta_{V}^{\mathrm{res}} \in \overline{\mathcal{E}}^{\perp}$ (which coincides with $\mathcal{E}_{0}^{\perp}$ under the closed-range assumption) such that

\begin{equation}
\delta_V = d u_V^* + \delta_V^{\mathrm{res}},\qquad
\delta_V^{\mathrm{res}}\in\mathcal E_0^\perp,
\end{equation}
and
\begin{equation}
u_V^*\in \arg\min_{u\in C^0} \|\delta_V - du\|_{C^1}^2
\quad\Longleftrightarrow\quad
J(u_V^*) = \inf_{u\in C^0} J(u).
\end{equation}
Thus, at least one $u \in C^{0}$ attains the minimum of $J$; that is, $J$ admits a minimizer over $C^{0}$. This establishes the existence part of (1).

{\bf Structure of the set of minimizers.}

We now describe the structure of all minimizers. Since $d$ is a linear operator, for any $u, v\in C^0$ with $v\in\ker(d)$,

\begin{equation}
d(u+v) = du + dv = du,
\end{equation}
Therefore,
\begin{equation}
J(u+v) = \|\delta_V - d(u+v)\|_{C^1}^2
= \|\delta_V - du\|_{C^1}^2
= J(u).
\end{equation}
In other words, for any $u\in C^0$, the value of $J$ is completely invariant along directions in $\ker(d)$. This implies that if $u$ is a minimizer, then

\begin{equation}
u + v \text{ is also minimizers},\quad\forall\,v\in\ker(d).
\end{equation}

On the other hand, by the Hodge-type decomposition (the final part of Theorem~\ref{thm:hodge}), we already know that
\begin{equation}
\ker(\Delta_0) = \ker(d)\subseteq C^0.
\end{equation}
Hence, the full set of minimizers forms an affine subspace:
\begin{equation}
\mathcal M := u_0 + \ker(\Delta_0),
\end{equation}
where $u_0$ is any minimizer.

{\bf Restriction to the orthogonal complement $(\ker\Delta_0)^\perp$.}
Define
\begin{equation}
U := (\ker\Delta_0)^\perp \subset C^0.
\end{equation}
Since $\ker(\Delta_0)$ is a closed linear subspace of $C^0$, its orthogonal complement $U$ is also a closed linear subspace, and we have the orthogonal decomposition

\begin{equation}
C^0 = \ker(\Delta_0) \oplus U.
\end{equation}

Any $u \in C^0$ can be uniquely written as
\begin{equation}
u = u_{\parallel} + u_{\perp},
\quad u_{\parallel}\in U,\ u_{\perp}\in \ker(\Delta_0).
\end{equation}
Since $u_{\perp} \in \ker(d)$, we have
\begin{equation}
du = d u_{\parallel} + d u_{\perp} = d u_{\parallel},
\end{equation}
Therefore,
\begin{equation}
J(u) = \|\delta_V - du\|_{C^1}^2
= \|\delta_V - d u_{\parallel}\|_{C^1}^2
= J(u_{\parallel}).
\end{equation}
This shows that {\em for any $u\in C^0$, projecting it orthogonally onto $U$ leaves the value of $J$ completely unchanged.}

Thus, if $u_0$ is a minimizer of $J$ over $C^0$, then its decomposition
\begin{equation}
u_0 = u_{0,\parallel} + u_{0,\perp},\quad
u_{0,\parallel}\in U,\ u_{0,\perp}\in\ker(\Delta_0),
\end{equation}
satisfies
\begin{equation}
J(u_{0,\parallel}) = J(u_0) = \inf_{u\in C^0} J(u),
\end{equation}
and hence $u_{0,\parallel}$ is also a minimizer, with $u_{0,\parallel} \in U$.
This shows that {\em $J$ admits a minimizer on $U$, and any minimizer over $C^0$ projects to a minimizer in $U$.}

{\bf Uniqueness of the minimizer in $U$.}

We now use the additional assumption on $\Delta_0$: namely, that $\Delta_0 = d^* d$ is invertible on $U = (\ker\Delta_0)^\perp$. This is equivalent to saying that the restriction $\Delta_0|_U : U \to U$ is bijective and admits a bounded inverse $(\Delta_0|_U)^{-1} : U \to U$. Under this assumption, the quadratic form
\begin{equation}
q(u) := \langle \Delta_0 u, u\rangle_{C^0},
\qquad u\in U, 
\end{equation}
is strictly positive definite:  if $u\in U$ and $u\neq 0$, then $\Delta_0 u\neq 0$, hence

\begin{equation}
\langle \Delta_0 u, u\rangle_{C^0}
= \langle d^*du, u\rangle_{C^0}
= \langle du, du\rangle_{C^1}
= \|du\|_{C^1}^2 > 0.
\end{equation}
Therefore, there exists a constant $\alpha>0$ (for example, from the spectral lower bound of the bounded invertible operator)  such that
\begin{equation}
\langle \Delta_0 u, u\rangle_{C^0}
\ge \alpha \|u\|_{C^0}^2,
\qquad \forall\,u\in U.
\end{equation}
Using the bounded linearity of $d$ and $d^*$, we expand $J(u)$ as
\begin{equation}
\begin{aligned}
J(u)
&= \langle \delta_V - du,\ \delta_V - du\rangle_{C^1} \\
&= \langle \delta_V,\delta_V\rangle_{C^1}
   - 2\langle \delta_V, du\rangle_{C^1}
   + \langle du, du\rangle_{C^1}.
\end{aligned}
\end{equation}
The middle term can be written back in $C^0$ via the adjoint:
\begin{equation}
\langle \delta_V, du\rangle_{C^1}
= \langle d^*\delta_V, u\rangle_{C^0},
\end{equation}
and the last term satisfies
\begin{equation}
\langle du,du\rangle_{C^1}
= \langle d^*du, u\rangle_{C^0}
= \langle \Delta_0 u, u\rangle_{C^0}.
\end{equation}
So
\begin{equation}
J(u)
= \|\delta_V\|_{C^1}^2
  - 2\langle d^*\delta_V, u\rangle_{C^0}
  + \langle \Delta_0 u, u\rangle_{C^0}.
\end{equation}
Restricting $u$ to $U$, the right-hand side becomes a strictly convex quadratic functional over $u\in U$, since the quadratic term is controlled by the strictly positive definite operator $\Delta_0$. Standard Hilbert-space theory for quadratic forms implies that, on the closed linear subspace $U$, this functional is {\em strictly convex} and {\em coercive}, and therefore admits a unique minimizer on $U$.

Consequently, $J(u)$ has a unique minimizer on $U = (\ker\Delta_0)^\perp$. We denote this unique minimizer by $u_V^*$. This establishes the conclusion (1) of the theorem.

{\bf First-order optimality condition and Poisson equation.}

We now derive the first-order optimality condition for $u_V^*$. For any $u\in C^0$ and any direction $h\in C^0$, consider the variation
\begin{equation}
\phi(\varepsilon) := J(u + \varepsilon h),
\qquad \varepsilon\in\mathbb R.
\end{equation}
we have
\begin{equation}
\begin{aligned}
J(u + \varepsilon h)
&= \|\delta_V - d(u+\varepsilon h)\|_{C^1}^2 \\
&= \big\langle \delta_V - du - \varepsilon dh,\ \delta_V - du - \varepsilon dh\big\rangle_{C^1}.\\
&= \langle \delta_V - du,\ \delta_V - du\rangle_{C^1}
   - 2\varepsilon\langle \delta_V - du,\ dh\rangle_{C^1}
   + \varepsilon^2 \langle dh, dh\rangle_{C^1}.
\end{aligned}
\end{equation}

Thus, the Gâteaux derivative at $\varepsilon=0$ is
\begin{equation}
\frac{\mathrm d}{\mathrm d\varepsilon}\Big|_{\varepsilon=0} J(u + \varepsilon h)
= -2\langle \delta_V - du,\ dh\rangle_{C^1}.
\end{equation}
Using the definition of the adjoint $d^*$, this becomes
\begin{equation}
\langle \delta_V - du,\ dh\rangle_{C^1}
= \langle d^*(\delta_V - du), h\rangle_{C^0},
\end{equation}
So
\begin{equation}
\frac{\mathrm d}{\mathrm d\varepsilon}\Big|_{\varepsilon=0} J(u + \varepsilon h)
= -2\big\langle d^*(\delta_V - du),\, h\big\rangle_{C^0}.
\end{equation}

Now set $u = u_V^*$, the unique minimizer of $J$ over $U = (\ker\Delta_0)^\perp$. For any $h\in C^0$, in particular any $h\in U$, the necessary condition for a minimizer is

\begin{equation}
\frac{\mathrm d}{\mathrm d\varepsilon}\Big|_{\varepsilon=0} J(u_V^* + \varepsilon h) = 0,
\quad\forall\,h\in U,
\end{equation}
which is
\begin{equation}
\big\langle d^*(\delta_V - d u_V^*),\, h\big\rangle_{C^0} = 0,
\quad\forall\,h\in U.
\end{equation}
Define
\begin{equation}
w := d^*(\delta_V - d u_V^*)
\end{equation}
Then
\begin{equation}
\langle w, h\rangle_{C^0} = 0,\quad\forall\,h\in U,
\end{equation}
So $w\in U^\perp = \ker\Delta_0$

On the other hand, since $u_V^*$ is also a minimizer over all of $C^0$ (it differs from any other minimizer only by an element of $\ker(\Delta_0)=\ker(d)$, along which $J$ is constant), we may take variations $h$ over the entire space $C^0$, obtaining

\begin{equation}
\big\langle d^*(\delta_V - d u_V^*),\, h\big\rangle_{C^0} = 0,
\quad\forall\,h\in C^0.
\end{equation}
So we can get (2).
If an element of a Hilbert space has zero inner product with all $h\in C^0$,  
then by the Riesz representation theorem it must be the zero vector:
\begin{equation}
d^*(\delta_V - d u_V^*)
= d^*(\delta_V - d\tilde u),
\end{equation}

Hence,
\begin{equation}
\big\langle d^*(\delta_V - d u_V^*),\, h\big\rangle_{C^0} = 0,
\end{equation}
\begin{equation}
d^*(\delta_V - d u_V^*) = 0 \quad\text{in } C^0.
\end{equation}
So we can get:
\begin{equation}
d^*\delta_V = d^* d\,u_V^* = \Delta_0 u_V^*.
\end{equation}
This is precisely the Poisson-type equation stated in the theorem, holding in the weak form:
\begin{equation}
\langle d^*\delta_V, h\rangle_{C^0}
= \langle \Delta_0 u_V^*, h\rangle_{C^0},
\quad\forall\,h\in C^0.
\end{equation}

Finally, recall that $J$ has a unique minimizer on  $U = (\ker\Delta_0)^\perp$, namely $u_V^*$,  and that all minimizers in $C^0$ may be written as  $u_V^* + \ker(\Delta_0)$.  Thus within $U$, the minimizer is unique, completing the proof of (1) and (2).

\end{proof}

\subsection{Proof of Theorem~\ref{thm:empirical-consistency}}

\begin{proof}

In this theorem we fix the policy $\pi$ and the parameter $\theta$, therefore $\delta_{V_\theta}\in C^1$ is also fixed. For notational simplicity, denote
\begin{equation}
L(u)\;:=\;\|\delta_{V_\theta} - du\|_{C^1}^2,
\qquad u\in C^0,
\end{equation}
and for each parameter $\phi\in\Phi$, the parameterized version of the potential function $U_\phi\in C^0$ is written as
\begin{equation}
\mathcal{L}_\infty(\phi)\;:=\;\|\delta_{V_\theta} - dU_\phi\|_{C^1}^2
= L(U_\phi),
\end{equation}
and the empirical topological loss $\hat{\mathcal{L}}_{\mathrm{topo}}(\theta,\phi)$ is the empirical estimate of $\mathcal{L}_\infty(\phi)$ under the empirical distribution. For convenience, denote
\begin{equation}
\hat{\mathcal{L}}_N(\phi)
:=\hat{\mathcal{L}}_{\mathrm{topo}}(\theta,\phi)
\quad\text{and}\quad
\mathcal{L}(\phi)
:=\mathcal{L}_\infty(\phi)
= \|\delta_{V_\theta} - dU_\phi\|_{C^1}^2.
\end{equation}

First consider the true optimization problem $\min_{u\in C^0} L(u)$. By the previous Hodge-type decomposition and the Poisson theorem (Theorem~\ref{thm:hodge} and Theorem~\ref{thm:poisson}), under the assumption that the range of $d$ is closed and that $\Delta_0 = d^* d$ is invertible on  $(\ker\Delta_0)^\perp$, there exists a canonical potential $u_{V_\theta}^* \in C^0$ and a unique residual $\delta_{V_\theta}^{\mathrm{res}} \in \mathcal E_0^\perp$ such that
\begin{equation}
\delta_{V_\theta} = d u_{V_\theta}^* + \delta_{V_\theta}^{\mathrm{res}},
\qquad
\delta_{V_\theta}^{\mathrm{res}}\in\mathcal E_0^\perp,
\end{equation}

and $u_{V_\theta}^*$ is a solution of $\min_{u\in C^0} \|\delta_{V_\theta} - du\|_{C^1}^2$, and is unique within the subspace $(\ker\Delta_0)^\perp$. Furthermore, for any $u\in C^0$, using $\delta_{V_\theta}^{\mathrm{res}} \perp \mathcal E_0 = \mathrm{im}(d)$, we may decompose $\delta_{V_\theta} - du = \delta_{V_\theta}^{\mathrm{res}} + d(u_{V_\theta}^* - u)$ into two orthogonal components, hence
\begin{equation}
\begin{aligned}
L(u)
&= \|\delta_{V_\theta} - du\|_{C^1}^2 \\
&= \|\delta_{V_\theta}^{\mathrm{res}}\|_{C^1}^2
   + \|d(u_{V_\theta}^* - u)\|_{C^1}^2.
\end{aligned}
\end{equation}
Therefore, we immediately obtain
\begin{equation}
\min_{u\in C^0} L(u)
= \|\delta_{V_\theta}^{\mathrm{res}}\|_{C^1}^2,
\end{equation}
and the minimum is attained if and only if $u - u_{V_\theta}^* \in \ker(d)=\ker(\Delta_0)$.

Next we connect the minimization of the true potential function with the parameterized class $\{U_\phi:\phi\in\Phi\}$. Assumption (1) gives the density of $\{U_\phi\}$ in $L^2(\nu_\pi)$, i.e., for any $u\in C^0$ and any $\varepsilon>0$, there exists $\phi\in\Phi$ such that
\begin{equation}
\|U_\phi - u\|_{C^0} = \Big(\int |U_\phi(s) - u(s)|^2\,\nu_\pi(ds)\Big)^{1/2} < \varepsilon.
\end{equation}
Since $d:C^0\to C^1$ is a bounded linear operator (Lemma~\ref{lem:d-adjoint}), the mapping $u\mapsto L(u)$ is continuous on $C^0$. Specifically,
\begin{equation}
\begin{aligned}
|L(u) - L(v)|
&= \big|\|\delta_{V_\theta} - du\|_{C^1}^2 
      - \|\delta_{V_\theta} - dv\|_{C^1}^2\big| \\
&= \big|\langle \delta_{V_\theta} - du,\ \delta_{V_\theta} - du\rangle_{C^1}
      - \langle \delta_{V_\theta} - dv,\ \delta_{V_\theta} - dv\rangle_{C^1}\big| \\
&= \big|\langle dv-du,\ 2\delta_{V_\theta} - du - dv\rangle_{C^1}\big| \\
&\le 2\|d(u-v)\|_{C^1}\,\big(\|\delta_{V_\theta} - du\|_{C^1}
                           + \|\delta_{V_\theta} - dv\|_{C^1}\big),
\end{aligned}
\end{equation}
and $\|d(u-v)\|_{C^1}\le C_d\|u-v\|_{C^0}$. Therefore, whenever $u_n\to u$ in the $C^0$ sense, we must have $L(u_n)\to L(u)$. Using this, we may prove that the infimum of the minimum over the parameterized class and the minimum over the whole $C^0$ coincide: on the one hand, since $\{U_\phi\}$ is a subset of $C^0$, we have
\begin{equation}
\inf_{\phi\in\Phi} \mathcal{L}(\phi)
= \inf_{\phi\in\Phi} L(U_\phi)
\;\ge\; \inf_{u\in C^0} L(u)
= \|\delta_{V_\theta}^{\mathrm{res}}\|_{C^1}^2.
\end{equation}
On the other hand, for any $\varepsilon>0$, choose some $u_\varepsilon\in C^0$ such that $L(u_\varepsilon)\le \|\delta_{V_\theta}^{\mathrm{res}}\|_{C^1}^2 + \varepsilon/2$, and then use the density to select $\phi_\varepsilon$ such that $\|U_{\phi_\varepsilon} - u_\varepsilon\|_{C^0}$ is sufficiently small. By the continuity of $L$, we obtain
\begin{equation}
\mathcal{L}(\phi_\varepsilon)
= L(U_{\phi_\varepsilon})
\le L(u_\varepsilon) + \varepsilon/2
\le \|\delta_{V_\theta}^{\mathrm{res}}\|_{C^1}^2 + \varepsilon.
\end{equation}
So
\begin{equation}
\inf_{\phi\in\Phi} \mathcal{L}(\phi)
\le \|\delta_{V_\theta}^{\mathrm{res}}\|_{C^1}^2 + \varepsilon,
\end{equation}
Since $\varepsilon>0$ is arbitrary, it follows that
\begin{equation}
\inf_{\phi\in\Phi} \mathcal{L}(\phi)
= \|\delta_{V_\theta}^{\mathrm{res}}\|_{C^1}^2
= \min_{u\in C^0} \|\delta_{V_\theta} - du\|_{C^1}^2.
\end{equation}

Next we introduce the empirical loss. Let $\mathcal{D}_N$ be a dataset of samples generated from $\mu_\pi$. For any square-integrable function $g\in L^2(\mu_\pi)$, denote its empirical mean as
\begin{equation}
\widehat{\mathbb{E}}_N[g]
:= \frac1N \sum_{(s,a,s')\in\mathcal{D}_N} g(s,a,s'),
\end{equation}
and its true value under $\mu_\pi$ as
\begin{equation}
\mathbb{E}_{\mu_\pi}[g]
:= \int g\,d\mu_\pi.
\end{equation}
The empirical topological loss can be written as
\begin{equation}
\hat{\mathcal{L}}_N(\phi)
= \widehat{\mathbb{E}}_N\!\Big[\big(\delta_{V_\theta}(S,A,S') - (dU_\phi)(S,A,S')\big)^2\Big],
\end{equation}
and the corresponding true version is
\begin{equation}
\mathcal{L}(\phi)
= \mathbb{E}_{\mu_\pi}\!\Big[\big(\delta_{V_\theta}(S,A,S') - (dU_\phi)(S,A,S')\big)^2\Big]
= \|\delta_{V_\theta} - dU_\phi\|_{C^1}^2.
\end{equation}
We set
\begin{equation}
\ell_\phi(s,a,s')
:= \big(\delta_{V_\theta}(s,a,s') - (dU_\phi)(s,a,s')\big)^2,
\end{equation}
then $\hat{\mathcal{L}}_N(\phi) = \widehat{\mathbb{E}}_N[\ell_\phi]$, and $\mathcal{L}(\phi) = \mathbb{E}_{\mu_\pi}[\ell_\phi]$. Assumption (2) states that for every square-integrable function $g$, the empirical mean $\widehat{\mathbb{E}}_N[g]$ converges almost surely to $\mathbb{E}_{\mu_\pi}[g]$; we apply this to the family $\{\ell_\phi:\phi\in\Phi\}$. To obtain consistency of empirical risk minimization, we need a uniform strong law of large numbers over $\phi$: we assume that on a $\mu_\pi$-almost sure $\omega$, $\sup_{\phi\in\Phi}\big|\hat{\mathcal{L}}_N(\phi) - \mathcal{L}(\phi)\big|
\;\xrightarrow[N\to\infty]{}\; 0.$ This is the standard uniform LLN in empirical risk theory (e.g., when the function class is bounded and satisfies appropriate measurability and capacity conditions, which we include in Assumption (2)). On such an $\omega$, all convergence below may be interpreted as deterministic pointwise convergence.

Let
\begin{equation}
\Delta_N := \sup_{\phi\in\Phi}
\big|\hat{\mathcal{L}}_N(\phi) - \mathcal{L}(\phi)\big|,
\end{equation}
then $\Delta_N\to 0$. For each $N$, Assumption (3) provides a global minimizer of the empirical risk
\begin{equation}
\phi_N^*\in\arg\min_{\phi\in\Phi}\hat{\mathcal{L}}_N(\phi).
\end{equation}
Using $\Delta_N$, we can relate the empirical minimum and the true minimum. First,
\begin{equation}
\mathcal{L}(\phi_N^*)
\le \hat{\mathcal{L}}_N(\phi_N^*) + \Delta_N
= \min_{\phi\in\Phi}\hat{\mathcal{L}}_N(\phi) + \Delta_N.
\end{equation}
On the other hand, for any $\phi\in\Phi$,
\begin{equation}
\hat{\mathcal{L}}_N(\phi)
\le \mathcal{L}(\phi) + \Delta_N,
\end{equation}
So
\begin{equation}
\min_{\phi\in\Phi}\hat{\mathcal{L}}_N(\phi)
\le \mathcal{L}(\phi) + \Delta_N.
\end{equation}
Taking the infimum over $\phi$ on the right-hand side gives
\begin{equation}
\min_{\phi\in\Phi}\hat{\mathcal{L}}_N(\phi)
\le \inf_{\phi\in\Phi}\mathcal{L}(\phi) + \Delta_N
= \|\delta_{V_\theta}^{\mathrm{res}}\|_{C^1}^2 + \Delta_N.
\end{equation}
Substituting this back into the previous inequality yields
\begin{equation}
\mathcal{L}(\phi_N^*)
\le \|\delta_{V_\theta}^{\mathrm{res}}\|_{C^1}^2 + 2\Delta_N.
\end{equation}
On the other hand, since
\begin{equation}
\mathcal{L}(\phi_N^*)
\ge \inf_{\phi\in\Phi}\mathcal{L}(\phi)
= \|\delta_{V_\theta}^{\mathrm{res}}\|_{C^1}^2,
\end{equation}
we obtain
\begin{equation}
\|\delta_{V_\theta}^{\mathrm{res}}\|_{C^1}^2
\;\le\; \mathcal{L}(\phi_N^*)
\;\le\; \|\delta_{V_\theta}^{\mathrm{res}}\|_{C^1}^2 + 2\Delta_N.
\end{equation}
Since $\Delta_N\to0$ as $N\to\infty$, it follows that
\begin{equation}
\mathcal{L}(\phi_N^*)
\xrightarrow[N\to\infty]{}
\|\delta_{V_\theta}^{\mathrm{res}}\|_{C^1}^2
= \min_{u\in C^0} \|\delta_{V_\theta} - du\|_{C^1}^2.
\end{equation}
Moreover, since
\begin{equation}
\big|\hat{\mathcal{L}}_N(\phi_N^*) - \mathcal{L}(\phi_N^*)\big|
\le \Delta_N\to0,
\end{equation}
we obtain
\begin{equation}
\hat{\mathcal{L}}_N(\theta,\phi_N^*)
= \hat{\mathcal{L}}_N(\phi_N^*)
\longrightarrow
\min_{u\in C^0} \|\delta_{V_\theta} - du\|_{C^1}^2
= \|\delta_{V_\theta}^{\mathrm{res}}\|_{C^1}^2,
\end{equation}
which proves the equality in the theorem concerning the convergence of the loss values.

Finally we prove that $U_{\phi_N^*}$ converges in $L^2(\nu_\pi)$ to some solution $u_{V_\theta}^*$. As mentioned earlier, we take $u_{V_\theta}^*\in (\ker\Delta_0)^\perp$ as the canonical solution, so that it is the unique minimizer of $L(u)$ in the subspace $(\ker\Delta_0)^\perp$. Note that for any $u\in C^0$, one may write
\begin{equation}
u = u_\parallel + u_\perp,\quad
u_\parallel\in(\ker\Delta_0)^\perp,\ u_\perp\in\ker(\Delta_0),
\end{equation}
and $L(u) = L(u_\parallel)$. Therefore, without changing the loss, we may replace each $U_{\phi_N^*}$ by its orthogonal projection onto $(\ker\Delta_0)^\perp$, and still denote it by $U_{\phi_N^*}$. Thus we may assume
\begin{equation}
U_{\phi_N^*}\in(\ker\Delta_0)^\perp,
\qquad\forall N.
\end{equation}
On this subspace, we have the following strong convexity structure: for any $u\in(\ker\Delta_0)^\perp$,
\begin{equation}
L(u) - L(u_{V_\theta}^*)
= \|\delta_{V_\theta} - du\|_{C^1}^2
  - \|\delta_{V_\theta} - du_{V_\theta}^*\|_{C^1}^2
= \|d(u - u_{V_\theta}^*)\|_{C^1}^2.
\end{equation}

Since $\Delta_0$ is invertible on $(\ker\Delta_0)^\perp$, spectral theory implies that there exists a constant $\alpha>0$ such that for all $w\in(\ker\Delta_0)^\perp$,
\begin{equation}
\langle \Delta_0 w, w\rangle_{C^0}
\ge \alpha \|w\|_{C^0}^2.
\end{equation}
Taking $w = u - u_{V_\theta}^*$ yields
\begin{equation}
\begin{aligned}
L(u) - L(u_{V_\theta}^*)
&= \|d(u - u_{V_\theta}^*)\|_{C^1}^2 \\
&= \langle d^*d(u - u_{V_\theta}^*),\,u - u_{V_\theta}^*\rangle_{C^0} \\
&= \langle \Delta_0 (u - u_{V_\theta}^*),\,u - u_{V_\theta}^*\rangle_{C^0} \\
&\ge \alpha \|u - u_{V_\theta}^*\|_{C^0}^2.
\end{aligned}
\end{equation}
Applying this to $u=U_{\phi_N^*}$ gives
\begin{equation}
\|U_{\phi_N^*} - u_{V_\theta}^*\|_{C^0}^2
\le \frac{1}{\alpha}\big(L(U_{\phi_N^*}) - L(u_{V_\theta}^*)\big).
\end{equation}
And since $L(U_{\phi_N^*}) = \mathcal{L}(\phi_N^*) \to \|\delta_{V_\theta}^{\mathrm{res}}\|_{C^1}^2 = L(u_{V_\theta}^*)$, the right-hand side tends to $0$, hence
\begin{equation}
\|U_{\phi_N^*} - u_{V_\theta}^*\|_{C^0}
= \Big(\int |U_{\phi_N^*}(s) - u_{V_\theta}^*(s)|^2\,\nu_\pi(ds)\Big)^{1/2}
\;\xrightarrow[N\to\infty]{}\; 0.
\end{equation}
This is precisely the convergence of $U_{\phi_N^*}$ to $u_{V_\theta}^*$ in $L^2(\nu_\pi)$. Since $u_{V_\theta}^*$ is exactly the canonical solution of the variational problem $\min_{u\in C^0} \|\delta_{V_\theta} - du\|_{C^1}^2$ (unique in $(\ker\Delta_0)^\perp$), this yields the second conclusion of the theorem.

In summary, the empirical topological loss at the empirical minimizer converges to the true minimal error $\|\delta_{V_\theta}^{\mathrm{res}}\|_{C^1}^2$, and the corresponding potential functions $U_{\phi_N^*}$ converge in $L^2(\nu_\pi)$ to the optimal solution $u_{V_\theta}^*$. This completes the proof of the theorem.

\end{proof}

\subsection{Proof of Theorem~\ref{thm:perfect-mdp}}
\begin{proof}

We first recall the standard Bellman equation for a fixed policy $\pi$ in a Markov decision process. The environment is given by a tuple $(\mathcal S,\mathcal A,P,r,\gamma)$, where $P(\cdot\mid s,a)$ is the transition kernel, $r(s,a,s')$ is the immediate reward, and $\gamma\in(0,1)$ is the discount factor. For a fixed policy $\pi$, the value function $V^\pi:\mathcal S\to\mathbb R$ is defined by
\begin{equation}
V^\pi(s)
=
\mathbb E_\pi\Big[\,\sum_{t=0}^\infty \gamma^t r(S_t,A_t,S_{t+1})\ \Big|\ S_0=s\Big],
\end{equation}
where the expectation is taken over trajectories generated by $S_0=s$, $A_t\sim\pi(\cdot\mid S_t)$, and $S_{t+1}\sim P(\cdot\mid S_t,A_t)$ for all $t\ge0$. It is well known (and can be shown by conditioning on the first step) that $V^\pi$ satisfies the Bellman equation
\begin{equation}\label{eq:bellman-Vpi}
V^\pi(s)
=
\mathbb E\big[r(S,A,S') + \gamma V^\pi(S')\ \big|\ S=s\big],
\quad\forall s\in\mathcal S,
\end{equation}
where under the conditional expectation we have $A\sim\pi(\cdot\mid s)$ and $S'\sim P(\cdot\mid s,A)$. Equivalently, we may write the right-hand side as an explicit nested expectation:
\begin{equation}
\mathbb E\big[r(S,A,S') + \gamma V^\pi(S')\ \big|\ S=s\big]
=
\mathbb E_{A\sim\pi(\cdot\mid s)}\,
\mathbb E_{S'\sim P(\cdot\mid s,A)}\big[r(s,A,S') + \gamma V^\pi(S')\big].
\end{equation}

By definition in the theorem, the mean TD error (or Bellman defect) of a generic value function $V$ at the state $s$ is
\begin{equation}
\bar\delta_V(s)
:=
\mathbb E\big[r(S,A,S') + \gamma V(S') - V(S)\ \big|\ S=s\big].
\end{equation}
We now specialize to the case $V=V^\pi$. Substituting $V^\pi$ into the above definition gives
\begin{equation}
\begin{aligned}
\bar\delta_{V^\pi}(s)
&=
\mathbb E\big[r(S,A,S') + \gamma V^\pi(S') - V^\pi(S)\ \big|\ S=s\big] \\
&=
\mathbb E\big[r(S,A,S') + \gamma V^\pi(S')\ \big|\ S=s\big]
   - \mathbb E\big[V^\pi(S)\ \big|\ S=s\big].
\end{aligned}
\end{equation}
The second term is easy to simplify: conditioning on $S=s$ forces $S$ to be equal to $s$ almost surely, so
\begin{equation}
\mathbb E\big[V^\pi(S)\ \big|\ S=s\big] = V^\pi(s).
\end{equation}
For the first term, we recognize exactly the right-hand side of the Bellman equation \eqref{eq:bellman-Vpi}. Indeed,
\begin{equation}
\mathbb E\big[r(S,A,S') + \gamma V^\pi(S')\ \big|\ S=s\big]
= V^\pi(s),
\end{equation}
by the very definition of $V^\pi$ as the unique solution of the Bellman equation. Combining these two identities, we obtain
\begin{equation}
\bar\delta_{V^\pi}(s)
=
V^\pi(s) - V^\pi(s)
= 0,
\quad\forall s\in\mathcal S.
\end{equation}
Thus the mean TD error of the exact value function vanishes identically at every state, which proves the equality
\begin{equation}
\bar\delta_{V^\pi}(s)\equiv 0.
\end{equation}

We now turn to the projection statement. The mean-field differential operator $\tilde d$ is defined, for each potential function $u\in C^0$, by
\begin{equation}
(\tilde d u)(s)
:=
\mathbb E\big[u(S') - \gamma u(s)\ \big|\ S=s\big],
\end{equation}
where the expectation is taken over $A\sim\pi(\cdot\mid s)$ and $S'\sim P(\cdot\mid s,A)$ as above. Consider the optimization problem
\begin{equation}
\min_{u\in C^0}
\big\|
\bar\delta_{V^\pi} - \tilde d u
\big\|_{L^2(\mathcal S)}^2,
\end{equation}
where the $L^2(\mathcal S)$-norm is taken with respect to the state visitation measure induced by $\pi$ (any fixed probability measure on $\mathcal S$ would suffice for the argument). Since we have already shown that
\begin{equation}
\bar\delta_{V^\pi}(s) = 0,\quad\forall s,
\end{equation}
the objective reduces to
\begin{equation}
\big\|
\bar\delta_{V^\pi} - \tilde d u
\big\|_{L^2(\mathcal S)}^2
=
\|\tilde d u\|_{L^2(\mathcal S)}^2
=
\int_{\mathcal S} \big|(\tilde d u)(s)\big|^2\,\nu_\pi(ds),
\end{equation}
where $\nu_\pi$ denotes the chosen state distribution. This quantity is always nonnegative, so its minimum over $u\in C^0$ is at least zero. On the other hand, if we take the zero potential $u\equiv 0$, then
\begin{equation}
(\tilde d u)(s)
=
\mathbb E[u(S') - \gamma u(s)\mid S=s]
=
\mathbb E[0-0\mid S=s] = 0,
\end{equation}
for all $s\in\mathcal S$. Hence $\tilde d u\equiv 0$ and
\begin{equation}
\big\|
\bar\delta_{V^\pi} - \tilde d u
\big\|_{L^2(\mathcal S)}^2
=
\|0\|_{L^2(\mathcal S)}^2
= 0.
\end{equation}

This shows that the minimum value is indeed equal to $0$, and is attained at least by the potential function $u\equiv 0$. Since $0$ is the lower bound of a nonnegative quantity, no other $u$ can achieve a value smaller than $0$, therefore
\begin{equation}
\min_{u\in C^0}
\big\|
\bar\delta_{V^\pi} - \tilde d u
\big\|_{L^2(\mathcal S)}^2
= 0.
\end{equation}

In summary, under a perfect MDP model and when the exact value function $V^\pi$ is used, the averaged TD error is identically zero at the state level, and thus its minimal projection error under the corresponding mean-field difference operator is also zero. The theorem is proved.

\end{proof}

\subsection{Proof of Corollary~\ref{cor:pythagorean}}
\begin{proof}
We work in the Hilbert space $C^1$, whose inner product is denoted by $\langle\cdot,\cdot\rangle_{C^1}$, and whose norm is $\|f\|_{C^1}^2 = \langle f,f\rangle_{C^1}$. Under the assumptions of Corollary~\ref{cor:td-decomposition}, the TD error $\delta_V\in C^1$ admits a Hodge-type decomposition
\begin{equation}
\delta_V = d u_V^\star + \delta_V^{\mathrm{res}},
\end{equation}
where $d u_V^\star\in\mathcal E=\mathrm{im}(d)$ and $\delta_V^{\mathrm{res}}\in\mathcal E^\perp$; that is, $d u_V^\star$ and $\delta_V^{\mathrm{res}}$ are orthogonal in $C^1$, i.e.,
\begin{equation}
\langle d u_V^\star,\ \delta_V^{\mathrm{res}}\rangle_{C^1} = 0.
\end{equation}
The Pythagoras identity is exactly the norm characterization for orthogonal decompositions in Hilbert spaces. Starting from $\|\delta_V\|_{C^1}^2$ and substituting the decomposition above:
\begin{equation}
\|\delta_V\|_{C^1}^2
= \langle \delta_V,\delta_V\rangle_{C^1}
= \big\langle d u_V^\star + \delta_V^{\mathrm{res}},\ d u_V^\star + \delta_V^{\mathrm{res}}\big\rangle_{C^1}.
\end{equation}
Using bilinearity (or symmetric bilinearity in a real Hilbert space), expand the right-hand side:
\begin{equation}
\begin{aligned}
\|\delta_V\|_{C^1}^2
&= \langle d u_V^\star,\ d u_V^\star\rangle_{C^1}
 + \langle d u_V^\star,\ \delta_V^{\mathrm{res}}\rangle_{C^1}
 + \langle \delta_V^{\mathrm{res}},\ d u_V^\star\rangle_{C^1}
 + \langle \delta_V^{\mathrm{res}},\ \delta_V^{\mathrm{res}}\rangle_{C^1}.
\end{aligned}
\end{equation}

In real inner product spaces $\langle x,y\rangle = \langle y,x\rangle$, so the two cross terms are equal; by orthogonality $\langle d u_V^\star,\delta_V^{\mathrm{res}}\rangle_{C^1}=0$, both cross terms are zero. Therefore the expression simplifies to
\begin{equation}
\begin{aligned}
\|\delta_V\|_{C^1}^2
&= \langle d u_V^\star,\ d u_V^\star\rangle_{C^1}
 + \langle \delta_V^{\mathrm{res}},\ \delta_V^{\mathrm{res}}\rangle_{C^1}.\\
&= \|d u_V^\star\|_{C^1}^2
 + \|\delta_V^{\mathrm{res}}\|_{C^1}^2,
\end{aligned}
\end{equation}
which is precisely the Pythagorean identity stated in \eqref{eq:pythagorean}.

Next we explain the term irreducible excess TD error. By Corollary~\ref{cor:td-decomposition}, $u_V^\star$ is a solution to the variational problem $\min_{u\in C^0} \|\delta_V - du\|_{C^1}^2$, and

\begin{equation}
\delta_V = d u_V^\star + \delta_V^{\mathrm{res}},
\qquad
\delta_V^{\mathrm{res}}\in\mathcal E^\perp.
\end{equation}
For any other potential function $u\in C^0$, consider the corresponding residual TD error
\begin{equation}
\delta_V - du
= \big(d u_V^\star + \delta_V^{\mathrm{res}}\big) - du
= \delta_V^{\mathrm{res}} + d(u_V^\star - u).
\end{equation}
Note that $d(u_V^\star-u)\in\mathcal E$, whereas $\delta_V^{\mathrm{res}}\in\mathcal E^\perp$, so these two terms are also orthogonal in $C^1$. Thus we may apply the same Pythagorean expansion:
\begin{equation}
\begin{aligned}
\|\delta_V - du\|_{C^1}^2
&= \|\delta_V^{\mathrm{res}} + d(u_V^\star - u)\|_{C^1}^2 \\
&= \|\delta_V^{\mathrm{res}}\|_{C^1}^2
 + \|d(u_V^\star - u)\|_{C^1}^2.
\end{aligned}
\end{equation}
Since $\|d(u_V^\star - u)\|_{C^1}^2 \ge 0$, we immediately obtain that for any $u\in C^0$,
\begin{equation}
\|\delta_V - du\|_{C^1}^2
\;\ge\;
\|\delta_V^{\mathrm{res}}\|_{C^1}^2,
\end{equation}

and equality holds if and only if $d(u_V^\star - u)=0$ (that is, $u$ differs from $u_V^\star$ only by an element in $\ker(d)$). Therefore, no matter how we adjust the potential function $u$, it is impossible to reduce the $C^1$-norm of the TD error $\delta_V - du$ below $\|\delta_V^{\mathrm{res}}\|_{C^1}$, while choosing $u=u_V^\star$ (or any $u_V^\star + v$, $v\in\ker(d)$) achieves this lower bound. In other words, $\|\delta_V^{\mathrm{res}}\|_{C^1}$ is precisely the irreducible TD error that cannot be further eliminated by adjusting the potential function $u$, and is the excess TD error represented by the topological residual. The corollary is thus proved.

\end{proof}

\subsection{Proof of Theorem~\ref{thm:stability-td}}
\begin{proof}
We work on the Hilbert spaces $C^1$ and $C^0$, whose inner products are denoted by $\langle\cdot,\cdot\rangle_{C^1}$ and $\langle\cdot,\cdot\rangle_{C^0}$, respectively, 
and whose corresponding norms are $\|f\|_{C^1}^2 = \langle f,f\rangle_{C^1}$ and 
$\|u\|_{C^0}^2 = \langle u,u\rangle_{C^0}$. The operator $d:C^0\to C^1$ is a bounded linear operator, $d^*:C^1\to C^0$ is its Hilbert adjoint, and $\Delta_0 = d^*d$ is the Hodge Laplacian on $C^0$. Under the assumptions of the theorem, for each $\delta\in C^1$ there exists a canonical optimal potential $u_\delta^\star\in C^0$, which satisfies the Poisson equation
\begin{equation}
d^* \delta = \Delta_0 u_\delta^\star,
\end{equation}
and defining the mapping $T:C^1\to C^0,\qquad T(\delta) := u_\delta^\star$, the operator $T$ is a bounded linear operator. Since $T$ is linear, for any $\delta_1,\delta_2\in C^1$ and scalars $\alpha,\beta\in\mathbb R$, we have

\begin{equation}
T(\alpha\delta_1+\beta\delta_2)
= \alpha T(\delta_1) + \beta T(\delta_2),
\end{equation}
which is the basis for using $T(\delta_1-\delta_2)$ later. The boundedness of $T$ means that there exists a finite constant $C_{\text{topo}}$ such that for all $\delta\in C^1$,
\begin{equation}
\|T(\delta)\|_{C^0}\;\le\; C_{\text{topo}}\,\|\delta\|_{C^1},
\end{equation}
and this constant can be defined as the operator norm
\begin{equation}
C_{\text{topo}} = \|T\|_{\mathrm{op}}
:= \sup_{\delta\in C^1,\ \delta\neq 0} \frac{\|T(\delta)\|_{C^0}}{\|\delta\|_{C^1}}.
\end{equation}

Now fix two TD error fields $\delta_1,\delta_2\in C^1$. Their corresponding canonical optimal potentials are $u_1^\star = T(\delta_1)$ and $u_2^\star = T(\delta_2)$. Using the linearity of $T$, we can directly write
\begin{equation}
u_1^\star - u_2^\star
= T(\delta_1) - T(\delta_2)
= T(\delta_1 - \delta_2).
\end{equation}
Taking the $C^0$-norm on both sides and using the boundedness of $T$, we obtain
\begin{equation}
\|u_1^\star - u_2^\star\|_{C^0}
= \|T(\delta_1 - \delta_2)\|_{C^0}
\le C_{\text{topo}} \,\|\delta_1 - \delta_2\|_{C^1}.
\end{equation}
This is exactly inequality \eqref{eq:stability-u}, showing that the optimal potential is Lipschitz continuous with respect to the TD field, with Lipschitz constant controlled by the operator norm $C_{\text{topo}}$.

Next we consider the corresponding residual terms. By definition, for $k=1,2$,
\begin{equation}
\delta_k^{\mathrm{res}} := \delta_k - d u_k^\star.   
\end{equation}
This can be viewed as the orthogonal remainder after projecting the TD field $\delta_k$ onto the exact subspace $\mathcal E=\mathrm{im}(d)$. We are interested in the $C^1$-norm of the difference between the two residuals. Substituting the definitions, we have

\begin{equation}
\begin{aligned}
\delta_1^{\mathrm{res}} - \delta_2^{\mathrm{res}}
&= (\delta_1 - d u_1^\star) - (\delta_2 - d u_2^\star) \\
&= (\delta_1 - \delta_2) - d(u_1^\star - u_2^\star).
\end{aligned}
\end{equation}
Taking the norm in the Hilbert space $C^1$ and using the triangle inequality $\|x+y\|\le\|x\|+\|y\|$, we obtain
\begin{equation}
\|\delta_1^{\mathrm{res}} - \delta_2^{\mathrm{res}}\|_{C^1}
\le \|\delta_1 - \delta_2\|_{C^1}
   + \|d(u_1^\star - u_2^\star)\|_{C^1}.
\end{equation}
The operator $d:C^0\to C^1$ is bounded and linear, so there exists a constant $\|d\|_{\mathrm{op}}<\infty$ such that for all $u\in C^0$,
\begin{equation}
\|d u\|_{C^1} \le \|d\|_{\mathrm{op}}\,\|u\|_{C^0}.    
\end{equation}
Substituting $u = u_1^\star - u_2^\star$ into this inequality yields
\begin{equation}
\|d(u_1^\star - u_2^\star)\|_{C^1}
\le \|d\|_{\mathrm{op}}\,\|u_1^\star - u_2^\star\|_{C^0}.
\end{equation}
Plugging this estimate back into the upper bound for the difference of the residuals, we obtain
\begin{equation}
\|\delta_1^{\mathrm{res}} - \delta_2^{\mathrm{res}}\|_{C^1}
\le \|\delta_1 - \delta_2\|_{C^1}
   + \|d\|_{\mathrm{op}}\,\|u_1^\star - u_2^\star\|_{C^0}.
\end{equation}
At this point we may invoke the stability estimate \eqref{eq:stability-u} obtained in the first part, namely
\begin{equation}
\|u_1^\star - u_2^\star\|_{C^0}
\le C_{\text{topo}}\,\|\delta_1 - \delta_2\|_{C^1}. 
\end{equation}
Substituting this into the previous inequality, we obtain
\begin{equation}
\begin{aligned}
\|\delta_1^{\mathrm{res}} - \delta_2^{\mathrm{res}}\|_{C^1}
&\le \|\delta_1 - \delta_2\|_{C^1}
   + \|d\|_{\mathrm{op}}\, C_{\text{topo}}\,\|\delta_1 - \delta_2\|_{C^1} \\
&= \bigl(1 + \|d\|_{\mathrm{op}} C_{\text{topo}}\bigr)\,\|\delta_1 - \delta_2\|_{C^1}.
\end{aligned}
\end{equation}
This shows that the residual mapping $\delta\mapsto\delta^{\mathrm{res}}$ is also Lipschitz continuous, with Lipschitz constant
\begin{equation}
C_{\mathrm{res}} := 1 + \|d\|_{\mathrm{op}} C_{\text{topo}}.
\end{equation}
Therefore we obtain the estimate claimed in the theorem:
\begin{equation}
\|\delta_1^{\mathrm{res}} - \delta_2^{\mathrm{res}}\|_{C^1}
\le C_{\mathrm{res}}\,\|\delta_1 - \delta_2\|_{C^1},
\quad\text{and }C_{\mathrm{res}} \le 1 + \|d\|_{\mathrm{op}} C_{\text{topo}}.
\end{equation}
In summary, the optimal potential part $u_\delta^\star$ is stable (Lipschitz continuous) with respect to the TD field $\delta$, and the corresponding topological residual $\delta^{\mathrm{res}}$ is also stable, with stability constants depending only on the operator norms of $d$ and $T$. The theorem is proved.
\end{proof}

\subsection{Proof of Theorem~\ref{thm:sensitivity-gamma}}
\begin{proof}
In this theorem, the discount factor $\gamma\in(0,1)$ is treated as a scalar parameter. For each given $\gamma$, the policy $\pi$ and the value function $V$ are fixed, so the TD error field $\delta_V^{(\gamma)} \in C^1$ is a well-defined element. The assumption states that with respect to some fixed reference measure (used to define the $L^2$ structure of $C^1$), there exists a constant $L_\gamma<\infty$ such that for any two points $\gamma_1,\gamma_2\in(0,1)$,
\begin{equation}\label{eq:lipschitz-delta-gamma-again}
\|\delta_V^{(\gamma_1)} - \delta_V^{(\gamma_2)}\|_{C^1}
\le L_\gamma\,|\gamma_1 - \gamma_2|.
\end{equation}

That is, as a vector in the space $C^1$, the TD error $\delta_V^{(\gamma)}$ is Lipschitz continuous with respect to the parameter $\gamma$, with Lipschitz constant uniformly controlled by $L_\gamma$.

On the other hand, for each $\delta\in C^1$, Theorem~\ref{thm:poisson} (and the assumptions of Theorem~\ref{thm:stability-td}) guarantee the existence of a canonical optimal potential $u_\delta^\star\in C^0$, satisfying the Poisson equation
\begin{equation}
d^* \delta = \Delta_0 u_\delta^\star,
\end{equation}
and being unique in the subspace $(\ker\Delta_0)^\perp$. Denoting this correspondence by
\begin{equation}
T: C^1 \to C^0,\qquad T(\delta) := u_\delta^\star,
\end{equation}

the theorem assumes that $T$ is linear and bounded. Linearity means that for any $\delta_1,\delta_2\in C^1$ and scalars $\alpha,\beta\in\mathbb R$, we have
\begin{equation}
T(\alpha\delta_1+\beta\delta_2)
= \alpha T(\delta_1) + \beta T(\delta_2),
\end{equation}
and boundedness means that there exists a constant $C_{\text{topo}}<\infty$ such that for all $\delta\in C^1$,$\|T(\delta)\|_{C^0} \le C_{\text{topo}}\,\|\delta\|_{C^1},$
and we may take $C_{\text{topo}}= \|T\|_{\mathrm{op}}:= \sup_{\delta\neq 0} \frac{\|T(\delta)\|_{C^0}}{\|\delta\|_{C^1}}.$

With these preparations, we return to the specific object of this theorem. For each discount factor $\gamma$, we define
\begin{equation}
u_V^{\star,(\gamma)} := T\big(\delta_V^{(\gamma)}\big)\in C^0.
\end{equation}
According to the Poisson construction, this is exactly the canonical optimal potential satisfying $d^* \delta_V^{(\gamma)} = \Delta_0 u_V^{\star,(\gamma)}$. Now take any two points $\gamma_1,\gamma_2\in(0,1)$ and consider the difference between the corresponding optimal potentials:
\begin{equation}
u_V^{\star,(\gamma_1)} - u_V^{\star,(\gamma_2)}.
\end{equation}
Using the linearity of $T$, we can write this difference as
\begin{equation}
u_V^{\star,(\gamma_1)} - u_V^{\star,(\gamma_2)}
= T\big(\delta_V^{(\gamma_1)}\big) - T\big(\delta_V^{(\gamma_2)}\big)
= T\big(\delta_V^{(\gamma_1)} - \delta_V^{(\gamma_2)}\big).
\end{equation}
This step simply replaces $u_V^{\star,(\gamma)}$ by $T(\delta_V^{(\gamma)})$ and then uses the fact that a linear operator $T$ maps differences to differences. Taking norms in $C^0$ and applying the boundedness of $T$, we obtain
\begin{equation}
\|u_V^{\star,(\gamma_1)} - u_V^{\star,(\gamma_2)}\|_{C^0}
= \big\|T\big(\delta_V^{(\gamma_1)} - \delta_V^{(\gamma_2)}\big)\big\|_{C^0}
\le C_{\text{topo}} \,\big\|\delta_V^{(\gamma_1)} - \delta_V^{(\gamma_2)}\big\|_{C^1}.
\end{equation}
Note that we have simply used the definition of the operator norm: for any $x\in C^1$, $\|T x\|_{C^0}\le C_{\text{topo}}\|x\|_{C^1}$, and here we specialize to $x=\delta_V^{(\gamma_1)} - \delta_V^{(\gamma_2)}$.

Now we may directly apply the Lipschitz condition \eqref{eq:lipschitz-delta-gamma-again} on the TD errors. Substituting it into the right-hand side yields
\begin{equation}
\begin{aligned}
\|u_V^{\star,(\gamma_1)} - u_V^{\star,(\gamma_2)}\|_{C^0}
&\le C_{\text{topo}} \,\big\|\delta_V^{(\gamma_1)} - \delta_V^{(\gamma_2)}\big\|_{C^1} \\
&\le C_{\text{topo}} \, L_\gamma \, |\gamma_1 - \gamma_2|.
\end{aligned}
\end{equation}
This is exactly the conclusion stated in inequality \eqref{eq:gamma-stability-u}. In other words, the mapping $\gamma \longmapsto u_V^{\star,(\gamma)} \in C^0$ is Lipschitz continuous, with Lipschitz constant controlled by $C_{\text{topo}} L_\gamma$, depending only on the operator norm of the topological projection operator $T$ and the Lipschitz constant of the TD field with respect to $\gamma$.

Finally, we explain that the associated integrable component is also Lipschitz continuous in $\gamma$. For each $\gamma$, the integrable quantity is $f_{\mathrm{ex}}^{(\gamma)} := d u_V^{\star,(\gamma)} \in C^1.$ For two discount factors $\gamma_1,\gamma_2$, we have
\begin{equation}
f_{\mathrm{ex}}^{(\gamma_1)} - f_{\mathrm{ex}}^{(\gamma_2)}
= d u_V^{\star,(\gamma_1)} - d u_V^{\star,(\gamma_2)}
= d\big(u_V^{\star,(\gamma_1)} - u_V^{\star,(\gamma_2)}\big),
\end{equation}
using the linearity of $d$. Since $d:C^0\to C^1$ is bounded, there exists a constant $\|d\|_{\mathrm{op}}$ such that for all $u\in C^0$,
\begin{equation}
\|d u\|_{C^1} \le \|d\|_{\mathrm{op}}\,\|u\|_{C^0}.
\end{equation}
Thus,
\begin{equation}
\|f_{\mathrm{ex}}^{(\gamma_1)} - f_{\mathrm{ex}}^{(\gamma_2)}\|_{C^1}
= \big\|d\big(u_V^{\star,(\gamma_1)} - u_V^{\star,(\gamma_2)}\big)\big\|_{C^1}
\le \|d\|_{\mathrm{op}}\,\|u_V^{\star,(\gamma_1)} - u_V^{\star,(\gamma_2)}\|_{C^0}.
\end{equation}
Substituting the previously obtained \eqref{eq:gamma-stability-u}, we obtain
\begin{equation}
\|f_{\mathrm{ex}}^{(\gamma_1)} - f_{\mathrm{ex}}^{(\gamma_2)}\|_{C^1}
\le \|d\|_{\mathrm{op}}\,C_{\text{topo}}\,L_\gamma\,|\gamma_1 - \gamma_2|,
\end{equation}
which shows that the mapping $\gamma\mapsto d u_V^{\star,(\gamma)}$ is also Lipschitz continuous in $C^1$. In summary, the optimal potential $u_V^{\star,(\gamma)}$ and its associated integrable TD structure are both Lipschitz continuous with respect to the discount factor $\gamma$, and thus the theorem is completely proved.

\end{proof}

\subsection{Proof of Theorem~\ref{thm:bias-bound}}
\begin{proof}

In this theorem, we regard the parameter space as a finite-dimensional vector space equipped with the Euclidean norm $\|\cdot\|$; $C^1$ is a Hilbert space with respect to the reference measure $\mu_\pi$, and its norm is defined by $\|f\|_{C^1}^2:= \mathbb{E}_{\mu_\pi}[f(S,A,S')^2]$. Let $\delta_{V_\theta}\in C^1$ be the TD error induced by $V_\theta$, and denote its Hodge-type decomposition as
\begin{equation}
\delta_{V_\theta} = d u_{V_\theta}^\star + \delta_{V_\theta}^{\mathrm{res}},
\end{equation}

where $d u_{V_\theta}^\star$ is the exact component and $\delta_{V_\theta}^{\mathrm{res}}$ is the topological residual. According to the definition in the theorem,
\begin{equation}
g_{\mathrm{TD}}(\theta)
:= \mathbb{E}_{\mu_\pi}
\big[ \delta_{V_\theta}(S,A,S')
           \nabla_\theta V_\theta(S) \big],\qquad
g_{\mathrm{HFPS}}(\theta)
:= \mathbb{E}_{\mu_\pi}
     \big[ \tilde\delta_{V_\theta}(S,A,S')
           \nabla_\theta V_\theta(S) \big],
\end{equation}
where $\tilde\delta_{V_\theta}:= d u_{V_\theta}^\star$ is the integrable part of the TD error. Note that
\begin{equation}
\delta_{V_\theta}
= \tilde\delta_{V_\theta} + \delta_{V_\theta}^{\mathrm{res}},
\end{equation}
hence
\begin{equation}
\tilde\delta_{V_\theta} - \delta_{V_\theta}
= -\delta_{V_\theta}^{\mathrm{res}}.
\end{equation}
Using this, we can directly write out the difference between the two update directions:
\begin{equation}
\begin{aligned}
g_{\mathrm{HFPS}}(\theta) - g_{\mathrm{TD}}(\theta)
&=
\mathbb{E}_{\mu_\pi}
\big[(\tilde\delta_{V_\theta}(S,A,S') - \delta_{V_\theta}(S,A,S'))
      \nabla_\theta V_\theta(S)\big] \\
&=
-\mathbb{E}_{\mu_\pi}
\big[\delta_{V_\theta}^{\mathrm{res}}(S,A,S')
      \nabla_\theta V_\theta(S)\big].
\end{aligned}
\end{equation}

This shows that $g_{\mathrm{HFPS}}(\theta) - g_{\mathrm{TD}}(\theta)$ is the expectation under $\mu_\pi$ of a random vector $X(S,A,S') := -\delta_{V_\theta}^{\mathrm{res}}(S,A,S')\,\nabla_\theta V_\theta(S)$. Since $\nabla_\theta V_\theta(S)$ satisfies $\|\nabla_\theta V_\theta(S)\|\le B$ almost everywhere, and $\delta_{V_\theta}^{\mathrm{res}}\in L^2(\mu_\pi)$, it is clear that $X$ is integrable. 
Taking the norm of its expectation and using convexity of the norm (i.e., Jensen's inequality) yields
\begin{equation}
\|g_{\mathrm{HFPS}}(\theta) - g_{\mathrm{TD}}(\theta)\|
= \big\|\mathbb{E}_{\mu_\pi}[X(S,A,S')]\big\|
\le \mathbb{E}_{\mu_\pi}\big[\|X(S,A,S')\|\big].
\end{equation}
Substituting the definition of $X$, we obtain
\begin{equation}
\|X(S,A,S')\|
= \big|\delta_{V_\theta}^{\mathrm{res}}(S,A,S')\big|\,
  \big\|\nabla_\theta V_\theta(S)\big\|,
\end{equation}
Thus,
\begin{equation}
\|g_{\mathrm{HFPS}}(\theta) - g_{\mathrm{TD}}(\theta)\|
\le
\mathbb{E}_{\mu_\pi}
\Big[\big|\delta_{V_\theta}^{\mathrm{res}}(S,A,S')\big|\,
      \big\|\nabla_\theta V_\theta(S)\big\|\Big].
\end{equation}
Using the boundedness assumption $\|\nabla_\theta V_\theta(S)\|\le B$ almost everywhere, we may pull it outside the expectation:
\begin{equation}
\mathbb{E}_{\mu_\pi}
\Big[\big|\delta_{V_\theta}^{\mathrm{res}}(S,A,S')\big|\,
      \big\|\nabla_\theta V_\theta(S)\big\|\Big]
\le
B\,\mathbb{E}_{\mu_\pi}
\Big[\big|\delta_{V_\theta}^{\mathrm{res}}(S,A,S')\big|\Big],
\end{equation}
Thus,
\begin{equation}
\|g_{\mathrm{HFPS}}(\theta) - g_{\mathrm{TD}}(\theta)\|
\le
B\,\mathbb{E}_{\mu_\pi}
\big[|\delta_{V_\theta}^{\mathrm{res}}(S,A,S')|\big].
\end{equation}
What remains is a standard relation between $L^1$ and $L^2$. Viewing $\delta_{V_\theta}^{\mathrm{res}}$ as a random variable $Y(S,A,S'):=\delta_{V_\theta}^{\mathrm{res}}(S,A,S')$ in $L^2(\mu_\pi)$, the Cauchy--Schwarz inequality (here applied to the functions $|Y|$ and the constant $1$) gives
\begin{equation}
\mathbb{E}_{\mu_\pi}[|Y|]
= \mathbb{E}_{\mu_\pi}[|Y|\cdot 1]
\le \big(\mathbb{E}_{\mu_\pi}[Y^2]\big)^{1/2}
     \big(\mathbb{E}_{\mu_\pi}[1^2]\big)^{1/2}
= \big(\mathbb{E}_{\mu_\pi}[Y^2]\big)^{1/2},
\end{equation}
since $\mathbb{E}_{\mu_\pi}[1]=1$. Substituting $Y=\delta_{V_\theta}^{\mathrm{res}}$ gives
\begin{equation}
\mathbb{E}_{\mu_\pi}\big[|\delta_{V_\theta}^{\mathrm{res}}(S,A,S')|\big]
\le \Big(\mathbb{E}_{\mu_\pi}\big[\delta_{V_\theta}^{\mathrm{res}}(S,A,S')^2\big]\Big)^{1/2}
= \|\delta_{V_\theta}^{\mathrm{res}}\|_{C^1},
\end{equation}
where the last equality is exactly the definition of the $C^1$ norm. Substituting this bound into the previous inequality produces
\begin{equation}
\|g_{\mathrm{HFPS}}(\theta) - g_{\mathrm{TD}}(\theta)\|
\le B\,\|\delta_{V_\theta}^{\mathrm{res}}\|_{C^1},
\end{equation}
which is precisely the bias bound stated in \eqref{eq:bias-bound}. This shows that the update bias introduced by HFPS using only the integrable TD component is controlled proportionally by the $C^1$ norm of the topological residual. The theorem is proved.

\end{proof}

\subsection{Proof of Corollary~\ref{cor:reduction-to-td}}

\begin{proof}
Fix $\theta$.
By Corollary~\ref{cor:td-decomposition}, the TD error admits the orthogonal decomposition
\[
\delta V_\theta \;=\; d u^\star_{V_\theta} + \delta^{\mathrm{res}}_{V_\theta},
\qquad
\delta^{\mathrm{res}}_{V_\theta} \in \mathcal E^\perp,
\]
and moreover $\delta^{\mathrm{res}}_{V_\theta}=0$ if and only if $\delta V_\theta \in \mathcal E$
(i.e., $V_\theta$ is topologically integrable).
Under the integrability assumption, $\delta^{\mathrm{res}}_{V_\theta}=0$ holds, hence
\[
\delta V_\theta \;=\; d u^\star_{V_\theta}
\quad \text{$\mu^\pi$-almost surely on } (S,A,S').
\]
Therefore, for $\mu^\pi$-almost every triple $(S,A,S')$,
\[
d u^\star_{V_\theta}(S,A,S') \,\nabla_\theta V_\theta(S)
\;=\;
\delta V_\theta(S,A,S') \,\nabla_\theta V_\theta(S).
\]
Taking expectations with respect to $\mu^\pi$ and using the definitions
\[
g_{\mathrm{TD}}(\theta) := \mathbb{E}_{\mu^\pi}\!\left[\delta V_\theta(S,A,S')\,\nabla_\theta V_\theta(S)\right],
\qquad
g_{\mathrm{HFPS}}(\theta) := \mathbb{E}_{\mu^\pi}\!\left[d u^\star_{V_\theta}(S,A,S')\,\nabla_\theta V_\theta(S)\right],
\]
we obtain $g_{\mathrm{HFPS}}(\theta)=g_{\mathrm{TD}}(\theta)$.

Equivalently, one can conclude the same fact directly from the deviation bound
in Theorem~5.3:
\[
\|g_{\mathrm{HFPS}}(\theta)-g_{\mathrm{TD}}(\theta)\|
\;\le\;
B\|\delta^{\mathrm{res}}_{V_\theta}\|_{C^1}
\;=\;0.
\]
Finally, since the update directions coincide pointwise for all $\theta$,
the two deterministic recursions are identical for the same initialization and step sizes.
This implies that any convergence guarantee established for the TD semi-gradient recursion
under a given set of assumptions carries over verbatim to HFPS in the integrable regime.
\end{proof}

\subsection{Proof of Proposition~\ref{prop:neighborhood-convergence-linear}}

\begin{proof}
Define $x_t := \theta_t-\theta_{\mathrm{TD}}$.
By the strong monotonicity assumption, $g_{\mathrm{TD}}(\theta_t)=A x_t$.
Let
\[
e_t := g_{\mathrm{HFPS}}(\theta_t)-g_{\mathrm{TD}}(\theta_t).
\]
By Theorem~\ref{thm:bias-bound} in the main text (equation~\ref{eq:bias-bound}), using $\|\nabla_\theta V_{\theta}(S)\|=\|\phi(S)\|\le B$
and $\|\delta^{\mathrm{res}}_{V_{\theta_t}}\|_{C^1}\le \varepsilon$, we have the uniform bound
\begin{equation}
\label{eq:et-bound}
\|e_t\| \;=\; \|g_{\mathrm{HFPS}}(\theta_t)-g_{\mathrm{TD}}(\theta_t)\|
\;\le\; B\,\varepsilon,\qquad \forall t.
\end{equation}

The recursion becomes
\[
x_{t+1} \;=\; x_t - \alpha_t (A x_t + e_t).
\]
Let $V_t := \|x_t\|^2$. Expanding and bounding yields
\begin{align*}
V_{t+1}
&= \|x_t - \alpha_t(Ax_t+e_t)\|^2 \\
&= \|x_t\|^2 - 2\alpha_t x_t^\top A x_t - 2\alpha_t x_t^\top e_t + \alpha_t^2\|Ax_t+e_t\|^2 \\
&\le V_t - 2\alpha_t \lambda V_t + 2\alpha_t \|x_t\|\,\|e_t\| + \alpha_t^2\cdot 2\|Ax_t\|^2 + \alpha_t^2\cdot 2\|e_t\|^2 \\
&\le V_t - 2\alpha_t \lambda V_t + 2\alpha_t \sqrt{V_t}\,(B\varepsilon)
   + 2\alpha_t^2 L^2 V_t + 2\alpha_t^2 (B\varepsilon)^2,
\end{align*}
where we used $x_t^\top A x_t \ge \lambda\|x_t\|^2=\lambda V_t$,
$\|Ax_t\|\le L\|x_t\|=L\sqrt{V_t}$, and \eqref{eq:et-bound}.
Next apply the inequality $2ab \le \lambda a^2 + \frac{1}{\lambda}b^2$ with
$a=\sqrt{V_t}$ and $b=B\varepsilon$:
\[
2\alpha_t \sqrt{V_t}\,(B\varepsilon) \;\le\; \alpha_t \lambda V_t + \alpha_t \frac{B^2}{\lambda}\varepsilon^2.
\]
Substituting gives
\[
V_{t+1}
\;\le\;
\bigl(1-\lambda\alpha_t + 2L^2\alpha_t^2\bigr)V_t
\;+\;
\alpha_t \frac{B^2}{\lambda}\varepsilon^2
\;+\;
2\alpha_t^2 B^2\varepsilon^2.
\]
Using the step size upper bound $\alpha_t \le \frac{\lambda}{4L^2}$ implies
$2L^2\alpha_t^2 \le \frac{\lambda}{2}\alpha_t$, hence
\begin{equation}
\label{eq:vt-rec}
V_{t+1}
\;\le\;
\bigl(1-\tfrac{\lambda}{2}\alpha_t\bigr)V_t
\;+\;
\alpha_t \frac{B^2}{\lambda}\varepsilon^2
\;+\;
2\alpha_t^2 B^2\varepsilon^2.
\end{equation}

We now invoke a deterministic sequence lemma ~\ref{lem:seq_bound} with
$c=\lambda/2$, $d=(B^2/\lambda)\varepsilon^2$, and $\beta_t := 2\alpha_t^2 B^2\varepsilon^2$.
Because $\sum_t \alpha_t^2<\infty$, we have $\sum_t \beta_t<\infty$.
Applying the lemma to \eqref{eq:vt-rec} yields
\[
\limsup_{t\to\infty} V_t \;\le\; \frac{d}{c}
\;=\;
\frac{(B^2/\lambda)\varepsilon^2}{\lambda/2}
\;=\;
\frac{2B^2}{\lambda^2}\varepsilon^2.
\]
Taking square roots gives
\[
\limsup_{t\to\infty}\|\theta_t-\theta_{\mathrm{TD}}\|
=
\limsup_{t\to\infty}\|x_t\|
\le \sqrt{2}\,\frac{B}{\lambda}\,\varepsilon,
\]
which proves the claim. The case $\varepsilon=0$ follows immediately.

\medskip
\begin{lemma}[sequence bound]
\label{lem:seq_bound}
Let $\{V_t\}_{t\ge 0}$ be a nonnegative sequence satisfying
\[
V_{t+1} \le (1-c\alpha_t)V_t + \alpha_t d + \beta_t,\qquad t\ge 0,
\]
where $c>0$, $d\ge 0$, $\alpha_t>0$ with $\sum_t \alpha_t=\infty$ and $\alpha_t\to 0$,
and $\sum_t \beta_t < \infty$ with $\beta_t\ge 0$.
Then $\limsup_{t\to\infty} V_t \le d/c$.    
\end{lemma}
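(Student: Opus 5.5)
We argue directly from the hypotheses, with no appeal to the RL structure: this is the classical Robbins--Siegmund-type deterministic lemma, and the natural plan is to study the shifted sequence $W_t := V_t - d/c$. Subtracting $d/c$ from both sides of the recursion and using $(1-c\alpha_t)\,d/c + \alpha_t d = d/c$ gives
\[
W_{t+1} \le (1-c\alpha_t)W_t + \beta_t .
\]
Since $\alpha_t\to 0$, there is $t_0$ with $c\alpha_t<1$ for all $t\ge t_0$. From then on the factor $1-c\alpha_t$ lies in $(0,1)$, and we may pass to the positive part $W_t^+ := \max\{W_t,0\}$:
\[
W_{t+1}^+ \le (1-c\alpha_t)W_t^+ + \beta_t, \qquad t\ge t_0 .
\]
To verify this, treat the two cases. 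If $W_t\ge 0$ the inequality is immediate. If $W_t<0$, then $W_{t+1}\le \beta_t$, so $W_{t+1}^+\le\beta_t$.

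\textbf{Unrolling.} For $t>s\ge t_0$, iterating the inequality yields
\[
W_t^+ \le \prod_{k=s}^{t-1}(1-c\alpha_k)\,W_s^+ + \sum_{k=s}^{t-1}\beta_k .
\]
Here we used that every product of factors $(1-c\alpha_j)$ lies in $[0,1]$, so each $\beta_k$ is multiplied by at most $1$.

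\textbf{Taking limits.} Using $1-x\le e^{-x}$ and $\sum_k\alpha_k=\infty$, the product tends to $0$ as $t\to\infty$ for each fixed $s$. Hence $\limsup_t W_t^+ \le \sum_{k\ge s}\beta_k$. Letting $s\to\infty$, the tail sums of the summable sequence $\beta$ vanish, so $\limsup_t W_t^+=0$. Therefore $\limsup_t V_t\le d/c$.

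The only delicate step is justifying the passage to $W_t^+$ when $W_t$ may be negative. This requires $1-c\alpha_t\ge 0$, which is exactly why we discard the finitely many indices before $t_0$. The rest is routine.
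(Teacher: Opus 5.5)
Your proof is correct, and it takes a different route from the paper's. The paper uses a drift/threshold argument. It fixes $\eta>0$, sets $M=d/c+\eta$, and picks $T$ with $c\alpha_t\le 1$ and $\beta_t\le\tfrac12 c\alpha_t\eta$ for $t\ge T$. It then shows $V_{t+1}\le V_t-\tfrac12 c\alpha_t\eta$ whenever $V_t\ge M$, and uses $\sum_t\alpha_t=\infty$ together with $V_t\ge 0$ to conclude that $V_t$ eventually stays below $M$. You instead recenter at $d/c$ and pass to positive parts to get the pure contraction $W^+_{t+1}\le(1-c\alpha_t)W^+_t+\beta_t$. You then unroll it, absorbing the perturbation through the tail sums $\sum_{k\ge s}\beta_k$ rather than dominating it step by step by the drift. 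The main thing your route buys is that it uses only the stated hypotheses. The paper's choice of $T$ needs $\beta_t=o(\alpha_t)$, which does not follow from $\sum_t\beta_t<\infty$ and $\alpha_t\to0$. For example, take $\alpha_t=1/(t+1)$, with $\beta_t=\alpha_t$ when $t+1$ is a power of $2$ and $\beta_t=0$ otherwise. The condition does hold in the application in Proposition~\ref{prop:neighborhood-convergence-linear}, where $\beta_t=2\alpha_t^2B^2\varepsilon^2$. The paper's ``infinitely often'' step also tacitly relies on the invariance $V_t<M\Rightarrow V_{t+1}<M$, which is true under its condition on $\beta_t$; your unrolling makes this unnecessary. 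In addition, your argument never uses $V_t\ge 0$, and it uses $\alpha_t\to0$ only to ensure $c\alpha_t\le 1$ eventually. The paper's approach, for its part, gives a Lyapunov-style picture (uniform decrease above a level set) that extends naturally to hitting-time estimates. For the lemma as stated, however, your proof is the cleaner and fully rigorous one.
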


\begin{proof}[proof of lemma~\ref{lem:seq_bound}]
Fix any $\eta>0$ and define $M:=d/c+\eta$.
Because $\alpha_t\to 0$, there exists $T$ such that for all $t\ge T$,
$c\alpha_t \le 1$ and $\beta_t \le \tfrac{1}{2}c\alpha_t\eta$.
Consider any $t\ge T$ with $V_t \ge M$. Then
\[
V_{t+1}
\le (1-c\alpha_t)V_t + \alpha_t d + \beta_t
\le V_t - c\alpha_t(V_t - d/c) + \beta_t
\le V_t - c\alpha_t\eta + \tfrac{1}{2}c\alpha_t\eta
= V_t - \tfrac{1}{2}c\alpha_t\eta.
\]
Thus, whenever $V_t\ge M$ for $t\ge T$, the sequence decreases by at least
$\tfrac{1}{2}c\alpha_t\eta$.
If $V_t\ge M$ occurs infinitely often for $t\ge T$, summing the above decreases over those times
would force $V_t$ to decrease without bound because $\sum_t \alpha_t=\infty$, contradicting
nonnegativity of $V_t$.
Hence there exists $T_\eta$ such that $V_t < M$ for all $t\ge T_\eta$, i.e.,
$\limsup_{t\to\infty} V_t \le d/c+\eta$.
Since $\eta>0$ is arbitrary, $\limsup_{t\to\infty} V_t \le d/c$.
\hfill$\square$
\end{proof}
\end{proof}

\end{document}